\newcommand{\MWARM}{\ensuremath{\mathcal{P}_M(\target)}}
\newcommand{\warmparam}{\ensuremath{M}}
\newcommand{\UNICON}{\ensuremath{c}}
\newcommand{\MYPOLY}{\ensuremath{f}}
\newcommand{\lovone}{\ensuremath{\rho}}
\newcommand{\lovtwo}{\ensuremath{\Delta}}
\newcommand{\MyTerm}{\ensuremath{S}}
\newcommand{\tvnorm}[1]{\ensuremath{\| #1\|_{\mbox{\tiny{TV}}}}}
\newcommand{\kldiv}[2]{\ensuremath{\operatorname{KL}(#1 \Vert #2)}}
\newcommand{\target}{\ensuremath{\pi^*}}
\newcommand{\myinitial}{\ensuremath{\pi^0}}
\newcommand{\mytrans}{\ensuremath{p}}
\newcommand{\plaintransition}{\mathcal{T}}
\newcommand{\Borel}{\ensuremath{\mathcal{B}}}
\newcommand{\Ball}{\ensuremath{\mathbb{B}}}
\newcommand{\rparam}{\ensuremath{r}}
\newcommand{\myvec}{v} 
\newcommand{\myvectwo}{w} 
\newcommand{\mymat}{B}
\newcommand{\set}{\mathcal{S}}
\newcommand{\defn}{:=}
\newcommand{\polylog}{\text{poly-log}}
\newcommand{\polylogfactor}{\kappa_{\obs, \dims}}
\newcommand{\tmix}{k_\text{mix}}
\DeclareMathOperator{\supp}{supp}
\newcommand{\Pspace}{\ensuremath{\mathcal{K}}}
\newcommand{\intP}{\ensuremath{\operatorname{int}\parenth{\Pspace}}}
\newcommand{\boundary}{\ensuremath{\partial\Pspace}}
\newcommand{\condition}{\ensuremath{\gamma_\Pspace}}
\newcommand{\Poly}{\ensuremath{\Pspace}}
\newcommand{\myset}{\ensuremath{\mathcal{X}}}
\newcommand{\diracdelta}{\mathbf{\delta}}
\newcommand{\proposal}{\mathcal{P}}
\newcommand{\density}{p}
\newcommand{\transition}{\mathbb{T}}
\newcommand{\transdensity}{q}
\newcommand{\targetdensity}{\target}
\newcommand{\stationary}{\target}
\newcommand{\initial}{\mu_0}
\newcommand{\rvg}{\xi} 
\newcommand{\tailconst}{\chi}
\newcommand{\tailconstjohn}{\chi}
\newcommand{\nolazytrans}{\tilde\plaintransition}
\newcommand{\lazytrans}{{\plaintransition}}
\newcommand{\conductance}{\Phi} 
\newcommand{\degree}{k}
\newcommand{\repsquare}{\mathcal{S}}
\newcommand{\logbarr}{\mathcal{F}}
\newcommand{\hesslogbarr}{\ensuremath{\nabla^2\logbarr}}
\newcommand{\hessbarr}{H} 
\newcommand{\vaidyabarr}{\mathcal{V}}
\newcommand{\dikin}{{D}}
\newcommand{\john}{J}
\newcommand{\vaidya}{V}
\newcommand{\tagvaidya}{\text{\tiny V}}
\newcommand{\tagjohn}{\text{\tiny J}}
\newcommand{\fulltagdikin}{\text{\tiny Dikin}}
\newcommand{\fulltagvaidya}{\text{\tiny Vaidya}}
\newcommand{\fulltagjohn}{\text{\tiny John}}
\newcommand{\fulltagimprovedjohn}{\text{\tiny John$^+$}}
\newcommand{\inverserate}{\nu} 
\newcommand{\weights}{w}
\newcommand{\weightmatrix}{W}
\newcommand{\offset}{\beta}  
\newcommand{\slack}{s} 
\newcommand{\slackmatrix}{S} 
\newcommand{\levscores}{\sigma}
\newcommand{\localslack}{\theta}
\newcommand{\VW}[1]{\text{VW($#1$)}}
\newcommand{\vaidyabeta}{\offset_\tagvaidya}
\newcommand{\levvaidya}{\levscores} 
\newcommand{\localslackvaidya}{\localslack}
\newcommand{\vaidyaepsilonconst}{\ensuremath{1/15}}
\newcommand{\vaidyaradiusbound}{f}
\newcommand{\vaidyaradiusconst}{\ensuremath{10^{-4}}}
\newcommand{\logdetvaidya}{\Psi}
\newcommand{\gradlogdetV}{\nabla \logdetvaidya}
\newcommand{\hesslogdetV}{\nabla^2 \logdetvaidya}
\newcommand{\vaidyalevmatrix}{\Sigma} 
\newcommand{\vaidyaprojmatrix}{\Upsilon} 
\newcommand{\vaidyalocalslackmatrix}{\Theta} 
\newcommand{\vaidyasecondthetamatrix}{\Xi} 
\newcommand{\vaidyaphi}{\varphi} 
\newcommand{\projD}{\eta} 
\newcommand{\JW}[1]{\text{JW($#1$)}}\newcommand{\johnweights}{\zeta}
\newcommand{\johnalpha}{\alpha_\tagjohn}
\newcommand{\alphaprod}{\tau_\alpha} 
\newcommand{\johnbeta}{\offset_\tagjohn}
\newcommand{\johnkappa}{\kappa}
\newcommand{\levjohn}{\levscores}
\newcommand{\johnlocalslack}{\localslack}
\newcommand{\johnepsilonconst}{\ensuremath{1/30}}
\newcommand{\johnradiusbound}{h}
\newcommand{\johnradiusconst}{\ensuremath{10^{-5}}}
\newcommand{\logdetJ}{\Psi}
\newcommand{\gradlogdetJ}{\nabla \logdetJ}
\newcommand{\hesslogdetJ}{\nabla^2 \logdetJ}
\newcommand{\johnmod}{\tilde{\john}}
\newcommand{\johnweightmatrix}{G}
\newcommand{\johnlevmatrix}{\Sigma} 
\newcommand{\johnprojmatrix}{\Upsilon} 
\newcommand{\johnlaplacian}{\Lambda} 
\newcommand{\johnphi}{\varphi} 
\newcommand{\projf}{f} 
\newcommand{\projfmatrix}{F} 
\newcommand{\projd}{d} 
\newcommand{\projdmatrix}{D} 
\newcommand{\projz}{\ell} 
\newcommand{\projzmatrix}{L} 
\newcommand{\projzsub}{\rho} 
\newcommand{\johnftdinvmatrix}{\mu} 
\newcommand{\johnftdinvvector}{\nu}
\newcommand{\theinverse}{E} 
\newlength{\widebarargwidth}
\newlength{\widebarargheight}
\newlength{\widebarargdepth}
\long\def\@makecaption#1#2{
        \vskip 0.8ex
        \setbox\@tempboxa\hbox{\small {\bf #1:} #2}
        \parindent 1.5em  
        \dimen0=\hsize
        \advance\dimen0 by -3em
        \ifdim \wd\@tempboxa >\dimen0
                \hbox to \hsize{
                        \parindent 0em
                        \hfil
                        \parbox{\dimen0}{\def\baselinestretch{0.96}\small
                                {\bf #1.} #2
                                }
                        \hfil}
        \else \hbox to \hsize{\hfil \box\@tempboxa \hfil}
        \fi
        }
\long\def\comment#1{}
\definecolor{battleshipgrey}{rgb}{0.52, 0.52, 0.51}
\definecolor{darkgray}{rgb}{0.66, 0.66, 0.66}
\definecolor{darkgreen}{rgb}{0.0, 0.2, 0.13}
\definecolor{darkspringgreen}{rgb}{0.09, 0.45, 0.27}
\definecolor{dukeblue}{rgb}{0.0, 0.0, 0.61}
\definecolor{olivedrab7}{rgb}{0.24, 0.2, 0.12}
\definecolor{darkblue}{rgb}{0.0, 0.0, 0.55}
\definecolor{darkscarlet}{rgb}{0.34, 0.01, 0.1}
\definecolor{candyapplered}{rgb}{1.0, 0.03, 0.0}
\definecolor{ao(english)}{rgb}{0.0, 0.5, 0.0}
\definecolor{applegreen}{rgb}{0.55, 0.71, 0.0}
\newcommand{\red}[1]{\textcolor{red}{#1}}
\newcommand{\mjwcomment}[1]{{\bf{{\red{{MJW --- #1}}}}}}
\newcommand{\blue}[1]{\textcolor{blue}{#1}}
\newcommand{\rrdcomment}[1]{{\bf{{\blue{{RRD --- #1}}}}}}
\newcommand{\matsnorm}[2]{|\!|\!| #1 | \! | \!|_{{#2}}}
\newcommand{\vecnorm}[2]{\left\| #1\right\|_{#2}}
\newcommand{\Exs}{\ensuremath{{\mathbb{E}}}}
\newcommand{\Prob}{\ensuremath{{\mathbb{P}}}}
\newcommand{\numobs}{\ensuremath{n}}
\newcommand{\usedim}{\ensuremath{d}}
\newcommand{\dims}{\usedim}
\newcommand{\obs}{\numobs}
\DeclareMathOperator{\diag}{diag}
\DeclareMathOperator{\trace}{trace}
\DeclareMathOperator{\vol}{vol}
\DeclareMathOperator{\rank}{rank}
\newcommand{\NORMAL}{\ensuremath{\mathcal{N}}}
\newcommand{\Xspace}{\ensuremath{\mathcal{X}}}
\newcommand{\widgraph}[2]{\includegraphics[keepaspectratio,width=#1]{#2}}
\newcommand{\Ind}{\ensuremath{\mathbb{I}}}
\newcommand{\real}{\ensuremath{\mathbb{R}}}
\newcommand{\realdim}{\ensuremath{\real^\usedim}}
\newcommand{\brackets}[1]{\left[ #1 \right]}
\newcommand{\parenth}[1]{\left( #1 \right)}
\newcommand{\braces}[1]{\left\{ #1 \right \}}
\newcommand{\abss}[1]{\left| #1 \right |}
\newcommand{\angles}[1]{\left\langle #1 \right \rangle}
\newcommand{\tp}{^\top}
\newcommand{\gaussian}[2]{\NORMAL\parenth{#1, #2}}
\newcommand{\order}[1]{\ensuremath{\mathcal{O}\parenth{#1}}}
\begin{document}
\etocdepthtag.toc{mtchapter}
\etocsettagdepth{mtchapter}{subsection}
\etocsettagdepth{mtappendix}{none}

\title{Fast MCMC Sampling Algorithms on Polytopes}

\author{\name Yuansi Chen$^{\ast, \diamondsuit}$ \email yuansi.chen@berkeley.edu \\
  \name Raaz Dwivedi$^{\ast, \dagger}$ \email raaz.rsk@berkeley.edu\\
      \name Martin J. Wainwright$^{\diamondsuit, \dagger, \ddagger}$ \email wainwrig@berkeley.edu\\
      \name Bin Yu$^{\diamondsuit,\dagger}$ \email binyu@berkeley.edu\\
        \addr Department of Statistics$^{\diamondsuit}$ \\
        Department of Electrical Engineering and Computer Sciences$^{\dagger}$ \\
        University of California, Berkeley \\
        Voleon Group$^{\ddagger}$, Berkeley
       }

\editor{Alexander Rakhlin}

\maketitle

\begin{abstract}%
We propose and analyze two new MCMC sampling algorithms, the Vaidya walk and the
John walk, for generating samples from the uniform distribution over a
polytope.  Both random walks are sampling
algorithms derived from interior point methods. The former is based on
volumetric-logarithmic barrier introduced by Vaidya whereas the latter
uses John's ellipsoids.  We show that the Vaidya walk mixes in
significantly fewer steps than the logarithmic-barrier based Dikin
walk studied in past work.  For a polytope in $\realdim$ defined by
$\obs > \dims$ linear constraints, we show that the mixing time from a
warm start is bounded as $\order{\obs^{0.5}\dims^{1.5}}$, compared to
the $\order{\obs\dims}$ mixing time bound for the Dikin walk.  The
cost of each step of the Vaidya walk is of the same order as the Dikin
walk, and at most twice as large in terms of constant pre-factors.
For the John walk, we prove an
$\order{\dims^{2.5}\cdot\log^4(\obs/\dims)}$ bound on its mixing time
and conjecture that an improved variant of it could achieve a mixing
time of $\order{\dims^{2}\cdot\polylog(\obs/\dims)}$.  Additionally,
we propose variants of the Vaidya and John walks that mix in
polynomial time from a deterministic starting point. The speed-up of the
Vaidya walk over the Dikin walk are illustrated in numerical examples.
\end{abstract}

\begin{keywords}
  MCMC methods, interior point methods, polytopes, sampling from convex sets
\end{keywords}

{\let\thefootnote\relax\footnotetext{*Yuansi Chen and Raaz Dwivedi contributed equally to this work.}}



\section{Introduction}
\label{sec:introduction}

Sampling from distributions is a core problem in statistics,
probability, operations research, and other areas involving stochastic
models~\citep{Geman84,Bremaud91,ripley2009stochastic,hastings1970monte}.
Sampling algorithms are a prerequisite for applying Monte Carlo
methods to order to approximate expectations and other integrals.
Recent decades have witnessed great success of Markov Chain Monte
Carlo (MCMC) algorithms; for instance, see the handbook
by~\cite{brooks2011handbook} and references therein.  These methods
are based on constructing a Markov chain whose stationary distribution
is equal to the target distribution, and then drawing samples by
simulating the chain for a certain number of steps.  An advantage of
MCMC algorithms is that they only require knowledge of the target
density up to a proportionality constant.  However, the theoretical
understanding of MCMC algorithms used in practice is far from
complete.  In particular, a general challenge is to bound the
\emph{mixing time} of a given MCMC algorithm, meaning the number of
iterations---as a function of the error tolerance $\delta$, problem
dimension $\dims$ and other parameters---for the chain to arrive at a
distribution within distance $\delta$ of the target.

In this paper, we study a certain class of MCMC algorithms designed
for the problem of drawing samples from the uniform distribution over
a polytope.  The polytope is specified in the form $\Pspace \defn \{x
\in \real^\dims \mid Ax \leq b \}$, parameterized by the matrix-vector
pair \mbox{$(A,b) \in \real^{\obs \times \dims} \times \real^\obs$.}
Our goal is to understand the mixing time for obtaining
$\delta$-accurate samples, and how it grows as a function of the pair
$\parenth{\obs, \dims}$.

The problem of sampling uniformly from a polytope is important in
various applications and methodologies.  For instance, it underlies
various methods for computing randomized approximations to polytope
volumes.  There is a long line of work on sampling methods being used
to obtain randomized approximations to the volumes of polytopes and
other convex bodies \cite[see,
  e.g.,][]{lovasz1990ballwalk,lawrence1991polytope,
  belisle1993hit,lovasz1999hit, cousins2014cubic}.  Polytope sampling
is also useful in developing fast randomized algorithms for convex
optimization~\citep{bertsimas2004solving} and sampling contingency
tables~\citep{kannan2012random}, as well as in randomized methods for
approximately solving mixed integer convex
programs~\citep{huang2013empirical,huang2015empirical}.  Sampling from
polytopes is also related to simulations of the hard-disk model in
statistical physics~\citep{KapKra13}, as well as to simulations of
error events for linear programming in
communication~\citep{Feldman05}.

Many MCMC algorithms have been studied for sampling from polytopes, and
more generally, from convex bodies.  Some early examples include the
Ball Walk~\citep{lovasz1990ballwalk} and the hit-and-run
algorithm~\citep{belisle1993hit,lovasz1999hit}, which apply to sampling
from general convex bodies. Although these algorithms can be applied
to polytopes, they do not exploit any special structure of the
problem.  In contrast, the Dikin walk introduced by \cite{kannan2012random} is specialized to polytopes, and thus can achieve faster convergence rates than generic algorithms.
The Dikin walk was the first sampling algorithm based on a connection
to interior point methods for solving linear programs.  More
specifically, as we discuss in detail below, it constructs proposal
distributions based on the standard logarithmic barrier for a
polytope.  In a later paper, \cite{narayanan2016randomized}
extended the Dikin walk to general convex sets equipped with
self-concordant barriers.

For a polytope defined by $\obs$ constraints, \cite{kannan2012random} proved an upper bound on the mixing time of the Dikin walk that scales linearly with $\obs$. In many applications, the number of constraints $\obs$ can be much larger than the number of variables $\dims$. For example, we could imagine one using many hyperplane constraints to approximate complicated convex sets such as sphere or ellipsoid. For such
problems, linear dependence on the number of constraints is not
desirable.  Consequently, it is natural to ask if it is possible to
design a sampling algorithm whose mixing time scales in a sub-linear
manner with the number of constraints.  Our main contribution is to
investigate and answer this question in affirmative---in
particular, by designing and analyzing two sampling algorithms with
provably faster convergence rates than the the Dikin walk while
retaining its advantages over the ball walk and the hit-and-run
methods.


\paragraph{Our contributions:} We introduce and analyze a new random walk, which we refer to as the
\emph{Vaidya walk} since it is based on the
\emph{volumetric-logarithmic barrier} introduced
by~\mbox{\cite{vaidya1989new}.}  We show that for a polytope in
$\realdim$ defined by $\obs$-constraints, the Vaidya walk mixes in
$\order{\obs^{1/2}\dims^{3/2}}$ steps, whereas the Dikin
walk~\citep{kannan2012random} has mixing time bounded as
$\order{\obs\dims}$. So the Vaidya walk is better in the regime $\obs
\gg \dims$.  We also propose the \emph{John walk}, which is based on
the \emph{John ellipsoidal algorithm} in optimization.  We show that
the John walk has a mixing time of $\order{\dims^{2.5} \cdot
  \log^4(\obs/\dims)}$ and conjecture that a variant of it could
achieve $\order{\dims^2 \cdot \polylog(\obs/\dims)}$ mixing time.  We
show that when compared to the Dikin walk, the per-iteration
computational complexities of the Vaidya walk and the John walk are
within a constant factor and a poly-logarithmic in $\obs/\dims$ factor
respectively.  Thus, in the regime $\obs \gg \dims$, the overall upper
bound on the complexity of generating an approximately uniform sample
follows the order \mbox{Dikin walk $\gg$ Vaidya walk $\gg$ John
  walk}.\\

The remainder of the paper is organized as follows.
In Section~\ref{SecBackground}, we discuss many polynomial-time random
walks on convex sets and polytopes, and motivate the starting point
for the new random walks. In
Section~\ref{sec:vaidya_walk_and_convergence}, we introduce the new random walks
and state bounds on their rates of convergence and provide a sketch of the proof in Section~\ref{sub:bound_mixing_time}.
We discuss the computational complexity of the different random walks
and demonstrate the contrast between the random walks for several
illustrative examples in Section~\ref{sec:numerical_experiments}.  We
present the proof of the mixing time for the Vaidya walk in
Section~\ref{sec:proof} and defer the analysis of the John walk to the appendix.
We conclude with possible extensions of our
work in Section~\ref{sec:discussion}.

\paragraph{Notation:} For two sequences $a_\delta$ and $b_\delta$
indexed by $\delta \in I \subseteq \real$, we say that $a_\delta =
\order{b_\delta}$ if there exists a universal constant $C>0$ such that
$a_\delta \leq C b_\delta$ for all $\delta \in I$.  For a set $\Pspace
\subset \realdim$, the sets $\intP$ and $\Pspace^c$ denote the
interior and complement of $\Pspace$ respectively.  We denote the
boundary of the set $\Pspace$ by $\boundary$.  The Euclidean norm of a
vector $x \in \realdim$ is denoted by $\vecnorm{x}{2}$.  For any
square matrix $M$, we use $\det(M)$ and $\trace(M)$ to denote the
determinant and the trace of the matrix $M$ respectively.  For two
distributions $\proposal_1$ and $\proposal_2$ defined on the same
probability space $(\Xspace, \Borel(\Xspace))$, their
total-variation (TV) distance is denoted by $\tvnorm{\proposal_1 -
  \proposal_2}$ and is defined as follows
\begin{align*}
  \tvnorm{\proposal_1 - \proposal_2} = \sup_{A \in \Borel(\Xspace)} \abss{\proposal_1(A) - \proposal_2(A)}.
\end{align*}
Furthermore if $\proposal_1$ is absolutely continuous with respect to
$\proposal_2$, then the Kullback–Leibler divergence from $\proposal_2$ to
$\proposal_1$ is defined as
\begin{align*}
  \kldiv{\proposal_1}{\proposal_2} = \int_\Xspace \log \parenth{\frac{d\proposal_1}{d\proposal_2}}d\proposal_1.
\end{align*}

\section{Background and problem set-up}
\label{SecBackground}

In this section, we describe general MCMC algorithms and review the
rates of convergence of existing random walks on convex sets.  After
introducing several random walks studied in past work, we introduce
the Vaidya and John walks studied in this paper.

\subsection{Markov chains and mixing}

\label{sub:metropolis_hastings_algorithms}

Suppose that we are interested in drawing samples from a \emph{target
  distribution} $\target$ supported on a subset $\myset$ of
$\real^\dims$.  A broad class of methods are based on first
constructing a discrete-time Markov chain that is irreducible and
aperiodic, and whose stationary distribution is equal to $\target$,
and then simulating this Markov chain for a certain number of steps
$k$.  As we describe below, the number of steps $k$ to be taken is
determined by a mixing time analysis.

In this paper, we consider the class of Markov chains that are of the
\emph{Metropolis-Hastings
  type}~\citep{metropolis1953equation,hastings1970monte}; see the
books by~\cite{robert2004monte} and~\cite{brooks2011handbook}, as well
as references therein, for further background.  Any such chain is
specified by an initial density $\myinitial$ over the set $\myset$,
and a \emph{proposal function} $\mytrans: \myset \times \myset \in
\real_+$, where $\mytrans(x, \cdot)$ is a density function for each $x
\in \myset$.  At each time, given a current state $x \in \myset$ of
the chain, the algorithm first proposes a new vector $z \in \myset$ by
sampling from the proposal density $\mytrans(x, \cdot)$.  It then
accepts $z \in \myset$ as the new state of the Markov chain with
probability
\begin{align}
  \label{EqnMHCorrection}
  \alpha(x, z) \defn \displaystyle\min\braces{1,
    \frac{\targetdensity(z)\density(z,
      x)}{\targetdensity(x)\density(x, z)}}.
\end{align}
Otherwise, with probability equal to $1-\alpha(x, z)$, the chain stays
at $x$.  Thus, the overall transition kernel $\density$ for the Markov
chain is defined by the function
\begin{align*}
\transdensity(x, z) & \defn \density(x, z) \alpha(x, z) \qquad \mbox{for
$z \neq x$,}
\end{align*}
and a probability mass at $x$ with weight $1- \int_\myset
\transdensity(x,z) dz$.  It should be noted that the purpose of the
Metropolis-Hastings correction~\eqref{EqnMHCorrection} is that ensure
that the target distribution $\target$ satisfies the \emph{detailed
  balanced condition}, meaning that
\begin{align}
  \label{EqnDetailedBalance}
\transdensity(y, x ) \targetdensity(x) & = \transdensity(x, y)
\targetdensity(y) \quad \mbox{for all $x, y \in \myset$.}
\end{align}
It is straightforward to verify that the detailed balance
condition~\eqref{EqnDetailedBalance} implies that the target density
$\target$ is stationary for the Markov chain.  Throughout this paper, we
analyze the \emph{lazy version} of the Markov chain, defined as
follows: when at state $x$ with probability $1/2$ the walk stays at
$x$ and with probability $1/2$ it makes a transition as per the
original random walk. Given that the Markov chains discussed in this paper
are also irreducible, the laziness ensures uniqueness of the stationary distribution.

Overall, this set-up defines an operator $\plaintransition_\density$
on the space of probability distributions: given an initial
distribution $\initial$ with $\supp(\initial) \subseteq
\supp(\target)$, it generates a new distribution
$\plaintransition_\density(\initial)$, corresponding to the
distribution of the chain at the next step.  Moreover, for any
positive integer $k = 1, 2, \ldots$, the distribution $\mu_k$ of the
chain at time $k$ is given by $\plaintransition_\density^k(\initial)$,
where $\plaintransition_\density^k$ denotes the composition of
$\plaintransition_\density$ with itself $k$ times.  Furthermore, the
transition distribution at any state $x$ is given by
$\plaintransition_\density(\diracdelta_x)$ where $\diracdelta_x$
denotes the dirac-delta distribution with unit mass \mbox{at $x$}.

Given our assumptions and set-up, we are guaranteed that $\lim_{k
  \rightarrow \infty} \plaintransition_\density^k(\initial) =
\target$---that is, if we were to run the chain for an infinite number
of steps, then we would draw a sample from the target distribution
$\target$.  In practice, however, any algorithm will be run only for a
finite number of steps, which suffices to ensure only that the
distribution from which the sample has been drawn is ``close'' to the
target $\target$.  In order to quantify the closeness, for a given
tolerance parameter $\delta \in (0,1)$, we define the
\emph{$\delta$-mixing time} as
\begin{align}
\label{EqnDefnMixingTime}
\tmix(\delta; \initial) & \defn \min \Big \{ k \; \vert \;
\tvnorm{\plaintransition_\density^k(\initial) - \target} \leq \delta \Big \},
  \end{align}
corresponding to the first time that the chain's distribution is
within $\delta$ in TV norm of the target distribution, given that it
starts with distribution $\initial$.

In the analysis of Markov chains, it is convenient to have a rough
measure of the distance between the initial distribution $\initial$
and the stationary distribution.  Warmness is one such measure:
For a finite scalar $M$, the initial distribution $\initial$ is said
to be $M$-warm with respect to the stationary distribution $\target$
if
  \begin{align}
    \label{EqnDefnMstart}
    \tag{Warm-Start}
    \sup_{S} \parenth{\frac{\initial(S)}{\target(S)}}
    \leq M,
  \end{align}
  where the supremum is taken over all measurable sets $S$.
A number of mixing time guarantees from past work~\citep{lovasz1999hit,vempala2005geometric} are stated in terms
of this notion of \mbox{$M$-warmness,} and our results make use of it
as well.  In particular, we provide bounds on the quantity $
\sup \limits_{\initial \in \MWARM} \tmix(\delta ; \initial)$, where $\MWARM$
denotes the set of all distributions that are $M$-warm with respect to
$\target$.  Naturally, as the value of $M$ decreases, the task of
generating samples from the target distribution gets easier.  However,
access to a warm-start may not be feasible for many applications and
thus deriving bounds on mixing time of the Markov chain from a non
warm-start is also desirable.  Consequently, we provide modifications
of our random walks which mix in polynomial time even from
deterministic starting points.

\subsection{Sampling from polytopes}

In this paper, we consider the problem of drawing a sample uniformly
from a polytope.  Given a full-rank matrix $A \in \real^{\obs \times
  \dims}$ with $\obs \geq \dims$, we consider a polytope $\Poly$ in
$\real^\dims$ of the form
\begin{align}
\Poly & \defn \big \{ x \in \real^\dims \mid A x \leq b \big \},
\end{align}
where $b \in \real^\obs$ is a fixed vector. Since the uniform
distribution on the polytope $\Poly$ is the primary target
distribution considered in the paper, in the sequel we use $\target$
exclusively to denote the uniform distribution on the polytope
$\Poly$.  There are various algorithms to sample a vector from the
uniform distribution over $\Poly$, including the ball
walk~\citep{lovasz1990ballwalk} and hit-and-run
algorithms~\citep{lovasz1999hit}.  To be clear, these two algorithms
apply to the more general problem of sampling from a convex set;
Table~\ref{tab:all_rates} shows their complexity, when applied to the
polytope $\Poly$, relative to the Vaidya walk analyzed in this paper.
Most closely related to our paper is the Dikin walk proposed by
\cite{kannan2012random}, and a more general random walk on a
Riemannian manifold studied by \cite{narayanan2016randomized}.  Both
of these random walks, as with the Vaidya and John walks, can be
viewed as randomized versions of the interior point methods used to
solve linear programs, and more generally, convex programs equipped
with suitable barrier functions.

In order to motivate the form of the Vaidya and John walks proposed in
this paper, we begin by discussing the ball walk and then the Dikin
walk. For the sake of completeness, we end the section with a brief description
another popular sampling algorithm Hit-and-run.

\paragraph{Ball walk:} The ball walk of \cite{lovasz1990ballwalk} is
simple to describe: when at a point $x \in \Poly$, it draws a new
point $u$ from a Euclidean ball of radius $r > 0$ centered at $x$.
Here the radius $r$ is a step size parameter in the algorithm.  If the
proposed point $u$ belongs to the polytope $\Poly$, then the walk
moves to $u$; otherwise, the walk stays at $x$.  On the one hand,
unlike the walks analyzed in this paper, the ball walk applies to any
convex set, but on the other, its mixing time depends on the condition
number $\condition$ of the set $\Pspace$, given by
\begin{align}
\label{eq:condition}
\condition = \inf_{R_\text{in}, R_\text{out} > 0} \big \{ \frac {R_\text{out}}{R_\text{in}} \quad  \vert
\quad \Ball(x, R_\text{in}) \subseteq \Pspace \subseteq \Ball(y, R_\text{out})
\quad \mbox{for some $x, y \in \Pspace$} \Big \}.
\end{align}
Mixing time of the ball walk has been improved greatly since it was introduced~\citep{kannan1997random,kannan2006blocking,lee2018stochastic}.
Nonetheless, as shown in Table~\ref{tab:all_rates}, the mixing time of
the ball walk gets slower when the condition of the set is large;
for instance, it scales\footnote{Although, very recently~\cite{lee2018stochastic}
improved the mixing time of the ball walk for isotropic sets which have
$\condition = \mathcal{O}({\sqrt{d}})$ improved from $\order{\dims^3}$ to 
$\order{\dims^{2.5}}$.} as $\dims^6$ for a set with condition number
$\condition= \dims^2$.
One approach to tackle bad conditioning is to use rounding as a pre-processing
step, where the set is rounded to bring it in a near-isotropic position,
i.e., reduce the condition~$\condition$ to near-constant before sampling
from it.
Nonetheless, these algorithms are themselves based on several rounds of
sampling algorithms and the current best
algorithm by~\cite{lovasz2006simulated} puts a convex body into approximately
isotropic position, i.e., $\mathcal{O}^*({\sqrt{\dims}})$ rounding with
a running time of $\mathcal{O}^*(\dims^4)$ where we have omitted the dependence
on log-factors. If one has more information about the structure of the convex
set (and not just oracle access as required by the ball walk), one can
potentially exploit it to design fast sampling
algorithms which are unaffected by the conditioning of the set thereby reducing
the need of the (expensive) pre-processing step. One such algorithm
is the Dikin walk for polytopes which we describe next.

\paragraph{Dikin walk:}
The Dikin walk~\citep{kannan2012random} is similar in spirit to the ball
walk, except
that it proposes a point drawn uniformly from a \emph{state-dependent}
ellipsoid known as the Dikin
ellipsoid~\citep{dikin1967iterative,nesterov1994interior}.
It then applies an accept-reject step to adjust for the difference in the
volumes of these ellipsoids at different states.  The state-dependent
choice of the ellipsoid allows the Dikin walk to adapt to the boundary
structure.  A key property of the Dikin ellipsoid of unit radius---in
contrast to the Euclidean ball that underlies the ball walk---is that
it is always contained within $\Poly$, as is known from classic
results on interior point methods~\citep{nesterov1994interior}.
Furthermore, the Dikin walk is affine invariant, meaning that its
behavior does not change under linear transformations of the problem.
As a consequence, the Dikin mixing time does not depend on the
condition number $\condition$.  In a variant of this random
walk~\citep{narayanan2016randomized}, uniform proposals in the
ellipsoid are replaced by Gaussian proposals with covariance specified
by the ellipsoid, and it is shown that with high probability, the
proposal falls within the polytope.

The Dikin walk is closely related to the interior point methods for
solving linear programs.  In order to understand the Vaidya and John
walks, it is useful to understand this connection in more detail.
Suppose that our goal is to optimize a convex function over the
polytope $\Poly$.  A barrier method is based on converting this
constrained optimization problem to a sequence of unconstrained ones,
in particular by using a barrier to enforce the linear constraints
defining the polytope.  Letting $a_i^\top$ denote the $i$-th row
vector of matrix $A$, the \emph{logarithmic-barrier} for the polytope
$\Pspace$ given by the function
\begin{align}
    \label{eq:log_barrier}
   \logbarr(x) & \defn -\sum_{i=1}^\obs \log(b_i-a_i^T x).
\end{align}
For each $i \in [\obs]$, we define the scalar $\slack_{x, i} \defn
(b_i-a_i^T x)$, and we refer to the vector \mbox{$\slack_{x} \defn
  (\slack_{x, 1}, \ldots, \slack_{x, \obs})\tp$} as the
\emph{slackness at $x$}.

Each step of an interior point algorithm~\citep{boyd2004convex}
involves (approximately) solving a linear system involving the Hessian
of the barrier function, which is given by
\begin{align}
  \label{eq:hess_log}
  \hesslogbarr(x) & \defn \sum_{i=1}^\obs \frac{a_i
    a_i^\top}{\slack_{x, i}^2}.
\end{align}
In the Dikin walk~\citep{kannan2012random}, given a current iterate
$x$, the algorithm chooses a point uniformly at random from the
ellipsoid
\begin{align}
  \label{eq:dikin_covariance}
  \{u \in \real^\dims \, \mid \, (u-x)\tp \dikin_x  (u-x) \leq R \},
\end{align}
where $\dikin_x \defn \hesslogbarr(x)$ is the Hessian of the log
barrier function, and $R > 0$ is a user-defined radius.  In an
alternative form of the Dikin walk~\citep{narayanan2016randomized,sachdeva2016mixing}, the
proposal vector $u \in \real^\dims$ is drawn randomly from a Gaussian
centered at $x$, and with covariance equal to a scaled copy of
$(\dikin_x)^{-1}$.  Note that in contrast to the ball walk, the
proposal distribution now depends on the current state.


\paragraph{Vaidya walk:} For the \emph{Vaidya walk} analyzed in this paper, we instead generate proposals from the ellipsoids defined, for each $x \in \intP$, by the
positive definite matrix
\begin{subequations}
\begin{align}
    \label{eq:vaidya_covariance}
  \vaidya_x & \defn \sum_{i=1}^\obs \parenth{\levvaidya_{x, i} +
    \vaidyabeta} \frac{a_i a_i^\top}{\slack_{x, i}^2}, \qquad \mbox{
    where } \\
    \label{eq:defn_lev_scores}
    \vaidyabeta & \defn {\dims}/{\obs} \quad \mbox{and} \quad
    \levvaidya_{x} \defn \parenth{ \frac{a_1\tp
        (\hesslogbarr_x)^{-1}a_1}{\slack_{x, 1}^2}, \ldots,
      \frac{a_\obs\tp (\hesslogbarr_x)^{-1} a_\obs}{\slack_{x,
          \obs}^2} }\tp.
\end{align}
\end{subequations}
The entries of the the vector $\sigma_x$ are known as the leverage
scores assciated with the matrix $\hesslogbarr_x$ from
equation~\eqref{eq:hess_log}, and are commonly used to measure the
importance of rows in a linear system~\citep{mahoney2011randomized}.
The matrix $\vaidya_x$ is related to the Hessian of the function $x
\mapsto \vaidyabarr_x$ given by
\begin{align}
\label{eq:vaidyabarrier}
  \vaidyabarr_x \defn \log\det{\hesslogbarr_x}+ \vaidyabeta\logbarr_x.
\end{align}
This particular combination of the \emph{volumetric barrier} and the
\emph{logarithmic barrier} was introduced by~\cite{vaidya1989new}
and~\cite{vaidya1993technique} in the context of interior point
methods, hence our name for the resulting random walk.


\paragraph{John walk:} We now describe the John walk.
For any vector $w \in \real^\obs$, let $W \defn \diag(w)$ denote the
diagonal matrix with $W_{ii} = w_i$ for each $i \in [\obs]$.  Let $S_x
= \diag(s_x)$ denote the slackness matrix at $x$.  It is easy to see
that $\slackmatrix_x$ is positive semidefinite for all $x
\in \Pspace$, and strictly positive definite for all $x \in \intP$.
The (scaled) inverse covariance matrix underlying the John walk is
given by
\begin{align}
    \label{eq:john_covariance}
\john_x \defn \sum_{i=1}^\obs \johnweights_{x, i}
\frac{a_ia_i\tp}{\slack_{x, i}^2},
\end{align}
where for each $x \in \intP$, the weight vector $\johnweights_x \in
\real^\obs$ is obtained by solving the convex program
\begin{align}
  \johnweights_x \defn \arg \min_{\weights\in\real^\obs} \Biggr \{
  \sum_{i=1}^\obs \weights_i - \frac{1}{\johnalpha}\log\det (A\tp
  \slackmatrix_x^{-1}\weightmatrix^{\johnalpha} \slackmatrix_x^{-1} A)
  - \johnbeta \sum_{i=1}^\obs \log \weights_i \Biggr \},
  \label{eq:john_weights}
\end{align}
with $\johnbeta \defn \dims/2\obs$ and $\johnalpha \defn
1-1/\log_2(1/\johnbeta)$.
\cite{lee2014path} proposed the
convex program~\eqref{eq:john_weights} associated with the \emph{approximate
John weights} $\johnweights_x$, with the aim of searching for the best
member of  a family of volumetric barrier functions. They analyzed
the use of the John weights in the context of speeding up interior
point methods for solving linear programs; here we consider them for
improving the mixing time of a sampling algorithm.
The convex program~\eqref{eq:john_weights} is closely related to the problem
of finding the largest ellipsoid at any interior point of the polytope,
such that the ellipsoid is contained within the polytope. This problem of
finding the largest ellipsoid was first studied by~\cite{joh48}
who showed that each convex body in $\realdim$ contains a unique ellipsoid of maximal volume.
The convex program~\eqref{eq:john_weights} was used by ~\cite{lee2014path}
to compute approximate John Ellipsoids for solving linear programs.
In a recent work, \cite{gustafson2018john} make use of the exact John
ellipsoids and design a polynomial time sampling algorithm for polytopes.
See Table~\ref{tab:all_rates} for the associated guarantees.


\paragraph{Hit-and-run:} We conclude with a brief discussion with another
popular sampling algorithm: Hit-and-run. It was introduced by~\cite{smith1984efficient}
as a sampling algorithm for general distributions and it was later
shown to have polynomial mixing time for sampling from convex
sets~\citep{lovasz1999hit,lovasz2003hit,lovasz2006hit}. The
algorithm proceeds as follows: when at point $x$, it firsts draws a random
line through $x$ and then samples from the one-dimensional marginal
of the target distribution restricted to this line.
For uniform sampling from convex sets, the second step simplifies to
drawing a uniform point from the line restricted to the convex
set. Mixing time bounds for this random walk are summarized in
Table~\ref{tab:all_rates}.

\subsection{Mixing time comparisons of walks} 
\label{sub:mixing_time_comparisons_of_walks}

Table~\ref{tab:all_rates} provides a summary of the mixing time bounds and
per step complexity and the effective per sample complexity for various
random walks, including the Vaidya and John
walks analyzed in this paper.  In addition to the Ball Walk,
Hit-and-Run, Dikin, Vaidya and John walks, we also show scalings for
the recently introduced Riemannian Hamiltonian Monte Carlo (RHMC) on polytopes
by~\cite{lee2016geodesic} and the John's walk based on exact John ellipsoids
studied by~\cite{gustafson2018john}. The details of per iteration cost for
the new random walks is discussed in Section~\ref{sub:per_iteration_cost}.
We now compare and contrast the complexities of these random walks.

Unlike the Ball Walk or hit-and-run which are useful for general convex
sets, the Dikin, Vaidya, John and RHMC
walks are specialized for polytopes. These latter random walks exploit the
definition of the polytope in a particular way so that the transition
probability from a point
$x$ to $y$ does not change under an affine transformation,
i.e., $\transition(x, y) = \transition(Ax, Ay)$ where $\transition$
denotes the transition kernel for the random walk. Consequently, the
mixing time bounds for these random walks have no dependence on the
condition number of the set $\condition$~\eqref{eq:condition}. We can see from
Table~\ref{tab:all_rates}, that compared to the Ball walk and hit-and-run,
Vaidya walk mixes significantly faster if $\obs \ll \dims\condition^2$.
The condition number $\condition$ of polytopes with polynomially many
faces can not be $\mathcal{O}({\dims^{\frac{1}{2}-\epsilon}})$ for any
$\epsilon > 0$ but can be arbitrarily larger, even exponential in
dimension $\dims$~\citep{kannan2012random}.  For such polytopes, Vaidya
walk mixes faster as long as $\obs \ll \dims^3$ (and even for larger
$\obs$ when $\condition$ is large).  It takes
$\mathcal{O}({\sqrt{\obs/\dims}})$ fewer steps compared to Dikin walk and
thus provides a practical speed up over all range of $\dims$.

From a warm start, the Riemannian Hamiltonian Monte Carlo on polytopes introduced by~\cite{lee2016geodesic} has $\order{\obs \dims^{2/3}}$ mixing
time, and thus mixes faster (up to constants) compared than the Vaidya
walk (respectively the John walk) when the number of constraints
$\obs$ is is bounded as $\obs \ll \dims^{5/3}$ (respectively $\obs \ll
\dims^{11/6}$).  For larger numbers of constraints, the Vaidya and John
walks exhibit faster mixing.  More generally, it is clear that the
rate of John walk has \emph{almost} the best order across all the
walks for reasonably large values of $\obs \gg \dims^2$.

Finally, let us compare the (exact) John walk due
to~\cite{gustafson2018john} with the (approximate) John walk studied
in our paper. A notable feature of their random walk is that its
mixing time is independent of the number of constraints and the per
iteration cost also depends linearly on the number of constraints.
Nonetheless, the dependence on $\dims$, for both the mixing time
($\dims^7$) and the per iteration cost ($\obs\dims^4+\dims^8$) is
quite poor.  In contrast, the per iteration cost for our John walk is
$\obs\dims^2$ and the mixing time has only a poly-logarithmic
dependence on $\obs$.



\begin{table}[h]
  \centering
\resizebox{0.7\textwidth}{!}{
  \begin{tabular}{llll}
    \hline \toprule {\bf Random walk} & {$\bf \tmix(\delta;
      \initial)$} & {\bf Iteration cost} & {\bf Per sample cost}
      \\ \midrule Ball walk$^\#$~\footnotesize{\citep{kannan2006blocking}}
    & ${\dims^2\condition^2
    }$& $\obs \dims$ & $\obs \dims^3\condition^2$ \vspace{2ex} \\
    Hit-and-Run~\footnotesize\citep{lovasz2006hit} &${\dims^2\condition^2}$& ${\obs
    \dims}$  & $\obs \dims^3\condition^2$
             \vspace{2ex}\\
    Dikin walk~\footnotesize\citep{kannan2012random} &
    ${\obs \dims}$& ${\obs
          \dims^{2}}$  & $\obs^2 \dims^3$\vspace{2ex}\\
    RHMC walk~\footnotesize\citep{lee2018convergence} &${\obs \dims^{2/3}}$
    & ${\obs \dims^{2}}$ & $\obs^2 \dims^{2.67}$
    \vspace{2ex}\\
    John's walk$^\dagger$~\footnotesize\citep{gustafson2018john} &${
    \dims^7}$
    & ${\obs \dims^{4}+ \dims^8}$ & $\obs\dims^{11}+ \dims^{15}$
    \vspace{2ex}\\
    Vaidya walk~\footnotesize{(this paper)}&${\obs^{1/2} \dims^{3/2}}$&
    ${\obs \dims^2}$ & $\obs^{1.5} \dims^{3.5}$
             \vspace{2ex}\\

    John walk~\footnotesize{(this paper)}&${\dims^{5/2} \,\log^4
    \parenth{\frac{2\obs}{\dims}}}$& ${\obs \dims^{2} \log^2 \obs}$
    & $\obs \dims^{4.5}$
            \vspace{2ex}\\

    Improved John walk$^\ddagger$~\footnotesize{(this paper)}&${\dims^2 \,\polylogfactor}$& ${\obs \dims^
    {2}\log^2\obs}$& $\obs \dims^{4}$\vspace{2ex}\\
    \bottomrule
    \hline
  \end{tabular}
  }
  \vspace{2ex}
  \caption{Upper bounds on computational complexity of random walks on
    the polytope $\Pspace = \{x \in \realdim \vert Ax \leq b\}$
    defined by the matrix-vector pair $(A, b) \in \real^{\obs \times \dims}
    \times \real^\obs$ with a warm-start. For simplicity, here we ignore the logarithmic dependence on the
    warmness parameter and the tolerance $\delta$. The iteration
    cost terms of order $\obs\dims^2$ arise from linear system
    solving, using standard and numerically stable algorithms, for
    $\obs$ equations in $\dims$ dimensions; algorithms with best
    possible theoretical complexity  $\obs\dims^{\omega}$ for $\omega<1.373$
    are not numerically stable enough for practical use. $^\#$Mixing time of the Ball walk has been improved to $\order{\dims^2\condition}$ for near isotropic convex bodies by~\cite{lee2018stochastic} during the submission period of this paper. While ball walk, Hit-and-run
    are affected by the condition number~$\condition$ of the set, the Dikin
    and RHMC walks have quadratic dependence on the number of constraints
    $\obs$.
    $^\dagger$John's walk by~\cite{gustafson2018john} (based on
    the exact John ellipsoids) has linear dependence on $\obs$ but poor
    dependence on $\dims$. In contrast, the Vaidya walk has sub-quadratic
    dependence on $\obs$ and significantly better dependence on $\dims$.
    Furthermore, the John walk (based on approximate John's ellipsoids)
    analyzed in this paper has linear dependence with reasonable dependence
    on the dimensions $\dims$.
	$^\ddagger$The mixing time bound for the improved John walk with
	poly-logarithmic factor $\polylogfactor$ is conjectured.}
  \label{tab:all_rates}
\end{table}


\subsection{Visualization of three walks' proposal distributions}
\label{sub:visualization_of_three_walks}

In order to gain intuition about the three interior point based
methods---namely, the Dikin, Vaidya and John walks---it is helpful to
discuss how their underlying proposal distributions change as a
function of the current point $x$. All three walks are based on
Gaussian proposal distributions with inverse covariance matrices of
the general form
\begin{align*}
\sum_{i=1}^\obs \weights_{x, i} \frac{a_i a_i \tp}{s_{x, i}^2},
\end{align*}
where $\weights_{x, i} > 0$ corresponds to a state-dependent weight
associated with the $i$-th constraint. The Dikin walk uses the
weights $\weights_{x, i} = 1$; the Vaidya walk uses the weights
\mbox{$\weights_{x, i} = \levscores_{x, i} + \vaidyabeta$;} and the
John walk uses the weights $\weights_{x, i} = \johnweights_{x, i}$.
For simplicity, we refer to these weights as the Dikin, Vaidya and
John weights. The $i$-th weight characterize the importance of the
$i$-th linear constraint in constructing the inverse covariance
matrix. A larger value of the weight $\weights_{x, i}$ relative to
the total weight $\sum_{i=1}^\obs \weights_{x, i}$ signifies more
importance for the $i$-th linear constraint for the point $x$.

Figure~\ref{fig:weights_move} illustrates the difference in three
weights as we move points inside the polytope $[-1, 1]^2$.  When
the point $x$ is in the middle of the unit square formed by the four
constraints, all walks exhibit equal weight for every constraint.
When the point $x$ is closer to the bottom-left boundary, the Vaidya
and John weights assign larger weights to the bottom and the left constraints, while the weights for top and right constraints
decrease. Note that the total sum of Vaidya weights and that of John weights
remains constant independent of the position of the point $x$.
\begin{figure}[h]
  \centering
  \begin{subfigure}{0.49\linewidth}
    \centering
    \includegraphics[width=1\linewidth]{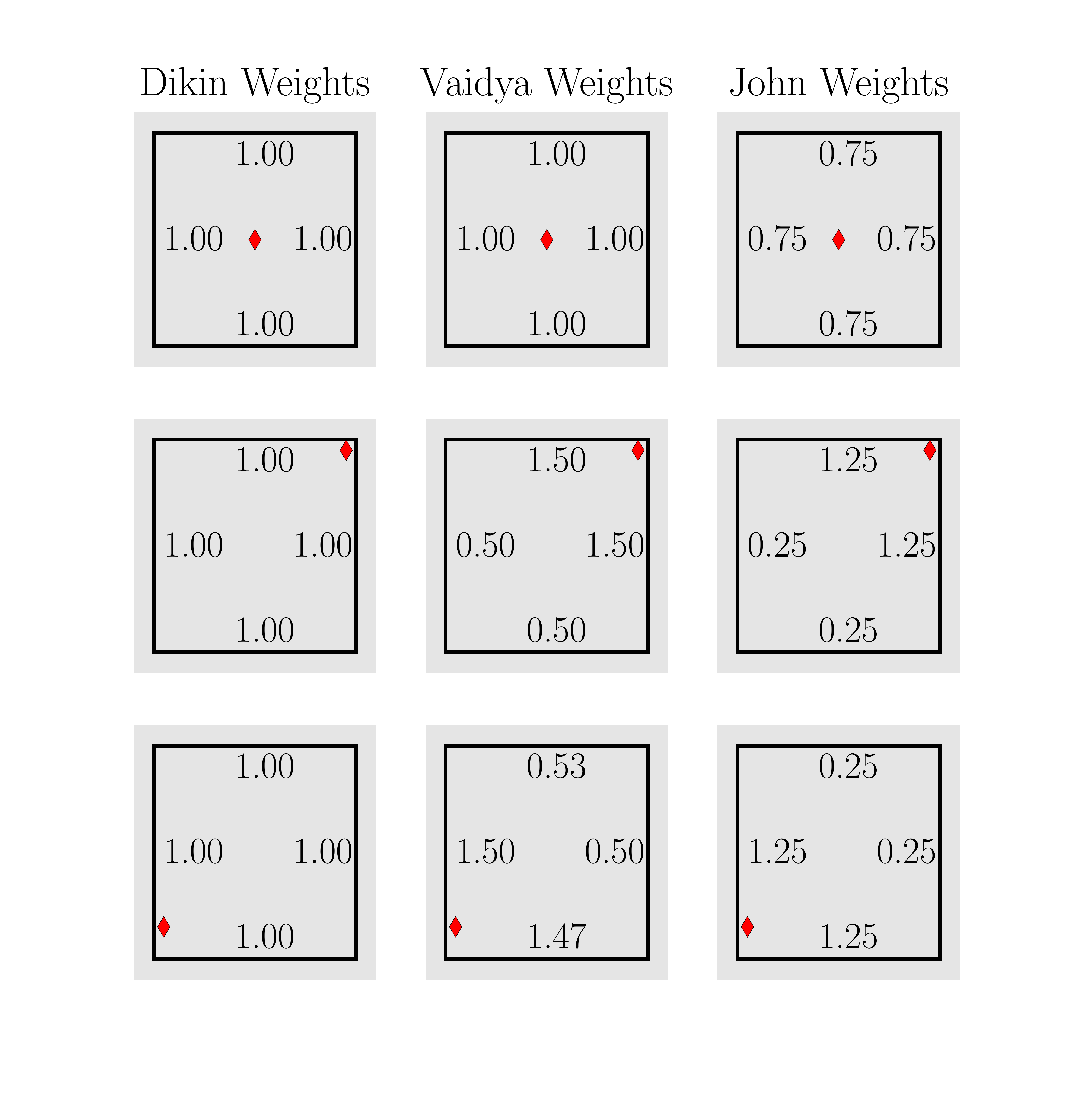}
    \vspace{-2.5\baselineskip}
    \caption{\centering{Weights for different locations and a fixed number of constraints $\obs$.}}
    \label{fig:weights_move}
  \end{subfigure}
  \begin{subfigure}{0.49\linewidth}
    \centering
    \includegraphics[width=1\linewidth]{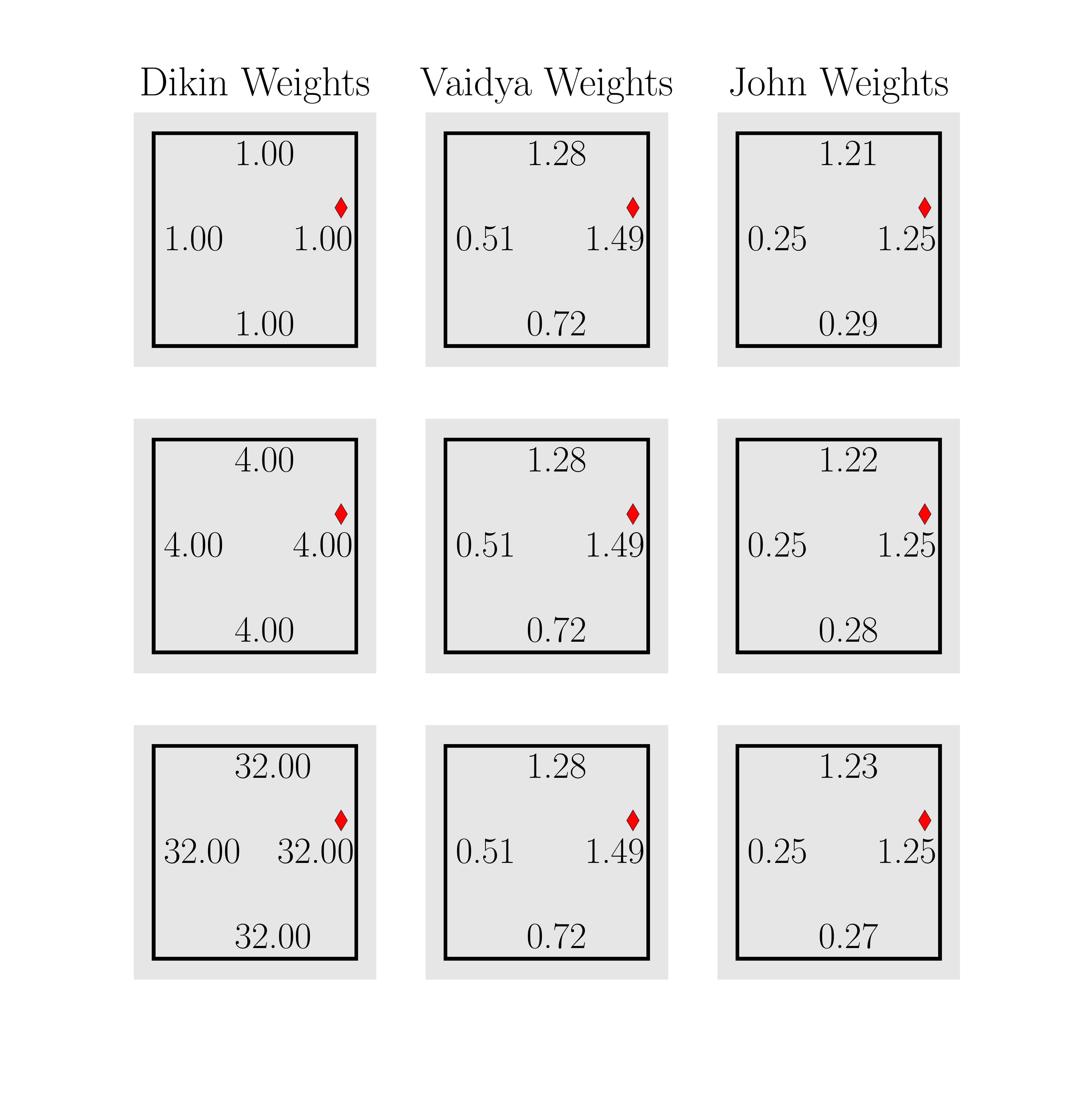}
    \vspace{-2.5\baselineskip}
    \caption{
        \centering{Effective weights for a fixed location and different number of constraints $\obs$}
      }
    \label{fig:weights_repeat}
  \end{subfigure}
\caption{Visualization of the weights on the square with repeated constraints $\repsquare_{\obs/4}$ for the different random walks. The number mentioned next to the boundary lines denotes the effective weight for the location $x$ (denoted by diamond) for the corresponding constraint.
\textbf{(a)}
$\obs = 4$ is common across rows and $x = (0, 0)$ for the top row, $(0.9, 0.9)$ for the middle and $(-0.9, -0.7)$ for the bottom row.
The Dikin weights are independent of $x$, the Vaidya and the John weights for a constraint increase if the location $x$ is closer to it.
  \textbf{(b)}
$x= (0.85, 0.30)$ is common across rows, and $\obs\!=\!4$ for the top row, $\obs = 16$ for the middle and $\obs\!=\!128$ for the bottom row.
The effective Dikin weight for each constraint increases linearly with
$\obs$ but for the Vaidya and John walk adaptively, the weights get adjusted
such that the sum of their weights is always of the order of the dimension $\dims$.
}
\label{fig:weights_new}
\end{figure}

In Figure~\ref{fig:weights_repeat}-\ref{fig:ellipsoid_16384}, we
demonstrate that the Vaidya walk and the John walk are better at
handling repeated constraints.  Note that we can define the square
$[-1, 1]^2$ as
\begin{align}
\label{eq:unit_square}
  [-1, 1]^2 = \braces{x \in \mathbb{R}^2 \Bigg\vert Ax \leq b, A
    = \begin{bmatrix} 1 & 0 \\ 0 & 1 \\ -1 & 0 \\ 0 & -1
  \end{bmatrix},
   b =
   \begin{bmatrix}
     1 \\
     1
   \end{bmatrix}
   }.
\end{align}
Simply repeating the rows of the matrix $A$ several times changes the mathematical formulatiton of the polytope, but does not change the shape of the polytope. We define the square with constraints repeated $\obs/4$ times $\repsquare_{\obs/4}$ as
\begin{align}
  \label{eq:repeated_square}
  \repsquare_{\obs/4} = \braces{x \in \mathbb{R}^2 \Bigg\vert A_{\obs/4} x \leq b_{\obs/4}, A_{\obs/4} = \begin{bmatrix}
    A \\ \vdots \\ \scriptstyle \times \parenth{\obs/4}
  \end{bmatrix},
  b_{\obs/4} = \begin{bmatrix}
    b \\ \vdots \\ \scriptstyle \times \parenth{\obs/4}
  \end{bmatrix},
   }
\end{align}
where $A$ and $b$ were defined above.
We denote effective weight for each distinct constraint as the sum of weights corresponding to the same constraint. Using this definition, the effective Dikin weight, which is $\obs/4$, is thus affected by the repeating of constraints. Consequently, the Dikin ellipsoid is much smaller for polytopes with repeated constraints. However, the Vaidya and John weights do not change as observed in the Figure~\ref{fig:weights_repeat}.  Such a
property of these two weights implies that the Vaidya and John ellipsoids
are not too small even for very large number of constraints.  And we
observe such a phenomenon in
Figures~\ref{fig:ellipsoid_64}-\ref{fig:ellipsoid_16384} where the
repetition of rows in the matrix $A$ leads to very small Dikin
ellipsoid but large Vaidya and John ellipsoid.
A few other numerical computations also suggest that the Vaidya and John
ellipsoids are moder adaptive when compared to Dikin ellipsoids when the
number of constraints is large.
Nonetheless, such a claim is only based on heuristics and is presented simply
to provide an intuition that the new ellipsoids are better behaved than
Dikin ellipsoids and thereby motivated the design of the new random walks.

\begin{figure}[h]
  \centering
  \begin{subfigure}{0.48\linewidth}
    \centering
    \includegraphics[width=1\linewidth]{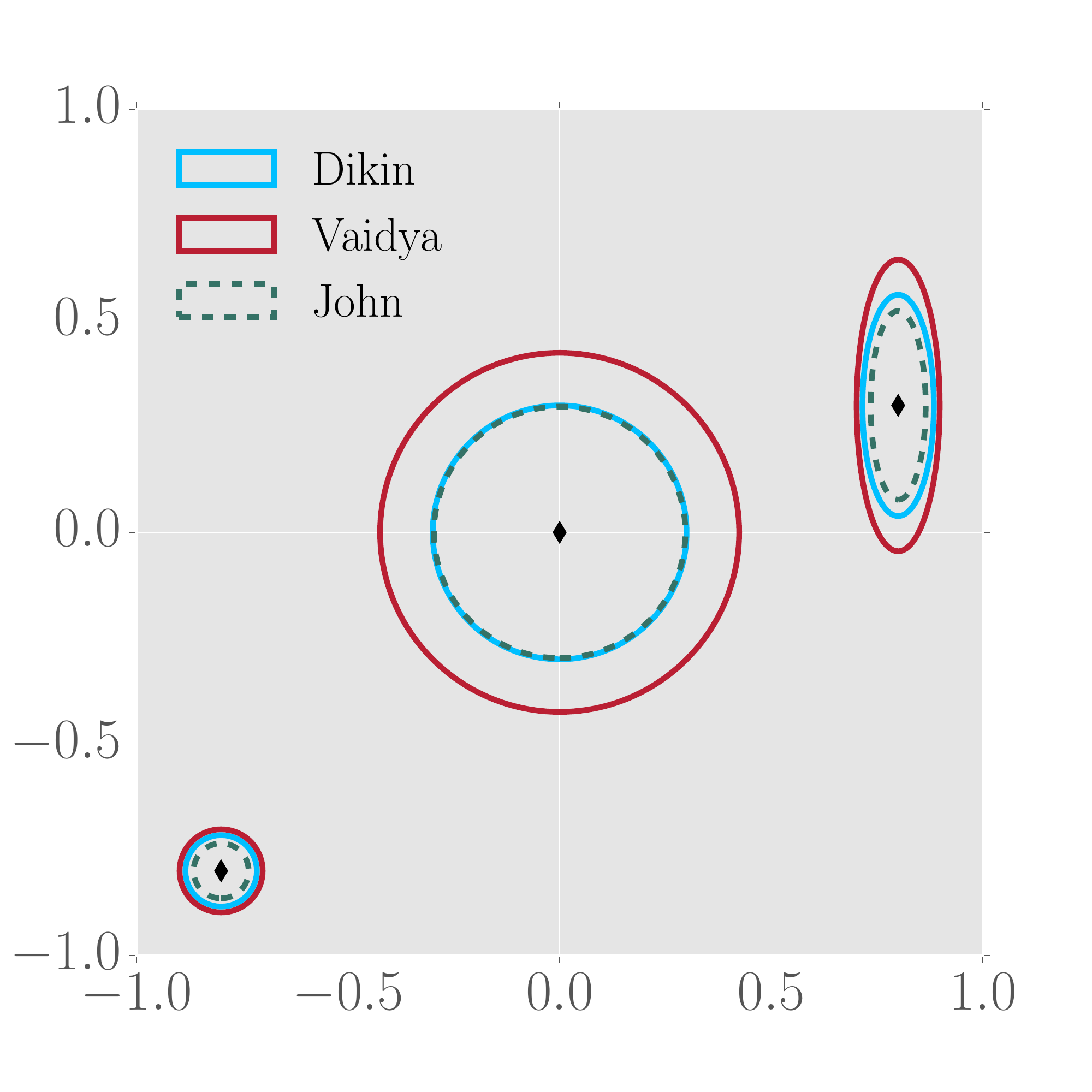}
    \vspace{-1.5\baselineskip}
    \caption{$\obs = 32$}
    \label{fig:ellipsoid_64}
  \end{subfigure}
  \begin{subfigure}[h]{0.48\linewidth}
    \centering
    \includegraphics[width=1\linewidth]{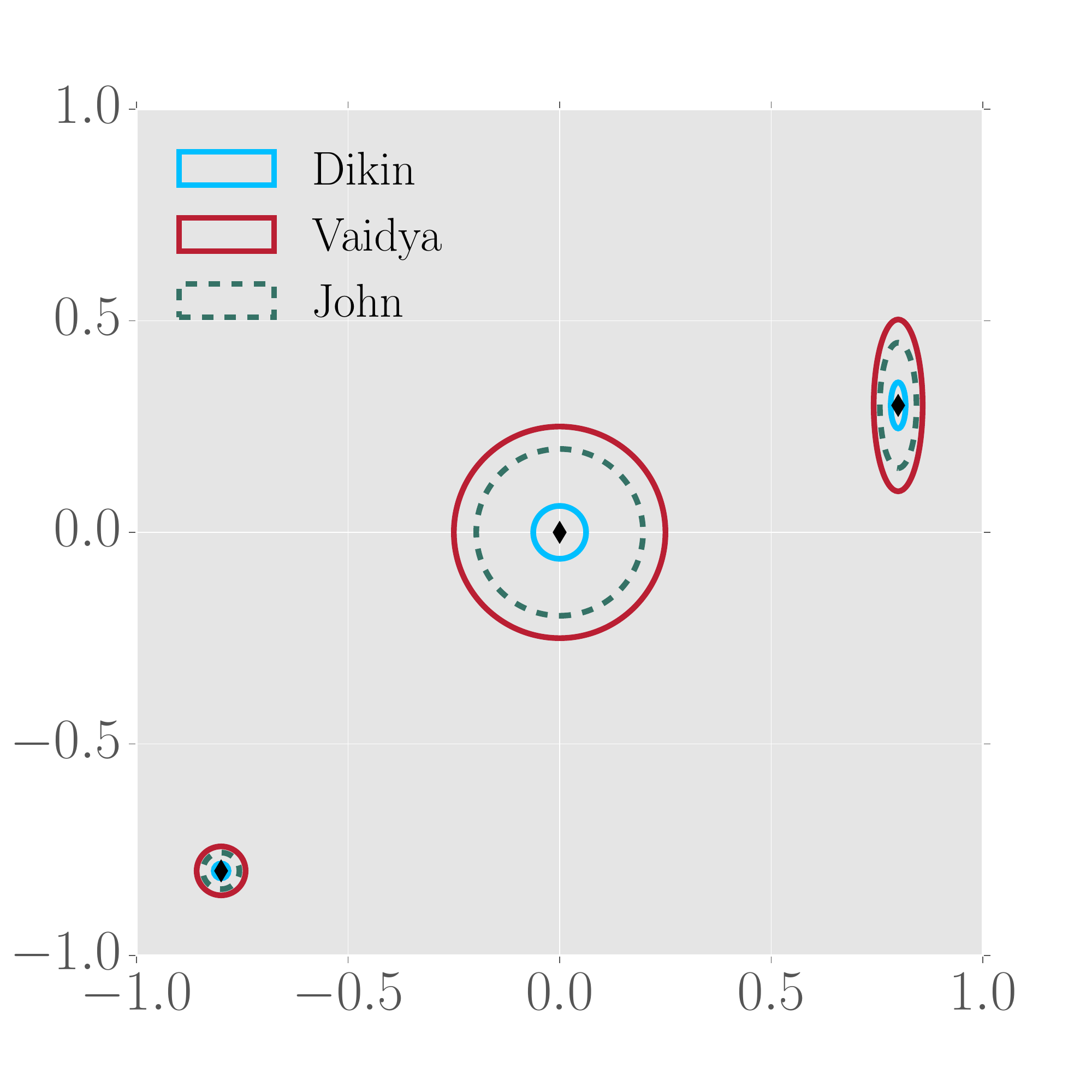}
    \vspace{-1.5\baselineskip}
    \caption{$\obs = 2048$}
    \label{fig:ellipsoid_16384}
  \end{subfigure}\vspace{3mm}
  \caption{Visualization of the proposal distribution on the square with repeated constraints $\repsquare_{\obs/4}$ for the different random walks.   \textbf{(a, b)} Unit ellipsoids associated with the  covariances of the random walks at different states $x$ on the square with repeated constraints $\repsquare_{\obs/4}$.  Clearly, all these ellipsoids adapt to the boundary
    but increasing $\obs$ has a profound impact on the volume of the
    Dikin ellipsoids and comparatively less impact on the Vaidya and
    John ellipsoids.}
  \label{fig:weights}
\end{figure}


\section{Main results}
\label{sec:vaidya_walk_and_convergence}

With the basic background in place, we now describe the algorithms
more precisely and state upper bounds on the mixing time of the Vaidya
and John walks.  In Section~\ref{sub:improved_john_walk}, we propose a
variant of the John walk, known as the \emph{improved John walk}, and
conjecture that it has a better mixing time bound than that of the
John walk.


\subsection{Vaidya and John walks}
\label{sub:vaidya_walk}

In this subsection, we formally define the Vaidya and John walks. In Algorithm~\ref{algo:vaidya_Walk} and Algorithm~\ref{algo:john_Walk}, we summarize the steps of the Vaidya walk and the John walk.


\paragraph{Vaidya walk:}

The Vaidya walk with radius parameter $r > 0$, denoted by $\VW{r}$ for
short, is defined by a Gaussian proposal distribution denoted as $\proposal_x^\tagvaidya$: given a current
state $x \in \intP$, it proposes a new point by sampling from the
multivariate Gaussian distribution
$\gaussian{x}{\frac{r^2}{\sqrt{\obs\dims}} {\vaidya_x}^{-1} }$.  In
analytic terms, the proposal density at $x$ is given by
\begin{align}
\label{eq:proposal_density}
  \density^\tagvaidya_x(z) \defn
  \density_{\fulltagvaidya(\rparam)}(x, z) =
  \sqrt{\det{\vaidya_x}} \parenth{\frac{\obs\dims}{2\pi
      r^2}}^{\dims/2} \exp\parenth{-\frac{\sqrt{\obs\dims}}{2r^2}
    \ (z-x)^\top \vaidya_x (z-x) }.
  \end{align}
As the target distribution for our walk is the uniform distribution on
$\Pspace$, the proposal step is followed by an accept-reject step as described in Section~\ref{sub:metropolis_hastings_algorithms} (equation~\ref{EqnMHCorrection}).
Thus the overall transition distribution for the walk at state $x$ is defined by a density given by
\begin{align*}
q_{\fulltagvaidya(\rparam)}(x, z) = \begin{cases} \min
  \braces{\density^\tagvaidya_x(z), \density^\tagvaidya_z(x)}, &z
  \in \Pspace \text{ and } z \neq x,\\ 0, &z \notin \Pspace,
  \end{cases}
\end{align*}
and a probability mass at $x$, given by $1 -
  \int\displaylimits_{z \in \Pspace} \min \braces{\density_x(z),
    \density_z(x)} dz$. We use $\lazytrans_{\fulltagvaidya(\rparam)}$ to denote the resulting transition operator for the Vaidya walk with parameter $\rparam$.

\begin{algorithm}[h]
  \KwIn{Parameter $r$ and $x_0 \in \intP$}
  \KwOut{Sequence $x_1, x_2, \ldots$}
  \BlankLine
  \For{$i=0, 1, \ldots $}{
    With probability $\frac{1}{2}$ stay at the current state: $x_{i+1}
    \gets x_i$ \quad \% {\footnotesize{\emph{lazy step}}} \\
    With probability $\frac{1}{2}$ perform the following update:\\
     {%
      \quad \textbf{Proposal step}:
      Draw $z_{i+1} \sim
      \mathcal{N}\left(x_{i}, \frac{r^2}{(\numobs\usedim)^
      {1/2}}\vaidya_{x_{i}}^{-1}\right)$ \\
      \quad \textbf{Accept-reject step}:\\
      \quad \quad \emph{if} $z_{i+1} \notin \Pspace$
      \emph{then} {$ x_{i+1} \gets
        x_i $ } \quad \% {\footnotesize{\emph{reject an infeasible
      proposal}}} \\
      \quad \quad \emph{else}\\
      \quad\quad\quad compute
       $\alpha_{i+1} = \displaystyle\min \braces{1,
          {\density_{z_{i+1}}(x_{i+1})}/{\density_{x_{i+1}}(z_{i+1})}
        }$\\
        \quad \quad \quad With probability $\alpha_{i+1}$ accept the proposal:
        $x_{i+1} \gets
        z_{i+1} $\\
        \quad \quad \quad With probability $1-\alpha_{i+1}$ reject the proposal:
        $x_{i+1} \gets
        x_i$}
    }
  \caption{Vaidya Walk with parameter $r$ (\VW{r})}
  \label{algo:vaidya_Walk}
\end{algorithm}


\paragraph{John walk:}

The John walk is similar to the Vaidya walk except that the proposals
at state $x \in \intP$ are generated from the multivariate Gaussian
distribution $\gaussian{x}{\frac{\rparam^2}{\dims^{3/2} \cdot
    \log_2^4(2\obs/\dims)} {\john_x}^{-1} }$, where the matrix
$\john_x$ is defined by equation~\eqref{eq:john_covariance}, and $\rparam >
0$ is a constant. The proposal distribution at $x \in \intP$ is denoted as $\proposal_x^\tagjohn$. The proposal step is then followed by an
accept-reject step similarly defined as in the Vaidya walk.
We use $\lazytrans_{\fulltagjohn(\rparam)}$ to denote the resulting transition operator for the John walk with parameter $\rparam$.

\begin{algorithm}[h]
  \KwIn{Parameter $r$ and $x_0 \in \intP$}
  \KwOut{Sequence $x_1, x_2, \ldots$}
  \BlankLine
  \For{$i=0, 1, \ldots $}{
    With probability $\frac{1}{2}$ stay at the current state: $x_{i+1}
    \gets x_i$ \quad \% {\footnotesize{\emph{lazy step}}} \\
    With probability $\frac{1}{2}$ perform the following update:\\
     {%
      \quad \textbf{Proposal step}:
      Draw $z_{i+1} \sim
      \mathcal{N}\left(x_{i}, \frac{r^2}{\usedim^
      {3/2}}\john_{x_{i}}^{-1}\right)$ \quad \% {\footnotesize{\emph{this
      step is different than the Vaidya walk}}} \\
      \quad \textbf{Accept-reject step}:\\
      \quad \quad \emph{if} $z_{i+1} \notin \Pspace$
      \emph{then} {$ x_{i+1} \gets
        x_i $ } \quad \% {\footnotesize{\emph{reject an infeasible
      proposal}}} \\
      \quad \quad \emph{else}\\
      \quad\quad\quad compute
       $\alpha_{i+1} = \displaystyle\min \braces{1,
          {\density_{z_{i+1}}(x_{i+1})}/{\density_{x_{i+1}}(z_{i+1})}
        }$\\
        \quad \quad \quad With probability $\alpha_{i+1}$ accept the proposal:
        $x_{i+1} \gets
        z_{i+1} $\\
        \quad \quad \quad With probability $1-\alpha_{i+1}$ reject the proposal:
        $x_{i+1} \gets
        x_i$}
    }
  \caption{John Walk with parameter $r$ (\JW{r})}
  \label{algo:john_Walk}
\end{algorithm}


\subsection{Mixing time bounds for warm start}
\label{sub:main_results}

We are now ready to state an upper bound on the mixing time of the
Vaidya walk.  In this and other theorem statements, we use $\UNICON$
to denote a universal positive constant.
Recall that $\target$ denotes the uniform distribution on the polytope $\Poly$,
and, that $\lazytrans_{\fulltagvaidya(\rparam)}$ denotes the operator on
distributions associated with the Vaidya walk.
\begin{theorem}
  \label{thm:mixing_time_bound}
Let $\initial$ be any distribution that is $M$-warm with respect to
$\target$ as defined in equation~\eqref{EqnDefnMstart}.  For any $\delta \in (0, 1]$, the Vaidya walk with
  parameter $\rparam_{\tagvaidya} = \vaidyaradiusconst$ satisfies
\begin{align}
  \label{EqnVaidyaBound}
\tvnorm{\lazytrans_{\fulltagvaidya(\rparam_{\tagvaidya})}^k(\initial) - \target} \leq
\delta \qquad \mbox{for all } \; k \geq \UNICON \obs^{{1}/{2}}
\dims^{{3}/{2}} \, \log\parenth{\frac{\sqrt{\warmparam}}{\delta}}.
\end{align}
\end{theorem}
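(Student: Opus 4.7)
The plan is to follow the standard conductance-based route to mixing: invoke the Lovász--Simonovits machinery, which bounds the $\delta$-mixing time from an $M$-warm start as $\tmix(\delta; \initial) \lesssim \conductance^{-2} \log(M/\delta)$, where $\conductance$ is the conductance of the lazy Vaidya chain $\lazytrans_{\fulltagvaidya(\rparam_{\tagvaidya})}$. Matching the target bound $\obs^{1/2}\dims^{3/2}\log(\sqrt M/\delta)$ requires establishing that $\conductance \gtrsim \obs^{-1/4}\dims^{-3/4}$, which will be derived from an overlap bound on the one-step proposals plus an isoperimetric inequality in a suitable metric.

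First I would carry out the three key geometric/analytic ingredients. (i) \emph{Ellipsoid containment:} for $r=\vaidyaradiusconst$, show that for any $x \in \intP$ the random proposal $z \sim \gaussian{x}{(r^2/\sqrt{\obs\dims}){\vaidya_x}^{-1}}$ satisfies $\vecnorm{z-x}{\vaidya_x}^2 \le 1$ with probability at least $7/8$; since the unit Vaidya-norm ball is contained in $\Pspace$ (by the standard Vaidya ellipsoid containment for the $(\sqrt{\obs\dims}, \dims)$-self-concordant Vaidya barrier), this ensures the proposal stays in $\Pspace$ with constant probability. (ii) \emph{Hessian stability:} show that whenever $\vecnorm{y-x}{\vaidya_x} \le c/(\obs\dims)^{1/4}$ one has $(1-\epsilon)\vaidya_x \preceq \vaidya_y \preceq (1+\epsilon)\vaidya_x$ for small $\epsilon$; this is where the structure of $\vaidya_x = \log\det\hesslogbarr_x + \vaidyabeta \logbarr_x$ and a careful leverage-score perturbation argument are needed. (iii) \emph{Average acceptance:} using (i)+(ii), conclude that the Metropolis--Hastings acceptance ratio $\alpha(x,z)$ exceeds a universal constant with probability at least $3/4$ under the proposal.

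Combining (i)--(iii) yields the \textbf{one-step overlap lemma}: whenever $\vecnorm{x-y}{\vaidya_x} \le \eta$ for a small enough constant $\eta/(\obs\dims)^{1/4}$, the transition distributions satisfy $\tvnorm{\lazytrans_{\fulltagvaidya(\rparam_{\tagvaidya})}(\diracdelta_x) - \lazytrans_{\fulltagvaidya(\rparam_{\tagvaidya})}(\diracdelta_y)} \le 7/8$. Translating this Vaidya-metric statement into the cross-ratio (Hilbert) metric $d_\Pspace$ on $\Pspace$ costs a factor of $O(\sqrt{\dims})$, because the Vaidya ellipsoid of radius $\rho$ in the cross-ratio metric satisfies $\rho \gtrsim 1/\sqrt{\dims}$ along any chord (this is the step that inserts the ``extra'' $\dims^{1/2}$ and is why the bound is $\obs^{1/2}\dims^{3/2}$ rather than the naive $\sqrt{\obs\dims}$). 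I would then invoke the Lovász--Simonovits isoperimetric inequality for log-concave distributions in the cross-ratio metric to conclude $\conductance \gtrsim (\obs\dims)^{-1/4}\cdot\dims^{-1/2} = \obs^{-1/4}\dims^{-3/4}$, and plug into the conductance-to-mixing bound.

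The main obstacle is the Hessian-stability step (ii): controlling how $\vaidya_y$ deviates from $\vaidya_x$ requires analyzing the change in both the leverage-score term $\levvaidya_{y,i}$ and the slackness $\slack_{y,i}$ simultaneously, since the Vaidya weights $\levvaidya_{x,i}+\vaidyabeta$ depend nonlinearly on $x$ through the Dikin Hessian $\hesslogbarr_x$. The cleanest way to handle this is to derive matrix perturbation bounds for the weighted projection matrix $\slackmatrix_x^{-1}A(A\tp\slackmatrix_x^{-2}A)^{-1}A\tp\slackmatrix_x^{-1}$ and translate them into $\vaidya$-norm Lipschitz-type estimates; carrying this out carefully yields the scale $(\obs\dims)^{-1/4}$, which is precisely what dictates the step-size choice $r^2/\sqrt{\obs\dims}$ in the proposal and thereby the $\sqrt{\obs/\dims}$ improvement over the Dikin walk.
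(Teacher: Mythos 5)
Your proposal follows the same conductance-based route as the paper: reduce to an overlap bound for the one-step transition kernels at nearby points in the Vaidya local norm, translate to the cross-ratio metric (costing a $\sqrt{\dims}$ factor), and plug into the Lov\'asz--Simonovits machinery. The identification of the $(\obs\dims)^{-1/4}$ local-norm scale and the resulting conductance $\obs^{-1/4}\dims^{-3/4}$ match the paper exactly, and step~(ii) (Hessian stability of $\vaidya_y$ versus $\vaidya_x$) is the paper's Lemma~\ref{lemma:close_hessian_eigenvalues}.

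However there is a real gap in step~(iii). You claim the acceptance-ratio bound ``follows from (i)+(ii),'' but the Metropolis--Hastings ratio for the Vaidya proposal is
\begin{align*}
\frac{\density_z(x)}{\density_x(z)} = \exp\parenth{-\frac{\sqrt{\obs\dims}}{2r^2}\parenth{\vecnorm{z-x}{z}^2-\vecnorm{z-x}{x}^2} + \frac{1}{2}\parenth{\log\det\vaidya_z-\log\det\vaidya_x}},
\end{align*}
and controlling it requires high-probability bounds, under $z\sim\proposal_x$, on (a) the change in $\log\det\vaidya$ and (b) the \emph{difference} of local norms $\vecnorm{z-x}{z}^2-\vecnorm{z-x}{x}^2$. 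Neither follows from a deterministic Hessian sandwich of the form $(1\pm\epsilon)\vaidya_x\preceq\vaidya_y\preceq(1\pm\epsilon)\vaidya_x$: the pre-factor $\sqrt{\obs\dims}/r^2$ is large, so crude eigenvalue bounds give only $\order{1}$ control, not the $\order{\epsilon}$ control that is actually needed. The paper has to work much harder here --- it Taylor-expands $\log\det\vaidya$ and the slack-weighted norm to second order along the chord $\overline{xz}$, carefully computes and bounds the directional gradients and Hessians of the leverage scores and of $\varphi_{x,i}=(\sigma_{x,i}+\beta)/s_{x,i}^2$, and then applies concentration inequalities for degree-2, 3, and 4 Gaussian polynomials (via Isserlis'/Janson bounds and the algebraic identities in Lemma~\ref{ppt:all_properties}) to show the bad events have probability $\order{\epsilon}$. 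This is the hardest and most novel part of the argument and your sketch does not engage with it.

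Separately, a smaller slip in step~(i): the \emph{unit} Vaidya-norm ball is not contained in $\Pspace$ in general. Because the Vaidya slack sensitivity satisfies $\localslackvaidya_{\vaidya_x,i}\leq\sqrt{\obs/\dims}$, one has $|a_i\tp(z-x)/\slack_{x,i}|\leq(\obs/\dims)^{1/4}\vecnorm{z-x}{\vaidya_x}$, so it is the ball of radius $(\dims/\obs)^{1/4}$, not $1$, that sits inside $\Pspace$. This is precisely why the proposal covariance is scaled by $r^2/\sqrt{\obs\dims}$ rather than by $r^2$, and it is worth getting right since it is where the $\sqrt{\obs/\dims}$ speed-up over Dikin enters.
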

The proof of
Theorem~\ref{thm:mixing_time_bound} is provided in Section~\ref{sec:proof}.
Theorem~\ref{thm:mixing_time_bound} precisely quantifies the dependence of
mixing time of the Vaidya walk on many parameters of interest such as dimension
$\dims$, number of constraints $\obs$, the error tolerance $\delta$ and
the warmness $\warmparam$.  The specific choice $\rparam_{\tagvaidya} = \vaidyaradiusconst$ is for
theoretical purposes; in practice, we find that substantially larger
values can be used.\footnote{A larger than optimal $\rparam$ leads to an
  undesirable high rejection rate.  In practice, we can fine tune
  $\rparam$ by performing a binary search over the interval
  $[\vaidyaradiusconst, 1]$ and keeping track of the rejection rate of
  the samples during the run of the Markov chain for a given choice of
  $\rparam$. A choice of $\rparam > 1$ is obviously bad because then the
  Vaidya ellipsoid will have poor overlap with polytopes near the boundary,
  causing high rejection rate and slow down of the chain.}
Our upper bound for the mixing time of the Vaidya walk has $\mathcal{O}
({{\sqrt{\obs/\dims}}})$
improvement over the current best upper bound for the mixing time
of the Dikin walk. In Section~\ref{sub:per_iteration_cost}, we show
that the per iteration cost for the two walks is of the same order.
Since $\obs \geq \dims$ for closed polytopes in $\real^\dims$, the
effective cost until convergence (iteration complexity multiplied by
number of iterations required) for the Vaidya walk is at least of the
same order as of the Dikin walk, and significantly smaller when $\obs
\gg \dims$.  Comparing the provable mixing time upper bounds, the Vaidya walk
has an advantage over the Dikin walk for the problems where the number of
constraints is significantly larger than
the number of variables involved. Our simulations also confirm this theoretical
finding.

Let us now state our result for the mixing time of the John walk:
\begin{theorem}
\label{thm:john_walk}
Suppose that $\obs \leq \exp(\sqrt{\dims})$, and let $\initial$ be any
distribution that is $\warmparam$-warm with respect to $\target$.  Then for
any $\delta \in (0, 1]$, the John walk with parameter
  $\rparam_{\tagjohn} = \johnradiusconst$ satisfies
\begin{align*}
    \tvnorm{\lazytrans_{\fulltagjohn(\rparam_{\tagjohn})}^k(\initial) -\target} \leq
    \delta \qquad \text{ for all } k \geq \UNICON \; \dims^{2.5}\,
    \log^4\parenth{\frac{\obs}{\dims}} \, \log\parenth{\frac{\sqrt{\warmparam}}{\delta}}.
  \end{align*}
\end{theorem}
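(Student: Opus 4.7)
The plan is to mirror the conductance-based argument that was used to establish Theorem~\ref{thm:mixing_time_bound} for the Vaidya walk, but with every geometric quantity replaced by its John-weight analogue and with careful tracking of the extra $\log^4(2\obs/\dims)$ factor that appears in the proposal radius. Concretely, I would define a local John norm $\vecnorm{v}{x} \defn \sqrt{v\tp \john_x v}$ and, following the Lov\'asz--Simonovits template, reduce the mixing-time bound to lower bounding the $s$-conductance of the lazy transition kernel $\lazytrans_{\fulltagjohn(\rparam_{\tagjohn})}$ with respect to this metric. The warm-start hypothesis together with the standard $s$-conductance-to-mixing-time conversion then gives the $\log(\sqrt{\warmparam}/\delta)$ factor, so the whole problem is to show that the conductance is at least of order $\dims^{-5/4}\,\log^{-2}(2\obs/\dims)$.

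The first technical step is a containment/one-step escape lemma for the John proposal: I would show that for each $x\in\intP$, the Gaussian proposal $\proposal_x^\tagjohn$ with covariance $\frac{r_\tagjohn^2}{\dims^{3/2}\log_2^4(2\obs/\dims)}\john_x^{-1}$ falls inside $\Pspace$ with probability at least, say, $3/4$. This rests on the John-weight identities $\sum_i \johnweights_{x,i}\approx\dims$ and $A\tp S_x^{-1} W^{\johnalpha} S_x^{-1} A \succeq \text{const}\cdot \john_x$ derived from the KKT conditions of the convex program~\eqref{eq:john_weights}, exactly as used by \cite{lee2014path}; these give $\max_i a_i\tp \john_x^{-1} a_i / \slack_{x,i}^2 \lesssim \log^4(2\obs/\dims)$, so each constraint is crossed with only polynomially small probability, and a union bound with the chosen radius suffices.

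The second step, which I expect to be the main obstacle, is the local smoothness/coupling lemma: if $\vecnorm{x-y}{x}\le \epsilon/(\dims^{3/4}\log^2(2\obs/\dims))$ for a sufficiently small constant $\epsilon$, then $\tvnorm{\proposal_x^\tagjohn - \proposal_y^\tagjohn}\le 1/2$ and moreover the Metropolis acceptance probability is $\Omega(1)$ at a typical proposal. This requires two stability estimates for the \emph{implicitly defined} John weights. (i) A weight-perturbation bound: differentiating the KKT system of~\eqref{eq:john_weights} with respect to $x$ shows that $\johnweights_y$ is a contraction-like perturbation of $\johnweights_x$, yielding $(1-c)\john_x \preceq \john_y \preceq (1+c)\john_x$ along short John-geodesics. (ii) A slackness-ratio bound: the above containment lemma together with Jensen/log-concavity gives $\slack_{y,i}/\slack_{x,i}\in[1/2,2]$ uniformly. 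Combining (i) and (ii) lets us bound both the relative determinant in the proposal density~\eqref{eq:proposal_density} (John version) and the quadratic form differences, and hence the TV distance between $\proposal_x^\tagjohn$ and $\proposal_y^\tagjohn$ by the usual Pinsker-on-KL argument for Gaussians with close means and covariances.

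The final step is to invoke the isoperimetric inequality for the cross-ratio (Hilbert) metric on $\Pspace$, which is the standard tool for convex-body walks and is compatible with the John metric up to the same $\log(2\obs/\dims)$-type factors already tracked. Combining the coupling lemma of Step 2 with this isoperimetric inequality in the Lov\'asz--Simonovits conductance-comparison argument gives $\conductance \ge c\,\dims^{-5/4}\log^{-2}(2\obs/\dims)$, and squaring to pass from conductance to the $1-\conductance^2/2$ spectral-gap-like contraction yields the mixing time $\dims^{5/2}\log^4(2\obs/\dims)\,\log(\sqrt{\warmparam}/\delta)$ claimed in the statement. The hypothesis $\obs\le\exp(\sqrt{\dims})$ enters only to ensure that the $\log^4(2\obs/\dims)$ factors picked up in Step 2 remain dominated by the polynomial-in-$\dims$ factors, so that the constants in the smoothness lemma can be made absolute.
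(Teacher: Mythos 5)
Your proposal is structurally identical to the paper's proof: both reduce the mixing time to a Lov\'asz--Simonovits conductance bound via the cross-ratio (Hilbert) metric, compare the cross-ratio to the John local norm using $\sum_i\johnweights_{x,i}=3\dims/2$, and then prove a coupling lemma (the paper's Lemma~\ref{lemma:john_walk_close_kernel}, the analogue of Lemma~\ref{lemma:vaidya_walk_close_kernel}) showing that nearby points have close proposal and transition kernels, plus a high-probability acceptance bound. So the route is the paper's route.

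A few technical points are worth correcting, since they would matter if you tried to fill in the details. First, you attribute the $\log^4(2\obs/\dims)$ factor to the slack sensitivity $\max_i a_i^\top\john_x^{-1}a_i/\slack_{x,i}^2$, but the key advantage of John weights is that this quantity is bounded by the absolute constant $4$ (Lemma~\ref{lemma:john_first_bounds}\ref{item:john_theta_bound}, from Lee--Sidford); the $\log^4$ factor comes only from the deliberate shrinking of the proposal radius and is not a property of the ellipsoid. Second, in step~2(i), ``differentiating the KKT system'' gives a bound only on $\nabla\johnweights_x$, not directly a uniform two-sided bound on $\johnweights_y/\johnweights_x$ over a whole segment; because $\johnweights_x$ is implicit, the paper needs a continuity/bootstrapping argument (the $\lambda^{\max}$ construction in the proof of Lemma~\ref{lemma:john_close_hessian_eigenvalues}) to close the loop, and this is easy to miss. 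Third, controlling the Metropolis acceptance ratio --- bounding $\log\det\john_z-\log\det\john_x$ and $\vecnorm{z-x}{z}^2-\vecnorm{z-x}{x}^2$ with high probability (Lemma~\ref{lemma:john_change_in_log_det_and_local_norm}) --- is by far the most intricate part of the analysis, involving directional Hessians of the implicitly-defined weights and several Gaussian-moment tail bounds; your one-sentence appeal to ``Pinsker on KL for nearby Gaussians'' handles only the proposal-to-proposal comparison, not the proposal-to-transition comparison. Finally, the hypothesis $\obs\le\exp(\sqrt{\dims})$ is used to absorb a specific $\log\obs/\sqrt{\dims}$ term arising in the directional Hessian bound for $\johnphi$, rather than as a blanket guard on all logarithms. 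None of these affects the viability of the plan, but they are the places where the real work is hidden.
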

The proof of Theorem~\ref{thm:john_walk} is provided in Appendix~\ref{sec:proof_john_walk}.
Again the specific choice of $\rparam_{\tagjohn} = \johnradiusconst$
is for theoretical purpose; in practice larger choices are possible.
Note that the mixing time bound for the John walk depends only on the
number of constraints $\obs$ via a logarithmic factor, and so is
almost independent of $\obs$.  Consequently, it has a mixing time that
is polynomial in $\dims$ even if the number of constraints $\obs$
scales exponentially in $\sqrt{\dims}$.  Further, we show in
Section~\ref{sub:per_iteration_cost} that the cost to execute one step
of the John walk is of the same order as of the Dikin walk up to a
poly-logarithmic factor in $\obs$.  Thus, using John walk, we obtain improved mixing time bounds for the case when $\obs \gg \dims^2$.


\subsection{Mixing time bounds from deterministic start}
\label{sub:deterministic_start}

The mixing time bounds in Theorem~\ref{thm:mixing_time_bound}
and~\ref{thm:john_walk} depend on the warmness $M$ of the initial
distribution.  In some applications, it may not be easy to find an
$M$-warm initial distribution.  In such cases, we can consider
starting the random walk from a deterministic point \mbox{$x_0 \in
  \intP$} that is not too close to the boundary~$\boundary$.  Indeed,
such a point can be found using standard optimization methods---e.g.,
using a Phase-I method for Newton's algorithm
\citep[see][Section~11.5.4]{boyd2004convex}.

Given such a deterministic initialization, our mixing time guarantees
depend on the distance of the starting point from the boundary.  This
dependence involves the following notion of $s$-centrality:
\begin{definition}
A point $x \in \intP$ is called \emph{$s$-central} if for any chord
$\overline{ef}$ with end points $e, f \in \partial \Pspace$ passing
through $x$, we have $\vecnorm{e-x}{2}/ \vecnorm{f-x}{2} \leq s$.
\end{definition}
Assuming that it is started at an $s$-central point $x_0$, the Dikin
walk \citep[algorithm~in section 2.1]{kannan2012random} has a
polynomial mixing time.  The authors showed that when the
walk moves to a new state for the first time, the distribution of the
iterate is
$\order{(\sqrt{\obs}s)^\dims}$-warm
with respect to
the distribution\footnote{Obtaining a warmness result for the Vaidya
walk from a deterministic start from a central point is non-trivial and
it is quite possible that the warmness does not improve. As a result, we
simply invoke the established result for the Dikin walk.
}
$\target$. Since only constant number of steps is required to get a warm start, for a deterministic start, we
can just use the Dikin walk in the beginning to provide a warm start to the
Vaidya (or John) walk. This motivates us to define the following
hybrid walk.

Given an $s$-central point $x_0$, simulate the Dikin walk until we
observe a new state.  Note that due to \emph{laziness} and the
accept-reject step, the chain can stay at the starting point for
several steps before making the first move a new state.  Let $k_1$
denote the (random) number of steps taken to make the first move to a
new state.  After $k_1$ steps, we run the walk \VW{r} with $x_{k_1}$
as the initial point.  We call such a walk as \emph{$s$-central
Dikin-start-Vaidya-walk} with parameter $r$.  Let
$\plaintransition_{\text{Dikin}}$ denote the transition kernel of the
Dikin walk stated above.  Then, we have the following mixing time
bound for this hybrid walk.
\begin{corollary}
\label{cor:central_start}
Any $s$-central Dikin-start-Vaidya-walk with parameter $r=\vaidyaradiusconst$
satisfies
\begin{align*}
\tvnorm{\lazytrans_{\fulltagvaidya(\rparam)}^k
  \big(\lazytrans_\fulltagdikin^{k_1}(\delta_{x_0}) \big) -
  \target} \leq \delta \qquad \text{ for all } k \geq \UNICON
\obs^{{1}/{2}} \dims^{{5}/{2}} \, \log\parenth{\frac{\obs s}{\delta}},
  \end{align*}
where $k_1$ is a geometric random variable with $\Exs\brackets{k_1}
\leq c'$, and $c, c'>0$ are universal constants.
\end{corollary}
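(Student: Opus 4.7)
The plan is to decompose the hybrid walk into its two phases and combine a warm-start lemma for the Dikin walk with the mixing-time bound of Theorem~\ref{thm:mixing_time_bound} for the Vaidya walk.

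First I would control the random time $k_1$ at which the Dikin chain, started at $x_0$, first transitions to a state different from $x_0$. Because the Dikin walk is lazy and, at an interior point, its proposal both lies in $\Pspace$ and is accepted with at least a constant probability (a standard fact from the analysis in \cite{kannan2012random}), each step independently escapes $x_0$ with at least some universal probability. Consequently $k_1$ is stochastically dominated by a geometric random variable of constant parameter, giving $\Exs[k_1] \leq c'$ for a universal constant $c'$, which is the claim about $k_1$ in the corollary.

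Next I would invoke the warm-start lemma of \cite{kannan2012random}: when the lazy Dikin walk begun at an $s$-central point first moves, the distribution of the new state is $M$-warm with respect to $\target$ with $M = \order{(\sqrt{\obs}\,s)^{\dims}}$. This is the step that converts the deterministic start $\delta_{x_0}$ into the effectively warm distribution $\lazytrans_\fulltagdikin^{k_1}(\delta_{x_0})$, and it is precisely what motivates using the Dikin walk as a preprocessor rather than starting $\VW{\rparam}$ directly at $x_0$, for which no warmness guarantee is available.

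Finally, I would apply Theorem~\ref{thm:mixing_time_bound} to $\lazytrans_{\fulltagvaidya(\rparam)}$ with $\rparam = \vaidyaradiusconst$ and initial distribution $\lazytrans_\fulltagdikin^{k_1}(\delta_{x_0})$. The theorem gives total-variation distance at most $\delta$ whenever $k \geq \UNICON\, \obs^{1/2} \dims^{3/2} \log(\sqrt{M}/\delta)$. Substituting the bound on $M$ and using $\log \sqrt{M} = \order{\dims \log(\sqrt{\obs}\,s)}$, this simplifies to
\begin{equation*}
k \;\geq\; \UNICON\, \obs^{1/2} \dims^{5/2} \log\!\parenth{\frac{\obs\, s}{\delta}},
\end{equation*}
with the extra factor of $\dims$ being absorbed into the polynomial prefactor. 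This is exactly the claimed bound.

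The main obstacle is not any new analytic argument but the appeal to the warmness of the Dikin first-move distribution; everything else is routine bookkeeping --- the geometric tail of $k_1$ contributes only an $O(1)$ expected overhead, and the Vaidya phase is handled verbatim by Theorem~\ref{thm:mixing_time_bound}. The only genuinely quantitative computation is the translation of $\sqrt{M} = \order{(\sqrt{\obs}\,s)^{\dims/2}}$ into the logarithmic factor, which is where the extra factor of $\dims$ relative to Theorem~\ref{thm:mixing_time_bound} comes from.
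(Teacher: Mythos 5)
Your argument follows the paper's intended proof exactly: use the first-move distribution of the Dikin walk from an $s$-central start to obtain an $\order{(\sqrt{\obs}\,s)^{\dims}}$-warm start (Theorem~1 of \cite{kannan2012random}), and then plug $\log\sqrt{M} = \order{\dims\log(\obs s)}$ into the Vaidya mixing-time bound of Theorem~\ref{thm:mixing_time_bound}. The paper omits this derivation as immediate, and your bookkeeping, including the constant-expectation geometric $k_1$ and the extra $\dims$ factor in the logarithm, matches it.
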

The mixing rate is logarithmic in $\obs s$ and has an extra factor
of $\dims$ compared to the bounds in 
Theorem~\ref{thm:mixing_time_bound}.  However, guaranteeing a warm
start for a general polytope is hard but obtaining a central point
involves only a few steps of optimization.  Consequently, the hybrid
walk and the guarantees from Corollary~\ref{cor:central_start} come in
handy for all such cases.  Once again we observe that the upper bounds for
mixing time are improved by a factor of $\mathcal{O}({\sqrt{\obs/\dims}})$
when compared to the Dikin walk from an $s$-central
start~\citep{kannan2012random,narayanan2016randomized} which had a
mixing time of \order{\obs\dims^2}.  The proof follows immediately
from Theorem~1 by \cite{kannan2012random} and
Theorem~\ref{thm:mixing_time_bound} of this paper and is thereby
omitted.

In a similar fashion, we can provide a polynomial time guarantee for a
modified John walk from a deterministic start.  We can consider a
hybrid random walk that starts at an $s$-central point, simulates the
Dikin walk until it makes the first move to a new state, and from
there onwards simulates the John walk.  Such a chain would have a
mixing time of \mbox{$\order{\dims^{3.5}\polylog(\obs, \dims, s)}$.}  For
brevity, we omit a formal statement of this result.


\subsection{Conjecture on improved John walk}
\label{sub:improved_john_walk}

From our analysis, we suspect that it is possible to improve the
mixing time bound of \order{\dims^{2.5}\polylog(\obs/\dims)} in
Theorem~\ref{thm:john_walk} by considering a variant of the John walk.
In particular, we conjecture that a random walk with proposal
distribution given by $\gaussian{x}{\frac{r^2}{\dims \cdot
    \polylog(\obs/\dims)} {\john_x}^{-1} }$ for a suitable choice of $r$ has
an \order{\dims^2\polylog(\obs/\dims)} mixing time from a warm start.  We
refer to this random walk as the \emph{improved John walk}, and denote
its transition operator by $\plaintransition_\fulltagimprovedjohn$.  Let us
now give a formal statement of our conjecture on its mixing rate.
\begin{conjecture}
  \label{conj:john_walk}
  Let $\mu_0$ be any $M$-warm distribution.  Then for any $\delta \in
  (0, 1]$, the improved John walk with parameter $r = r_0$, satisfies the bound
  \begin{align*}
    \tvnorm{\plaintransition_\fulltagimprovedjohn^k(\initial) -\target}
    \leq \delta \qquad \mbox{ for all } k \geq \UNICON \: \dims^{2}\,
    \log_2^{c'}\parenth{\frac{2\obs}{\dims}} \, \log\parenth{\frac{\sqrt{M}}{\delta}},
  \end{align*}
  where $r_0, c, c'$ are universal constants.
\end{conjecture}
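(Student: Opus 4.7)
The plan is to follow the same conductance-based framework used to bound the mixing time of the Vaidya and John walks in Theorems~\ref{thm:mixing_time_bound} and~\ref{thm:john_walk}. Applied to the lazy Metropolis--Hastings chain $\plaintransition_\fulltagimprovedjohn$, the Lov{\'a}sz--Simonovits machinery reduces the $\warmparam$-warm mixing time to $\order{\conductance^{-2}\,\log(\sqrt{\warmparam}/\delta)}$, where $\conductance$ is the conductance of the chain. Hence the target bound $\order{\dims^2\,\polylog(\obs/\dims)\cdot\log(\sqrt{\warmparam}/\delta)}$ reduces to establishing the conductance lower bound $\conductance \gtrsim \dims^{-1}\polylog^{-1}(\obs/\dims)$. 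As in the Vaidya and John analyses, this has two ingredients: (i) a local overlap statement---two nearby states have proposal distributions with total-variation overlap bounded away from zero---and (ii) an isoperimetric inequality for the uniform measure on $\Poly$ with respect to a suitable cross-ratio metric.

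For the local overlap (i), the quantity to control is how the proposal covariance $\john_x^{-1}$ changes with $x$. In the conjectured walk, the proposal step has size $r/\sqrt{\dims\,\polylog(\obs/\dims)}$ measured in the $\john_x$-norm, which is $\sqrt{\dims}$ times the radius used in the existing John walk from Theorem~\ref{thm:john_walk}. First, I would revisit the basic estimates for the approximate John weights $\johnweights_x$ defined by~\eqref{eq:john_weights}---bounds on $\sum_i \johnweights_{x,i}$, on $\|\johnweights_x\|_\infty$, and on the sensitivity $\partial \johnweights_x/\partial x$---and attempt to upgrade the change-of-weight bounds so that whenever $(y-x)^\top \john_x (y-x) \leq r^2/(\dims\,\polylog(\obs/\dims))$ one still has both $\john_y \approx \john_x$ in the operator sense and $\johnweights_y \approx \johnweights_x$ in an $\ell_\infty$-weighted sense, up to constants. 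Given such stability estimates, the standard Gaussian overlap calculation yields $\tvnorm{\proposal_x^\tagjohn - \proposal_y^\tagjohn} \leq 1 - c$ for suitably small $r$, and combined with the containment of the scaled John ellipsoid inside $\Poly$ (which follows from the bound on $\|\johnweights_x\|_\infty$), this implies the required one-step overlap for the transition distributions.

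For ingredient (ii), the isoperimetric inequality used for the John walk transfers directly, since it depends only on the geometry of $\Poly$ through the cross-ratio metric and does not reference the proposal scale. Combining the overlap bound with the isoperimetric inequality in the standard two-point conductance argument yields $\conductance \gtrsim \dims^{-1}\polylog^{-1}(\obs/\dims)$, and hence the conjectured mixing rate.

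The principal obstacle is the first half of step~(i): the John weights $\johnweights_x$ are defined only implicitly through the convex program~\eqref{eq:john_weights}, and the proof of Theorem~\ref{thm:john_walk} controls $\johnweights_y-\johnweights_x$ through a perturbation argument that costs polynomial factors in $\dims$. To close the gap between $\dims^{5/2}$ and $\dims^{2}$ one needs a \emph{sharper} Lipschitz estimate for $x \mapsto \johnweights_x$---perhaps via a tighter analysis of the optimality conditions for~\eqref{eq:john_weights} exploiting the self-concordance of the regularizer $-\johnbeta\sum_i \log \weights_i$, or via a reparametrization of the weight problem that decouples the log-det term from the barrier. Without such a refinement, the gain from the larger step size is immediately absorbed by a coarser stability bound, so obtaining this improved estimate appears to be the entire crux of the conjecture; this is consistent with the authors' decision to state the result as a conjecture rather than a theorem.
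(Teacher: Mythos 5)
This statement is a conjecture: the paper supplies no proof, and the discussion section explicitly labels it an open problem. Your writeup is therefore a proof \emph{plan}, not a proof, and it is an honest one---you identify both the natural strategy and the point where it currently fails.

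The framework you describe (Lov\'asz's Lemma~\ref{lemma:Lovasz_theorem}, the cross-ratio isoperimetric inequality, and a local proposal-overlap lemma modelled on Lemma~\ref{lemma:john_walk_close_kernel}) is exactly the template the paper uses for Theorems~\ref{thm:mixing_time_bound} and~\ref{thm:john_walk}, and you are right that the isoperimetry step transfers unchanged since it depends only on the polytope geometry. Your diagnosis of the obstacle is also correct: the burden falls entirely on pushing the proposal-overlap lemma to a larger $\vecnorm{\cdot}{\john_x}$-ball, which in turn requires sharper control of the implicitly-defined weights $\johnweights_x$---that is, tighter analogues of Lemma~\ref{lemma:john_close_hessian_eigenvalues} and of the log-det and local-norm perturbation bounds in Lemma~\ref{lemma:john_change_in_log_det_and_local_norm}. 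That is precisely where the paper's own argument loses a factor it cannot afford, and why the authors leave it as a conjecture.

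One quantitative slip is worth fixing. You state that the conjectured walk takes steps $\sqrt{\dims}$ times larger than the proven John walk. The proposal \emph{covariance} scale goes from $\order{\dims^{-3/2}}$ to $\order{\dims^{-1}}$, larger by $\sqrt{\dims}$, but the step \emph{radius} in the $\john_x$-norm goes from roughly $\dims^{-3/4}$ to $\dims^{-1/2}$, larger by only $\dims^{1/4}$. The bookkeeping is still consistent---a $\dims^{1/4}$-larger radius propagates through $\lovtwo$, which scales as the radius times $\dims^{-1/2}$, and then through the $\lovtwo^{-2}$ dependence in Lov\'asz's Lemma to a $\sqrt{\dims}$ speedup, matching the conjectured drop from $\dims^{5/2}$ to $\dims^2$. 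But it underscores how delicate the required sharpening is: the overlap lemma must hold out to radius of order $\dims^{-1/2}$ rather than $\dims^{-3/4}$, and the sensitivity analysis of $x \mapsto \johnweights_x$ is the only thing standing in the way of that modest-looking extension.
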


Note that this conjecture involves quadratic (degree two) scaling in
$\dims$; this exponent of two matches the sum of exponents for $\dims$
and $\obs$ in the mixing time bounds for both the Dikin and Vaidya
walks from a warm-start.  Consquently, the improved John walk would
have better performance than the Dikin, Vaidya and John walks for
almost all ranges of $(\obs, \dims)$, apart from possible
poly-logarithmic factors in the ratio $\obs/\dims$.



\subsection{Proof sketch}
\label{sub:bound_mixing_time}

In this subsection, we provide a high-level sketch of the main ingredients of the main proof. It is well-known that mixing of a Markov chain is closely related to its \emph{conductance}. Our main proof relies on the work by \cite{lovasz1999hit}
that characterizes the conductance of Markov chains on a convex set using Hilbert metric. Precisely, \cite{lovasz1999hit} showed that a Markov chain has good conductance if it makes jumps to regions with large overlaps from two nearby
points and the mixing time depends inversely on the maximum Hilbert metric
between such nearby points. Using this argument, it remains to make sure that the ellipsoid radius is chosen properly such that the ellipsoids remain inside the polytope and the ellipsoids corresponding to two different points $x$ and $y$ overlap a lot even if the points $x$ and $y$ are relatively far apart.

The conductance-based argument has been used for analyzing the ball
walk~\citep{lovasz1990ballwalk,lovasz1993random},
Hit-and-run~\citep{lovasz1999hit,lovasz2006hit} and the Dikin
walk~\citep{kannan2012random,narayanan2016randomized,sachdeva2016mixing}.
We refer the reader to the survey by \cite{vempala2005geometric} for a
thorough discussion about the relation between the conductance and
mixing time for Markov chains. Our proof techniques share a few
features with the recent analyses of the Dikin walk by
\cite{kannan2012random} and \cite{sachdeva2016mixing}.  However, new
technical ideas are needed in order to handle the state-dependent
weights $\levscores_x$ and $\johnweights_x$, as defined in
equations~\eqref{eq:defn_lev_scores} and~\eqref{eq:john_weights}
respectively, that underlie the proposal distributions for the Vaidya
and John walks. Note that these techniques are not present in the
analysis of the Dikin walk, which is based on constant weights.

Specifically, we present the proof of
Theorem~\ref{thm:mixing_time_bound} on the mixing time of the Vaidya
walk in Section~\ref{sec:proof} and defer the intermediate technical
results to Appendix~\ref{sec:technical_results_and_useful_properties},
\ref{sec:proof_of_lemma_lemma:close_hessian_eigenvalues} and
\ref{sec:proof_of_lemma_lemma:change_in_log_det_and_local_norm}.  We
present the proof of Theorem~\ref{thm:john_walk} (mixing time bound
for the John walk) in Appendix~\ref{sec:proof_john_walk} and provide
related auxiliary results and their proofs in
Appendices~\ref{sec:technical_lemmas},
\ref{sec:proof_of_lemma_lemma:john_close_hessian_eigenvalues},
\ref{sec:proof_of_lemma_lemma:john_change_in_log_det_and_local_norm},
\ref{sec:proofs_of_technical_lemmas_from_section_} and
\ref{sec:proof_of_lemmas_from_section_sub:tail_bounds}.  As alluded to
earlier, to keep the paper self-contained, we provide the proof of
Lov{\'a}sz's Lemma in
Appendix~\ref{sec:proof_of_lemma_lemma:lovasz_theorem}.


\section{Numerical experiments}
\label{sec:numerical_experiments}

In this section, we first analyze the per-iteration cost to implement
of three walks.  We show that while the Dikin walk has the best
per-iteration cost, the per-iteration cost of the Vaidya walk is only
twice of that of Dikin walk and the per-iteration cost of the John
walk is only of order $\log_2(2\obs/\dims)$ larger.  Second, we
demonstrate the speed-up gained by the Vaidya walk over the Dikin walk
for a warm start on different polytopes.


\subsection{Per iteration cost}
\label{sub:per_iteration_cost}

We now show that the per iteration cost of the Dikin, Vaidya and John
walks is of the same order.  The proposal step of Vaidya walk requires
matrix operations like matrix inversion, matrix multiplication and
singular value decomposition (SVD).  The accept-reject step requires
computation of matrix determinants, besides a few matrix inverses and
matrix-vector products.  The complexity of all aforementioned
operations is \order{\obs \dims^{2}}.
Thus, per
iteration computational complexity for the Vaidya walk is \order{\obs
  \dims^{2}}.\footnote{In theory, the matrix computations for the Dikin walk can be
  carried out in time $\obs \dims^\inverserate$ for an exponent
  $\inverserate < 1.373$, but such algorithms are not numerically stable
  enough for
  practical use.}

Both the Dikin and Vaidya walks requires an SVD computation for
inverting the Hessian of Dikin barrier $\hesslogbarr_x$.  In addition
for the Vaidya walk, we have to invert the matrix $\vaidya_x$, which
leads to almost twice the computation time of the Dikin walk per
step.  This difference can be observed in practice.

For the John walk, we need to compute the weights $\johnweights_x$ at
each point which involves solving the program~\eqref{eq:john_weights}.
\cite{lee2014path} argued that the convex
program~\eqref{eq:john_weights} for obtaining John walk's weights is
strongly convex with a suitably chosen norm.  They proved that solving
this program requires $\log^2\obs$ number of gradient steps, where the
computational complexity of each gradient step is equivalent to that
of solving an $\obs \times \dims$ linear system
($\order{\obs\dims^{2}}$ using a numerically stable routine).  Thus,
the overall cost for the John walk is of the same order as of the
Dikin walk up to a poly-logarithmic factor in the pair $(\obs,
\dims)$.

In practice, for the John walk, the combined effect of logarithmic
factors in the number of steps and the cost to implement each step
cannot be ignored.  This extra factor becomes a bottleneck for the
overall run time for the convergence of the Markov chain.
Consequently, the John walk is not suitable for polytopes with
moderate values of $\obs$ and $\dims$, and its mixing time bounds are
computationally superior to the Dikin and Vaidya walks only for the
polytopes with $\obs \gg \dims \gg 1$.


\subsection{Simulations}
\label{sub:numerical_experiments}

We now present simulation results for the random walks in
$\real^\dims$ for $\dims=2, 10$ and $50$ with initial distribution
$\initial = \NORMAL(0, \sigma_\dims^2\,\Ind_\dims)$ and target distribution
being uniform, on the following polytopes:
\begin{enumerate}[leftmargin=80pt, label=\textbf{Set-up~\arabic*}]
  \itemsep0em
  \item :\label{exp:square} The set $[-1, 1]^2$ defined by different
    number of constraints.
    \item :\label{exp:square-highd} The set $[-1, 1]^\dims$ for $\dims \in \{ 2, 3, 4, 5, 6, 7\}$ for $\obs = \{2\dims, 2\dims^2, 2\dims^3\}$ constraints.
  \item :\label{exp:random} Symmetric polytopes in $\real^2$ with
    $\obs$-randomly-generated-constraints.
  \item :\label{exp:circle} The interior of regular $\obs$-polygons on
    the unit circle.
  \item :\label{exp:random_high} Hyper cube $[-1, 1]^\dims$ for $\dims
    = 10$ and $50$.
\end{enumerate}
We choose $\sigma_\dims$ such that the warmness parameter $M$ is bounded by $100$.
We provide implementations of the Dikin, Vaidya and John walks in
python and a jupyter notebook at the github repository
\mbox{\url{https://github.com/rzrsk/vaidya-walk}}.

We use the following three ways to compare the convergence rate of the Dikin and the Vaidya walks: (1) comparing the approximate mixing time of a particular subset of the polytope---smaller value is associated with a faster mixing chain;
(2) comparing the plot of the empirical distribution of samples from multiple runs of the Markov chain after $k$ steps---if it appears \emph{more uniform} for smaller $k$, the chain is deemed to be faster; and (3) contrasting the sequential plots of one dimensional projection of samples for a single long run of the chain---\emph{less smooth} plot is associated with effective and fast exploration leading to a faster mixing~\citep{yu1998looking}.
Note that MCMC convergence diagnostics is a hard problem, especially in high dimensions, and since the methods outlined above are heuristic in nature we expect our experiments to not fully match our theoretical results.


In \ref{exp:square}, we consider the polytope $[-1, 1]^2$ which can be
represented by exactly $4$ linear constraints (see
Section~\ref{sub:visualization_of_three_walks}).  Suppose that we
repeat the rows of the matrix $A$, and then run the Dikin and Vaidya
walks with the new $A$.
Given the larger number of constraints, our
theory predicts that the random walks should mix more slowly.
In Figure~\ref{fig:square_64} and \ref{fig:square_2048}, we plot the empirical distribution obtained by
the Dikin walk and Vaidya walk, starting from $200$ i.i.d initial
samples, for $\obs = 64$ and $2048$. The empirical
distribution plot shows that having large $\obs$ significantly slows the
mixing rate of the Dikin walk, while the effect on the Vaidya walk is much
less.  Further, we also plot the scaling of the approximate mixing time $\hat k_{\text{mix}}$ (defined below) for this simulation as a function of the number of constraints $\obs$ in Figure~\ref{fig:square_mix}.
For \ref{exp:square-highd}, we plot $\hat k_{\text{mix}}$ as a function of the dimensions $\dims$ in Figures \ref{fig:square_d}-\ref{fig:square_d3}, for the random walks on $[-1, 1]^\dims$ where the hypercube is parametrized by different number of constraints $\obs \in \{2\dims, 2\dims^2, 2\dims^3\}$.
The approximate mixing time is defined with respect to the set $\set_\dims = \{x \in \realdim \vert \abss{x_i}  \geq c_\dims \ \forall i \in [\dims]\}$ where $c_\dims$ is chosen such that $\target(\set_\dims) = 1/2$.
In particular, for a fixed value of $\obs$,
let $\hat \transition^k$ denote the empirical measure after
$k$-iterations across $2000$ experiments.
The approximate mixing time $\hat k_{\text{mix}}$ is defined as
\begin{align}
  \hat k_{\text{mix}} \defn \min\braces{ k \bigg\vert
    {\target(\set_\dims)-\hat
      \transition^k(\set_\dims)}\leq
    \frac{1}{20} },
  \label{eq:kmix}
\end{align}
We choose such a set since the set covers the regions near to the boundary of
the polytope which are not covered well by the chosen initial distribution.
We make the following observations:
\begin{enumerate}
  \item The slopes of the best-fit lines, for $\hat k_{\text{mix}}$ versus $\obs$ in the log-log plot in Figure~\ref{fig:square_mix}, are $0.88$ and $0.45$ for Dikin and Vaidya walks respectively. This observation reflects a near-linear
  and sub-linear dependence on $\obs$ for a fixed $\dims$ for the mixing time of the Dikin walk and the Vaidya walk respectively.
  \item In Figures~\ref{fig:square_d}-\ref{fig:square_d3}, once again we observe a more significant effect of increasing the number of constraints on the approximate mixing time $\hat k_{\text{mix}}$.
  We list the slopes of the best fit lines on these log-log plots in Table~\ref{tab:exponents}. These slopes correspond to the exponents for $\dims$ for the approximate mixing time.
  From the table, we can observe that these experiments agree with the
  mixing time bounds of $\order{\obs\dims}$ for the Dikin walk and $\order{\obs^{0.5}\dims^{1.5}}$ for the Vaidya  walk.
\end{enumerate}
\begin{table}[h]
  \centering
  \resizebox{\columnwidth}{!}{
  \begin{tabular}{ccccc}
    \hline
    \toprule
    {\bf No. of Constraints} & {\bf DW Theoretical} & {\bf VW Theoretical} &
    {\bf DW Experiments}
    & {\bf VW Experiments} \\
    \midrule
    $\obs = 2\dims$ & $2.0$ & $2.0$ & $1.58$ & $1.72$ \\
    $\obs = 2\dims^2$ & $3.0$ & $2.5$ & $2.80$ & $2.48$ \\
    $\obs = 2\dims^3$ & $4.0$ & $3.0$ & $3.84$ & $2.75$ \\
    \bottomrule
    \hline
  \end{tabular}
  }
  \caption{Value of the exponent of dimensions $\dims$ for the theoretical bounds on mixing time and the observed approximate mixing time of the Dikin walk (DW) and the Vaidya walk (VW) for $[-1, 1]^\dims$ described by $\obs = 2\dims, 2\dims^2, 2\dims^3$ constraints. The theoretical exponents are based on the mixing time bounds of $\order{\obs\dims}$ for the Dikin walk and $\order{\obs^{0.5}\dims^{1.5}}$ for the Vaidya walk. The experimental exponents are based on the results from the simulations described in \ref{exp:square-highd} in Section~\ref{sub:numerical_experiments}. Clearly, the exponents observed in practice are in agreement with the theoretical rates and imply the faster convergence of the Vaidya walk compared to the Dikin walk for large number of constraints.
  }
  \label{tab:exponents}
\end{table}

In~\ref{exp:random}, we compare the plots of the empirical distribution of $200$ runs of the Dikin walk and the Vaidya walk for different values of $k$, for symmetric polytopes in $\real^2$ with $\obs$-randomly-generated-constraints.
We fix $b_i = 1$.
To generate $a_i$, first we draw two uniform random variables
from $[0, 1]$ and then flip the sign of both of them with probability
$1/2$ and assign these values to the vector $a_i$.  The resulting
polytope is always a subset of the square $\Pspace = [-1, 1]^2$ and
contains the diagonal line connecting the points $(-1, 1)$ and $(1,
-1)$.
From Figure~\ref{fig:random_64}-\ref{fig:random_2048}, we observe that while there is no clear winner for the case
$\obs = 64$, the Vaidya walk mixes mixes significantly faster than the
Dikin walk for the polytope defined by $2048$ constraints.

In~\ref{exp:circle}, the constraint set is the regular $\obs$-polygons
inscribed in the unit circle.  A similar observation as
in~\ref{exp:random} can be made from
Figure~\ref{fig:circle_64}-\ref{fig:circle_2048}: the Vaidya walk
mixes at least as fast as the Dikin walk and mixes significantly
faster for large $\obs$.

In~\ref{exp:random_high}, we examine the performance of the Dikin walk
and the Vaidya walk on hyper-cube $[-1, 1]^\dims$ for $\dims = 10, 50$.
We plot the one dimensional projections onto a random normal direction of all the samples from a single run up to $10,000$ steps.
The Vaidya sequential plot looks more jagged than that of the Dikin walk for $\dims= 10, \obs = 5120$.  For other cases, we do not have a clear winner.  Such an observation is consistent with the $\mathcal{O}({\sqrt{\obs/\dims}})$ speed up of the Vaidya
walk which is apparent when the ratio $\obs/\dims$ is large.

\begin{figure}[h]
  \centering
  \begin{subfigure}{0.52\linewidth}
    \centering
    \includegraphics[width=1.\linewidth]{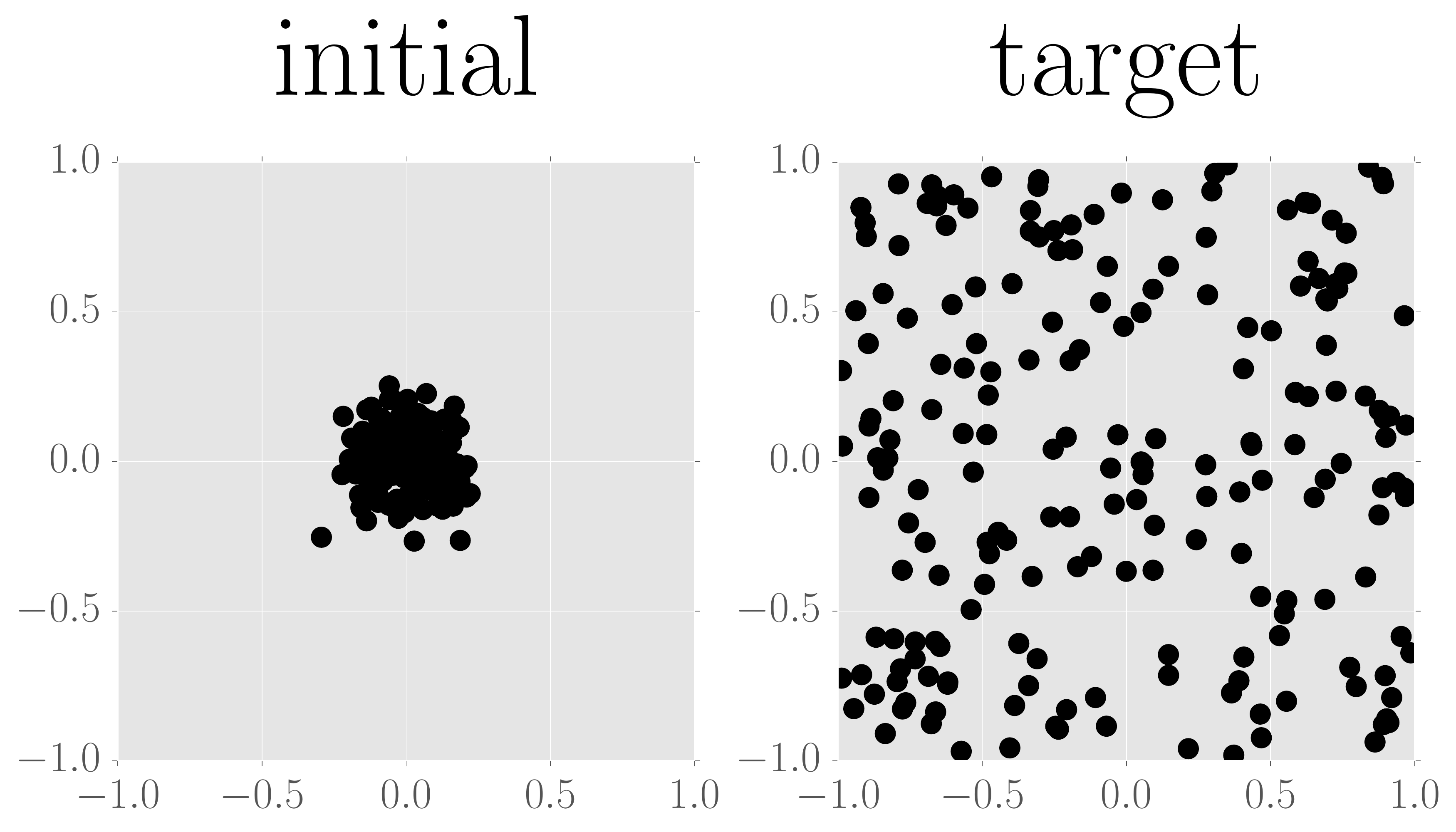}
    \caption{}
    \label{fig:tar_int}
  \end{subfigure}\vspace{3mm}
  \begin{subfigure}{0.38\linewidth}
    \centering
    \includegraphics[width=.9\linewidth]{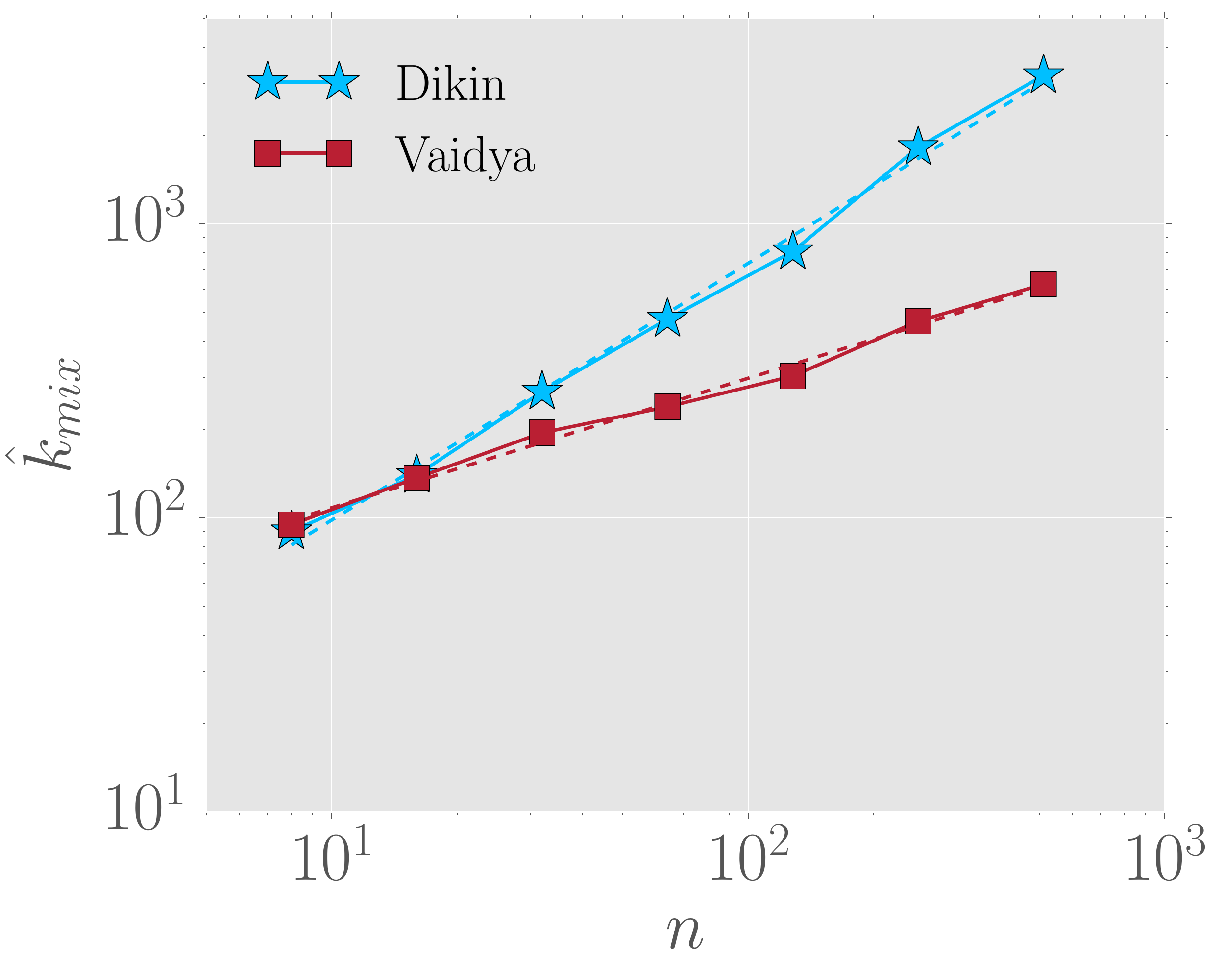}
    \caption{}
    \label{fig:square_mix}
  \end{subfigure}\vspace{3mm}

  \begin{subfigure}{0.48\linewidth}
    \centering
    \includegraphics[width=1\linewidth]{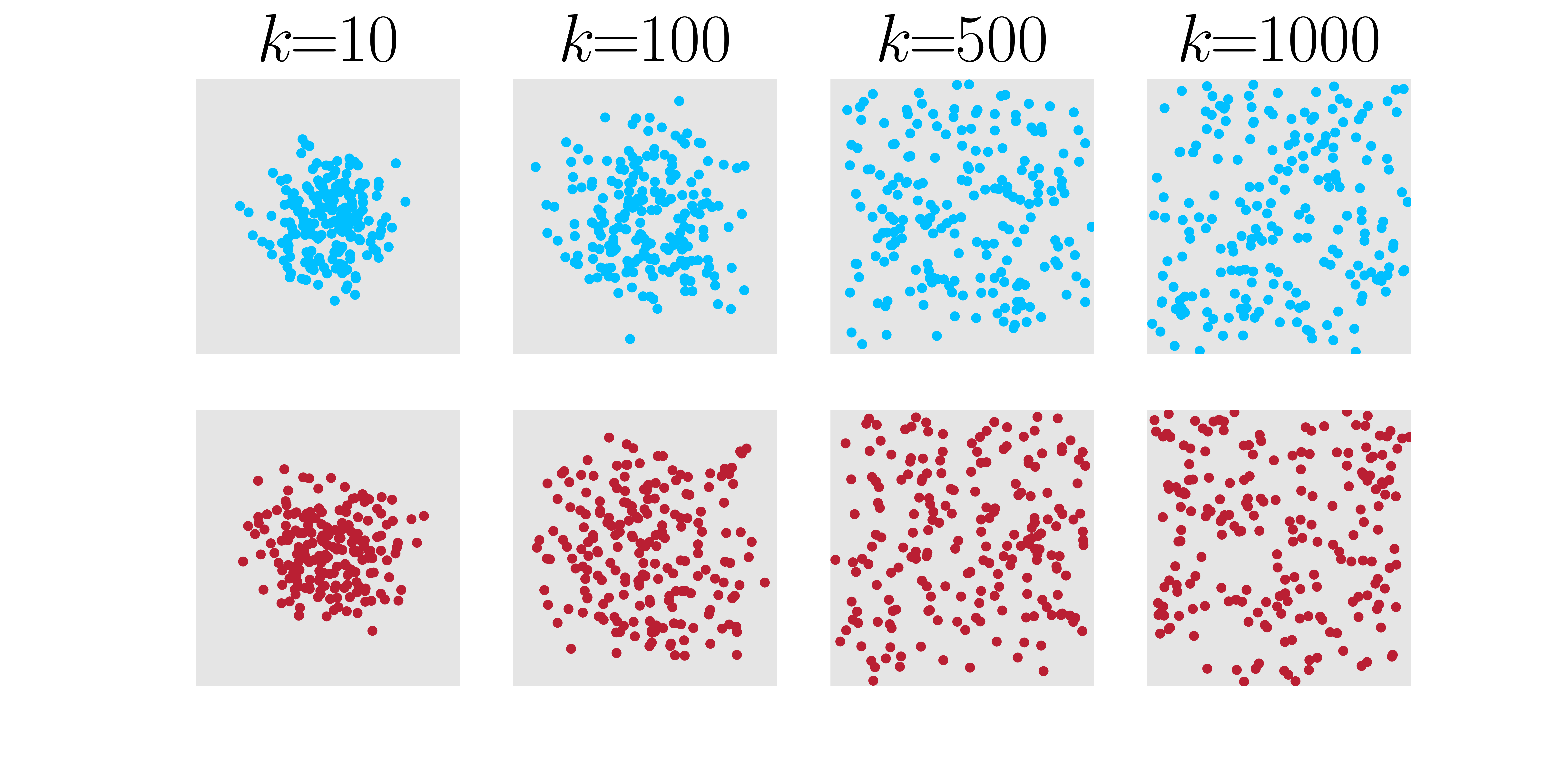}
    \vspace{-1.5\baselineskip}
    \caption{$\obs = 64$}
    \label{fig:square_64}
  \end{subfigure}
  \begin{subfigure}{0.48\linewidth}
    \centering
    \includegraphics[width=1\linewidth]{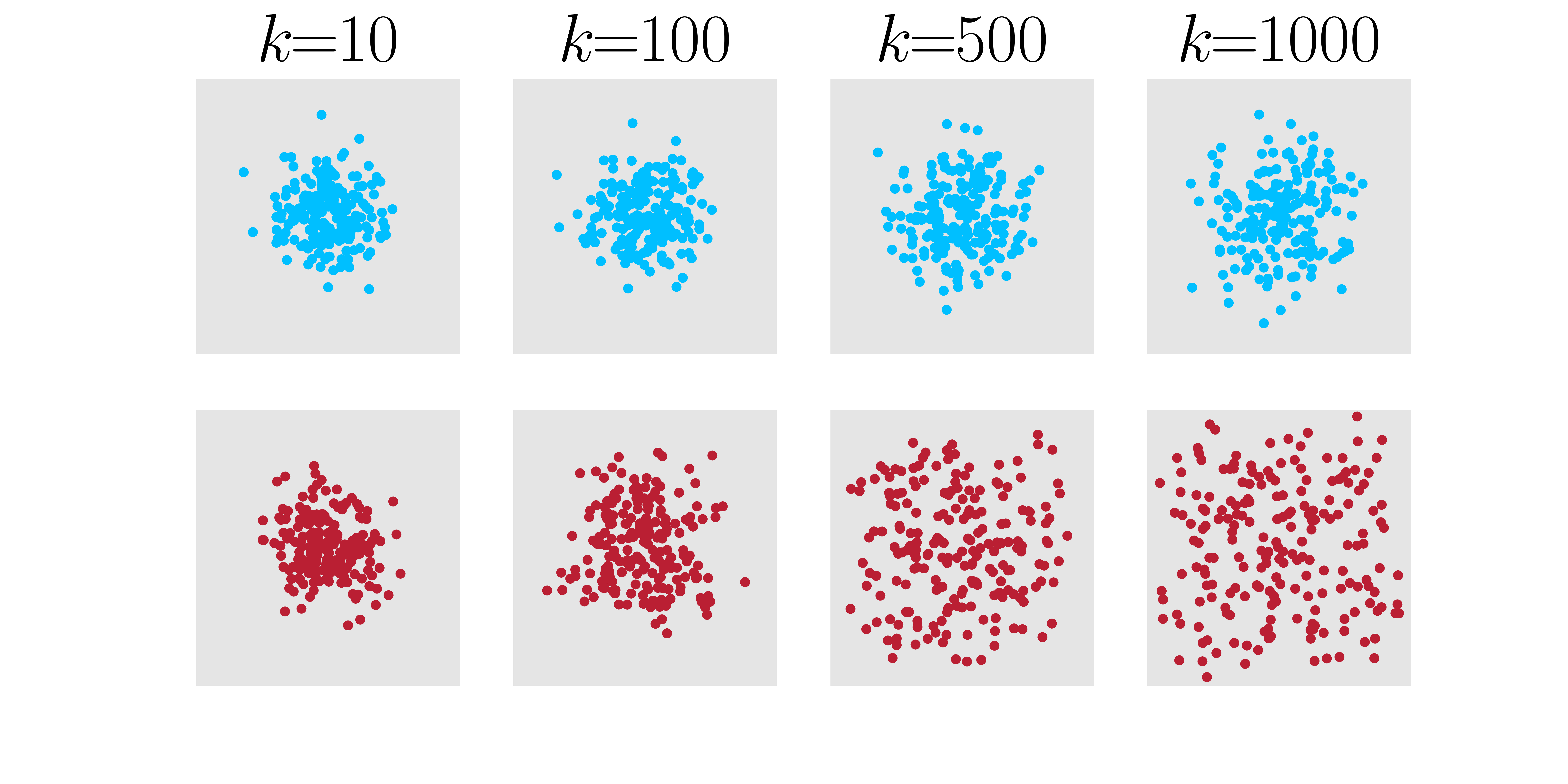}
    \vspace{-1.5\baselineskip}
    \caption{$\obs = 2048$}
    \label{fig:square_2048}
  \end{subfigure}\vspace{3mm}

   \begin{subfigure}{0.32\linewidth}
    \centering
    \includegraphics[width=1\linewidth]{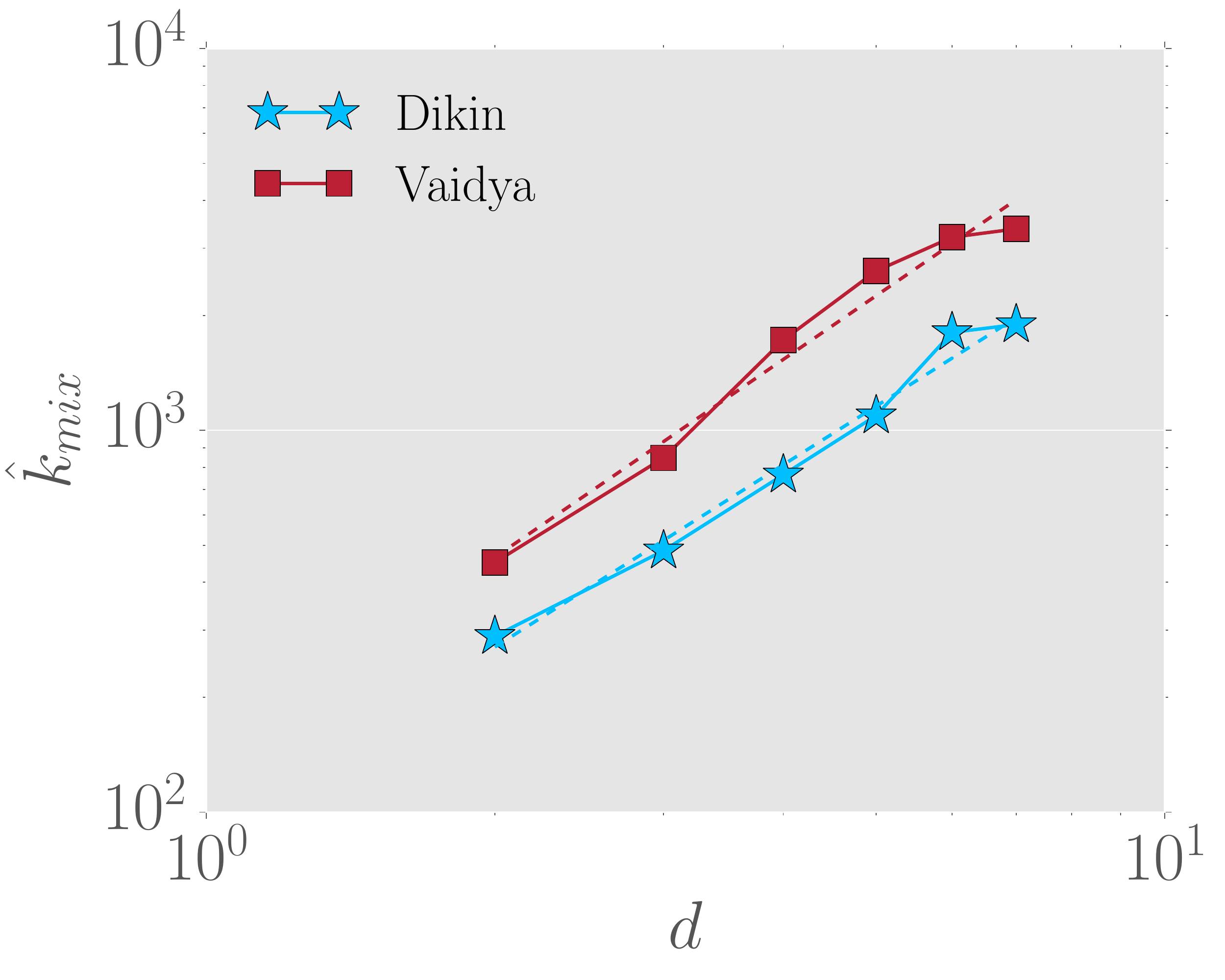}
    \caption{$\obs = 2\dims$}
    \label{fig:square_d}
  \end{subfigure}
  \begin{subfigure}{0.312\linewidth}
    \centering
    \includegraphics[width=1\linewidth]{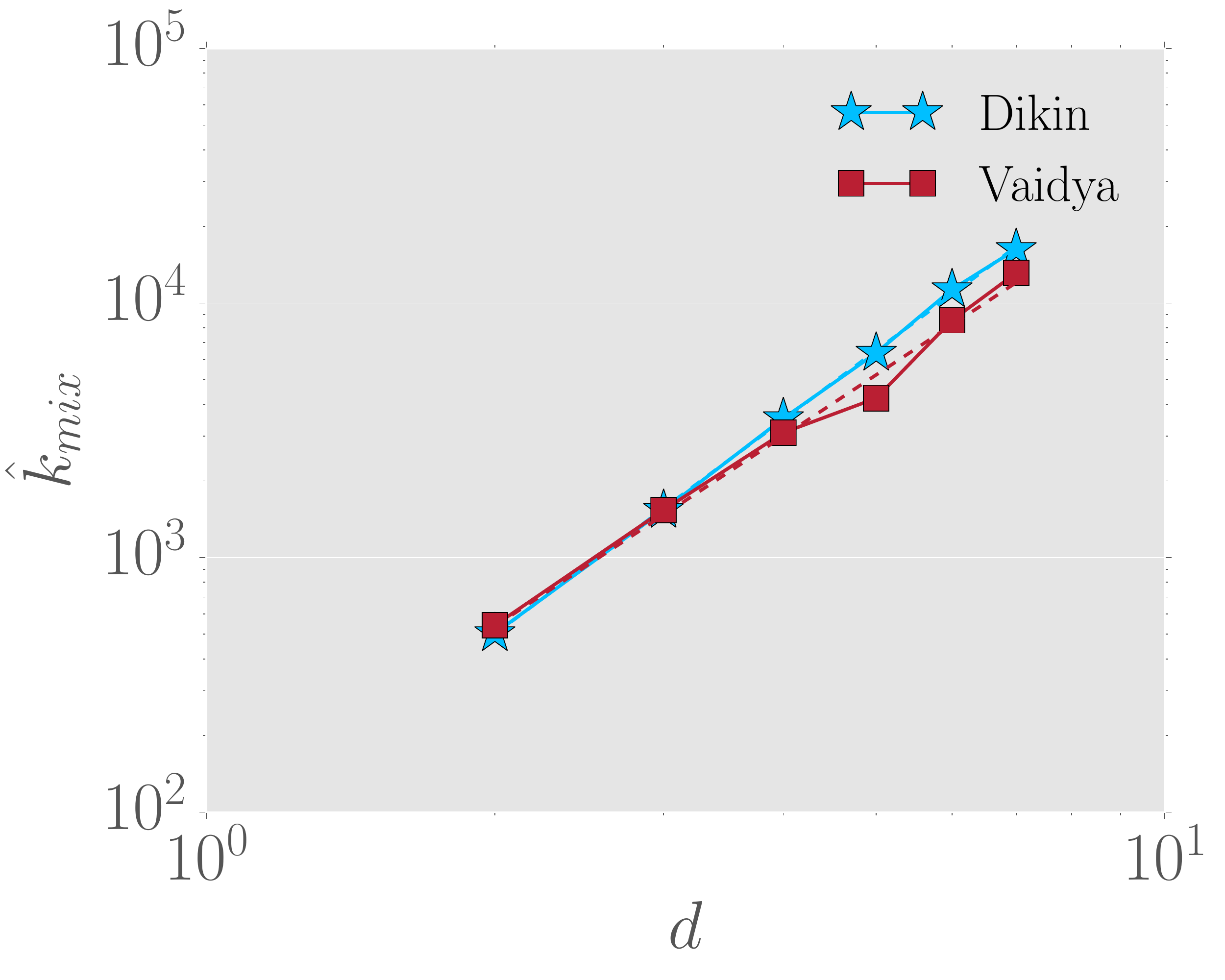}
    \caption{$\obs = 2\dims^2$}
    \label{fig:square_d2}
  \end{subfigure}
  \begin{subfigure}{0.32\linewidth}
    \centering
    \includegraphics[width=1\linewidth]{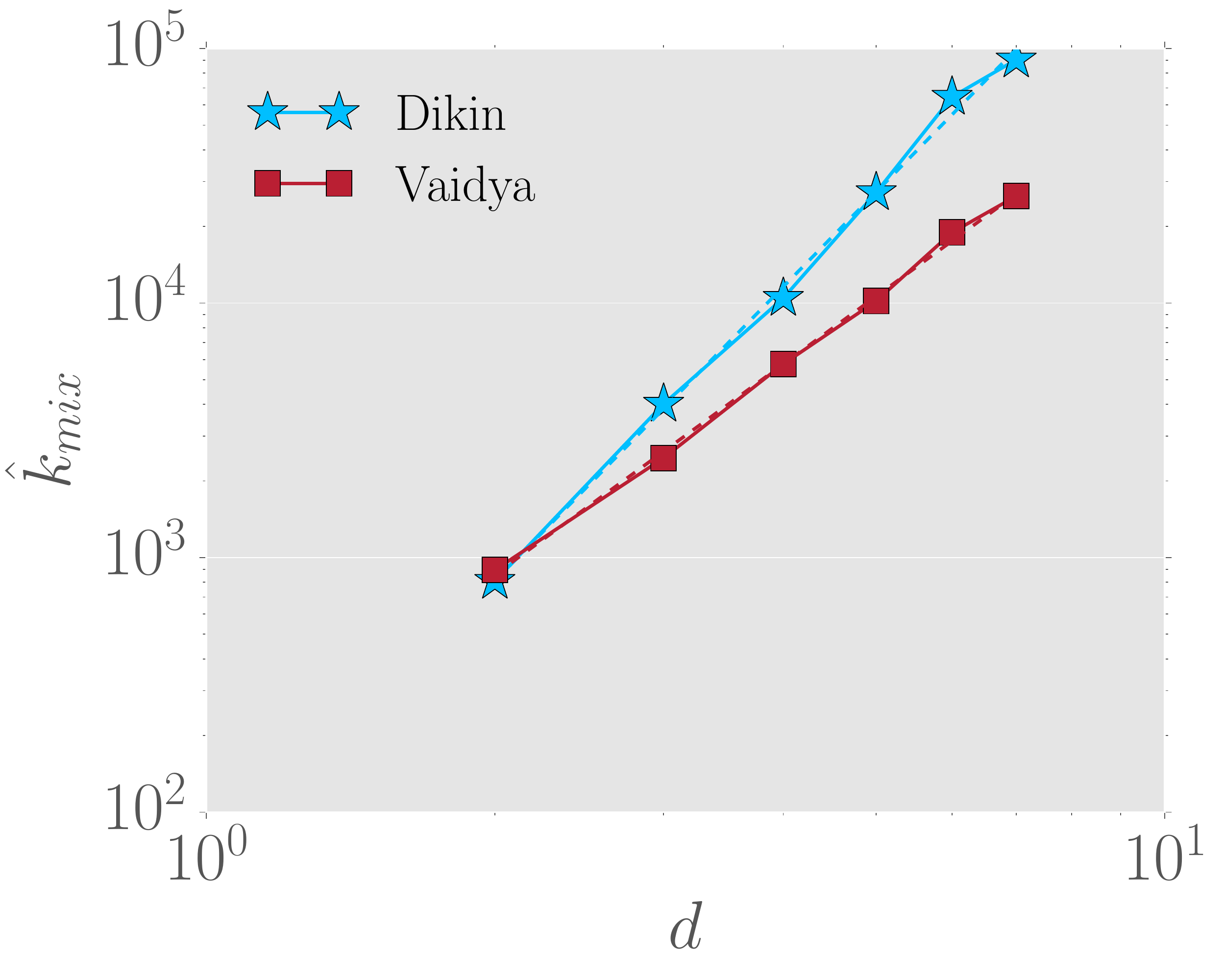}
    \caption{$\obs = 2\dims^3$}
    \label{fig:square_d3}
  \end{subfigure}\vspace{5mm}

  \caption{ Comparison of the Dikin and Vaidya walks on the polytope
    $\Pspace = [-1, 1]^2$.  {\textbf {(a)}} Samples from the initial
    distribution $\initial = \NORMAL(0, 0.04\, \Ind_2)$ and the
    uniform distribution on $[-1, 1]^2$.  {\textbf {(b)}} Log-log plot of
    $\hat k_{\text{mix}}$~\eqref{eq:kmix} versus the number of constraints
    ($\obs$) for a fixed dimension $\dims = 2$.
     {\textbf {(c, d)}} Empirical distribution of the samples for the Dikin walk (blue/top rows) and the Vaidya walk
    (red/bottom rows) for different values of $\obs$ at iteration $k =
    10, 100, 500$ and $1000$.
    \mbox{{\textbf {(e, f, g)}} Log-log} plot of $\hat k_{\text{mix}}$ vs the dimension $\dims$,
    for $\obs \in \{2\dims, 2\dims^2, 2\dims^3\}$ for \mbox{$\dims \in \{2, 3, 4, 5, 6, 7\}$}.
    The exponents from these plots are summarized in Table~\ref{tab:exponents}.
     Note that increasing the number of constraints
    $\obs$ has more profound effect on the Dikin walk in almost all the cases.
    }
  \label{fig:square}
\end{figure}

\begin{figure}
  \centering
  \begin{subfigure}{0.49\linewidth}
    \centering
    \includegraphics[width=1\linewidth]{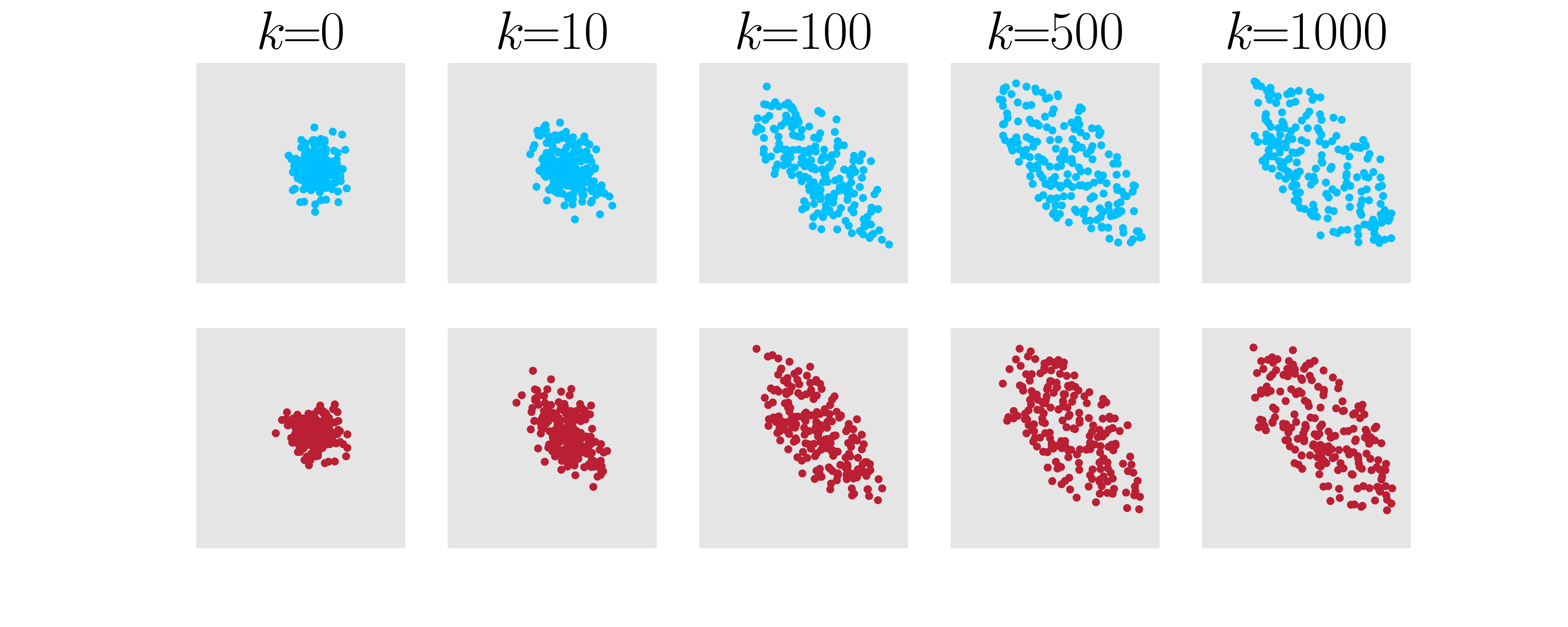}
    \vspace{-1\baselineskip}
    \caption{$\obs = 64$}
    \label{fig:random_64}
  \end{subfigure}\vspace{3mm}
  \begin{subfigure}{0.49\linewidth}
    \centering
    \includegraphics[width=1\linewidth]{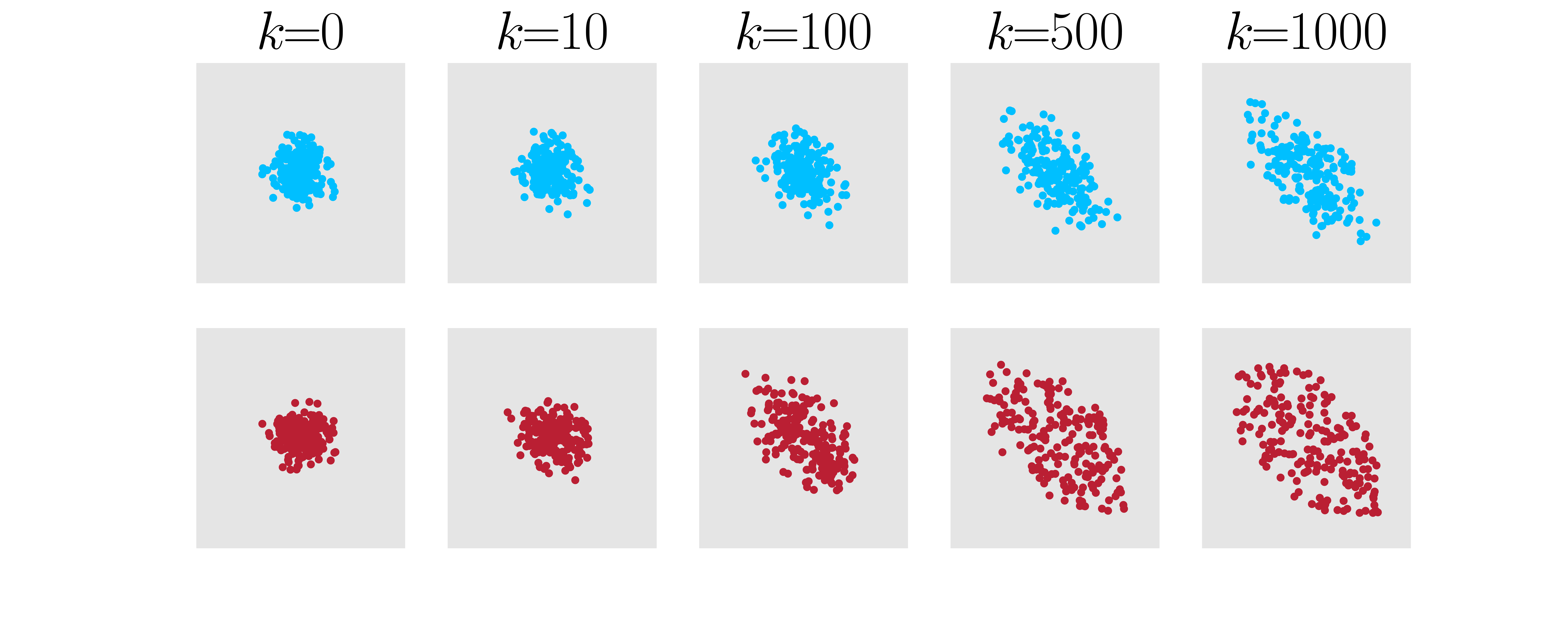}
    \vspace{-1\baselineskip}
    \caption{$\obs = 2048$}
    \label{fig:random_2048}
  \end{subfigure}\vspace{3mm}

  \begin{subfigure}{0.49\linewidth}
    \centering
    \includegraphics[width=1\linewidth]{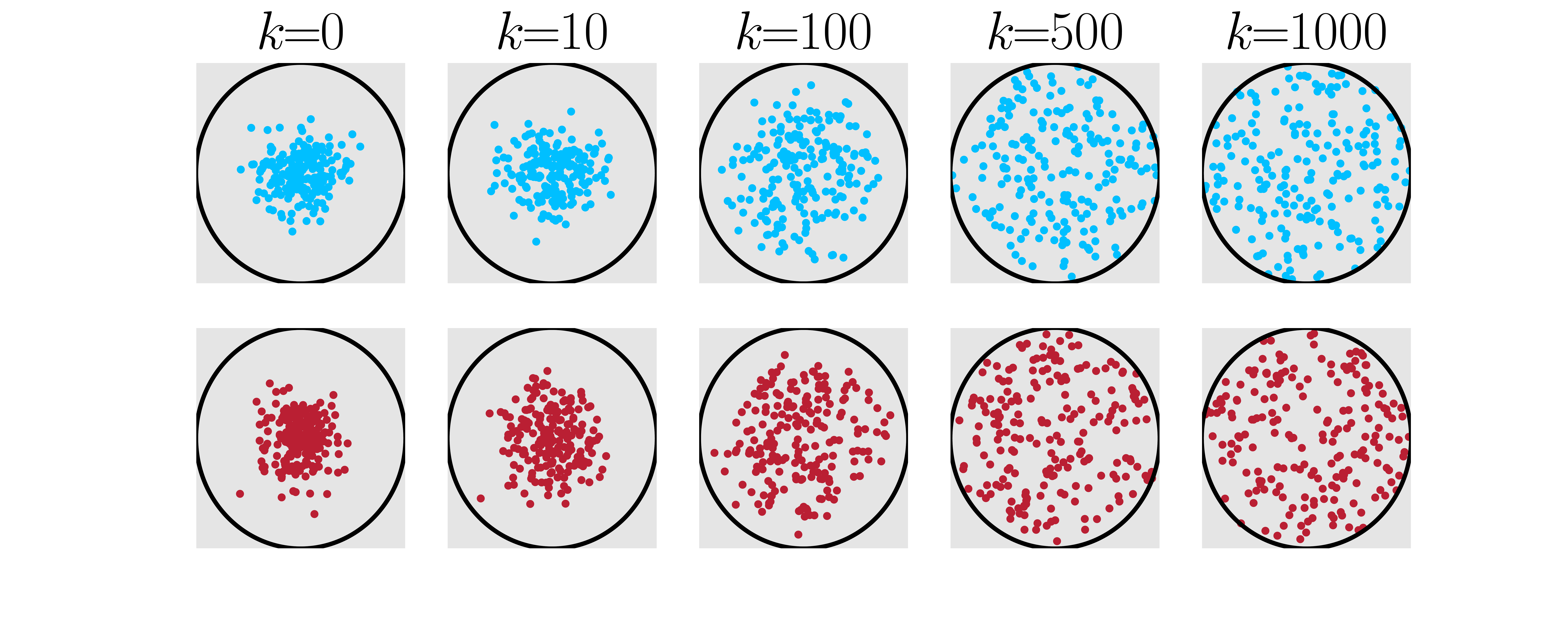}
    \vspace{-1\baselineskip}
    \caption{$\obs = 64$}
    \label{fig:circle_64}
  \end{subfigure}
  \begin{subfigure}{0.49\linewidth}
    \centering
    \includegraphics[width=1\linewidth]{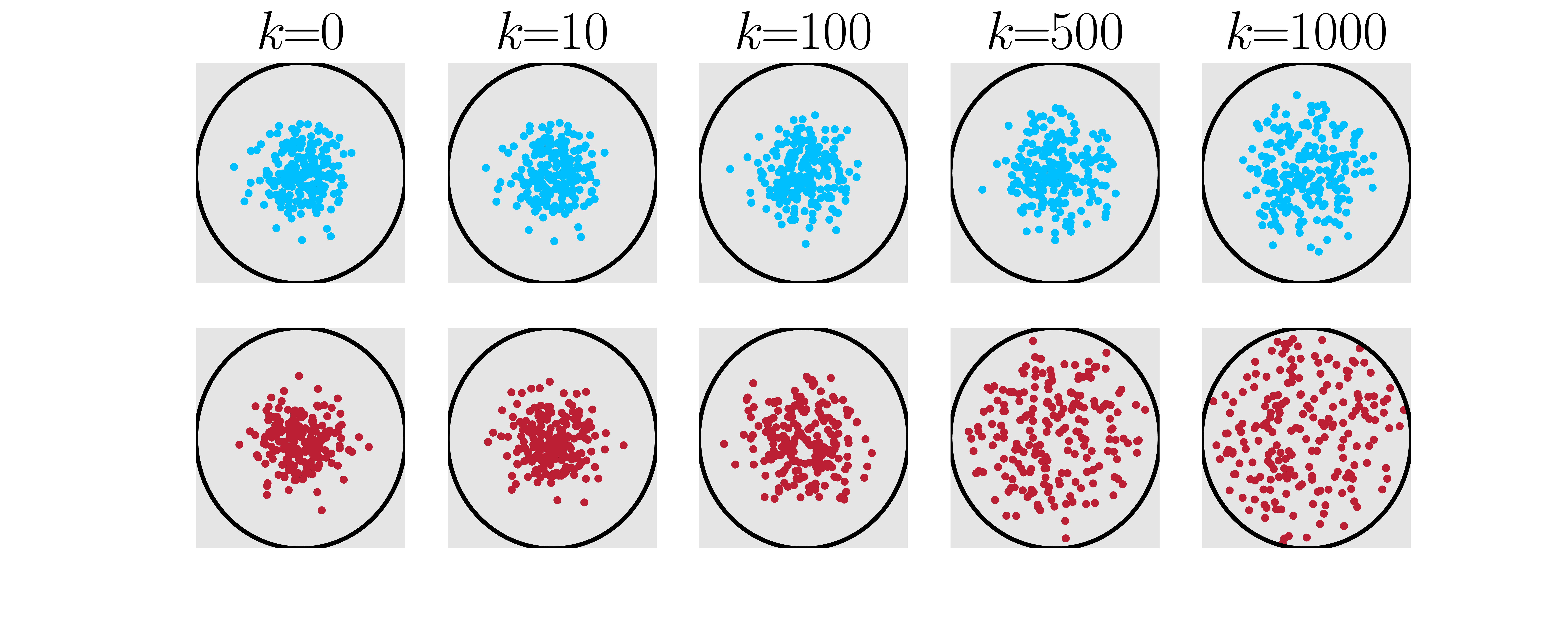}
    \vspace{-1\baselineskip}
    \caption{$\obs = 2048$}
    \label{fig:circle_2048}
  \end{subfigure}

  \caption{Empirical distribution of the samples from $200$ runs for the Dikin walk (blue/top rows) and the
    Vaidya walk (red/bottom rows) at different iterations $k$.
    The $2$-dimensional polytopes considered are: \mbox{\textbf{(a, b)}} random polytopes
    with $\obs$-constraints, and \textbf{(c, d)} regular
    $\obs$-polygons inscribed in the unit circle.  For both sets of cases, we
    observe that higher $\obs$ slows down the walks, with visibly more
    effect on the Dikin walk compared to the Vaidya walk.  }
  \label{fig:new_circle_random}
\end{figure}

\begin{figure}
  \centering
  \begin{subfigure}{0.49\linewidth}
    \centering
    \includegraphics[width=1\linewidth]{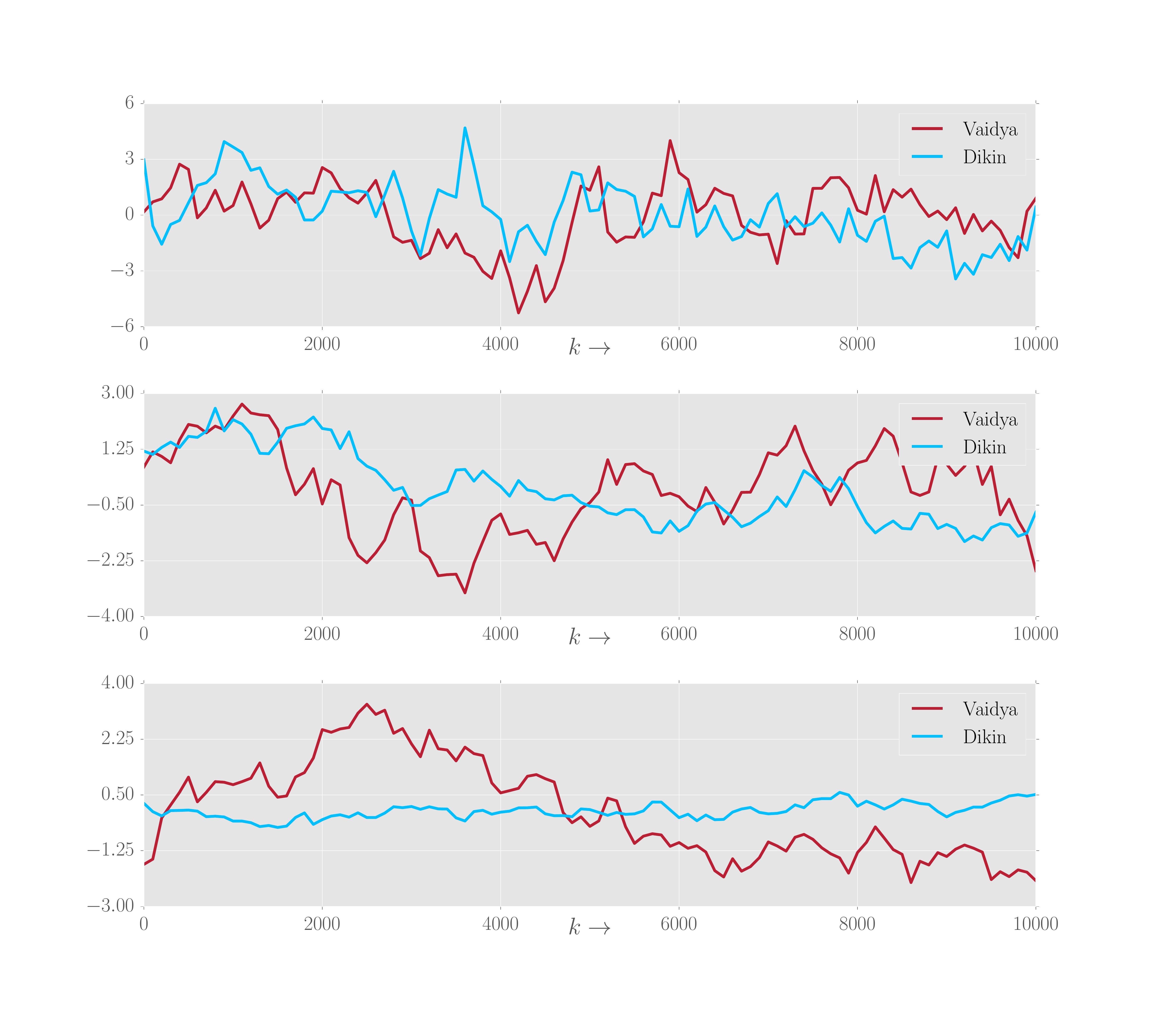}
    \vspace{-2\baselineskip}
    \caption{$\dims = 10$}
    \label{fig:one_d_projection}
  \end{subfigure}
  \begin{subfigure}{0.49\linewidth}
    \centering
    \includegraphics[width=1\linewidth]{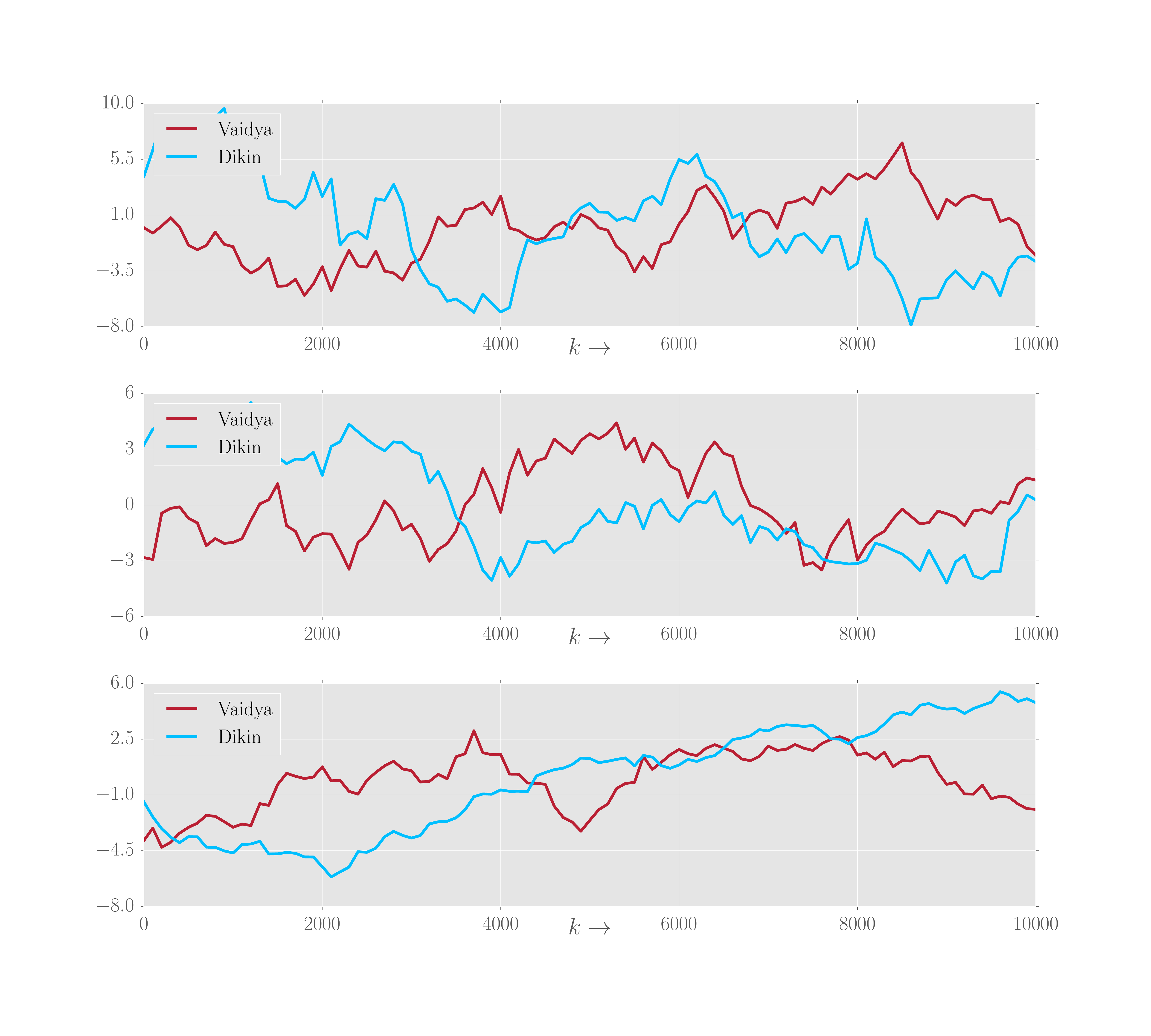}
    \vspace{-2\baselineskip}
    \caption{$\dims = 50$}
    \label{fig:normal_projection}
  \end{subfigure}

  \caption{Sequential plots of a one-dimensional random projection  of the samples on the hyperbox \mbox{$\Pspace = [-1, 1]^\dims$}, defined
    by $\obs$ constraints.  Each plot corresponds to one long run of
    the Dikin and Vaidya walks, and the projection is taken in a
    direction chosen randomly from the sphere.  \textbf{(a)} Plots for
    $\dims = 10$ and $\obs \in \{20, 640, 5120 \}$.  \textbf{(b)}
    Plots for $\dims = 50$ and $\obs \in \{100, 400, 1600\}$.
    Relative to the Dikin walk, the Vaidya walk has a more jagged plot
    for pairs $(\obs, \dims)$ in which the ratio $\obs/\dims$ is
    relatively large: for instance, see the plots corresponding to
    $(\obs, \dims)=(640, 10)$ and $(5120, 10)$. The same claim cannot
    be made for pairs $(\obs, \dims)$ for which the ratio $\obs/\dims$
    is relatively small; e.g., the plot with $(\obs, \dims) =
    (20,10)$.  These observations are consistent with our results that
    the Vaidya walk mixes more quickly by a factor of order
    $\mathcal{O}({{\sqrt{\obs/\dims}}})$ over the Dikin walk.}
  \label{fig:high_d_cube}
\end{figure}


\section{Proofs}
\label{sec:proof}

We begin with auxiliary results in
Section~\ref{sub:some_auxiliary_results} which we use then to prove
Theorem~\ref{thm:mixing_time_bound} in
Section~\ref{sub:proof_of_theorem_thm:mixing_time_bound}.  Proofs of the auxiliary results are in
Sections~\ref{ssub:proof_of_lemma_lemma:first_bounds} and
\ref{ssub:proof_of_lemma_lemma:vaidya_walk_close_kernel}, and we defer
other technical results to appendices.


\subsection{Auxiliary results}
\label{sub:some_auxiliary_results}

Our proof proceeds by formally establishing the following property for
the Vaidya walk: if two points are close, then their one-step
transition distribution are also close.  Consequently, we need to
quantify the closeness between two points and the associated
transition distributions.  We measure the distance between two points
in terms of the cross ratio that we define next.
For a given pair of points $x, y
\in \Pspace$, let $e(x), e(y) \in \partial \Pspace$ denote the
intersection of the chord joining $x$ and $y$ with $\Pspace$ such that
$e(x), x, y, e(y)$ are in order (see Figure~\ref{fig:basic_polytope}).
The cross-ratio $d_\Pspace(x, y)$ is given by
\begin{align}
\label{eq:def_cross_ratio}
d_\Pspace(x, y) & \defn \frac{ \vecnorm{e(x)-e(y)}{2} \vecnorm{x-y}{2}
}{ \vecnorm{e(x)-x}{2} \vecnorm{e(y)-y}{2} }.
\end{align}
The ratio $d_\Pspace(x, y)$ is related to the Hilbert metric on
$\Pspace$, which is given by $\log \parenth{ 1+d_\Pspace(x, y)}$; see
the paper by~\cite{bushell1973hilbert} for more details.

\begin{figure}
  \centering
  \begin{subfigure}[h]{0.45\linewidth}
    \centering
    \widgraph{0.8\linewidth}{figures/polytope}
    \caption{}
    \label{fig:basic_polytope}
  \end{subfigure}
  \begin{subfigure}[h]{0.45\linewidth}
    \centering
    \widgraph{0.8\linewidth}{figures/cross-ratio}
    \caption{}
    \label{fig:cross_ratio}
  \end{subfigure}
  \caption{Polytope $\Pspace = \{x\in \realdim\vert Ax \leq b \}$.
    (a)~The points $e(x)$ and $e(y)$ denote the intersection points of
    the chord joining $x$ and $y$ with $\Pspace$ such that $e(x), x,
    y, e(y)$ are in order.  (b)~A geometric illustration of the
    argument~\eqref{eq:relate_cross_ratio}. It is straightforward to
    observe that ${\vecnorm{x-y}{2}}/{\vecnorm{e(x)-x}{2}} =
    {\vecnorm{u-y}{2}}/{\vecnorm{u-v}{2}} =
    {\abss{a_i\tp(y-x)}}/\parenth{b_i-a_i\tp x}$.  }
  \label{fig:polytope_and_end_points}
\end{figure}

Consider a lazy reversible random walk on a bounded convex set
$\Pspace$ with transition operator $\lazytrans$ defined via the
mapping $\initial \mapsto \initial/2 + \nolazytrans(\initial)/2$ and
stationary with respect to the uniform distribution on $\Pspace$
(denoted by $\target$).  (Recall that $\diracdelta_x$ denote the
dirac-delta distribution with unit mass at $x$.)  The following lemma
gives a bound on the mixing-time of the Markov chain.

\begin{lemma}[Lov{\'a}sz's Lemma]
\label{lemma:Lovasz_theorem}
Suppose that there exist scalars $\lovone, \lovtwo \in (0, 1)$ such
that
\begin{subequations}
\begin{align}
\label{EqnLovaszHyp}
\tvnorm{\nolazytrans(\diracdelta_x)- \nolazytrans(\diracdelta_y)} \leq
1 - \lovone \qquad \mbox{for all $x, y \in \intP$ with $d_\Pspace(x,
  y) < \lovtwo $.}
\end{align}
Then for every distribution $\initial$ that is $M$-warm with respect
to $\target$, the lazy transition operator $\lazytrans$ satisfies
\begin{align}
  \label{EqnLovaszResult}
\tvnorm{\lazytrans^k(\initial) -\target} \leq
\displaystyle\sqrt{M}\exp\parenth{-k\,\frac{\lovtwo^2 \lovone^2}{4096}}
\quad \forall \ \ k = 1, 2, \ldots.
\end{align}
\end{subequations}
\end{lemma}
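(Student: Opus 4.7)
The plan is to reduce the mixing time bound to a bound on the conductance of the (non-lazy) chain, and then to extract a lower bound on conductance from the hypothesis \eqref{EqnLovaszHyp} via a standard isoperimetric inequality for convex bodies expressed in terms of the cross-ratio $d_\Pspace$. Throughout I will use the fact that the target $\target$ is a reversible stationary measure for $\nolazytrans$ (and hence for the lazy chain $\lazytrans$), so the usual conductance machinery applies.

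Concretely, first I would define the conductance
\[
\conductance \defn \inf_{\substack{S\subset \intP \\ 0<\target(S)\le 1/2}}
\frac{\int_S \nolazytrans(\diracdelta_x)(S^c)\, d\target(x)}{\target(S)},
\]
and invoke the Lov\'asz--Simonovits bound for lazy reversible chains, which gives
$\tvnorm{\lazytrans^k(\initial)-\target}\le \sqrt{M}\,(1-\conductance^2/2)^k \le \sqrt{M}\exp(-k\conductance^2/2)$
whenever $\initial$ is $M$-warm. Thus it is enough to prove $\conductance\ge \lovone\lovtwo/C$ for an explicit numerical constant $C$, since then \eqref{EqnLovaszResult} follows with constant $4096$ after combining the factor in the $\exp$ with the implicit numerical constants.

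Next, for an arbitrary partition $\Pspace=S_1\sqcup S_2$ with $\target(S_1),\target(S_2)>0$, I would introduce the ``bad'' subsets
\[
S_i' \defn \Bigl\{x\in S_i : \nolazytrans(\diracdelta_x)(S_{3-i}) < \tfrac{\lovone}{4}\Bigr\},\qquad i\in\{1,2\},
\]
and consider two cases. Case A: $\target(S_i\setminus S_i')\ge \tfrac{1}{2}\target(S_i)$ for some $i$. In this case the definition of $S_i'$ immediately yields a flow of at least $\tfrac{\lovone}{4}\cdot\tfrac{1}{2}\target(S_i)$ from $S_i$ to $S_{3-i}$, giving $\conductance\ge \lovone/8$ on that partition. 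Case B: $\target(S_i')>\tfrac{1}{2}\target(S_i)$ for both $i=1,2$. For any $x\in S_1', y\in S_2'$ we have
\[
\tvnorm{\nolazytrans(\diracdelta_x)-\nolazytrans(\diracdelta_y)}
\ge \nolazytrans(\diracdelta_x)(S_1)-\nolazytrans(\diracdelta_y)(S_1)
> \bigl(1-\tfrac{\lovone}{4}\bigr)-\tfrac{\lovone}{4} = 1-\tfrac{\lovone}{2},
\]
so the contrapositive of the hypothesis \eqref{EqnLovaszHyp} forces $d_\Pspace(x,y)\ge \lovtwo$ for every such pair.

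With these pieces set up, the main work is to apply the Lov\'asz--Simonovits isoperimetric inequality for a convex body $\Pspace$ equipped with the cross-ratio: whenever $\Pspace=A\sqcup B\sqcup C$ with $\inf_{x\in A,y\in B} d_\Pspace(x,y)\ge t$, one has $\target(C)\ge \tfrac{t}{1+t}\cdot 2\min\{\target(A),\target(B)\}$. Applying this with $A=S_1'$, $B=S_2'$, $C=\Pspace\setminus(S_1'\cup S_2')$, $t=\lovtwo$, and using Case B's volume bounds on $S_i'$, produces
\[
\target(C)\ \ge\ \tfrac{\lovtwo}{2}\cdot \min\{\target(S_1),\target(S_2)\},
\]
up to an absolute constant. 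Points in $C$ send at least $\tfrac{\lovone}{4}$ of their transition probability across the partition, so this contributes a flow bound of order $\lovone\lovtwo\cdot\min\{\target(S_1),\target(S_2)\}$, giving $\conductance\gtrsim \lovone\lovtwo$ in Case B as well. Combining both cases gives the desired lower bound on $\conductance$ and hence \eqref{EqnLovaszResult} after tracking the explicit constants down to $4096$.

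The main obstacle is the geometric isoperimetric inequality for cross-ratio distance on convex bodies; I would simply cite this result from Lov\'asz--Simonovits rather than reprove it. The only delicate bookkeeping is tracing the constants so that the product $\lovtwo^2\lovone^2/4096$ in the exponent matches exactly, which amounts to keeping the factors $1/4$, $1/2$, and the constants from Lov\'asz--Simonovits (and the $1/2$ arising from $\conductance^2/2$) consolidated into a single universal constant.
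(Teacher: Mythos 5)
Your proposal is essentially the paper's own argument: reduce to a conductance lower bound via the Lov\'asz--Simonovits machinery, define the ``bad'' sets $S_i'$ where cross-flow is small, split into the easy case (at least half of one $S_i$ is good) and the hard case, and in the hard case observe that $S_1'$ and $S_2'$ are $\lovtwo$-separated in cross-ratio so the isoperimetric inequality for $d_\Pspace$ forces the ``middle'' set to be large. The only deviations are bookkeeping: you use the threshold $\lovone/4$ where the paper uses $\lovone/2$ (both work since $1-\lovone/2 > 1-\lovone$), you state the isoperimetric inequality in the form $\target(C)\gtrsim t\min\{\target(A),\target(B)\}$ whereas the paper uses the equivalent product form $\target(C)\ge t\,\target(A)\target(B)$ and then converts, and you define conductance with the non-lazy kernel $\nolazytrans$ while the paper (and Lov\'asz's Theorem 1.4) uses the lazy kernel $\lazytrans$ --- this requires an explicit factor-of-two from laziness somewhere in the flow bounds, which the paper inserts and you elide. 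None of these differences alters the argument's validity; they only shift the universal constant.
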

This result is implicit in the paper by \cite{lovasz1999hit}, though not explicitly stated.  In
order to keep the paper self-contained, we provide a proof of this
result in Appendix~\ref{sec:proof_of_lemma_lemma:lovasz_theorem}.

Our proof of Theorem~\ref{thm:mixing_time_bound} is based on applying
Lov{\'a}sz's Lemma; the main challenge in our work is to establish
that our random walks satisfy the condition~\eqref{EqnLovaszHyp} with
suitable choices of $\Delta$ and $\lovone$.  In order to proceed with the
proof, we require a few additional notations.  Recall that the slackness
at $x$ was defined as $\slack_{x} \defn (b_1-a_1\tp x, \ldots, b_\obs
- a_\obs\tp x)\tp$.  For all $x\in \intP$, define the \emph{Vaidya
  local norm of $v$ at $x$} as
\begin{subequations}
  \begin{align}
    \label{eq:def_local_norm}
\vecnorm{v}{\vaidya_x} & \defn \vecnorm{\vaidya_x^{1/2}v}{2} =
\sqrt{\sum_{i=1}^\obs (\levvaidya_{x, i}+\vaidyabeta)\frac{(a_i^\top
    v)^2}{\slack_{x, i}^2}},
\end{align}
and the \emph{Vaidya slack sensitivity at $x$} as
\begin{align}
  \label{eq:theta_Definition}
\localslack_{\vaidya_x} \defn \parenth{\vecnorm{\frac{a_1}{\slack_{x,
        1}}}{\vaidya_x}^2, \ldots, \vecnorm{\frac{a_\obs}{\slack_{x,
        \obs}}}{\vaidya_x}^2}\tp = \parenth{\frac{a_1^\top
    \vaidya_x^{-1}a_1}{\slack_{x, 1}^2} , \ldots, \frac{a_\obs^\top
    \vaidya_x^{-1}a_\obs}{\slack_{x, \obs}^2} }\tp.
\end{align}
Similarly, we define the \emph{John local norm of $v$ at $x$} and the
\emph{John slack sensitivity at $x$} as
\begin{align}
\vecnorm{v}{\john_x} & \defn \vecnorm{\john_x^{1/2}v}{2}
\quad\text{and}\quad \localslackvaidya_{\john_x} \defn
\parenth{\vecnorm{\frac{a_1}{\slack_{x, 1}}}{\john_x}^2, \ldots,
  \vecnorm{\frac{a_\obs}{\slack_{x, \obs}}}{\john_x}^2}\tp.
  \label{eq:johntheta}
\end{align}
\end{subequations}
\noindent The following lemma provides useful properties of the
leverage scores $\levvaidya_{x}$ from
equation~\eqref{eq:defn_lev_scores}, the weights $\johnweights_x$
obtained from solving the program~\eqref{eq:john_weights}, and the
slack sensitivities $\localslack_{\vaidya_x}$ and
$\localslack_{\john_x}$.
\begin{lemma}
  \label{lemma:first_bounds}
  For any $x \in \intP$, the following properties hold:
  \begin{enumerate}[label=(\alph*)]
  \item \label{item:sigma_bound} $\levvaidya_{x, i} \in [0, 1]$ for
    all $i \in [\obs]$,
\item \label{item:sigma_sum} $\sum_{i=1}^\obs\levvaidya_{x, i} =
  \dims$,
\item \label{item:theta_bound}
  $\localslack_{\vaidya_x, i} \in \brackets{0,
  \sqrt{\obs/\dims}}$ for all $i \in [\obs]$,
\item \label{item:john_sigma_bound_1}
$\johnweights_{x, i} \in [\johnbeta, 1+\johnbeta]$  for
    all $i \in [\obs]$,
\item \label{item:john_sigma_sum_1}
$\sum_{i=1}^\obs \johnweights_{x, i} = 3\dims/2$, and
\item \label{item:john_theta_bound_1}
  $\localslack_{\john_x, i} \in \brackets{0, 4}$ for
  all $i \in [\obs]$.
  \end{enumerate}
\end{lemma}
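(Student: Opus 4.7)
The plan is to handle parts (a)--(c) for the Vaidya weights via direct leverage-score identities, and then to invoke the first-order conditions of the strongly convex John program to obtain (d)--(f). Throughout, I will write $\tilde a_i \defn a_i/\slack_{x,i}$ so that $\hesslogbarr_x = \sum_i \tilde a_i \tilde a_i^\top$, $\vaidya_x = \sum_i (\levvaidya_{x,i} + \vaidyabeta) \tilde a_i \tilde a_i^\top$, and $\john_x = \sum_i \johnweights_{x,i} \tilde a_i \tilde a_i^\top$.

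For (a) and (b), set $v_i \defn \hesslogbarr_x^{-1/2} \tilde a_i$. Then $\sum_i v_i v_i^\top = \hesslogbarr_x^{-1/2} \hesslogbarr_x \hesslogbarr_x^{-1/2} = I_\dims$, and $\levvaidya_{x,i} = \|v_i\|_2^2$. Nonnegativity is immediate, while $\|v_i\|_2^2 \le \eigmax(\sum_j v_j v_j^\top) = 1$ gives (a); taking the trace of $I_\dims$ gives (b). For (c), I would exploit two semidefinite comparisons: $\vaidya_x \succeq \vaidyabeta\, \hesslogbarr_x$ (since each $\levvaidya_{x,i} + \vaidyabeta \ge \vaidyabeta$) and $\vaidya_x \succeq (\levvaidya_{x,i} + \vaidyabeta)\, \tilde a_i \tilde a_i^\top$ (dropping all other summands). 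Inverting both and sandwiching with $\tilde a_i$ yields $\localslack_{\vaidya_x,i} \le \levvaidya_{x,i}/\vaidyabeta$ and $\localslack_{\vaidya_x,i} \le 1/(\levvaidya_{x,i} + \vaidyabeta)$. Multiplying gives $\localslack_{\vaidya_x,i}^2 \le \levvaidya_{x,i}/[\vaidyabeta(\levvaidya_{x,i}+\vaidyabeta)] \le 1/\vaidyabeta = \obs/\dims$.

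For (d) and (e), I would write the first-order optimality conditions of the strongly convex program \eqref{eq:john_weights}. Denoting $H_w \defn \sum_j w_j^{\johnalpha} \tilde a_j \tilde a_j^\top$, a direct calculation gives $\partial_{w_i} \log\det H_w = \johnalpha\, w_i^{\johnalpha-1} \tilde a_i^\top H_w^{-1} \tilde a_i$. Setting the gradient to zero and rearranging produces the fixed-point identity
\[
\johnweights_{x,i} \;=\; \johnbeta \;+\; \johnweights_{x,i}^{\johnalpha}\, \tilde a_i^\top H_{\johnweights_x}^{-1} \tilde a_i.
\]
The second term is exactly the leverage score of $\johnweights_{x,i}^{\johnalpha/2} \tilde a_i$ in the identity decomposition $\sum_j (\johnweights_{x,j}^{\johnalpha/2} \tilde a_j)(\johnweights_{x,j}^{\johnalpha/2} \tilde a_j)^\top H_{\johnweights_x}^{-1} = I_\dims$, so it lies in $[0,1]$; this gives $\johnweights_{x,i} \in [\johnbeta, 1+\johnbeta]$, i.e.\ (d). Summing the identity over $i$ yields $\sum_i \johnweights_{x,i} = \obs \johnbeta + \trace(I_\dims) = \dims/2 + \dims = 3\dims/2$, which is (e).

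For (f), the same fixed-point identity implies $\tilde a_i^\top H_{\johnweights_x}^{-1} \tilde a_i \le \johnweights_{x,i}^{1-\johnalpha}$. To pass from $H_{\johnweights_x}$ (weights $\johnweights_{x,j}^{\johnalpha}$) to $\john_x$ (weights $\johnweights_{x,j}$), I use (d): since $\johnweights_{x,j} \ge \johnbeta$ and $\johnalpha - 1 < 0$, one has $\johnweights_{x,j}^{\johnalpha} \le \johnbeta^{\johnalpha-1} \johnweights_{x,j}$, hence $H_{\johnweights_x} \preceq \johnbeta^{\johnalpha-1} \john_x$ and $\john_x^{-1} \preceq \johnbeta^{\johnalpha-1} H_{\johnweights_x}^{-1}$. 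Combined with $\johnweights_{x,i} \le 1+\johnbeta \le 2$ (using $\johnbeta = \dims/(2\obs) \le 1/2$), this yields $\localslack_{\john_x,i} \le (2/\johnbeta)^{1-\johnalpha}$. Writing $\gamma = \log_2(1/\johnbeta)$ so that $1-\johnalpha = 1/\gamma$, the bound becomes $2^{1+1/\gamma} \le 4$ for all $\gamma \ge 1$, which is precisely (f). The main subtle point is the interplay in (f) between the two matrices $H_{\johnweights_x}$ and $\john_x$ with different weight exponents; the specific choice $\johnalpha = 1 - 1/\log_2(1/\johnbeta)$ is exactly what is required to keep the constant on the right-hand side an absolute constant, independent of the ratio $\obs/\dims$.
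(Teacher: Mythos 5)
Your proof of parts (a)--(c) is essentially the paper's argument: nonnegativity and the bound $\levvaidya_{x,i}\le 1$ from $\tilde a_i\tilde a_i^\top\preceq\hesslogbarr_x$, the trace computation for (b), and the pair of PSD comparisons $\vaidyabeta\hesslogbarr_x\preceq\vaidya_x$ and $(\levvaidya_{x,i}+\vaidyabeta)\tilde a_i\tilde a_i^\top\preceq\vaidya_x$, multiplied, for (c). The interesting divergence is in (d)--(f): the paper simply cites Lemmas~13--15 of \cite{lee2014path} and omits the proofs, whereas you actually derive them. Your derivation is correct: the first-order optimality condition of the program~\eqref{eq:john_weights} gives exactly the fixed-point identity $\johnweights_{x,i}=\johnbeta+\johnweights_{x,i}^{\johnalpha}\tilde a_i^\top H_{\johnweights_x}^{-1}\tilde a_i$ (which the paper restates in the appendix as equation~\eqref{eq:john_fixed_point_equation_john_weights}, again attributed to Lee--Sidford), the second term is a leverage score in $[0,1]$ giving (d), trace cyclicity gives (e), and your two-step comparison $\tilde a_i^\top H_{\johnweights_x}^{-1}\tilde a_i\le\johnweights_{x,i}^{1-\johnalpha}$ followed by $\john_x^{-1}\preceq\johnbeta^{\johnalpha-1}H_{\johnweights_x}^{-1}$ yields $\localslack_{\john_x,i}\le(2/\johnbeta)^{1-\johnalpha}=2^{1+1/\log_2(1/\johnbeta)}\le 4$ for $\johnbeta\le1/2$, which is (f). Your exposition makes explicit the role of the specific choice $\johnalpha=1-1/\log_2(1/\johnbeta)$ in keeping the constant dimension-free, which the paper's citation-only treatment leaves implicit; this is a genuine value-add, even if the underlying argument is the one Lee and Sidford gave. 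One minor caveat: the strong convexity of the John program (needed for existence and uniqueness of $\johnweights_x$ so that the first-order condition characterizes it) is itself nontrivial and is also due to \cite{lee2014path}; your proof implicitly takes that as given, which is reasonable since the paper does too.
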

\noindent We prove this lemma in
Section~\ref{ssub:proof_of_lemma_lemma:first_bounds}.\\

Let $\proposal_x^\tagvaidya$ to denote the proposal distribution of the random walk $\VW{r}$ at state $x$.
Next, we state a lemma that shows that if two points $x, y \in \intP$ are close in
Vaidya local norm at $x$, then for a suitable choice of the parameter $r$, the
proposal distributions $\proposal^\tagvaidya_x$ and $\proposal^\tagvaidya_y$ are close.
In addition, we show that the proposals are accepted with high probability at any point $x \in \intP$.
To establish the latter result, we now define the non-lazy transition operator
of the Vaidya
walk. Since the Vaidya walk is lazy with probability $1/2$, there exists
a valid (non-lazy) transition operator $\nolazytrans_{\fulltagvaidya(\rparam)}$
such that for any distribution $\initial$,  we have
\begin{align*}
  \lazytrans_{\fulltagvaidya(\rparam)}(\initial)
= \initial/2 + \nolazytrans_{\fulltagvaidya(\rparam)}(\initial)/2.
\end{align*}
We call $\nolazytrans_{\fulltagvaidya}$ the non-lazy transition operator
for the Vaidya walk. Note that the one-step non-lazy transition distribution $\nolazytrans_{\fulltagvaidya
(\rparam)}(\diracdelta_x)$ denotes the distribution of proposals after the
accept-reject step if the chain was not lazy.
Thus to establish that proposals are accepted with high probability, it
suffices to establish that the transition distribution $\nolazytrans_{\fulltagvaidya
(\rparam)}(\diracdelta_x)$ at any point $x \in \Poly$
is close to the proposal distribution $\proposal_x^\tagvaidya$.
We now state these two results formally:
\begin{lemma}
  \label{lemma:vaidya_walk_close_kernel}
  \begin{subequations}
There exists a continuous non-decreasing function $\vaidyaradiusbound: [0, 1/4]
\rightarrow \real_+$ with \mbox{$f(\vaidyaepsilonconst) \geq \vaidyaradiusconst$} such that for any
$\epsilon \in (0, \vaidyaepsilonconst]$, the random walk \VW{r} with $r \in [0,
    \vaidyaradiusbound(\epsilon)]$ satisfies
\begin{align}
  \tvnorm{\proposal^\tagvaidya_x-\proposal^\tagvaidya_y} &\leq
  \epsilon \quad \mbox{ $\forall\ x, y \in \intP$ s.t.
    $\vecnorm{x-y}{\vaidya_x} \leq \displaystyle\frac{\epsilon
      r}{2(\obs\dims)^{1/4}}$ } ,\quad \text{and}
  \label{eq:delta_p}\\
  \tvnorm{\nolazytrans_{\fulltagvaidya(\rparam)}(\diracdelta_x)-\proposal^\tagvaidya_x} &\leq
  5\epsilon \, \,\, \mbox{$\forall\ x \in \intP$.}
  \label{eq:delta_q_p}
\end{align}
\end{subequations}
\end{lemma}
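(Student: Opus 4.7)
The plan is to handle the two bounds separately, but with a common toolkit: (i) a Hessian-stability statement saying that if $\vecnorm{x-y}{\vaidya_x}$ is small then the matrix $\vaidya_y$ is multiplicatively close to $\vaidya_x$, and (ii) a log-determinant stability statement controlling $\log\det\vaidya_y - \log\det\vaidya_x$ in terms of $\vecnorm{x-y}{\vaidya_x}$. Both are technical lemmas about how the leverage scores $\levvaidya_{x}$ (and hence $\vaidya_x = \sum_i(\levvaidya_{x,i}+\vaidyabeta)\frac{a_ia_i\tp}{\slack_{x,i}^2}$) vary with $x$. They use Lemma~\ref{lemma:first_bounds} (especially $\sum_i\levvaidya_{x,i}=\dims$ and $\localslack_{\vaidya_x,i}\le\sqrt{\obs/\dims}$), plus the identity $\frac{\partial}{\partial x}\log\det\hesslogbarr_x = \sum_i 2\levvaidya_{x,i}\frac{a_i}{\slack_{x,i}}$ and a standard computation of the derivative of the leverage scores. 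I would formulate (i)–(ii) as stand-alone auxiliary lemmas (the paper itself flags these to be proved in the appendix), since the bulk of the scalar bookkeeping lives there.

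\textbf{Bound~\eqref{eq:delta_p} (two Gaussian proposals at nearby points).} Both $\proposal^\tagvaidya_x$ and $\proposal^\tagvaidya_y$ are Gaussians on $\realdim$ with means $x,y$ and covariances $\frac{r^2}{\sqrt{\obs\dims}}\vaidya_x^{-1}$ and $\frac{r^2}{\sqrt{\obs\dims}}\vaidya_y^{-1}$. I would apply Pinsker's inequality together with the closed-form KL divergence between two Gaussians, yielding a sum of three contributions: a trace term $\trace(\vaidya_y\vaidya_x^{-1})-\dims$, a mean-shift term $\frac{\sqrt{\obs\dims}}{r^2}(x-y)\tp\vaidya_y(x-y)$, and a log-determinant term $\log\det\vaidya_x-\log\det\vaidya_y$. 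Under the hypothesis $\vecnorm{x-y}{\vaidya_x}\le \epsilon r/(2(\obs\dims)^{1/4})$, the Hessian-stability lemma gives $(1-c\epsilon)\vaidya_x\preceq\vaidya_y\preceq(1+c\epsilon)\vaidya_x$, so the trace term and the log-det term are each $\order{\epsilon^2\dims}$ times a harmless factor; the mean-shift term is $\frac{\sqrt{\obs\dims}}{r^2}\vecnorm{x-y}{\vaidya_y}^2\le \order{\epsilon^2}$ after using the Hessian bound to transfer from $\vaidya_x$ to $\vaidya_y$. Tuning universal constants then gives KL~$\le \order{\epsilon^2}$, hence TV~$\le\epsilon$.

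\textbf{Bound~\eqref{eq:delta_q_p} (acceptance is near 1).} The non-lazy kernel differs from the proposal through (a) rejection when $z\notin\Pspace$ and (b) the Metropolis-Hastings correction $\alpha(x,z)=\min\{1,\density^\tagvaidya_z(x)/\density^\tagvaidya_x(z)\}$. Concretely $\tvnorm{\nolazytrans_{\fulltagvaidya(\rparam)}(\diracdelta_x)-\proposal^\tagvaidya_x}\le \Prob(z\notin\Pspace) + \mathbb{E}[1-\alpha(x,z)]$ with $z\sim\proposal^\tagvaidya_x$. For (a), since $z-x$ is Gaussian with covariance $\frac{r^2}{\sqrt{\obs\dims}}\vaidya_x^{-1}$, the scalar $\frac{a_i\tp(z-x)}{\slack_{x,i}}$ is centered Gaussian with variance $\frac{r^2}{\sqrt{\obs\dims}}\,\frac{a_i\tp\vaidya_x^{-1}a_i}{\slack_{x,i}^2} = \frac{r^2}{\sqrt{\obs\dims}}\localslack_{\vaidya_x,i}\le\frac{r^2}{\dims}$ by Lemma~\ref{lemma:first_bounds}\ref{item:theta_bound}; a Gaussian tail bound plus a union bound over the $\obs$ constraints then shows $\Prob(z\notin\Pspace)\le\epsilon$ provided $r$ is a small enough universal constant. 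For (b), conditioning on the good event $\vecnorm{z-x}{\vaidya_x}\le C r (\dims/\obs)^{1/4}$ (which holds with probability $\ge 1-\epsilon$ by a $\chi^2$ tail bound on a $\dims$-dimensional Gaussian of covariance $\frac{r^2}{\sqrt{\obs\dims}}\Ind_\dims$ in the whitened coordinates), I would plug $z$ into the Hessian-stability and log-det-stability lemmas to expand
\[
\log\frac{\density^\tagvaidya_z(x)}{\density^\tagvaidya_x(z)} = \tfrac{1}{2}\log\frac{\det\vaidya_z}{\det\vaidya_x} - \tfrac{\sqrt{\obs\dims}}{2r^2}\bigl[(x-z)\tp\vaidya_z(x-z) - (z-x)\tp\vaidya_x(z-x)\bigr],
\]
and show each piece is $\order{\epsilon}$ in absolute value. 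Then $1-\alpha(x,z)\le |\log(\density^\tagvaidya_z(x)/\density^\tagvaidya_x(z))|\le\order{\epsilon}$ on the good event, and the two bad events together contribute $\le 2\epsilon$, yielding the $5\epsilon$ bound after fixing the universal constants.

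\textbf{Main obstacle.} The real work sits in the two stability lemmas: controlling $\levvaidya_y - \levvaidya_x$ and $\log\det\vaidya_y-\log\det\vaidya_x$ in terms of $\vecnorm{x-y}{\vaidya_x}$. Unlike the Dikin walk, the weights $\levvaidya_{x,i}+\vaidyabeta$ are state-dependent, so one must differentiate the leverage scores and use the identity for $\nabla\log\det\hesslogbarr$ together with the bounds $\levvaidya_{x,i}\in[0,1]$ and $\sum_i\levvaidya_{x,i}=\dims$. Choosing the proposal scale $\frac{r^2}{\sqrt{\obs\dims}}$ (rather than $r^2/\obs$ as in Dikin) is exactly what makes the mean-shift and $\chi^2$-tail terms balance the $\sqrt{\obs/\dims}$ upper bound on $\localslack_{\vaidya_x,i}$, and it is this balancing that drives the $\order{\obs^{1/2}\dims^{3/2}}$ mixing time rather than $\order{\obs\dims}$.
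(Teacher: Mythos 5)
Your high-level architecture matches the paper's: Pinsker plus the closed-form Gaussian KL for~\eqref{eq:delta_p}, and a decomposition of $\nolazytrans_{\fulltagvaidya(\rparam)}(\diracdelta_x)-\proposal^\tagvaidya_x$ into a rejection term plus a Metropolis--Hastings term for~\eqref{eq:delta_q_p}, both driven by stability properties of $x\mapsto\vaidya_x$. However, there are three concrete gaps where the proposal as written would not close.

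First, for~\eqref{eq:delta_p} your ``Hessian-stability'' claim $(1-c\epsilon)\vaidya_x\preceq\vaidya_y\preceq(1+c\epsilon)\vaidya_x$ is too weak. Plugging this into the Gaussian KL formula gives $\sum_i(\lambda_i-1-\log\lambda_i) = O(\epsilon^2\dims)$, which does not give $\mathrm{TV}\le\epsilon$. What is actually needed---and what the paper's Lemma~\ref{lemma:close_hessian_eigenvalues} delivers---is the \emph{dimension-improved} sandwich $\parenth{1-8t/\sqrt{\dims}}\Ind\preceq\vaidya_x^{-1/2}\vaidya_y\vaidya_x^{-1/2}\preceq\parenth{1+8t/\sqrt{\dims}}\Ind$ for $\vecnorm{x-y}{x}\le t/(\obs\dims)^{1/4}$. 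The $1/\sqrt{\dims}$ improvement comes from Lemma~\ref{lemma:closeness_of_slackness} (each slack ratio moves by at most $(\obs/\dims)^{1/4}\vecnorm{x-y}{x}$) and is exactly what makes the $\dims$ terms sum to $O(\epsilon^2)$ rather than $O(\epsilon^2\dims)$. Your remark ``$O(\epsilon^2\dims)$ times a harmless factor'' gestures at this but does not supply it.

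Second, for the rejection probability $\proposal^\tagvaidya_x(\Pspace^c)$ you propose a Gaussian tail bound per constraint plus a union bound over all $\obs$ constraints. That forces $r\lesssim\sqrt{\dims/\log(\obs/\epsilon)}$, which is not a universal constant and would destroy the advertised mixing bound when $\obs\gg e^\dims$. The paper avoids a union bound entirely: by Cauchy--Schwarz and Lemma~\ref{lemma:first_bounds}\ref{item:theta_bound}, $\max_{i}\frac{(a_i\tp(z-x))^2}{\slack_{x,i}^2}\le\frac{r^2}{\dims}\vecnorm{\rvg}{2}^2$ holds pointwise for the whitened Gaussian $\rvg$, so a single $\chi^2_\dims$ tail bound suffices, giving a radius condition depending only on $\epsilon$.

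Third, for the Metropolis term your plan ``plug $z$ into the Hessian-stability and log-det-stability lemmas'' cannot work deterministically on the good event $\vecnorm{z-x}{x}\le Cr(\dims/\obs)^{1/4}$. That distance is a factor of $\sqrt{\dims}$ larger than the radius $\epsilon r/(2(\obs\dims)^{1/4})$ at which Lemma~\ref{lemma:close_hessian_eigenvalues} is applicable, so the induced $t$ would be $\sim r\sqrt{\dims}$, violating $t\le1/12$ and producing a log-det bound of size $O(r\dims)$, not $O(\epsilon)$. The paper instead uses a probabilistic Taylor-expansion argument (Lemma~\ref{lemma:change_in_log_det_and_local_norm}, resting on Lemmas~\ref{lemma:gradient_and_hessian_and_bounds}--\ref{lemma:gaussian_moment_bounds}): the first-order term $(z-x)\tp\gradlogdetV_x$ is a mean-zero Gaussian with variance $O(r^2)$ (not $O(\dims)$), and the second-order terms are Gaussian polynomials that concentrate. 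The mean-zero cancellation of the gradient term, which requires the randomness of $z$, is essential and cannot be recovered from any deterministic eigenvalue sandwich; and the bound ends up one-sided ($\ge-\epsilon$ with high probability), which is what the Markov-inequality step needs, rather than a two-sided absolute-value bound.
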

\noindent See
Section~\ref{ssub:proof_of_lemma_lemma:vaidya_walk_close_kernel} for
the proof of this lemma. \\

\vspace*{.05in}

\noindent With these lemmas in hand, we are now equipped to prove
Theorem~\ref{thm:mixing_time_bound}.
To simplify notation, for the rest of this section,
we adopt the shorthands
$\transition_x = \nolazytrans_{\fulltagvaidya(\rparam)}(\diracdelta_x)$, $\proposal_x =
\proposal_x^\tagvaidya$ and \mbox{$\vecnorm{\cdot}{\vaidya_x} = \vecnorm{\cdot}{x}$.}


\subsection{Proof of Theorem~\ref{thm:mixing_time_bound}}
\label{sub:proof_of_theorem_thm:mixing_time_bound}

In order to invoke Lov{\'a}sz's Lemma for
the random walk $\VW{\vaidyaradiusconst}$, we need to verify the
condition~\eqref{EqnLovaszHyp} for suitable choices of $\lovone$ and
$\lovtwo$.  Doing so involves two main steps:
\begin{description}
  \item[(A):] First, we relate the cross-ratio $d_\Pspace(x, y)$ to
    the local norm \eqref{eq:def_local_norm} at $x$.
  \item[(B):] Second, we use
    Lemma~\ref{lemma:vaidya_walk_close_kernel} to show that if $x, y
    \in \intP$ are close in local-norm, then the transition distributions
    $\transition_x$ and $\transition_y$ are close in TV-distance.
\end{description}

\paragraph{Step~(A):}
\label{par:step_a}
We claim that for all $x, y \in \intP$, the cross-ratio can be lower
bounded as
\begin{align}
  d_\Pspace(x, y) \geq \frac{1}{\sqrt{2\dims}} \vecnorm{x-y}{x}.
  \label{eq:hilbert_to_local}
\end{align}
Note that we have
\begin{align*}
d_\Pspace(x, y) = \frac{ \vecnorm{e(x)-e(y)}{2} \vecnorm{x-y}{2} }{
  \vecnorm{e(x)-x}{2} \vecnorm{e(y)-y}{2} } & \stackrel{(i)}{\geq}
\max \braces{ \frac{\vecnorm{x-y}{2} }{\vecnorm{e(x)-x}{2}},
  \frac{\vecnorm{x-y}{2}
  }{\vecnorm{e(y)-y}{2}}}\\
& \stackrel{(ii)}{\geq} \max \braces{ \frac{\vecnorm{x-y}{2}
  }{\vecnorm{e(x)-x}{2}}, \frac{\vecnorm{x-y}{2}
  }{\vecnorm{e(y)-x}{2}}},
\end{align*}
where step (i) follows from the inequality $\vecnorm{e(x)-e(y)}{2}
\geq \max\braces{\vecnorm{e(y)-y}{2}, \vecnorm{e(x)-x}{2}}$; and step
(ii) follows from the inequality $ \vecnorm{e(x)-x}{2} \leq
     {\vecnorm{e(y)-x}{2}}$.  Furthermore, from
     Figure~\ref{fig:cross_ratio}, we observe that
\begin{align}
  \max \braces{ \frac{\vecnorm{x-y}{2} }{\vecnorm{e(x)-x}{2}},
    \frac{\vecnorm{x-y}{2} }{\vecnorm{e(y)-x}{2}}} = \max_{i \in
    [\obs]} \left\vert\frac{a_i^\top (x - y)}{\slack_{x, i}}
  \right\vert.
  \label{eq:relate_cross_ratio}
\end{align}
This argument of equation~\eqref{eq:repeated_square} has also been used~\cite
[lemma 9]{sachdeva2016mixing}.
Note that maximum of a set of non-negative numbers is greater than the
mean of the numbers.  Combining this fact with properties
\ref{item:sigma_bound} and \ref{item:sigma_sum} from
Lemma~\ref{lemma:first_bounds}, we find that
\begin{align*}
d_\Pspace(x, y) \geq \sqrt{ \frac{1}{\sum_{i=1}^\obs
    \parenth{\levvaidya_{x, i}+ \vaidyabeta}} \sum_{i=1}^\obs
  (\levvaidya_{x, i}+ \vaidyabeta) \frac{(a_i^\top(x-y))^2}
  {\slack_{x, i}^2} } = \frac{\vecnorm{x-y}{x}}{\sqrt{2\dims}},
\end{align*}
thereby proving the claim~\eqref{eq:hilbert_to_local}.


\paragraph{Step~(B):}
\label{par:step_b}
By the triangle inequality, we have
\begin{align*}
\tvnorm{\transition_x - \transition_y} \leq
\tvnorm{\transition_x-\proposal_x} + \tvnorm{\proposal_x-\proposal_y}
+ \tvnorm{\proposal_y - \transition_y}.
\end{align*}
Thus, for any $(r, \epsilon)$ such that $\epsilon \in [0, \vaidyaepsilonconst]$ and $r
\leq \vaidyaradiusbound(\epsilon)$, Lemma~\ref{lemma:vaidya_walk_close_kernel} implies
that
\begin{align*}
\tvnorm{\transition_x - \transition_y} & \leq 11 \epsilon, \quad
\mbox{$\forall x, y \in \intP$ such that $ \vecnorm{x-y}{x} \leq
  \displaystyle \frac{r\epsilon}{2 (\obs\dims)^{1/4}}$}.
\end{align*}
\\

\noindent Consequently, the walk \VW{r} satisfies the assumptions of
Lov{\'a}sz's Lemma with
\begin{align*}
\lovtwo \defn \frac{1}{\sqrt{2\dims}} \cdot \frac{r\epsilon} {2
  (\obs\dims)^{1/4}} \quad \mbox{and} \quad \lovone \defn 1 - 11
\epsilon.
\end{align*}
Since $\vaidyaradiusbound(\vaidyaepsilonconst) \geq \vaidyaradiusconst$, we can set $\epsilon = \vaidyaepsilonconst$ and $r = \vaidyaradiusconst$, whence
\begin{align*}
\lovtwo^2\lovone^2 = \frac{(1-11\epsilon)^2 \epsilon^2 r^2}{8 \dims
  \sqrt{\obs \dims}} = \frac{4^2}{15^2}\frac{1}{15^2} \frac{1}{10^{-8}}
\cdot\frac{1}{\dims \sqrt{\obs\dims}} \geq 10^{-12}
\, \frac{1}{\dims \sqrt{\obs\dims}}.
\end{align*}
Observing that $\lovtwo <1$ yields the claimed upper bound for the
mixing time of Vaidya Walk.


\subsection{Proof of Lemma~\ref{lemma:first_bounds}}
\label{ssub:proof_of_lemma_lemma:first_bounds}

In order to prove part~\ref{item:sigma_bound}, observe that for any $x
\in \intP$, the Hessian $\hesslogbarr_x \defn \sum_{i=1}^\obs a_i
a_i^\top/\slack_{x, i}^2$ is a sum of rank one positive semidefinite
(PSD) matrices.  Also, we can write $\hesslogbarr_x = A_x\tp A_x$
where
\begin{align*}
A_x \defn
\begin{bmatrix}
  {a_1\tp}/{\slack_{x, 1}} \\ \vdots \\ {a_\obs\tp}/{\slack_{x, \obs}}
\end{bmatrix}.
\end{align*}
Since $\rank(A_x) = \dims$, we conclude that the matrix
$\hesslogbarr_x$ is invertible and thus, both the matrices
$\hesslogbarr_x$ and $\parenth{\hesslogbarr_x}^{-1}$ are PSD.  Since
$\levvaidya_{x, i} = a_i^\top \parenth{\hesslogbarr_x}^{-1} a_i /
\slack_{x, i}^2$, we have $\levvaidya_{x, i} \geq 0$.  Further, the
fact that ${a_i a_i^\top}/{\slack_{x, i}^2} \preceq \hesslogbarr_x $
implies that $\levvaidya_{x, i} \leq 1$.

Turning to the proof of part~\ref{item:sigma_sum}, from the equality
$\trace(AB) = \trace(BA)$, we obtain
\begin{align*}
\sum_{i=1}^\obs \levvaidya_{x, i} = \trace \parenth{\sum_{i=1}^\obs
  \frac{a_i^\top \parenth{\hesslogbarr_x}^{-1} a_i}{\slack_{x, i}^2}}
= \trace \parenth{\parenth{\hesslogbarr_x}^{-1} \sum_{i=1}^\obs
  \frac{a_i a_i^\top}{\slack_{x, i}^2}} = \trace (\Ind_\dims) = \dims.
\end{align*}

Now we prove part~\ref{item:theta_bound}.  Using the fact that
$\levvaidya_{x, i}\geq 0$, and an argument similar to
part~\ref{item:sigma_bound} we find that that the matrices $\vaidya_x$
and $\vaidya_x^{-1}$ are PSD.  Since $\localslackvaidya_{\vaidya_x, i} =
{a_i^\top \vaidya_x^{-1} a_i}/{\slack_{x, i}^2}$, we have
$\localslackvaidya_{\vaidya_x, i}\geq 0$.  It is straightforward to see that
$\vaidyabeta\hesslogbarr_x \preceq \vaidya_x$ which implies that
$\localslackvaidya_{\vaidya_x, i} \leq \levvaidya_{x, i}/\beta$.  Further,
we also have $\parenth{\levvaidya_{x, i}+ \vaidyabeta}\frac{a_i
  a_i^\top}{\slack_{x, i}^2} \preceq \vaidya_x$ and whence $
\localslackvaidya_{\vaidya_x, i} \leq 1/\parenth{\levvaidya_{x, i}+
  \vaidyabeta} $.  Combining the two inequalities yields the claim.

The other parts of the Lemma follow from Lemma~13, 14 and 15 by~\cite{lee2014path}
and are thereby omitted here.


\subsection{Proof of Lemma~\ref{lemma:vaidya_walk_close_kernel}}
\label{ssub:proof_of_lemma_lemma:vaidya_walk_close_kernel}

We prove the lemma for the following function
\begin{align}
  \label{eq:r_2_epsilon}
  \vaidyaradiusbound(\epsilon) \defn \min \braces{\frac{1}{20\parenth{1+\sqrt{2}\log^{\frac{1}{2}}\parenth{\frac{4}{\epsilon}}}}, \frac{\epsilon}{\sqrt{18 \log(2/\epsilon)}}, \sqrt{\frac{\epsilon}{86\sqrt{3}\tailconst_2}}, \frac{\epsilon}{22\sqrt{5/3}\tailconst_3}, \sqrt{\frac{\epsilon}{50\sqrt{105}\tailconst_4}}},
\end{align}
where $\tailconst_\degree = \parenth{2e/\degree \cdot
     \log\parenth{4/\epsilon}}^{\degree/2}$ for \mbox{$\degree = 2,
     3$ and $4$}.
A numerical calculation shows that $f(\vaidyaepsilonconst) \geq \vaidyaradiusconst$.


\subsubsection{Proof of claim~\eqref{eq:delta_p}}
\label{sssub:proof_of_claim_eq:delta_p}

In order to bound the total variation distance
$\tvnorm{\proposal_x-\proposal_y}$, we apply Pinsker's inequality,
which provides an upper bound on the TV-distance in terms of the KL
divergence:
\begin{align*}
\tvnorm{\proposal_x-\proposal_y} & \leq \sqrt{2\kldiv{\proposal_x}{
    \proposal_y}}.
\end{align*}
For Gaussian distributions, the KL divergence has a closed form
expression.  In particular, for two normal-distributions
$\mathcal{G}_1 = \NORMAL\parenth{\mu_1, \vaidyalevmatrix_1}$ and
$\mathcal{G}_2 = \NORMAL\parenth{\mu_2, \vaidyalevmatrix_2}$, the
Kullback-Leibler divergence between the two is given by
    \begin{align*}
\kldiv{\mathcal{G}_1}{\mathcal{G}_2} = \frac{1}{2} \Big({
  \trace(\vaidyalevmatrix_1^{-1/2}\vaidyalevmatrix_2\vaidyalevmatrix_1^{-1/2})
  \!-\!\dims
  \!-\!\log \det (\vaidyalevmatrix_1^{-1/2}\vaidyalevmatrix_2\vaidyalevmatrix_1^{-1/2})
  \!+\!\parenth{\mu_1\!-\!\mu_2}^\top
  \vaidyalevmatrix_1^{-1}\parenth{\mu_1\!-\!\mu_2} }\Big).
    \end{align*}
Recall from
equation~\eqref{eq:proposal_density} that the proposal distribution for
Vaidya walk is Gaussian, i.e., $\proposal_x = \NORMAL\left(x, \frac{r}{\sqrt{\obs\usedim}}\vaidya_x^{-1}\right)$.
Substituting $\mathcal{G}_1 = \proposal_x$ and $\mathcal{G}_2 = \proposal_y$
into the above expression and applying Pinsker's
inequality, we find that
\begin{align}
\tvnorm{\proposal_x-\proposal_y}^2 \leq 2 \kldiv{\proposal_y}{
  \proposal_x} & =
  \trace(\vaidya_x^{-1/2} \vaidya_y \vaidya_x^{-1/2})
  \!-\!\dims\!-\!\log\det(\vaidya_x^{-1/2}
  \vaidya_y \vaidya_x^{-1/2}) + \frac{\sqrt{\obs
    \dims}}{r^2} \vecnorm{x\!-\!y}{x}^2 \notag \\
& = \braces{\sum_{i=1}^\dims \parenth{\lambda_i - 1 + \log\frac{1}{\lambda_i}}}
+ \frac{\sqrt{\obs \dims}}{r^2} \vecnorm{x-y}{x}^2,
\label{eq:final_KL_expression}
\end{align}
where $\lambda_1, \ldots, \lambda_\dims >0$ denote the eigenvalues of
the matrix $\vaidya_x^{-1/2} \vaidya_y \vaidya_x^{-1/2}$, and we have
used the facts that $\det(\vaidya_x^{-1/2} \vaidya_y \vaidya_x^{-1/2})
= \prod_{i=1}^\dims \lambda_i$ and $\trace({\vaidya_x^{-1/2} \vaidya_y
  \vaidya_x^{-1/2}}) = \sum_{i=1}^\dims \lambda_i$.  The following
lemma is useful in bounding expression~\eqref{eq:final_KL_expression}.
\begin{lemma}
 \label{lemma:close_hessian_eigenvalues}
 For any scalar $t \in [0, 1/12]$ and any pair $x, y \in
 \intP$ such that \mbox{$\vecnorm{x-y}{x} \leq t / (\obs\dims)^{1/4}$,}
 we have
\begin{align*}
\parenth{1 - \frac{8 t}{\sqrt{\dims}}}\Ind_\dims
\preceq \vaidya_x^{-1/2} \vaidya_y \vaidya_x^{-1/2} \preceq \parenth{1
  + \frac{8 t}{\sqrt{\dims}}} \Ind_\dims,
\end{align*}
where $\preceq$ denotes ordering in the PSD cone, and $\Ind_\dims$ is
the $\dims$-dimensional identity matrix.
\end{lemma}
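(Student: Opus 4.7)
The plan is to prove the two-sided PSD bound in the equivalent form $(1-8u)\vaidya_x \preceq \vaidya_y \preceq (1+8u)\vaidya_x$, where $u \defn t/\sqrt{\dims}$, by tracking how the slacknesses, the log-barrier Hessian, and the leverage scores each change between $x$ and $y$.

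First I would control the relative change in the slacknesses. Writing $s_{y,i} = s_{x,i} - a_i^\top(y-x)$, Cauchy--Schwarz in the Vaidya inner product combined with the definition~\eqref{eq:theta_Definition} of $\localslack_{\vaidya_x,i}$ gives
\begin{align*}
\abss{\tfrac{a_i^\top(x-y)}{s_{x,i}}}^2 \;\le\; \localslack_{\vaidya_x,i}\,\vecnorm{x-y}{\vaidya_x}^2 \;\le\; \sqrt{\obs/\dims}\cdot \tfrac{t^2}{\sqrt{\obs\dims}} \;=\; \tfrac{t^2}{\dims},
\end{align*}
where I used Lemma~\ref{lemma:first_bounds}\ref{item:theta_bound} and the hypothesis on $\vecnorm{x-y}{\vaidya_x}$. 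Hence $s_{y,i}/s_{x,i} \in [1-u,\,1+u]$ for every $i$. Second, since $\hesslogbarr_y = \sum_i (s_{x,i}/s_{y,i})^2 \cdot a_ia_i^\top/s_{x,i}^2$ and each ratio lies in $[(1+u)^{-2},(1-u)^{-2}]$, this slackness bound lifts to a PSD sandwich $(1+u)^{-2}\hesslogbarr_x \preceq \hesslogbarr_y \preceq (1-u)^{-2}\hesslogbarr_x$, and inverting gives $(1-u)^2\hesslogbarr_x^{-1} \preceq \hesslogbarr_y^{-1} \preceq (1+u)^2\hesslogbarr_x^{-1}$.

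Third I would deduce the corresponding bounds on the leverage scores. From $\levvaidya_{y,i} = a_i^\top\hesslogbarr_y^{-1}a_i/s_{y,i}^2$ and the Hessian sandwich plus the slackness bound,
\begin{align*}
\tfrac{(1-u)^2}{(1+u)^2}\,\levvaidya_{x,i} \;\le\; \levvaidya_{y,i} \;\le\; \tfrac{(1+u)^2}{(1-u)^2}\,\levvaidya_{x,i}.
\end{align*}
Now I combine. For the upper bound, adding $\vaidyabeta$ to both sides (and using $\vaidyabeta \le \vaidyabeta\cdot(1+u)^2/(1-u)^2$) yields $\levvaidya_{y,i}+\vaidyabeta \le \frac{(1+u)^2}{(1-u)^2}(\levvaidya_{x,i}+\vaidyabeta)$, and combining with $1/s_{y,i}^2 \le (1-u)^{-2}/s_{x,i}^2$ gives per-constraint coefficients bounded by $\frac{(1+u)^2}{(1-u)^4}(\levvaidya_{x,i}+\vaidyabeta)/s_{x,i}^2$. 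Summing the rank-one matrices yields $\vaidya_y \preceq \frac{(1+u)^2}{(1-u)^4}\vaidya_x$, and an analogous argument produces $\vaidya_y \succeq \frac{(1-u)^2}{(1+u)^4}\vaidya_x$.

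Finally I would close the argument with the elementary polynomial inequalities $\frac{(1+u)^2}{(1-u)^4}\le 1+8u$ and $\frac{(1-u)^2}{(1+u)^4}\ge 1-8u$ on $u\in[0,1/12]$; both reduce after clearing denominators to polynomials in $u$ with a manifestly non-negative factor (e.g.\ $2u+27u^2+44u^3+31u^4+8u^5\ge 0$ for the lower bound). The hypothesis $t\le 1/12$ together with $\dims\ge 1$ ensures $u\le 1/12$, so these inequalities apply and yield the claim. The only technically subtle part is Step~3 above (the leverage score comparison): the ``multiplicative noise'' from $\hesslogbarr_y$ and from $s_{y,i}^2$ compounds quadratically, which is why the final prefactor is $(1\pm 8u)$ rather than $(1\pm 2u)$; keeping the direction of inequalities consistent when mixing the leverage scores with the constant $\vaidyabeta$ is the main bookkeeping hazard, but it is resolved by the observation that multiplying $\vaidyabeta$ by $(1+u)^2/(1-u)^2\ge 1$ only loosens the upper bound (and symmetrically for the lower bound).
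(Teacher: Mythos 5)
Your proof is correct and follows essentially the same route as the paper's: bound the slackness ratio via Cauchy--Schwarz in the Vaidya local norm plus the slack-sensitivity bound, lift to a PSD sandwich for $\hesslogbarr_y$, propagate to the leverage scores, assemble the sandwich for $\vaidya_y$, and finish with the elementary polynomial inequality for $\omega \in [0,1/12]$. The paper extracts the slackness bound from Lemma~\ref{lemma:closeness_of_slackness}, which you rederive inline.

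One small remark: you wrote the Hessian sandwich as $(1+u)^{-2}\hesslogbarr_x \preceq \hesslogbarr_y \preceq (1-u)^{-2}\hesslogbarr_x$, which is the correct consequence of $s_{y,i}/s_{x,i} \in [1-u,1+u]$. The paper states it as $(1-u)^2\hesslogbarr_x \preceq \hesslogbarr_y \preceq (1+u)^2\hesslogbarr_x$, whose upper bound is actually \emph{tighter} than what the slackness bound supports (since $(1+u)^2 < (1-u)^{-2}$ for $u>0$), and hence is not literally correct. However, the paper's next displayed equation for $\sigma_{y,i}/\sigma_{x,i}$ is the consequence of \emph{your} (correct) sandwich, so the slip is a transcription error and the downstream argument is sound in both versions.

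A second, minor, nitpick: your parenthetical remark that both inequalities ``reduce $\ldots$ to polynomials in $u$ with a manifestly non-negative factor'' is only true of the lower bound. The upper bound reduces to $u\,(2-27u+44u^2-31u^3+8u^4) \geq 0$, whose inner factor has mixed signs and requires a short numerical check on $[0,1/12]$ (it is $\approx 0.04$ at $u=1/12$). This doesn't affect correctness, but ``manifestly'' overstates it; the paper simply asserts these inequalities without elaboration.
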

\noindent See
Appendix~\ref{sec:proof_of_lemma_lemma:close_hessian_eigenvalues} for
the proof of this lemma. \\

For $\epsilon \in (0, \vaidyaepsilonconst]$ and $\rparam \in \brackets{0, 1/12} $, we have $t = \epsilon r
  /2 \leq 1/12$, whence the eigenvalues $\{\lambda_i, i \in [\dims]\}$
  can be sandwiched as
  \begin{align}
  \label{eq:lambda_bounds}
  \frac{1}{2} \leq 1 - \frac{4 \epsilon r}{\sqrt{\dims}}
  \leq \lambda_i \leq 1 + \frac{4 \epsilon r}{\sqrt{\dims}} \quad \mbox{for all $i \in \dims$}.
\end{align}
We are now ready to bound the TV distance between $\proposal_x$ and
$\proposal_y$.  Using the bound~\eqref{eq:final_KL_expression} and the
inequality $\log \omega \leq \omega - 1$, valid for $ \omega > 0$, we
obtain
\begin{align*}
\tvnorm{\proposal_x-\proposal_y}^2 \leq \sum_{i=1}^\dims
\parenth{\lambda_i - 2 + \frac{1}{\lambda_i}} + \frac{\sqrt{\obs
    \dims}}{r^2} \vecnorm{x-y}{x}^2.
\end{align*}
Using the assumption that $\vecnorm{x-y}{x}\leq {\epsilon
  r}/\parenth{2(\obs\dims)^{1/4}}$, and plugging in the
bounds~\eqref{eq:lambda_bounds} for the eigenvalues $\{\lambda_i, i
\in [\dims]\}$, we find that
\begin{align*}
\sum_{i=1}^\dims \parenth{\lambda_i - 2 + \frac{1}{\lambda_i}} +
\frac{\sqrt{\obs {}\dims}}{r^2} \vecnorm{x-y}{x}^2 \leq 32
  \epsilon^2 r^2 + \frac{\epsilon^2}{4}.
\end{align*}
In asserting this inequality, we have used the facts that according to equation~\eqref{eq:lambda_bounds}, for any $i \in [\dims]$,
\begin{align*}
  \lambda_i - 2 + \frac{1}{\lambda_i}
  = \frac{\parenth{\lambda_i-1}^2}{\lambda_i}
  \leq 2 \cdot \parenth{\frac{4 \epsilon \rparam}{\sqrt{\dims}}}^2.
\end{align*}
Note that for any $\rparam \in \brackets{0, 1/12}$ we have that $32 r^2
\leq 1/2$.  Putting the pieces together yields
$\tvnorm{\proposal_x-\proposal_y} \leq \epsilon$, as claimed.


\subsubsection{Proof of claim~\eqref{eq:delta_q_p}}
\label{ssub:proof_of_claim_eq:delta_q_p}
Note that
\begin{align}
  \transition_x(\braces{x}) = \proposal_x(\Pspace^c) +
  \int_\Pspace \parenth{1 -  \min\braces{1,
    \frac{\density_z(x)}{\density_x(z)}}} \density_x(z) dz,
  \label{eq:remain_at_x}
\end{align}
where $\Pspace^c$ denotes the complement of $\Pspace$.  Consequently,
we find that
\begin{align}
 \tvnorm{\proposal_x - \transition_x} &= \frac{1}{2}
 \parenth{\transition_x(\braces{x}) + \int_{\realdim} \density_x(z) dz
   - \int_{\Pspace} \min\braces{1,
     \frac{\density_z(x)}{\density_x(z)}} \density_x(z) dz} \notag\\
 & = \frac{1}{2} \parenth{2 - 2
   \int_{\realdim} \min\braces{1, \frac{\density_z(x)}{\density_x(z)}}
   \density_x(z) dz + 2 \int_{\Pspace^c} \min\braces{1,
     \frac{\density_z(x)}{\density_x(z)}} \density_x(z) dz
 }\notag \\
 &\leq \underbrace{\proposal_x(\Pspace^c)}_{= : \;
   \MyTerm_1} + \underbrace{1-\Exs_{z \sim \proposal_x}
   \brackets{\min\braces{1, \frac{\density_z(x)}{\density_x(z)}}}}_{=
   : \; \MyTerm_2},
  \label{eq:bound_on_jumping_out}
\end{align}
Consequently, it suffices to show that both $\MyTerm_1$ and $\MyTerm_2$ are small,
where the probability is taken over the randomness in the proposal
$z$.  In particular, we show that $\MyTerm_1 \leq \epsilon$ and $\MyTerm_2 \leq
4\epsilon$.  \\

\noindent \textbf{Bounding the term $\MyTerm_1$:} Since $z$ is
multivariate Gaussian with mean $x$ and covariance
$\frac{r^2}{\sqrt{\obs \dims}} \vaidya_x^{-1}$, we can write
\begin{align}
 z \stackrel{d}{=} x + \frac{r}{(\obs\dims)^{1/4}}
 \vaidya_x^{-{1}/{2}} \rvg,
    \label{eq:z_x_relation}
\end{align}
where $\rvg \sim \NORMAL\parenth{0, \Ind_\dims}$ and $\stackrel{d}{=}$
denotes equality in distribution.  Using
equation~\eqref{eq:z_x_relation} and
definition~\eqref{eq:theta_Definition} of $\localslackvaidya_{\vaidya_x,
  i}$, we obtain the bound
\begin{align}
 \frac{\parenth{a_i^\top \parenth{z-x}}^2 }{\slack_{x, i}^2} & =
 \frac{r^2}{\parenth{\obs \dims}^{\frac{1}{2}}}
 \brackets{\frac{a_i^\top \vaidya_x^{-{1}/{2}} \rvg }{\slack_{x,
       i}}}^2 \stackrel{(i)}{\leq} \frac{r^2}{\parenth{\obs
     \dims}^{\frac{1}{2}}} \localslackvaidya_{\vaidya_x, i}
 \vecnorm{\rvg}{2}^2 \stackrel{(ii)}{\leq} \frac{r^2}{\dims}
 \vecnorm{\rvg}{2}^2,
 \label{eq:bound_on_deviation}
\end{align}
where step (i) follows from Cauchy-Schwarz inequality, and step (ii)
from the bound on $\localslackvaidya_{\vaidya_x, i}$ from
Lemma~\ref{lemma:first_bounds}\ref{item:theta_bound}.  Define the
events
\begin{align*}
  \mathcal{E} \defn \braces{\frac{r^2}{\dims} \vecnorm{\rvg}{2}^2 < 1}
  \quad \text{and} \quad
  \mathcal{E}'\defn \braces{z \in \intP}.
\end{align*}
Inequality~\eqref{eq:bound_on_deviation} implies that $\mathcal{E}
\subseteq \mathcal{E}'$ and hence $\Prob\brackets{\mathcal{E}'} \geq
\Prob\brackets{\mathcal{E}}$.  Using a standard Gaussian tail bound
and noting that $r \leq \frac{1}{1 + \sqrt{2/\dims \log
    (1/\epsilon)}}$, we obtain $ \Prob\brackets{\mathcal{E}} \geq
1-\epsilon$ and whence $ \Prob\brackets{\mathcal{E}'} \geq
1-\epsilon$.  Thus, we have shown that $\Prob\brackets{z
  \notin \Pspace} \leq \epsilon$ which implies that $\MyTerm_1 \leq
\epsilon$.  \\

\noindent \textbf{Bounding the term $\MyTerm_2$:}
By Markov's inequality, we have
\begin{align}
  \Exs_{z \sim \proposal_x} \brackets{\min\braces{1,
            \frac{\density_z(x)}{\density_x(z)}}} \geq \alpha
        \Prob\brackets{\density_z(x) \geq \alpha \density_x(z)} \quad
        \mbox{for all $\alpha \in (0,
          1]$}. \label{eq:markov_inequality}
\end{align}
By definition~\eqref{eq:proposal_density} of $\density_x$, we obtain
\begin{align*}
\frac{\density_z(x)}{\density_x(z)} = \exp
\parenth{-\frac{\sqrt{\obs\dims}}{2r^2} \parenth{\vecnorm{z-x}{z}^2-
    \vecnorm{z-x}{x}^2 }+ \frac{1}{2} \parenth{\log \det \vaidya_z -
    \log \det \vaidya_x}}.
\end{align*}
The following lemma provides us with useful bounds on the two terms in
this expression, valid for any $x \in \intP$.
 \begin{lemma}
 \label{lemma:change_in_log_det_and_local_norm}
 For any $\epsilon \in (0, \vaidyaepsilonconst]$ and $r \in (0, \vaidyaradiusbound(\epsilon)]$,
     we have
 \begin{subequations}
   \begin{align}
     \label{eq:vaidya_whp_log_det_filter}
     \Prob_{z\sim \proposal_x}\brackets{\frac{1}{2}\log\det \vaidya_z
       - \frac{1}{2}\log\det \vaidya_x \geq -\epsilon } &\geq 1 -
     \epsilon,
     \quad  \text{and}
     \\
\label{eq:vaidya_whp_local_norm}
\Prob_{z\sim \proposal_x}\brackets{\vecnorm{z - x}{z}^2 - \vecnorm{z -
    x}{x}^2 \leq 2 \epsilon \frac{r^2}{\sqrt{\numobs\usedim}} } &\geq
1 - \epsilon.
     \end{align}
   \end{subequations}
 \end{lemma}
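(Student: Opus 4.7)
The plan is to prove both tail bounds by parameterizing the proposal as $z = x + \frac{r}{(\obs\dims)^{1/4}}\vaidya_x^{-1/2}\rvg$ with $\rvg\sim\NORMAL(0,\Ind_\dims)$, Taylor-expanding each quantity of interest in $v := z - x$ to low order, and applying Gaussian-polynomial-chaos tail bounds to each term separately. The scaling of the proposal is chosen exactly so that $\vecnorm{v}{\vaidya_x}$ is of order $r (\dims/\obs)^{1/4}$, and the various thresholds in $\vaidyaradiusbound(\epsilon)$ correspond to making each Taylor term small with probability at least $1-\epsilon/k$ for some $k$, then union-bounding.

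For part~(a), write
\begin{align*}
\tfrac12 \log\det \vaidya_z - \tfrac12 \log\det \vaidya_x = \tfrac12\langle \nabla \log\det \vaidya_x, v\rangle + \tfrac14 v^\top \nabla^2 \log\det \vaidya_x\, v + R_3(v).
\end{align*}
Using matrix calculus on $\vaidya_x = \sum_i(\levvaidya_{x,i}+\vaidyabeta)a_ia_i^\top/\slack_{x,i}^2$, the gradient and Hessian of $\log\det \vaidya_x$ can be expressed in terms of the leverage scores $\levvaidya_{x,i}$ and slack ratios $a_i/\slack_{x,i}$, and bounded in the $\vaidya_x^{-1}$ operator sense using the identities $\levvaidya_{x,i}\in[0,1]$, $\sum_i \levvaidya_{x,i} = \dims$, and $\localslackvaidya_{\vaidya_x,i}\leq\sqrt{\obs/\dims}$ from Lemma~\ref{lemma:first_bounds}. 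The linear term is Gaussian with mean zero and standard deviation scaled by $r/(\obs\dims)^{1/4}$ times the local norm of the gradient; a standard one-sided Gaussian tail gives deviation at most $\epsilon/3$ whenever $r \leq \epsilon/\sqrt{18\log(2/\epsilon)}$, matching one term in $\vaidyaradiusbound$. The quadratic form is a Gaussian chaos of degree two, controlled by a Hanson--Wright bound with constant $\tailconst_2 = (e\log(4/\epsilon))^{1/2}$, which yields the threshold $r^2 \leq \epsilon/(86\sqrt{3}\tailconst_2)$. The cubic remainder, obtained by integrating the third derivative along the segment $[x,z]$, is a degree-three chaos controlled by $\tailconst_3$ and gives the cap $r \leq \epsilon/(22\sqrt{5/3}\tailconst_3)$.

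For part~(b), use the exact identity
\begin{align*}
\vecnorm{z-x}{z}^2 - \vecnorm{z-x}{x}^2 = v^\top(\vaidya_z - \vaidya_x)\,v.
\end{align*}
Taylor-expand $\vaidya_z$ around $x$: since the zero-th order cancels, the right-hand side is a polynomial in $v$ of degree at least three, with leading contribution $\sum_k v_k \, v^\top(\partial_k \vaidya_x) v$. After substituting $v = \frac{r}{(\obs\dims)^{1/4}}\vaidya_x^{-1/2}\rvg$ this becomes a Gaussian chaos of degree three in $\rvg$ with coefficient of size $r^3/(\obs\dims)^{3/4}$ times operator-norm quantities that again reduce via Lemma~\ref{lemma:first_bounds} to $O(1)$ in the normalization $r^2/\sqrt{\obs\dims}$ that appears on the right-hand side of~\eqref{eq:vaidya_whp_local_norm}. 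The fourth-order Taylor contribution is a degree-four chaos controlled by $\tailconst_4 = (e/2\cdot\log(4/\epsilon))^2$, giving the threshold $r^2 \leq \epsilon/(50\sqrt{105}\tailconst_4)$. Union-bounding over the two or three contributions yields the claimed $1-\epsilon$ probability.

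The hardest step will be the derivative bounds for the matrix-valued map $x \mapsto \vaidya_x$, since the leverage scores $\levvaidya_{x,i} = a_i^\top \hesslogbarr_x^{-1} a_i/\slack_{x,i}^2$ depend nonlinearly on $x$ through $\hesslogbarr_x^{-1}$. Each derivative $\partial_k$ on $\hesslogbarr_x^{-1}$ produces an extra factor $\hesslogbarr_x^{-1}(\partial_k \hesslogbarr_x)\hesslogbarr_x^{-1}$, and one must show that these derivatives, when measured in the $\vaidya_x$-geometry, remain bounded by a constant thanks to the self-concordance-like properties of the volumetric part of the Vaidya barrier. Once these derivative estimates are assembled (with sharp constants so that the concrete numerical choices of $\vaidyaradiusbound$ go through), each concentration step is a textbook application of Gaussian hypercontractivity giving $\|X\|_{L_p} \leq (p-1)^{k/2}\|X\|_{L_2}$ for a degree-$k$ chaos $X$, which is precisely what produces the $\tailconst_k$ factors in $\vaidyaradiusbound(\epsilon)$.
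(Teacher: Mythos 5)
Your high-level strategy—parameterize $z = x + \frac{r}{(\obs\dims)^{1/4}}\vaidya_x^{-1/2}\rvg$, Taylor expand, and invoke Gaussian-polynomial tail bounds with $\tailconst_\degree$ constants—is the right family of ideas, but the specific decomposition you describe differs from the paper's in a way that breaks the matching with $\vaidyaradiusbound(\epsilon)$ and leaves a non-trivial piece unaddressed.

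The paper does \emph{not} use a third-order Taylor expansion around $x$ with the Hessian at $x$ plus a cubic remainder $R_3(v)$. Instead, for both parts it uses a \emph{second-order Taylor expansion with Lagrange remainder}, so the Hessian is evaluated at some unknown intermediate point $y \in \overline{xz}$, not at $x$. This is the key structural choice: it avoids computing third derivatives entirely, but creates the separate problem of transferring bounds from $y$ back to $x$. That transfer is done via Lemma~\ref{lemma:whp_slackness}, which establishes that $\slack_{x,i}/\slack_{v,i} \in (0.95,1.05)$ for all $v \in \overline{xz}$ with high probability; combined with equation~\eqref{eq:sigma_closeness} this controls $\levvaidya_{y,i}/\levvaidya_{x,i}$ too. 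Your proposal never mentions this step, yet it is essential: without it, the coefficients appearing in the remainder depend on the random endpoint of the segment and cannot be treated as fixed when invoking the chaos tail bound. This also explains the presence of the first term $\frac{1}{20(1+\sqrt2\log^{1/2}(4/\epsilon))}$ in $\vaidyaradiusbound(\epsilon)$, which you do not account for.

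A second consequence of the different decomposition is that you have misallocated the $\tailconst_\degree$ thresholds between the two parts. You assign $\tailconst_3$ to the remainder in part~(a) and $\tailconst_4$ to part~(b). In the paper, part~(a)'s Hessian remainder (after transfer to $x$) is a degree-\emph{two} Gaussian polynomial, controlled by $\tailconst_2$ and the threshold $r \leq \sqrt{\epsilon/(86\sqrt3\,\tailconst_2)}$. Both $\tailconst_3$ and $\tailconst_4$ belong to part~(b): there, writing $\vecnorm{z-x}{z}^2 - \vecnorm{z-x}{x}^2 = \sum_i \projD_{x,i}^2 \slack_{x,i}^2(\vaidyaphi_{z,i}-\vaidyaphi_{x,i})$ and Taylor expanding each $\vaidyaphi_{z,i}$ to second order (with remainder at an intermediate $y$) yields degree-3 terms from the gradient of $\vaidyaphi$ multiplied by $\projD_{x,i}^2$, and degree-4 terms from the Hessian of $\vaidyaphi$ at $y$ multiplied by $\projD_{x,i}^2$. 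So the degree-4 term there does not arise from ``fourth-order Taylor'' of the whole quantity but from the quadratic prefactor $\projD_{x,i}^2$ times a degree-2 Hessian piece. Because your decomposition is different, the concrete numerical constants in $\vaidyaradiusbound(\epsilon)$ (which are calibrated to the paper's Lagrange-remainder path) would not close your argument as stated; you would need to re-derive your own constants including explicit bounds on third derivatives of $x \mapsto \log\det\vaidya_x$ for part~(a), which the paper deliberately avoids.
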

 \noindent See
 Appendix~\ref{sec:proof_of_lemma_lemma:change_in_log_det_and_local_norm}
 for the proof of this claim.\\

 Using Lemma~\ref{lemma:change_in_log_det_and_local_norm}, we now
 complete the proof.  For $r \leq \vaidyaradiusbound(\epsilon)$, we obtain
 \begin{align*}
 \frac{\density_z(x)}{\density_x(z)} & \geq \exp\parenth{-2\epsilon}
 \geq 1- 2 \epsilon
 \end{align*}
 with probability at least $1-2\epsilon$.  Substituting $\alpha = 1- 2
 \epsilon$ in inequality~\eqref{eq:markov_inequality} yields that
 $\MyTerm_2 \leq 4 \epsilon$, as claimed.

{\comment{
 \subsection{Mixing time of the John walk}
\label{sub:proof_john_walk}

Here we provide a brief outline the key steps in the analysis of John
walk, leaving the details to the supplementary material.  Proof of
Theorem~\ref{thm:john_walk} is also decomposed in two steps analogous
to the step (A) and (B) in
section~\ref{sub:proof_of_theorem_thm:mixing_time_bound}.  From
Lemma~\ref{lemma:first_bounds}, we can see that while the sum of
weights $\johnweights_x$ is within a factor of $3/2$ to the sum of
leverage scores $\levvaidya_x$, the John slack sensitivity
$\localslackvaidya_{\john_x, i}$~\eqref{eq:johntheta} is bounded by a
constant as compared to the $\sqrt{\obs/\dims}$ bound for Vaidya slack
sensitivity $\localslackvaidya_{\vaidya_x, i}$~\eqref{eq:theta_Definition}.
Along the lines of step (A) in
Section~\ref{sub:proof_of_theorem_thm:mixing_time_bound}, the former
property directly establishes that $d_\Pspace(x, y) \geq
\vecnorm{x-y}{\john_x}/{\sqrt{3\dims/2}}$.  To complete the proof, we
establish an analog of Lemma~\ref{lemma:vaidya_walk_close_kernel} for
the John walk which can then be used to derive a bound for its mixing
time.  We believe that our analysis while deriving the
Lemma~\ref{lemma:vaidya_walk_close_kernel} analog for the John walk is
loose and possibly a tighter analysis would allow us to prove
Conjecture~\ref{conj:john_walk}.\footnote{\mjwcomment{Conjecture
    details To be written} -- \rrdcomment{Done.}}
}
}
\section{Discussion}
\label{sec:discussion}

In this paper, we focused on improving mixing rate of MCMC sampling algorithms for polytopes by building on the advancements in the field of interior
point methods.  We proposed and analyzed two different barrier based
MCMC sampling algorithms for polytopes that outperforms the existing
sampling algorithms like the ball walk, the hit-and-run and the Dikin
walk for a large class of polytopes.  We provably demonstrated the
fast mixing of the Vaidya walk, $\order{\obs^{0.5}\dims^{1.5}}$ and the John walk, $\order{\dims^{2.5}\polylog(\obs/\dims)}$ from a warm start.  Our numerical
experiments, albeit simple, corroborated with our theoretical claims:
the Vaidya walk mixes at least as fast the Dikin walk and
significantly faster when the number of constraints is quite large
compared to the dimension of the underlying space.  For the John walk,
the logarithmic factors were dominant in all our experiments and
thereby we deemed the result of importance only for set-ups with
polytopes in very high dimensions with number of constraints
overwhelmingly larger than the dimensions.  Besides, proving the
mixing time guarantees for the improved John walk
(Conjecture~\ref{conj:john_walk}) is still an open question.

\cite{narayanan2016randomized} analyzed a generalized version
of the Dikin walk for arbitrary convex sets equipped with self-concordant
barrier. From his results, we were able to derive mixing time bounds of
$\order{\obs\dims^4}$ and $\order{\dims^5\polylog(\obs/\dims)}$ from a warm start
for the Vaidya walk and the John walk respectively. Our proof takes advantage of the specific structure of the Vaidya and John walk, resulting a better mixing rate upper bound the the general analysis provided by \cite{narayanan2016randomized}.


While our paper has mainly focused on sampling algorithms on polytopes, the idea of using logarithmic barrier to guide sampling can be extended to more general convex sets. The self-concordance property of the logarithmic barrier for polytopes is extended by \cite{anstreicher2000volumetric} to more general convex
sets defined by semidefinite constraints, namely, linear matrix
inequality (LMI) constraints.  Moreover,
\cite{narayanan2016randomized} showed that for a convex set
in $\realdim$ defined by $\obs$ LMI constraints and equipped with the
log-determinant barrier---the semidefinite analog of the logarithmic
barrier for polytopes---the mixing time of the Dikin walk from a warm
start is $\order{\obs\dims^2}$.  It is possible that an appropriate
Vaidya walk on such sets would have a speed-up over the Dikin walk.
\cite{narayanan2013efficient} used the Dikin
walk to generate samples from time varying log-concave distributions
with appropriate scaling of the radius for different class of
distributions.  We believe that suitable adaptations of the Vaidya and
John walks for such cases would provide significant gains.

{\comment{
  Another possible extension is a new random walk on Riemannian
  manifolds based on the matrix $\vaidya_x$, in contrast to the geodesic
  walk~\cite{lee2016geodesic} where the manifold is based on the Hessian
  $\hesslogbarr_x$.  In contrast to the Dikin walk's $\order{\obs\dims}$
  mixing time, the geodesic walk has an \order{\obs\dims^{3/4}} mixing
  time.  It would be interesting to see whether a geodesic version of
  the Vaidya walk has a convergence rate of
  $\order{\obs^{1/2}\dims^{5/4}}$.  A random walk based on the Hessian
  sketch~\cite{pilanci2015newton,pilanci2015randomized} is another
  possible line of work, that can reduce the per iteration complexity
  and thereby speed up the random walk.

  Our work can be seen as another step to unify optimization and
  sampling algorithms and provide theoretical insights to inform the
  practice of MCMC algorithms.
  }
}


\subsection*{Acknowledgements}

This research was supported by Office of Naval Research grant DOD
ONR-N00014 to MJW and in part by ARO grant W911NF1710005, NSF-DMS 1613002, the
Center for Science of Information (CSoI), a US NSF Science and
Technology Center, under grant agreement CCF-0939370 and the Miller Professorship (2016-2017) at UC Berkeley to BY.  In
addition, MJW was partially supported by National Science Foundation
grant NSF-DMS-1612948 and RD was partially supported by the Berkeley Fellowship.

\newpage

\appendix
\etoctocstyle{1}{Appendix}
\etocdepthtag.toc{mtappendix}
\etocsettagdepth{mtchapter}{none}
\etocsettagdepth{mtappendix}{subsection}
\tableofcontents

\section{Auxiliary results for the Vaidya walk}
\label{sec:technical_results_and_useful_properties}

In this appendix, we first summarize a few notations used in the proofs related to Theorem~\ref{thm:mixing_time_bound},
and collect the auxiliary results for the later proofs.


\subsection{Notation}

We begin with introducing the notation.  Recall $A \in \real^{\obs
  \times\dims}$ is a matrix with $a_i^\top$ as its $i$-th row.
For any positive integer $p$ and any vector $v =(v_1, \ldots,
v_p)^\top$, \mbox{$\diag(v) = \diag(v_1, \ldots, v_p)$} denotes a $p
\times p$ diagonal matrix with the $i$-th diagonal entry equal to
$v_i$.  Recall the definition of $\slackmatrix_x$:
\begin{align}
  \label{eq:\slackmatrix_definition}
\slackmatrix_x &= \diag\parenth{\slack_{x, 1}, \ldots, \slack_{x,
    \obs}} \text{ where } \slack_{x, i} = b_i - a_i^\top x\, \mbox{
  for each $i\in[\obs]$}.
\end{align}
Furthermore, define $A_x = \slackmatrix_x^{-1}A$ for all $x \in
\intP$, and let $\vaidyaprojmatrix_x$ denote the projection matrix for the
column space of $A_x$, i.e.,
\begin{align}
  \label{eq:projection_matrix}
    \vaidyaprojmatrix_x \defn A_x (A_x^\top A_x)^{-1} A_x^\top =
    A_x\hesslogbarr_x^{-1} A_x\tp.
\end{align}
Note that for the scores $\levvaidya_x$~\eqref{eq:defn_lev_scores}, we
have $\levvaidya_{x, i} = (\vaidyaprojmatrix_x)_{ii}$ for each $i \in
[\obs]$.  Let $\vaidyalevmatrix_{x}$ be an $\obs \times \obs$ diagonal
matrix defined as
\begin{align}
    \label{eq:Leverage_definition}
    \vaidyalevmatrix_{x} &= \diag\parenth{\levvaidya_{x, 1}, \ldots,
      \levvaidya_{x, \obs}}.
\end{align}
Let $\levvaidya_{x, i, j} \defn (\vaidyaprojmatrix_x)_{ij}$, and let
$\vaidyaprojmatrix_x^{(2)}$ denote the Hadamard product of $\vaidyaprojmatrix_x$
with itself, i.e.,
\begin{align}
 (\vaidyaprojmatrix_x^{(2)})_{ij} = \levvaidya^2_{x, i, j} =
  \frac{\parenth{a_i^\top \hesslogbarr_x^{-1}a_j}^2}{\slack_{x,
      i}^2\slack_{x, j}^2}\quad \mbox{ for all $i, j \in[\obs]$}.
\end{align}
Using the shorthand $\localslackvaidya_x \defn
\localslackvaidya_{\vaidya_x}$, we define
\begin{align*}
\vaidyalocalslackmatrix_x & \defn \diag\parenth{\localslackvaidya_{x, 1}, \ldots,
  \localslackvaidya_{x, m}} \, \text{ where }\, \localslackvaidya_{x,
  i} = \frac{a_i^\top \vaidya_x^{-1}a_i}{\slack_{x, i}^2} \quad \text{
  for } i\in[\obs], \text{ and } \\
\vaidyasecondthetamatrix_x &\defn (\localslackvaidya^2_{x, i, j})\, \text{
  where }\, \localslackvaidya^2_{x, i, j} = \frac{\parenth{a_i^\top
    \vaidya_x^{-1}a_j}^2}{\slack_{x, i}^2\slack_{x, j}^2} \quad \text{
  for } i, j \in[\obs].
\end{align*}
In our new notation, we can re-write the Vaidya matrix $\vaidya_x$ defined in equation~\eqref{eq:vaidya_covariance} as $\vaidya_x = A_x\tp
\parenth{\vaidyalevmatrix_x + \vaidyabeta \Ind } A_x$, where $\vaidyabeta
= \dims/\obs$.

\subsection{Basic Properties}
\label{sub:basic_properties}

We begin by summarizing some key properties of various terms involved
in our analysis.
\begin{lemma}
  \label{ppt:all_properties}
  For any vector $x \in \intP$, the following properties hold:
  \begin{enumerate}[label=(\alph*)]
        \item\label{item:sigma_properties} $\levvaidya_{x, i} =
          \sum_{j=1}^\obs \levvaidya^2_{x, i, j} = \sum_{j, k=1}^\obs
          \levvaidya_{x, i, j} \levvaidya_{x, j, k} \levvaidya_{x, k,
            i}$ for each $i \in [\obs]$,
\item\label{item:Sigma_dominates_P} $\vaidyalevmatrix_x
          \succeq \vaidyaprojmatrix_x^{(2)}$,
\item\label{item:theta_sum} $\sum_{i=1}^\obs \localslackvaidya_{x, i}
  \parenth{\levvaidya_{x, i}+\vaidyabeta} = \dims$,
\item\label{item:theta_properties} $\forall i \in [\obs],
  \ \localslackvaidya_{x, i} = \sum_{j=1}^\obs \parenth{\levvaidya_{x,
      j}+\vaidyabeta} \localslackvaidya_{x, i, j}^2$, for each $i \in
                     [\obs]$,
\item\label{item:theta_square_sigma_sum_bound}
  $\localslackvaidya_x^\top \parenth{\vaidyalevmatrix_x +
  \vaidyabeta\Ind} \localslackvaidya_x = \sum_{i=1}^\obs
  \localslackvaidya_{x, i}^2 \parenth{\levvaidya_{x, i}+\vaidyabeta}
  \leq \sqrt{\obs\dims}$, and
\item \label{item:W_sandwich} $\displaystyle\vaidyabeta\,
  \hesslogbarr_x \preceq \vaidya_x \preceq \parenth{1 + \vaidyabeta}
  \hesslogbarr_x$.
    \end{enumerate}
    where $\vaidyabeta=\dims/\obs$ was defined in equation~\eqref{eq:defn_lev_scores}.
\end{lemma}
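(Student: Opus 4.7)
The proof plan is to exploit that $\vaidyaprojmatrix_x$ is an orthogonal projection onto the column space of $A_x$ (hence $\vaidyaprojmatrix_x^2 = \vaidyaprojmatrix_x$ and $\vaidyaprojmatrix_x^\top = \vaidyaprojmatrix_x$), together with the analogous self-reproducing identity for the ``reduced'' kernel $M_x \defn A_x \vaidya_x^{-1} A_x^\top$, whose diagonal entries are exactly the slack sensitivities $\localslackvaidya_{x,i}$ and whose off-diagonal entries equal $\localslackvaidya_{x,i,j}$. All six claims then reduce to short algebraic manipulations.

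For part~\ref{item:sigma_properties}, I would simply expand $(\vaidyaprojmatrix_x)_{ii} = (\vaidyaprojmatrix_x^2)_{ii} = \sum_j \levvaidya_{x,i,j} \levvaidya_{x,j,i} = \sum_j \levvaidya_{x,i,j}^2$, using symmetry of $\vaidyaprojmatrix_x$; the triple-sum version follows identically from $\vaidyaprojmatrix_x^3 = \vaidyaprojmatrix_x$. For part~\ref{item:Sigma_dominates_P}, the key observation is that $L \defn \vaidyalevmatrix_x - \vaidyaprojmatrix_x^{(2)}$ is exactly a weighted graph Laplacian: part~\ref{item:sigma_properties} gives $L_{ii} = \sum_{j\neq i}\levvaidya_{x,i,j}^2$ and $L_{ij} = -\levvaidya_{x,i,j}^2$ for $i\neq j$, and since every off-diagonal weight $\levvaidya_{x,i,j}^2$ is nonnegative, $L$ is PSD.

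For parts~\ref{item:theta_sum}, \ref{item:theta_properties}, and~\ref{item:theta_square_sigma_sum_bound}, I would introduce $W_x \defn \vaidyalevmatrix_x + \vaidyabeta \Ind$, so that $\vaidya_x = A_x^\top W_x A_x$, and work with $M_x = A_x \vaidya_x^{-1} A_x^\top$ (note $(M_x)_{ii} = \localslackvaidya_{x,i}$ and $(M_x)_{ij} = \localslackvaidya_{x,i,j}$). Claim~\ref{item:theta_sum} follows from the cyclic-trace identity
\begin{align*}
\sum_i (\levvaidya_{x,i}+\vaidyabeta)\localslackvaidya_{x,i}
= \trace(W_x M_x)
= \trace(W_x A_x \vaidya_x^{-1} A_x^\top)
= \trace(\vaidya_x^{-1}\vaidya_x)=\dims.
\end{align*}
For claim~\ref{item:theta_properties}, I will use the matrix identity
$M_x W_x M_x = A_x \vaidya_x^{-1}(A_x^\top W_x A_x)\vaidya_x^{-1}A_x^\top = A_x \vaidya_x^{-1} A_x^\top = M_x$,
and reading off the $i$-th diagonal entry gives precisely $\localslackvaidya_{x,i} = \sum_j (\levvaidya_{x,j}+\vaidyabeta)\localslackvaidya_{x,i,j}^2$. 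Claim~\ref{item:theta_square_sigma_sum_bound} then follows by factoring out $\max_i \localslackvaidya_{x,i}$:
\begin{align*}
\sum_i \localslackvaidya_{x,i}^2 (\levvaidya_{x,i}+\vaidyabeta)
\leq \bigl(\max_i \localslackvaidya_{x,i}\bigr)\sum_i \localslackvaidya_{x,i}(\levvaidya_{x,i}+\vaidyabeta)
\leq \sqrt{\obs/\dims}\cdot\dims = \sqrt{\obs\dims},
\end{align*}
combining part~\ref{item:theta_sum} above with the bound $\localslackvaidya_{x,i}\leq \sqrt{\obs/\dims}$ from Lemma~\ref{lemma:first_bounds}\ref{item:theta_bound}. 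Finally, claim~\ref{item:W_sandwich} is immediate: since $\levvaidya_{x,i}\in[0,1]$ by Lemma~\ref{lemma:first_bounds}\ref{item:sigma_bound}, we have $\vaidyabeta\Ind \preceq W_x \preceq (1+\vaidyabeta)\Ind$; conjugating by $A_x$ and noting $A_x^\top A_x = \hesslogbarr_x$ yields the sandwich bound.

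I expect no serious obstacle. The only subtlety is the graph-Laplacian argument in part~\ref{item:Sigma_dominates_P}, where one must verify that the diagonal entries of $L$ match the row-sums of its off-diagonal part (this is precisely what part~\ref{item:sigma_properties} provides). Everything else is a direct consequence of idempotence of $\vaidyaprojmatrix_x$, the self-reproducing identity $M_x W_x M_x = M_x$, and the trace identity $\trace(\vaidya_x^{-1}\vaidya_x)=\dims$.
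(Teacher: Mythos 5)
Your proof is correct and takes essentially the same route as the paper: the entry-wise expansions for parts (a), (c), (d) are identical to the paper's, your graph-Laplacian observation for part (b) is the same diagonal-dominance fact the paper invokes via Gershgorin, and (e), (f) match verbatim. The only difference is presentational — you package the computations at the matrix level using idempotence of $\vaidyaprojmatrix_x$ and the self-reproducing identity $M_x W_x M_x = M_x$, whereas the paper unfolds the same algebra in inner-product form.
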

\begin{proof}    We prove each property separately.

\paragraph*{Part~\ref{item:sigma_properties}:}
Using $\Ind_\dims = \hesslogbarr_x \parenth{\hesslogbarr_x}^{-1}$, we
find that
\begin{align*}
  \levvaidya_{x, i} = \frac{a_i^\top \parenth{\hesslogbarr_x}^{-1}
    \hesslogbarr_x \parenth{\hesslogbarr_x}^{-1} a_i}{\slack_{x, i}^2}
  = \frac{a_i^\top \parenth{\hesslogbarr_x}^{-1} \nabla^2
    \sum_{j=1}^\obs \frac{a_j^\top a_j}{\slack_{x, j}^2}
    \parenth{\hesslogbarr_x}^{-1} a_i}{\slack_{x, i}^2} = \sum_{i,
    j=1}^\obs \levvaidya_{x, i, j}^2.
\end{align*}
Applying a similar trick twice and performing some algebra, we obtain
\begin{align*}
  \levvaidya_{x, i} & = \frac{a_i^\top
    \parenth{\hesslogbarr_x}^{-1} \hesslogbarr_x
    \parenth{\hesslogbarr_x}^{-1} \hesslogbarr_x
    \parenth{\hesslogbarr_x}^{-1} a_i}{\slack_{x, i}^2} =
  \sum_{i, j, k=1}^\obs \levvaidya_{x, i, j} \levvaidya_{x, j,
    k} \levvaidya_{x, k, i}.
\end{align*}

    \paragraph*{Part~\ref{item:Sigma_dominates_P}:}
    From part~\ref{item:sigma_properties}, we have that
    $\vaidyalevmatrix_x - \vaidyaprojmatrix_x^{(2)}$ is a symmetric and
    diagonally dominant matrix with non-negative entries on the
    diagonal. Applying Gershgorin's theorem~\citep{bhatia2013matrix,
      horn2012matrix}, we conclude that it is PSD.


\paragraph*{Part~\ref{item:theta_sum}:}

Since $\trace(AB) = \trace(BA)$, we have
\begin{align*}
  \sum_{i=1}^\obs  \localslackvaidya_{x, i}\parenth{\levvaidya_{x, i}+\vaidyabeta}
  = \trace \parenth{\vaidya_x^{-1}  \sum_{i=1}^\obs  \parenth{\levvaidya_{x, i}+\vaidyabeta} \frac{a_i a_i^\top}{\slack_{x, i}^2}} = \trace \parenth{\Ind_\dims} = \dims.
\end{align*}

\paragraph*{Part~\ref{item:theta_properties}:}
An argument similar to part~\ref{item:sigma_properties} implies
that
\begin{align*}
  \localslackvaidya_{x, i} &
  = \frac{a_i^\top \vaidya_x^{-1} \vaidya_x \vaidya_x^{-1} a_i}{\slack_{x, i}^2}
  = \frac{a_i^\top \vaidya_x^{-1} \sum_{j=1}^\obs  \parenth{\levvaidya_{x, i} + \vaidyabeta} \frac{a_j^\top a_j}{\slack_{x, j}^2} \vaidya_x^{-1} a_i}{\slack_{x, i}^2}
  = \sum_{i, j=1}^\obs  \parenth{\levvaidya_{x, i} + \vaidyabeta} \localslackvaidya_{x, i, j}^2.
\end{align*}

\paragraph{Part~\ref{item:theta_square_sigma_sum_bound}:}
Using part~\ref{item:theta_sum} and Lemma~\ref{lemma:first_bounds}\ref{item:theta_bound} yields the claim.

\paragraph*{Part~\ref{item:W_sandwich}:}
The left inequality is by the definition of $\vaidya_x$. The right
inequality uses the fact that $\vaidyalevmatrix_x \preceq
\Ind_\dims$.
\end{proof}

We now prove an important result that relates the \emph{slackness} $s_x$ and $s_y$ at two points, in terms of $ \vecnorm{x - y}{x}$.
\begin{lemma}
  \label{lemma:closeness_of_slackness}
  For all $x, y \in \intP$, we have
  \begin{align*}
    \abss{1 - \frac{\slack_{y, i}}{\slack_{x, i}}} \leq
    \parenth{\frac{\obs }{\dims}}^{\frac{1}{4}} \vecnorm{x - y}{x}
    \quad \mbox{for each $i \in [\obs]$}.
  \end{align*}
\end{lemma}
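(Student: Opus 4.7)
The plan is to write the ratio as a simple affine expression in $y-x$ and then bound it by the Vaidya local norm via Cauchy--Schwarz. Since $\slack_{x,i} = b_i - a_i^\top x$ and $\slack_{y,i} = b_i - a_i^\top y$, a direct computation gives
\[
1 - \frac{\slack_{y,i}}{\slack_{x,i}} \;=\; \frac{\slack_{x,i} - \slack_{y,i}}{\slack_{x,i}} \;=\; \frac{a_i^\top (y - x)}{\slack_{x,i}},
\]
so the task reduces to bounding $\abss{a_i^\top(y-x)/\slack_{x,i}}$ in terms of $\vecnorm{x-y}{x}$ and constants depending on $\obs, \dims$.

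The key step is a Cauchy--Schwarz inequality in the inner product induced by $\vaidya_x$. Writing
\[
\frac{a_i^\top (y-x)}{\slack_{x,i}} \;=\; \Big\langle \vaidya_x^{-1/2}\tfrac{a_i}{\slack_{x,i}},\; \vaidya_x^{1/2}(y-x)\Big\rangle,
\]
Cauchy--Schwarz yields
\[
\abss{\frac{a_i^\top(y-x)}{\slack_{x,i}}} \;\leq\; \vecnorm{\tfrac{a_i}{\slack_{x,i}}}{\vaidya_x^{-1}} \cdot \vecnorm{y-x}{x} \;=\; \sqrt{\localslackvaidya_{\vaidya_x, i}} \cdot \vecnorm{y-x}{x},
\]
by definition~\eqref{eq:theta_Definition} of the slack sensitivity.

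Finally, I invoke Lemma~\ref{lemma:first_bounds}\ref{item:theta_bound}, which already gives $\localslackvaidya_{\vaidya_x, i} \leq \sqrt{\obs/\dims}$, so $\sqrt{\localslackvaidya_{\vaidya_x, i}} \leq (\obs/\dims)^{1/4}$. Combining with the identity from the first step yields the desired inequality. There is no real obstacle here; the only thing to get right is to phrase the Cauchy--Schwarz step so that the left factor is exactly $\sqrt{\localslackvaidya_{\vaidya_x, i}}$, which is why the $(\obs/\dims)^{1/4}$ (rather than $(\obs/\dims)^{1/2}$) exponent appears in the statement.
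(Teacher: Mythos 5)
Your proof is correct and takes essentially the same approach as the paper: write the slackness ratio as $a_i^\top(y-x)/\slack_{x,i}$, apply Cauchy--Schwarz in the $\vaidya_x$-inner product to pull out $\sqrt{\localslackvaidya_{\vaidya_x,i}}$, and finish with the bound $\localslackvaidya_{\vaidya_x,i}\leq\sqrt{\obs/\dims}$ from Lemma~\ref{lemma:first_bounds}\ref{item:theta_bound}. The only cosmetic difference is that the paper squares both sides before applying Cauchy--Schwarz and takes the square root at the end, whereas you work directly with the unsquared quantities.
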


\begin{proof}
  For any pair $x, y \in \intP$ and index $i \in [\obs]$, we have
 \begin{align*}
   \parenth{a_i^\top \parenth{x-y}}^2 = \parenth{
     (\vaidya_x^{-\frac{1}{2}} a_i)^\top \vaidya_x^{\frac{1}{2}}
     (x-y)}^2
   & \stackrel{(i)}{\leq} \| \vaidya_x^{-\frac{1}{2}} a_i\|^2_2 \; \|
   \vaidya_x^{\frac{1}{2}} (x-y) \|^2_2\\
   & = a_i^T \vaidya_x^{-1} a_i \;  \vecnorm{x - y}{x}^2\\
   & = \localslackvaidya_{x, i} \slack_{x, i}^2 \; \vecnorm{x -
     y}{x}^2\\
   & \stackrel{(ii)}{\leq} \sqrt{\frac{\obs }{\dims}} \slack_{x, i}^2
   \; \vecnorm{x - y}{x}^2,
 \end{align*}
 where step (i) follows from the Cauchy-Schwarz inequality, and step
 (ii) uses the bound $\localslackvaidya_{x, i}$ from
 Lemma~\ref{lemma:first_bounds}\ref{item:theta_bound}.  Noting the
 fact that $a_i\tp(x-y) = \slack_{y, i} - \slack_{x, i} $, the claim
 follows after simple algebra.
\end{proof}


\section{Proof of Lemma~\ref{lemma:close_hessian_eigenvalues}}
\label{sec:proof_of_lemma_lemma:close_hessian_eigenvalues}

In this appendix section, we prove Lemma~\ref{lemma:close_hessian_eigenvalues}
using results from the previous appendix.  As a direct consequence of
Lemma~\ref{lemma:closeness_of_slackness}, we find that
\begin{align*}
  \abss{1 - \frac{\slack_{y, i}}{\slack_{x, i}}} \leq
  \frac{t}{\sqrt{\dims}}, \quad \mbox{for any $x, y \in \intP$ such
    that $\vecnorm{x-y}{x} \leq \displaystyle
    \frac{t}{(\obs\dims)^{1/4}}$.}
\end{align*}
The Hessian $\hesslogbarr_y$ is thus sandwiched in terms of the
Hessian $\hesslogbarr_x$ as
\begin{align*}
\parenth{1 - \frac{t}{\sqrt{\dims}}}^2 \hesslogbarr_x \preceq
\hesslogbarr_y \preceq \parenth{1 + \frac{t}{\sqrt{\dims}}}^2
\hesslogbarr_x.
\end{align*}
By the definition of $\levvaidya_{x, i}$ and $\levvaidya_{y, i}$, we
have
\begin{align}
\frac{\parenth{1- \frac{t}{\sqrt{\dims}}}^2}{\parenth{1+
    \frac{t}{\sqrt{\dims}}}^2} \levvaidya_{x, i} \leq \levvaidya_{y,
  i} \leq \frac{\parenth{1+ \frac{t}{\sqrt{\dims}}}^2}{\parenth{1-
    \frac{t}{\sqrt{\dims}}}^2} \levvaidya_{x, i} \quad \mbox{for all
  $i \in [\obs]$}.
\label{eq:sigma_closeness}
\end{align}
Consequently, we find that
\begin{align*}
\frac{\parenth{1- \frac{t}{\sqrt{\dims}}}^2}{\parenth{1+
    \frac{t}{\sqrt{\dims}}}^4} \vaidya_x \preceq \vaidya_y \preceq
\frac{\parenth{1+ \frac{t}{\sqrt{\dims}}}^2}{\parenth{1-
    \frac{t}{\sqrt{\dims}}}^4} \vaidya_x.
\end{align*}
Note that
\begin{align*}
\frac{\parenth{1-\omega}^2}{\parenth{1+\omega}^4}
\geq 1 - 8\omega \quad
\text{ and } \quad
\frac{\parenth{1+\omega}^2}{\parenth{1-\omega}^4}
\leq 1 + 8\omega \quad \mbox{for any $\omega \in \brackets{0,\frac{1}{12}}$}.
\end{align*}
Applying this sandwiching pair of inequalities with $\omega =
t/\sqrt{\dims}$ yields the claim.


\section{Proof of Lemma~\ref{lemma:change_in_log_det_and_local_norm}}
\label{sec:proof_of_lemma_lemma:change_in_log_det_and_local_norm}

We begin by defining
\begin{align}
\vaidyaphi_{x,i} \defn \displaystyle\frac{\levvaidya_{x, i} +
  \vaidyabeta}{\slack_{x, i}^2} \text{ for } i \in [\obs], \quad
\text{and} \quad \logdetvaidya_x \defn \frac{1}{2} \,\log \det
\vaidya_x,\quad \text{ for all } x \in \intP.
\end{align}
Further, for any two points $x$ and $z$, let $\overline{xz}$ denote
the set of points on the line segment joining $x$ and $z$.  The proof
of Lemma~\ref{lemma:change_in_log_det_and_local_norm} is based on a
Taylor series expansion, and so requires careful handling of
$\levvaidya, \vaidyaphi, \logdetvaidya$ and their derivatives.  At a high level,
the proof involves the following steps: (1) perform a Taylor series
expansion around $x$ and along the line segment $\overline{xz}$; (2)
transfer the bounds of terms involving some point $y \in
\overline{xz}$ to terms involving only $x$ and $z$; and then (3) use
concentration of Gaussian polynomials to obtain high probability
bounds.

\subsection{Auxiliary results for the proof of Lemma~\ref{lemma:change_in_log_det_and_local_norm}}
\label{sub:auxiliary_results_for_the_proof_of_lemma_lemma:change_in_log_det_and_local_norm}

We now introduce some auxiliary results involved in these three steps.
The following lemma provides expressions for gradients of $\levvaidya,
\vaidyaphi$ and $\logdetvaidya$ and bounds for directional Hessian of $\vaidyaphi$
and $\logdetvaidya$.  Let ${e_i} \in \realdim$ denote a vector with $1$ in
the $i$-th position and $0$ otherwise.  For any $h \in \realdim$ and
$x \in \intP$, define $\projD_{x, h, i}=\projD_{x, i} \defn a_i^\top h
/ \slack_{x, i}$ for each $i \in [\obs]$.
\begin{lemma}
\label{lemma:gradient_and_hessian_and_bounds}
The following relations hold;
    \begin{enumerate}[label=(\alph*)]
 \item\label{item:gradient_sigma} Gradient of $\levvaidya$: $\nabla
   \levvaidya_{x, i} = 2 A_x\tp (\vaidyalevmatrix_x-\vaidyaprojmatrix_x^{(2)}) {e_i}$
   for each $i \in [\obs]$.
 \item\label{item:gradient_beta} Gradient of $\vaidyaphi$: $\nabla
   \vaidyaphi_{x, i} = \displaystyle\frac{2}{\slack_{x, i}^2} A_x\tp
   \brackets{2\vaidyalevmatrix_x+\vaidyabeta\, \Ind-\vaidyaprojmatrix_x^{(2)}} e_i$
   for each $i \in [\obs]$;
 \item \label{item:gradient_L} Gradient of $\logdetvaidya$: $\gradlogdetV_x =
   A_x\tp \parenth{ 2 \, \vaidyalevmatrix_x + \displaystyle\vaidyabeta
     \, \Ind - \vaidyaprojmatrix_x^{(2)} }\localslackvaidya_x$;
\item \label{item:beta_hessian_bound} Bound on $\nabla^2\vaidyaphi$:
  $\slack_{x, i}^2 \abss{\frac{1}{2} h^\top \nabla^2\vaidyaphi_{x, i} h}
  \leq 14\,\parenth{\levvaidya_{x, i}+\vaidyabeta}\projD_{x, i}^2 +
  11\,\sum_{j=1}^\obs \levvaidya_{x, i, j}^2 \projD_{x, j}^2$ for $i
  \in [\obs]$;
\item\label{item:hessian_log_det_bound} Bound on $\hesslogdetV$: $
  \abss{\frac{1}{2}h^\top \parenth{\hesslogdetV_x} h} \leq 13\,
  \sum_{i=1}^\obs \parenth{\levvaidya_{x, i}+\vaidyabeta}
  \localslackvaidya_{x, i} \projD_{x, i}^2 + \frac{17}{2}\, \sum_{i,
    j=1}^\obs \levvaidya_{x, i, j}^2 \localslackvaidya_{x, i}
  \projD_{x, j}^2$.
\end{enumerate}
\end{lemma}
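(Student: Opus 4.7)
The plan is to establish parts (a)--(c) by direct matrix calculus, and then to leverage these gradient formulas, together with the chain rule, Young's inequality, and Lemma~\ref{ppt:all_properties}, to derive the Hessian bounds in (d) and (e).

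First I would record the elementary building blocks. Writing $\hesslogbarr_x = A_x\tp A_x$ with $A_x = \slackmatrix_x^{-1}A$, a directional derivative along $h\in\realdim$ gives $\nabla_h \slack_{x,i} = -a_i\tp h = -\slack_{x,i}\projD_{x,h,i}$, hence $\nabla_h \slack_{x,i}^{-2} = 2\slack_{x,i}^{-2}\projD_{x,h,i}$. Consequently $\nabla_h \hesslogbarr_x = 2A_x\tp \diag(\projD_{x,h,1},\ldots,\projD_{x,h,\obs})A_x$, and the identity $\nabla_h M^{-1} = -M^{-1}(\nabla_h M) M^{-1}$ yields a closed form for $\nabla_h \hesslogbarr_x^{-1}$. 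Plugging into $\levvaidya_{x,i}=e_i\tp \vaidyaprojmatrix_x e_i$ and simplifying produces
\begin{align*}
\nabla_h \levvaidya_{x,i} \;=\; 2\,\levvaidya_{x,i}\projD_{x,h,i} \;-\; 2\sum_{j=1}^\obs \levvaidya_{x,i,j}^2 \projD_{x,h,j},
\end{align*}
which is exactly $h\tp\!\cdot\! 2A_x\tp(\vaidyalevmatrix_x-\vaidyaprojmatrix_x^{(2)})e_i$ once one notes that $(\vaidyalevmatrix_x)_{jj}=\levvaidya_{x,j}$ and $(\vaidyaprojmatrix_x^{(2)})_{ij}=\levvaidya_{x,i,j}^2$; this establishes (a). Part (b) is then immediate from the product rule applied to $\vaidyaphi_{x,i}=(\levvaidya_{x,i}+\vaidyabeta)/\slack_{x,i}^2$: the extra term $2(\levvaidya_{x,i}+\vaidyabeta)\slack_{x,i}^{-2}\projD_{x,h,i}$ is absorbed into the matrix factor via $(\vaidyalevmatrix_x+\vaidyabeta\Ind)e_i=(\levvaidya_{x,i}+\vaidyabeta)e_i$, producing $2\vaidyalevmatrix_x+\vaidyabeta\Ind-\vaidyaprojmatrix_x^{(2)}$. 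Part (c) is Jacobi's formula: $\nabla_h \logdetvaidya_x = \tfrac{1}{2}\trace(\vaidya_x^{-1}\nabla_h \vaidya_x) = \tfrac{1}{2}\sum_i (\nabla_h \vaidyaphi_{x,i})\cdot a_i\tp \vaidya_x^{-1} a_i$; substituting (b) and recognizing $a_i\tp \vaidya_x^{-1}a_i = \slack_{x,i}^2\localslackvaidya_{x,i}$ collapses the sum of $e_i$-factors into $\localslackvaidya_x$.

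For (d) and (e), I would compute the second directional derivatives by differentiating the expressions from (b) and (c) once more. The crucial new ingredient beyond (a)--(c) is $\nabla_h \projD_{x,h,j} = \projD_{x,h,j}^2$, which produces the $\projD^2$ weights on the right-hand side. In (d), applying the product rule to $\nabla_h \vaidyaphi_{x,i}$ generates three families of terms: diagonal terms proportional to $(\levvaidya_{x,i}+\vaidyabeta)\projD_{x,h,i}^2$ (from re-differentiating the $\slack_{x,i}^{-2}$ prefactor and the $(\vaidyalevmatrix_x+\vaidyabeta\Ind)e_i$ factor), off-diagonal terms of the form $\levvaidya_{x,i,j}^2\projD_{x,h,j}^2$ (from the derivatives of $\vaidyaprojmatrix_x^{(2)}$ and of $\hesslogbarr_x^{-1}$), and cross terms of the form $\levvaidya_{x,i,j}^2 \projD_{x,h,i}\projD_{x,h,j}$. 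The cross terms are controlled using Young's inequality $|\levvaidya_{x,i,j}^2 \projD_{x,h,i}\projD_{x,h,j}|\leq \tfrac{1}{2}\levvaidya_{x,i,j}^2(\projD_{x,h,i}^2+\projD_{x,h,j}^2)$, after which the sum $\sum_j \levvaidya_{x,i,j}^2=\levvaidya_{x,i}$ (property~\ref{item:sigma_properties} of Lemma~\ref{ppt:all_properties}) collapses some of the off-diagonal mass back onto the diagonal, yielding the coefficients $14$ and $11$.

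Part (e) proceeds analogously, now starting from the formula of (c) for $\gradlogdetV_x$. Second-differentiation requires the derivatives of $\levvaidya_{x,i}$ (from (a)), of $\levvaidya_{x,i,j}^2$ (whose derivative is obtained by a computation entirely parallel to (a)), and of $\localslackvaidya_{x,i}$ (which has the same structure as (a) but with $\vaidya_x$ in place of $\hesslogbarr_x$). The same diagonal/off-diagonal/cross-term grouping applies; cross terms are bounded by Young's inequality, and residual quadratic forms in $\vaidyaprojmatrix_x^{(2)}$ are absorbed into forms in $\vaidyalevmatrix_x$ using $\vaidyaprojmatrix_x^{(2)}\preceq \vaidyalevmatrix_x$ (property~\ref{item:Sigma_dominates_P} of Lemma~\ref{ppt:all_properties}). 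Collecting and tracking constants produces the coefficients $13$ and $17/2$. The main obstacle will be the bookkeeping in (d) and (e): each second derivative unfolds into a substantial number of contributing pieces--- derivatives of $\levvaidya_{x,i}$, of $\levvaidya_{x,i,j}^2$, of $\localslackvaidya_{x,i}$, of $\slack_{x,i}^{-2}$, and of $\projD_{x,h,j}$, all chain-ruled together---and obtaining the sharp constants stated hinges on pairing cross terms in precisely the right way before invoking AM-GM. I expect each of (d) and (e) to require one to two pages of careful algebra even with the clean gradient formulas (a)--(c) in hand.
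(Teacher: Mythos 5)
Your outline for parts (a)--(c) matches the paper's proof in substance. The paper Taylor-expands $\slack_{x+h,i}^{-2}$, the Hessian difference $\Delta^\hessbarr_{x,h}=\hesslogbarr_{x+h}-\hesslogbarr_x$, and $a_i^\top\hessbarr_{x+h}^{-1}a_i$ to first order and collects terms; your use of $\nabla_h M^{-1}=-M^{-1}(\nabla_h M)M^{-1}$ is the same computation written as a derivative rather than a truncated Neumann series. Your Jacobi-formula derivation of (c) is also essentially what the paper does after expressing $\log\det\vaidya_x$ as $\trace\log(\sum_i\vaidyaphi_{x,i}\,a_ia_i^\top)$ and using $\trace(\vaidya_x^{-1}a_ia_i^\top)=\slack_{x,i}^2\localslackvaidya_{x,i}$.

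For (d) and (e) your route differs from the paper's in a way worth flagging. The paper does not differentiate the gradient formulas: for (d) it expands $\vaidyaphi_{x+h,i}$ directly to second order, and for (e) it forms the second-difference quotient of $\trace\log\parenth{\sum_i(\levvaidya_{x\pm\delta h,i}+\vaidyabeta)\hat a_{x,i}\hat a_{x,i}^\top/(1\mp\delta a_i^\top h/\slack_{x,i})^2}$ in the $\hat a_{x,i}=\slack_{x,i}^{-1}\vaidya_x^{-1/2}a_i$ basis, where $\sum_i(\levvaidya_{x,i}+\vaidyabeta)\hat a_{x,i}\hat a_{x,i}^\top=\Ind_\dims$. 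That normalization is what makes $\localslackvaidya_{x,i}$ appear only as a fixed multiplicative factor (since $\trace(\hat a_{x,i}\hat a_{x,i}^\top)=\localslackvaidya_{x,i}$), so the paper never needs $\nabla\localslackvaidya_{x,i}$. Your plan, by contrast, differentiates the closed-form gradient $\gradlogdetV_x=A_x^\top(2\vaidyalevmatrix_x+\vaidyabeta\Ind-\vaidyaprojmatrix_x^{(2)})\localslackvaidya_x$ and therefore does need $\nabla\localslackvaidya_{x,i}$. Your remark that this ``has the same structure as (a) but with $\vaidya_x$ in place of $\hesslogbarr_x$'' undersells the work: $\vaidya_x$ depends on $x$ both through the slacks and through the leverage scores $\levvaidya_{x,j}$, so $\nabla_h\vaidya_x=\sum_j(\nabla_h\vaidyaphi_{x,j})a_ja_j^\top$ picks up the full gradient of $\vaidyaphi$ from (b), producing additional term families that have no analogue in (a). The route is feasible and should give a valid bound, but the bookkeeping is heavier than your analogy suggests, and whether it reproduces the exact constants $13$ and $17/2$ is not established by the proposal. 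One further ingredient in the paper's bound for (d) that you did not mention is the orthogonal-projection domination $\diag(\vaidyaprojmatrix_x e_i)\,\vaidyaprojmatrix_x\,\diag(\vaidyaprojmatrix_x e_i)\preceq\diag(\vaidyaprojmatrix_x e_i)\diag(\vaidyaprojmatrix_x e_i)$, used to absorb the triple product $\sum_{j,l}\levvaidya_{x,i,j}\levvaidya_{x,j,l}\levvaidya_{x,l,i}\projD_{x,j}\projD_{x,l}$; Young's inequality and $\sum_j\levvaidya_{x,i,j}^2=\levvaidya_{x,i}$ alone are not enough for that term. So: (a)--(c) are correct and essentially the paper's argument; (d)--(e) are a plausible plan taking a somewhat more laborious route than the paper, with the sharp constants left unverified.
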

\noindent See Section~\ref{sub:proofs_of_derivatives} for the proof of
this claim. \\

The following lemma that shows that for a random variable $z \sim
\proposal_x$, the slackness $\slack_{z, i}$ is close to $\slack_{x,
  i}$ with high probability.
\begin{lemma}
\label{lemma:whp_slackness}
For any $\epsilon \in (0, 1/4], r \in (0, 1)$ and $x \in \intP$, we
  have
  \begin{align*}
    \Prob_{z \sim \proposal_x} \brackets{ \forall i \in [\obs],
      \forall v \in \overline{xz},\ {\frac{\slack_{x, i}}{\slack_{v,
            i}}} \in \parenth{1-\rparam\parenth{1+\delta},
        1+\rparam\parenth{1+\delta}} } \geq 1 - \epsilon/4,
    \end{align*}
  where $\delta = \sqrt{\frac{2\log(4/\epsilon)}{\dims}}$.  Thus for any $\dims \geq 1$ and
    $\rparam \leq 1/\brackets{{20\parenth{1 + \sqrt{2
    \log\parenth{\frac{4}{\epsilon}}}}}}$, we have
    \begin{align*}
        \Prob_{z \sim \proposal_x} \brackets{ \forall i \in [\obs],
          \forall v \in \overline{xz},\ {\frac{\slack_{x,
                i}}{\slack_{v, i}}} \in \parenth{0.95, 1.05}}\geq 1 -
        \epsilon/4.
    \end{align*}
\end{lemma}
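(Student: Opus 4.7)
The plan is to exploit the Gaussian structure of $\proposal_x$. Writing $z = x + \frac{\rparam}{(\obs\dims)^{1/4}}\,\vaidya_x^{-1/2}\rvg$ with $\rvg \sim \NORMAL(0,\Ind_\dims)$, any $v \in \overline{xz}$ equals $x + t(z-x)$ for some $t \in [0,1]$, so $\slack_{v,i} = \slack_{x,i} - t\,a_i^\top(z-x)$. It therefore suffices to obtain a uniform-in-$i$ high-probability upper bound on $\abss{a_i^\top(z-x)/\slack_{x,i}}$; a two-sided bound on $\slack_{v,i}/\slack_{x,i}$ follows directly, and the statement on $\slack_{x,i}/\slack_{v,i}$ comes out by taking reciprocals.

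The key idea is to avoid paying a $\log \obs$ factor from a naive union bound across the $\obs$ constraints. For each $i$, a Cauchy--Schwarz estimate gives
\begin{align*}
\abss{\frac{a_i^\top(z-x)}{\slack_{x,i}}}
= \frac{\rparam}{(\obs\dims)^{1/4}} \abss{\Bigl(\frac{\vaidya_x^{-1/2}a_i}{\slack_{x,i}}\Bigr)^\top \rvg}
\leq \frac{\rparam}{(\obs\dims)^{1/4}} \sqrt{\localslackvaidya_{\vaidya_x,i}}\; \|\rvg\|_2.
\end{align*}
Invoking the uniform slack-sensitivity bound $\localslackvaidya_{\vaidya_x,i} \leq \sqrt{\obs/\dims}$ from Lemma~\ref{lemma:first_bounds}\ref{item:theta_bound}, this collapses to $\max_{i\in[\obs]}\abss{a_i^\top(z-x)/\slack_{x,i}} \leq (\rparam/\sqrt{\dims})\,\|\rvg\|_2$, turning the uniform-in-$i$ problem into a single concentration inequality on the Euclidean norm of a standard Gaussian vector in $\realdim$.

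Standard chi-squared concentration (Laurent--Massart) gives $\|\rvg\|_2 \leq \sqrt{\dims}(1+\delta)$ with probability at least $1 - \epsilon/4$, for $\delta = \sqrt{2\log(4/\epsilon)/\dims}$. On this event, $\abss{a_i^\top(v-x)/\slack_{x,i}} \leq t\,\rparam(1+\delta) \leq \rparam(1+\delta)$ for all $v \in \overline{xz}$ and $i \in [\obs]$, so $\slack_{v,i}/\slack_{x,i} \in [1 - \rparam(1+\delta), 1 + \rparam(1+\delta)]$. Taking reciprocals (valid once $\rparam(1+\delta) < 1$) yields the first claim in the lemma. The second claim is then a numerical specialization: the hypothesis $\rparam \leq 1/[20(1+\sqrt{2\log(4/\epsilon)})]$ forces $\rparam(1+\delta) \leq 1/20$, so $\slack_{x,i}/\slack_{v,i} \in [(1+1/20)^{-1}, (1-1/20)^{-1}] \subset (0.95, 1.05)$.

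The only nontrivial step is the decoupling: a naive union bound over the $\obs$ scalar Gaussians $\{a_i^\top(z-x)/\slack_{x,i}\}_i$ would pay an extra $\sqrt{\log\obs}$ factor, which would spoil the clean $\dims$-only dependence of $\delta$ and ultimately propagate into the mixing-time bound. The Cauchy--Schwarz reduction combined with the leverage-style bound $\localslackvaidya_{\vaidya_x,i} \leq \sqrt{\obs/\dims}$ from Lemma~\ref{lemma:first_bounds}\ref{item:theta_bound} is precisely what bypasses the union bound by packing the dependence on the index $i$ into a deterministic factor.
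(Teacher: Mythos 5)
Your argument is essentially the paper's proof: Cauchy--Schwarz plus the slack-sensitivity bound $\localslackvaidya_{\vaidya_x,i} \leq \sqrt{\obs/\dims}$ (which the paper packages as Lemma~\ref{lemma:closeness_of_slackness}) reduces the uniform-in-$i$ control of the slack ratio to a single chi-square tail bound on $\vecnorm{\rvg}{2}$, after which the numerical specialization follows. Deriving the Cauchy--Schwarz estimate inline rather than invoking the intermediate lemma is a presentational difference only. One small caveat: the final inclusion $[(1+1/20)^{-1}, (1-1/20)^{-1}] \subset (0.95,1.05)$ is not literally true at the right endpoint, since $20/19 > 21/20$; this imprecision is inherited from the lemma statement itself (the Cauchy--Schwarz argument naturally controls $\slack_{v,i}/\slack_{x,i}$, and both the paper's proof and the statement elide the inversion), so it is a shared looseness rather than a gap in your argument.
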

\noindent See
Section~\ref{sub:proof_of_lemma_lemma:whp_slackness} for the proof which
is based on combining the bound on $\frac{\slack_{x, i}}{\slack_{v,i}}$
from Lemma~\ref{lemma:closeness_of_slackness} with standard Gaussian tail bounds.

This result comes in handy for transferring bounds for different
expressions in Taylor expansion involving an arbitrary $y$ on
$\overline{xz}$ to bounds on terms involving simply $x$.  The proof
follows from Lemma~\ref{lemma:closeness_of_slackness} and a simple
application of the standard Gaussian tail bounds and is thereby
omitted.  For brevity, we define the shorthand
\begin{align}
    \hat{a}_{x,i} = \frac{1}{\slack_{x, i}} \vaidya_x^{-{1}/{2}} a_i \quad
    \mbox{for each $i \in [\obs]$}.
    \label{eq:hat_a_i}
\end{align}
In the following lemma, we state some tail bounds for particular Gaussian
polynomials that arise in our analysis.

\begin{lemma}
 \label{lemma:gaussian_moment_bounds}
 For any $\epsilon \in (0, \vaidyaepsilonconst]$, define $\tailconst_\degree = \parenth{2e/\degree \cdot
     \log\parenth{4/\epsilon}}^{\degree/2}$ for \mbox{$\degree = 2,
     3$ and $4$}.  Then for $\rvg \sim \NORMAL(0, \Ind_\dims) $ and
   any $x \in \intP$ the following high probability bounds hold:
\begin{subequations}
\begin{align}
  \Prob\brackets{\sum_{i=1}^\obs \parenth{\levvaidya_{x, i} +
      \vaidyabeta} \parenth{\hat{a}_{x,i}^\top \rvg}^2 \leq \tailconst_2 \sqrt{3}\dims } &\geq 1-\frac{\epsilon}{4},
  \label{eq:quadratic} \\
  \Prob\brackets{ \abss{\sum_{i=1}^\obs
      \parenth{\levvaidya_{x, i}+\vaidyabeta}
      \parenth{\hat{a}_{x,i}^\top \rvg}^3} \leq
    \tailconst_3 \sqrt{15} \parenth{\obs\dims}^{1/4} }
  &\geq 1-\frac{\epsilon}{4},
  \label{eq:cube1}\\
\Prob\brackets{\abss{\sum_{i, j=1}^\obs \levvaidya_{x, i, j}^2
    \bigg(\bigg(\frac{\hat{a}_{x,i} + \hat{a}_{x,j}}{2}\bigg)^\top
    \rvg\bigg)^3} \leq \tailconst_3 \sqrt{15} \parenth{\obs\dims}^{1/4} }
  &\geq 1-\frac{\epsilon}{4},
\label{eq:cube2}\\
\Prob\brackets{\sum_{i=1}^\obs \parenth{\levvaidya_{x, i} +
    \vaidyabeta} \parenth{ \hat{a}_{x,i}^\top \rvg}^4\leq
  \tailconst_4 \sqrt{105} \parenth{\obs\dims}^{1/2}}&\geq 1-\frac{\epsilon}{4}.
\label{eq:fourth}
\end{align}
\end{subequations}
\end{lemma}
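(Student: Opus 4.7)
The plan is to prove each of the four bounds by first computing an $L^2$-norm of the relevant polynomial in $\rvg$, and then invoking a standard concentration inequality for polynomials in Gaussian variables. The enabling identity, which follows immediately from the definitions of $\vaidya_x$ and $\hat a_{x,i}$, is
\[
\sum_{i=1}^\obs (\levvaidya_{x,i}+\vaidyabeta)\, \hat a_{x,i}\hat a_{x,i}^\top \;=\; \vaidya_x^{-1/2}\vaidya_x\vaidya_x^{-1/2} \;=\; \Ind_\dims.
\]
Equivalently, the $\obs\times\dims$ matrix $\tilde A$ with rows $\sqrt{\levvaidya_{x,i}+\vaidyabeta}\,\hat a_{x,i}^\top$ satisfies $\tilde A^\top \tilde A = \Ind_\dims$, so $\tilde C \defn \tilde A\tilde A^\top$ is an orthogonal projection of rank $\dims$ on $\real^\obs$. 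In particular $\trace(\tilde C^k)=\dims$ for all $k\ge 1$, which together with the bound $\localslackvaidya_{x,i}\le\sqrt{\obs/\dims}$ (Lemma~\ref{lemma:first_bounds}\ref{item:theta_bound}) and the identities in Lemma~\ref{ppt:all_properties} will collapse the cross-term sums.

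For the quadratic bound~\eqref{eq:quadratic}, the above identity yields
$\sum_{i}(\levvaidya_{x,i}+\vaidyabeta)(\hat a_{x,i}^\top\rvg)^2 = \rvg^\top\Ind_\dims\rvg = \|\rvg\|_2^2 \sim \chi^2_\dims$.
The $L^2$-norm satisfies $\|\chi^2_\dims\|_{L^2}^2 = \dims^2 + 2\dims \le 3\dims^2$, which explains the $\sqrt 3$ factor. For~\eqref{eq:cube1}, \eqref{eq:cube2} and~\eqref{eq:fourth}, I will compute the $L^2$-norm by expanding $\Exs[(\hat a_{x,i}^\top\rvg)^k(\hat a_{x,j}^\top\rvg)^k]$ using Isserlis' theorem. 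For the cubic case this gives $6c_{ij}^3 + 9\sigma_i^2\sigma_j^2 c_{ij}$ where $c_{ij}\defn \hat a_{x,i}^\top\hat a_{x,j}$ and $\sigma_i^2 = \localslackvaidya_{x,i}$, so
\[
\Big\|\sum_i (\levvaidya_{x,i}+\vaidyabeta)(\hat a_{x,i}^\top\rvg)^3\Big\|_{L^2}^2
= 6\sum_{i,j}w_iw_j c_{ij}^3 \;+\; 9\sum_{i,j}w_iw_j\sigma_i^2\sigma_j^2 c_{ij},
\]
with $w_i=\levvaidya_{x,i}+\vaidyabeta$. The first sum is bounded by factoring $w_iw_jc_{ij}^3 = \tilde C_{ij}^2 c_{ij}$ and using $|c_{ij}|\le\sqrt{\sigma_i^2\sigma_j^2}\le\sqrt{\obs/\dims}$ together with $\sum_{i,j}\tilde C_{ij}^2 = \dims$; the second sum is controlled by Cauchy--Schwarz together with Lemma~\ref{ppt:all_properties}\ref{item:theta_square_sigma_sum_bound}. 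Both contributions are of order $(\obs\dims)^{1/2}$. The constants $\sqrt{15}=\sqrt{5!!}$ and $\sqrt{105}=\sqrt{7!!}$ in~\eqref{eq:cube1}--\eqref{eq:fourth} come from the Gaussian even moments $\Exs[\rvg_1^6]=15$ and $\Exs[\rvg_1^8]=105$ that dominate the diagonal pairings. The sum in~\eqref{eq:cube2} is handled identically, using the projection identity on the midpoint directions; and for~\eqref{eq:fourth} the expectation $3\sum_i w_i\sigma_i^4\le 3\sqrt{\obs\dims}$ (by Lemma~\ref{ppt:all_properties}\ref{item:theta_square_sigma_sum_bound}) already has the desired scale, and the second moment is of the same order.

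Having shown $\|P\|_{L^2}\le M_k$ for each polynomial $P$ of degree $k\in\{2,3,4\}$, with $M_2=\sqrt 3\,\dims$, $M_3=\sqrt{15}(\obs\dims)^{1/4}$, and $M_4=\sqrt{105}(\obs\dims)^{1/2}$, I apply Gaussian hypercontractivity: for any polynomial of degree $k$ in a standard Gaussian vector, $\|P\|_{L^q}\le (q-1)^{k/2}\|P\|_{L^2}$ for every $q\ge 2$. Markov's inequality then gives $\Prob[|P|\ge tM_k]\le ((q-1)^{k/2}/t)^q$, and optimizing with $q-1=(t/e)^{2/k}$ yields
$\Prob[|P|\ge tM_k] \le \exp\!\bigl(-\tfrac{k}{2e}\,t^{2/k}\bigr)$.
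Setting the right-hand side to $\epsilon/4$ and solving recovers exactly $t = \bigl(\tfrac{2e}{k}\log(4/\epsilon)\bigr)^{k/2} = \tailconst_k$, which matches the four claimed tail probabilities.

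The main obstacle will be the variance computation for the cubic case~\eqref{eq:cube1}--\eqref{eq:cube2}: the Isserlis expansion produces mixed terms $c_{ij}^3$ whose sign cannot be controlled, so one cannot simply take absolute values and invoke the projection trace identity directly. The crux is the rewriting $w_iw_j c_{ij}^3 = \tilde C_{ij}^2\cdot c_{ij}$, which turns a third-moment sum into the trace-type quantity $\sum_{i,j}\tilde C_{ij}^2=\dims$ times a factor of order $(\obs/\dims)^{1/2}$ obtained from the slack-sensitivity bound. Once this reduction is in place, the remaining steps for all four inequalities are routine moment calculations combined with the hypercontractive Markov argument outlined above.
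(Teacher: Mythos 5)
Your proposal follows essentially the same route as the paper's proof: compute $\Exs[P^2]$ for each Gaussian polynomial $P$ via Isserlis' theorem and the projection/slack-sensitivity identities of Lemma~\ref{lemma:first_bounds} and Lemma~\ref{ppt:all_properties}, then apply the degree-$k$ polynomial concentration inequality $\Prob[|P|\ge t(\Exs P^2)^{1/2}]\le\exp(-\tfrac{k}{2e}t^{2/k})$. The paper cites this tail bound as Theorem~6.7 of Janson (1997), while you re-derive it from hypercontractivity plus Markov; both are fine. Your observation that $\sum_i(\levvaidya_{x,i}+\vaidyabeta)(\hat a_{x,i}^\top\rvg)^2 = \|\rvg\|_2^2\sim\chi^2_\dims$ is a clean shortcut for~\eqref{eq:quadratic} (the paper computes the same second moment by expansion), and your rewriting $w_iw_jc_{ij}^3 = \tilde C_{ij}^2c_{ij}$ together with $\trace(\tilde C^2)=\dims$ is the same computation the paper performs for $N_2$ using $\localslackvaidya_{x,i,j}^2$. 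One small slip: in the optimization of the Markov bound over $q$, the choice $q-1=(t/e)^{2/k}$ gives $\exp(-1-(t/e)^{2/k})$, which matches $\exp(-\tfrac{k}{2e}t^{2/k})$ only when $k=2$; the correct choice is $q-1=t^{2/k}/e$, which yields $e^{-k/2}\exp(-\tfrac{k}{2e}t^{2/k})$ for all $k$. Also note that the $N_1$-type cross term $\sum_{i,j}w_iw_j\sigma_i^2\sigma_j^2c_{ij}=\|\tilde A^\top v\|_2^2$ is bounded by the \emph{projection} inequality $\|\tilde A^\top v\|_2^2\le\|v\|_2^2$ rather than Cauchy--Schwarz; a naive Cauchy--Schwarz on $c_{ij}$ gives only $\dims\sqrt{\obs\dims}$, which would be too weak. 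Since you set up the projection $\tilde C=\tilde A\tilde A^\top$ explicitly at the outset, this is an issue of attribution, not of substance.
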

\noindent See
Section~\ref{sub:proof_of_lemma_lemma:gaussian_moment_bounds} for the
proof of these claims.

Now we summarize the final ingredients needed for our proofs.  Recall
that the Gaussian proposal $z$ is related to the current state $x$ via
the equation
\begin{align}
 \label{eq:z_x_relation2}
  z \stackrel{d}{=} x + \frac{r}{(\obs\dims)^{1/4}}
  \vaidya_x^{-{1}/{2}} \rvg,
\end{align}
where $\rvg \sim \NORMAL\parenth{0, \Ind_\dims}$.  We also use the
following elementary inequalities:
\begin{align}
&\text{Cauchy-Schwarz inequality:} &\vert u^\top v\vert &\leq
  \vecnorm{u}{2} \vecnorm{v}{2} \tag{C-S}\\
&\text{AM-GM inequality:} &\inverserate \kappa &\leq \frac{1}{2}(\inverserate^2 +
  \kappa^2). \tag{AM-GM} \\
&\text{Sum of squares inequality:} & \frac{1}{2} \vecnorm{a+b}{2}^2 &\leq
    \vecnorm{a}{2}^2 + \vecnorm{b}{2}^2 , \tag{SSI}
\end{align}
Note that the sum-of-squares inequality is simply a vectorized version
of the AM-GM inequality.  With these tools, we turn to the proof of
Lemma~\ref{lemma:change_in_log_det_and_local_norm}.  We split our
analysis into parts.


\subsection{Proof of claim~\texorpdfstring{\eqref{eq:vaidya_whp_log_det_filter}}{vaidyawhp}}
\label{sub:proof_of_claim_eq:vaidya_whp_log_det_filter}
Using the second degree Taylor expansion, we have
\begin{align*}
    \logdetvaidya_z - \logdetvaidya_x
    = \parenth{z - x}^\top \gradlogdetV_x + \frac{1}{2} \parenth{z - x}^\top \hesslogdetV_y \parenth{z - x},
    \quad \mbox{for some $y \in \overline{xz}$}.
\end{align*}
We claim that for $r\leq \vaidyaradiusbound(\epsilon)$, we have
\begin{subequations}
  \begin{align}
    \label{eq:gradient_L_tail}
        \Prob_z \brackets{ \parenth{z-x}^\top \gradlogdetV_x \geq -
          \epsilon/2} &\geq 1 - \epsilon/2, \quad \text{and} \\
\label{eq:hessian_L_tail}
\Prob_z \brackets{ {\frac{1}{2} \parenth{z-x} \hesslogdetV_y
    \parenth{z-x}} \geq -\epsilon/2} &\geq 1 - \epsilon/2.
        \end{align}
\end{subequations}
Note that the claim~\eqref{eq:vaidya_whp_log_det_filter} is a consequence of
these two auxiliary claims, which we now prove.


\subsubsection{Proof of bound~\eqref{eq:gradient_L_tail}}
\label{ssub:proof_of_bound_eq:gradient_l_tail}

Equation~\eqref{eq:z_x_relation2} implies that $\parenth{z - x}^\top
\gradlogdetV_x \sim \mathcal{N}\parenth{0, \frac{r^2}{\sqrt{\obs\dims}}
  \gradlogdetV_x^\top \vaidya_x^{-1} \gradlogdetV_x}$.  We claim that
\begin{align}
 \gradlogdetV_x^\top \vaidya_x^{-1} \gradlogdetV_x \leq 9 \sqrt{\obs\dims}
 \quad \text{for all } x \in \intP.
    \label{eq:variance_bound}
\end{align}
We prove this inequality at the end of this subsection.  Taking it as
given for now, let \mbox{$\rvg' \sim \NORMAL(0, 9r^2)$}.  Then using
inequality~\eqref{eq:variance_bound} and a standard Gaussian tail
bound, we find that
\begin{align*}
\Prob \brackets{ \parenth{z-x}^\top \gradlogdetV_x \geq - \omega} \geq
\Prob \brackets{ \rvg' \geq - \omega} \geq 1 -
\exp(-\omega^2/(18r^2)), \quad \mbox{valid for all $\omega \geq 0$}.
\end{align*}
Setting $\omega = \epsilon/2$ and noting that $\rparam \leq \frac{\epsilon}{\sqrt{18 \log(2/\epsilon)}}$
completes the claim.


\subsubsection{Proof of bound~\eqref{eq:hessian_L_tail}}
\label{ssub:proof_of_bound_eq:hessian_l_tail}

Let $\projD_{x, i} = \frac{a_i^\top \parenth{z-x}}{\slack_{x, i}} =
\frac{r}{\parenth{mn}^{\frac{1}{4}}} \hat{a}_{x,i}^\top \rvg$.  Using
Lemma~\ref{lemma:gradient_and_hessian_and_bounds}\ref{item:hessian_log_det_bound},
we have
\begin{align}
\abss{\frac{1}{2} \parenth{z-x}\tp \hesslogdetV_y \parenth{z-x}} &\leq 13
\sum_{i=1}^\obs \parenth{\levvaidya_{y, i} + \vaidyabeta}
\localslackvaidya_{y, i} \frac{\slack_{x, i}^2}{\slack_{y, i}^2}
\projD_{x, i}^2 + \frac{17}{2} \sum_{i,j=1}^\obs \levvaidya_{y, i, j}^2
\localslackvaidya_{y, i} \frac{\slack_{x, j}^2}{\slack_{y, j}^2}
\projD_{x, j}^2\notag\\
& \leq \frac{43}{2} \sqrt{\frac{\obs}{\dims}} \sum_{i=1}^\obs
\parenth{\levvaidya_{x, i} + \vaidyabeta}
\frac{\parenth{\levvaidya_{y, i} +
    \vaidyabeta}}{\parenth{\levvaidya_{x, i} + \vaidyabeta}}
\frac{\slack_{x, i}^2}{\slack_{y, i}^2} \projD_{x, i}^2.
    \label{eq:final_bound_hessian_log_det}
\end{align}
The last inequality comes from Lemma~\ref{lemma:first_bounds}\ref{item:theta_bound} and Lemma~\ref{ppt:all_properties}\ref{item:sigma_properties}.
Setting $\tau = 1.05$, we define the events $\mathcal{E}_1$ and
$\mathcal{E}_2$ as follows:
\begin{subequations}
  \begin{align}
\label{eq:s_whp_bound}
\mathcal{E}_1 &= \braces{\forall i \in[\obs],\, \frac{\slack_{x,
      i}}{\slack_{y, i}} \in \brackets{2-\tau, \tau} },\quad
\text{and}\\
\mathcal{E}_2 &= \braces{\forall i \in[\obs],\, \frac{\levvaidya_{x,
      i}}{\levvaidya_{y, i}} \in \brackets{0,
    \frac{\tau^2}{(2-\tau)^2}} }
\label{eq:sigma_whp_bound}.
  \end{align}
\end{subequations}
It is straightforward to see that $\mathcal{E}_1 \subseteq
\mathcal{E}_2$ following a similar argument we used to obtain
equation~\eqref{eq:sigma_closeness} in the proof of Lemma~\ref{lemma:close_hessian_eigenvalues}.
Since $\rparam \leq 1/\brackets{{20\parenth{1 + \sqrt{2}
    \log^{1/2}\parenth{\frac{4}{\epsilon}}}}}$, Lemma~\ref{lemma:whp_slackness}
implies that $\Prob\brackets{\mathcal{E}_1} \geq 1-\epsilon/4 $ whence
$\Prob\brackets{\mathcal{E}_2} \geq 1-\epsilon/4$.  Using these high
probability bounds and the setting $\tau = 1.05$, we obtain that with
probability at least $1-\epsilon/4$
\begin{align}
      \label{eq:with_tau}
\sqrt{\frac{\obs}{\dims}} \sum_{i=1}^\obs \parenth{\levvaidya_{x, i} +
  \vaidyabeta} \frac{\parenth{\levvaidya_{y, i} +
    \vaidyabeta}}{\parenth{\levvaidya_{x, i} + \vaidyabeta}}
\frac{\slack_{x, i}^2}{\slack_{y, i}^2} \projD_{x, i}^2 \leq
2\sqrt{\frac{\obs}{\dims}} \sum_{i=1}^\obs \parenth{\levvaidya_{x, i} +
  \vaidyabeta} \projD_{x, i}^2 =\frac{2r^2}{\dims} \sum_{i=1}^\obs
\parenth{\levvaidya_{x, i} + \vaidyabeta} (\hat{a}_{x,i}\tp \rvg)^2.
\end{align}
Applying the high probability bound Lemma~\ref{lemma:gaussian_moment_bounds}~\eqref{eq:quadratic} and the condition
\begin{align}
  \label{eq:hessian_l_tail_r_condition}
  r \leq \sqrt{\frac{\epsilon}{86\sqrt{3}\tailconst_2}},
\end{align}
we obtain that with probability at least $1-\epsilon/2$,
\begin{align*}
\frac{1}{2} \parenth{z-x}\tp \hesslogdetV_y \parenth{z-x} \geq -\epsilon/2,
\end{align*}
as claimed.


\subsubsection{Proof of bound~\eqref{eq:variance_bound}}
\label{ssub:proof_of_bound_eq:variance_bound}

We now return to prove our earlier
inequality~\eqref{eq:variance_bound}.  Using the expression for the
gradient $\gradlogdetV_x$ from
Lemma~\ref{lemma:gradient_and_hessian_and_bounds}\ref{item:gradient_L},
we have that for any vector $u \in \real^{n}$
\begin{align}
u^\top \gradlogdetV_x \gradlogdetV_x^\top u & = \angles{u, A_x\tp
  \parenth{2\vaidyalevmatrix_x - \vaidyaprojmatrix_x^{(2)} + \vaidyabeta
    \Ind} \localslackvaidya_x }^2 \notag \\
& = \angles{A_x u, \parenth{2\vaidyalevmatrix_x - \vaidyaprojmatrix_x^{(2)}
    + \vaidyabeta \Ind} \localslackvaidya_x }^2\notag \\
& = \angles{\parenth{\vaidyalevmatrix_x + \vaidyabeta
    \Ind}^{\frac{1}{2}} A_x u, \parenth{\vaidyalevmatrix_x +
    \vaidyabeta \Ind}^{-{1}/{2}}\parenth{2\vaidyalevmatrix_x -
    \vaidyaprojmatrix_x^{(2)} + \vaidyabeta \Ind} \localslackvaidya_x }^2
\notag\\
\label{eq:final_variance_bound}
& \leq u\tp \vaidya_x u \cdot \localslackvaidya_x^\top
\parenth{2\vaidyalevmatrix_x - \vaidyaprojmatrix_x^{(2)} + \vaidyabeta \Ind}
\parenth{\vaidyalevmatrix_x + \vaidyabeta
  \Ind}^{-1}\parenth{2\vaidyalevmatrix_x - \vaidyaprojmatrix_x^{(2)} +
  \vaidyabeta \Ind}\localslackvaidya_x
\end{align}
where the last step follows from the Cauchy-Schwarz inequality.  As a
consequence of
Lemma~\ref{ppt:all_properties}\ref{item:Sigma_dominates_P}, the matrix
$\vaidyalevmatrix_x - \vaidyaprojmatrix_x^{(2)}$ is PSD.
Thus, we have
\begin{align*}
0 \preceq 2 \vaidyalevmatrix_x - \vaidyaprojmatrix_x^{(2)} +
\vaidyabeta \Ind \preceq 3 \parenth{\vaidyalevmatrix_x +
  \vaidyabeta\Ind}.
\end{align*}
Consequently, we find that
\begin{align*}
    0 \preceq \underbrace{\parenth{3 \vaidyalevmatrix_x + 3
        \vaidyabeta\Ind}^{-{1}/{2}} \parenth{2 \vaidyalevmatrix_x -
        \vaidyaprojmatrix_x^{(2)} + \vaidyabeta \Ind} \parenth{3
        \vaidyalevmatrix_x + 3\vaidyabeta\Ind}^{-{1}/{2}}}_{=:{L}}
    \preceq \Ind.
\end{align*}
We deduce that all eigenvalues of the matrix ${L}$ lie in the interval $[0, 1]$
and hence all the eigenvalues of the matrix ${L}^2$
belong to the interval $[0, 1]$. As a result, we have
\begin{align*}
\parenth{2 \vaidyalevmatrix_x - \vaidyaprojmatrix_x^{(2)} + \vaidyabeta
  \Ind} \parenth{3 \vaidyalevmatrix_x + 3\vaidyabeta\Ind}^{-1}
\parenth{2 \vaidyalevmatrix_x - \vaidyaprojmatrix_x^{(2)} + \vaidyabeta
  \Ind} \preceq \parenth{3 \vaidyalevmatrix_x + 3\vaidyabeta\Ind}.
\end{align*}
Thus, we obtain
\begin{align}
   \label{eq:final_bound_variance_2}
  \localslackvaidya_x^\top \parenth{2\vaidyalevmatrix_x -
    \vaidyaprojmatrix_x^{(2)} + \vaidyabeta \Ind}
  \parenth{\vaidyalevmatrix_x + \vaidyabeta
    \Ind}^{-1}\parenth{2\vaidyalevmatrix_x - \vaidyaprojmatrix_x^{(2)} +
    \vaidyabeta \Ind}\localslackvaidya_x \leq 9
  \localslackvaidya_x^\top \parenth{\vaidyalevmatrix_x +
    \vaidyabeta\Ind} \localslackvaidya_x.
\end{align}
Finally, applying Lemma~\ref{ppt:all_properties} and combining
bounds~\eqref{eq:final_variance_bound} and
\eqref{eq:final_bound_variance_2} yields the claim.


\subsection{Proof of claim~\eqref{eq:vaidya_whp_local_norm}}
\label{sub:proof_of_claim_eq:vaidya_whp_local_norm}

The quantity of interest can be written as
\begin{align*}
\vecnorm{z-x}{z}^2 - \vecnorm{z-x}{x}^2 = \sum_{i=1}^\obs
\parenth{a_i^\top \parenth{z- x}}^2 \parenth{\vaidyaphi_{z, i} - \vaidyaphi_{x,
    i}}.
\end{align*}
We can write $z = x + \alpha u$, where $\alpha$ is a scalar and $u$ is
a unit vector in \realdim.  Then we have
\begin{align*}
\vecnorm{z-x}{z}^2 - \vecnorm{z-x}{x}^2 = \alpha^2 \sum_{i=1}^\obs
\parenth{a_i^\top u}^2 \parenth{\vaidyaphi_{z, i} - \vaidyaphi_{x, i}}.
\end{align*}
We apply a Taylor series expansion for $\sum_{i=1}^\obs
\parenth{a_i^\top u}^2 \parenth{\vaidyaphi_{z, i} - \vaidyaphi_{x,
    i}}$ around the point $x$, along the line $u$.  There exists a
point $y \in \overline{xz}$ such that
\begin{align*}
\sum_{i=1}^\obs \parenth{a_i^\top u}^2 \parenth{\vaidyaphi_{z, i} -
  \vaidyaphi_{x, i}} = \sum_{i=1}^\obs \parenth{a_i^\top u}^2 \parenth{
  \parenth{z - x}^\top \nabla \vaidyaphi_{x, i}+ \frac{1}{2}
  \parenth{z-x}^\top \nabla^2 \vaidyaphi_{y, i} \parenth{z-x}}.
\end{align*}
Multiplying both sides by $\alpha^2$, and using the shorthand
$\projD_{x, i} = \frac{a_i^\top (z-x)}{\slack_{x,i}}$, we obtain
\begin{align}
  \label{eq:vaidya_z_x_two_terms}
\vecnorm{z \!-\! x}{z}^2 \!-\! \vecnorm{z\!-\!x}{x}^2 &=
\sum_{i=1}^\obs \projD_{x, i}^2 \slack_{x, i}^2 \parenth{z\!-\!x}^\top
\nabla \vaidyaphi_{x, i} + \sum_{i=1}^\obs \projD_{x, i}^2 \slack_{x, i}^2
\frac{1}{2} \parenth{z\!-\!x}^\top \nabla^2 \vaidyaphi_{y, i}
\parenth{z\!-\!x}.
\end{align}
Substituting the expression for $\nabla \vaidyaphi_{x, i}$ from
Lemma~\ref{lemma:gradient_and_hessian_and_bounds}\ref{item:gradient_beta}
in equation~\eqref{eq:vaidya_z_x_two_terms} and performing some algebra, the
first term on the RHS of equation~\eqref{eq:vaidya_z_x_two_terms} can be
written as
\begin{align}
\sum_{i=1}^\obs \projD_{x, i}^2 \slack_{x, i}^2 (z-x)^\top \nabla
\vaidyaphi_{x, i} = 2 \sum_{i=1}^\obs \parenth{\frac{7}{3}\levvaidya_{x, i}
  + \vaidyabeta}\projD_{x, i}^3 - \frac{1}{3}\sum_{i, j=1}^\obs
\levvaidya_{x, i, j}^2 \parenth{\projD_{x, i}+\projD_{x, j}}^3.
 \label{eq:z_x_first_term}
\end{align}
On the other hand, using Lemma~\ref{lemma:gradient_and_hessian_and_bounds}~\ref{item:beta_hessian_bound}, we have
\begin{align}
\label{eq:hessian_beta_first_bound}
\frac{1}{2} \slack_{x, i}^2 \abss{\parenth{z-x}^\top \nabla^2
  \vaidyaphi_{y, i} \parenth{z-x}} &\leq \frac{\slack_{x, i}^2}{\slack_{y,
    i}^2} \brackets{14 \parenth{\levvaidya_{y, i} + \vaidyabeta}
  \frac{\slack_{x, i}^2}{\slack_{y, i}^2} \projD_{x, i}^2 + 11
  \parenth{\sum_{j=1}^\obs \levvaidya_{y, i, j}^2 \projD_{x, j}^2
    \frac{\slack_{x, j}^2}{\slack_{y, j}^2}}}.
\end{align}
Now, we use a fourth degree Gaussian polynomial to bound both the
terms on the RHS of inequality~\eqref{eq:hessian_beta_first_bound}.
To do so, we use high probability bound for $\slack_{x, i}/\slack_{y,
  i}$.  In particular, we use the high probability bounds for the
events $\mathcal{E}_1$ and $\mathcal{E}_2$ defined in
equations~\eqref{eq:s_whp_bound} and \eqref{eq:sigma_whp_bound}.
Multiplying both sides of
inequality~\eqref{eq:hessian_beta_first_bound} by $\projD_{x, i}^2$
and summing over the index $i$, we obtain that with probability at
least $1-\epsilon/4$, we have
\begin{align}
  \sum_{i=1}^\obs \projD_{x, i}^2 \slack_{x, i}^2 \abss{\frac{1}{2}
    \parenth{z-x}^\top \nabla^2 \vaidyaphi_{y, i} \parenth{z-x}}
  &\stackrel{\mathmakebox[\widthof{====}]{ }}{\leq} \brackets{14
    \sum_{i=1}^\obs \parenth{\levvaidya_{y, i} + \vaidyabeta}
    \frac{\slack_{x, i}^4}{\slack_{y, i}^4} \projD_{x, i}^4 + 11
         {\sum_{i, j=1}^\obs \levvaidya_{y, i, j}^2 \projD_{x,
             i}^2\projD_{x, j}^2 \frac{\slack_{x, i}^2\slack_{x,
               j}^2}{\slack_{y, i}^2\slack_{y, j}^2}}} \notag \\
&
  \stackrel{\mathmakebox[\widthof{====}]{(\mathrm{hpb.}\eqref{eq:s_whp_bound})
  }}{\leq} \tau^4 \bigg[14 \sum_{i=1}^\obs\parenth{\levvaidya_{y, i} +
      \vaidyabeta} \projD_{x, i}^4 + 11 \sum_{i, j=1}^\obs
    \levvaidya_{y, i, j}^2 \projD_{x, i}^2\projD_{x, j}^2 \bigg]
  \notag \\
& \stackrel{\mathmakebox[\widthof{====}]{ (\mathrm{AM-GM}) }}{\leq}
  \tau^4 \bigg[14 \sum_{i=1}^\obs\parenth{\levvaidya_{y, i} +
      \vaidyabeta} \projD_{x, i}^4 + \frac{11}{2} \sum_{i, j=1}^\obs
    \levvaidya_{y, i, j}^2 (\projD_{x, i}^4+\projD_{x, j}^4) \bigg]
  \notag \\
  & \stackrel{\mathmakebox[\widthof{====}]{
      (\mathrm{Lem.}~\ref{ppt:all_properties}\ref{item:sigma_properties})
  }}{\leq} 25 \tau^4\sum_{i=1}^\obs \parenth{\levvaidya_{y, i} +
    \vaidyabeta} \projD_{x, i}^4 \notag\\
& \stackrel{\mathmakebox[\widthof{====}]{(\mathrm{hpb.}
      \eqref{eq:sigma_whp_bound})}}{\leq} 50 \sum_{i=1}^\obs
  \parenth{\levvaidya_{x, i} + \vaidyabeta} \projD_{x, i}^4,
        \label{eq:z_x_second_term}
\end{align}
where ``hpb'' stands for high probability bound for events
$\mathcal{E}_1$ and $\mathcal{E}_2$.  In the last step, we have used
the fact that $\tau^6/(2-\tau)^2 \leq 2$ for $\tau = 1.05$.  Combining
equations~\eqref{eq:vaidya_z_x_two_terms}, \eqref{eq:z_x_first_term} and
\eqref{eq:z_x_second_term} and noting that $\projD_{x, i}= r \hat
a_i\tp \rvg/(\obs\dims)^{1/4}$, we find that
\begin{align}
\abss{\vecnorm{z - x}{z}^2 - \vecnorm{z-x}{x}^2} &\leq \frac{14}{3}
\abss{\sum_{i=1}^\obs \parenth{\levvaidya_{x, i} +
    \vaidyabeta}\projD_{x, i}^3} + \frac{8}{3} \abss{\sum_{i,
    j=1}^\obs \levvaidya_{x, i, j}^2 \parenth{\parenth{\projD_{x,
        i}+\projD_{x, j}}/2}^3} + 38 \sum_{i=1}^\obs \levvaidya_{x, i}
\projD_{x, i}^4 \notag \\
& \leq \frac{14}{3} \frac{r^3}{(\obs\dims)^{3/4}}\abss{\sum_{i=1}^\obs
  \parenth{\levvaidya_{x, i}+\vaidyabeta} \parenth{\hat{a}_{x,i}^\top
    \rvg}^3} + \frac{8}{3} \frac{r^3}{(\obs\dims)^{3/4}}
\abss{\sum_{i, j=1}^\obs \levvaidya_{x, i, j}^2
  \parenth{\frac{1}{2}(\hat{a}_{x,i} + \hat{a}_{x,j})\tp\rvg}^3} \notag
\\ &\quad + 50\, \frac{r^4}{\obs\dims} \sum_{i=1}^\obs
\parenth{\levvaidya_{x, i}+\vaidyabeta} (\hat{a}_{x,i}\tp \rvg)^4,
    \label{eq:gaussian_polynomials}
\end{align}
where the last step follows from the fact that $0\leq \levvaidya_{x,
  i} \leq \levvaidya_{x, i} + \vaidyabeta$.  In order to show that
$\abss{\vecnorm{z - x}{z}^2 - \vecnorm{z-x}{x}^2}$ is bounded as
$\order{1/\sqrt{\obs\dims}}$ with high probability, it suffices to
show that with high probability, the third and fourth degree
polynomials of $\hat{a}_{x,i}\tp\rvg$, that appear in
bound~\eqref{eq:gaussian_polynomials}, are bounded by
$\order{(\obs\dims)^{1/4}}$ and $\order{\sqrt{\obs\dims}}$
respectively.

Applying the bounds~\eqref{eq:cube1},~\eqref{eq:cube2} and~\eqref{eq:fourth} from Lemma~\ref{lemma:gaussian_moment_bounds}, we have with probability at least $1-\epsilon$,

\begin{align*}
  \vecnorm{z - x}{z}^2 - \vecnorm{z-x}{x}^2 \leq \frac{r^3}{\sqrt{\obs \dims}} \parenth{\frac{22\sqrt{15}\tailconst_3}{3}} + \frac{r^4}{\sqrt{\obs \dims}} \parenth{50\sqrt{105}\tailconst_4}.
\end{align*}

Using the condition
\begin{align}
  \label{eq:local_norm_tail_r_condition}
  r \leq \min \braces{\frac{\epsilon}{22\sqrt{5/3}\tailconst_3}, \sqrt{\frac{\epsilon}{50\sqrt{105}\tailconst_4}}},
\end{align}
completes our proof of claim~\eqref{eq:vaidya_whp_local_norm}.

\subsection{Proof of Lemma~\ref{lemma:whp_slackness}} 
\label{sub:proof_of_lemma_lemma:whp_slackness}

The proof is based on Lemma~\ref{lemma:closeness_of_slackness} and a simple
application of the standard chi-square tail bounds. According to Lemma~\ref{lemma:closeness_of_slackness}, we have that for $v \in \overline{xz}$,
\begin{align*}
  \abss{1 - \frac{\slack_{v, i}}{\slack_{x, i}}} \leq \parenth{\frac{\obs}{\dims}}^{\frac{1}{4}} \vecnorm{x - v}{x} \leq \parenth{\frac{\obs}{\dims}}^{\frac{1}{4}} \vecnorm{x - z}{x}.
\end{align*}
According to equation~\eqref{eq:z_x_relation2}, the proposal follows Gaussian distribution
\begin{align*}
  \parenth{\frac{\obs}{\dims}}^{\frac{1}{4}} \vecnorm{x - z}{x} = \frac{\rparam}{\dims^{1/2}} \vecnorm{\rvg}{2},
\end{align*}
where $\rvg \sim \NORMAL\parenth{0, \Ind_\dims}$.
Using the standard chi-square tail bound we have that for $\delta > 0$,
\begin{align*}
  \Prob\brackets{\vecnorm{\rvg}{2}/\sqrt{\dims} \geq 1 + \delta} \leq \exp\parenth{- \dims\delta^2/2}.
\end{align*}
Plugging in $\delta = \sqrt{\frac{2}{\dims}}\log^{\frac{1}{2}}\parenth{\frac{4}{\epsilon}}$ concludes the lemma.



\subsection{Proof of Lemma~\ref{lemma:gaussian_moment_bounds}}
\label{sub:proof_of_lemma_lemma:gaussian_moment_bounds}

The proof relies on the classical fact that the tails of a polynomial
in Gaussian random variables decay exponentially independently of
dimension.  In particular, Theorem 6.7 by \cite{janson1997gaussian} ensures that for any integers $\dims,
\degree \geq 1$, any polynomial $\MYPOLY:\realdim \rightarrow \real$
of degree $\degree$, and any scalar $t \geq (2e)^{{\degree}/{2}}$, we
have
\begin{align}
  \label{vaidya_EqnJansonBound}
    \Prob\brackets{\abss{\MYPOLY(\rvg)} \geq t
      \parenth{\Exs{\MYPOLY(\rvg)}^2}^{\frac{1}{2}}} \leq
    \exp\parenth{- \frac{\degree}{2e} t^{{2}/{\degree}}},
\end{align}
where $\rvg \sim \NORMAL(0, \Ind_n)$ denotes a standard Gaussian
vector in $n$ dimensions.
Also, the following observations on the behavior of the vectors $\hat{a}_
{x,i}$ defined in equation~\eqref{eq:hat_a_i} are useful:
\begin{subequations}
  \begin{align}
	\vecnorm{\hat{a}_{x,i}}{2}^2 &= \localslackvaidya_{x, i}
        \stackrel{(i)}{\leq}\sqrt{\frac{\obs}{\dims}} \quad \text{for all
        } i \in
                 [\obs], \quad \text{and} \label{eq:hat_a_i_norm}\\
  (\hat{a}_{x,i}^\top \hat{a}_{x,j})^2 &= \localslackvaidya^2_{x, i, j} \quad
                 \text{for all } i, j \in [\obs], \label{eq:hat_a_i_j}
	\end{align}
  where inequality~(i) follows from Lemma~\ref{lemma:first_bounds}
  \ref{item:theta_bound}.
\end{subequations}


\subsubsection{Proof of bound~\eqref{eq:quadratic}}
\label{ssub:proof_of_part_item:quadratic}
We have
\begin{align*}
  \Exs \parenth{\sum_{i=1}^\obs \parenth{\levvaidya_{x,
        i}+\vaidyabeta} \parenth{\hat{a}_{x,i}^\top \rvg}^2}^2 &=
  \sum_{i,j=1}^\obs \parenth{\levvaidya_{x, i} +
    \vaidyabeta}\parenth{\levvaidya_{x, j}+\vaidyabeta}\Exs
  \parenth{\hat{a}_{x,i}^\top \rvg}^2 \parenth{\hat{a}_{x,j}^\top \rvg}^2 \\
& = \sum_{i,j=1}^\obs \parenth{\levvaidya_{x,
      i}+\vaidyabeta}\parenth{\levvaidya_{x, j}+\vaidyabeta}\parenth{
    \vecnorm{\hat{a}_{x,i}}{2}^2\vecnorm{\hat{a}_{x,j}}{2}^2 + 2
    \parenth{\hat{a}_{x,i}^\top \hat{a}_{x,j}}^2 } \\
& = \sum_{i,j=1}^\obs \parenth{\levvaidya_{x,
      i}+\vaidyabeta}\parenth{\levvaidya_{x, j}+\vaidyabeta}
  \parenth{\localslackvaidya_{x, i}\localslackvaidya_{x, j} + 2
    \localslackvaidya_{x, i, j}^2 } \\
& \stackrel{(i)}{=} \dims^2 + 2 \dims \\ & \leq 3 \dims^2,
\end{align*}
where step~(i) follows from properties~\ref{item:theta_sum}
and~\ref{item:theta_properties} from Lemma~\ref{ppt:all_properties}.
Applying the bound~\eqref{vaidya_EqnJansonBound} with $\degree = 2, t =
e\log(\frac{4}{\epsilon})$ yields the claim. We verify that for $\epsilon \in (0, \vaidyaepsilonconst]$, $t \geq 2e$.


\subsubsection{Proof of bound~\eqref{eq:cube1}}
\label{ssub:proof_of_part_item:cube1}

Using Isserlis' theorem~\citep{isserlis1918formula} for Gaussian
moments, we obtain
\begin{align}
\Exs\parenth{\sum_{i=1}^\obs \parenth{\levvaidya_{x, i}+\vaidyabeta}
  \parenth{\hat{a}_{x,i}^\top \rvg }^3}^2 &= \sum_{i,j=1}^\obs
\parenth{\levvaidya_{x, i}+\vaidyabeta}\parenth{\levvaidya_{x,
    i}+\vaidyabeta} \Exs\parenth{\hat{a}_{x,i}^\top \rvg}^3
\parenth{\hat{a}_{x,j}^\top \rvg}^3 \notag \\
& = 9 \underbrace{\sum_{i,j=1}^\obs \parenth{\levvaidya_{x,
      i}+\vaidyabeta}\parenth{\levvaidya_{x,
      j}+\vaidyabeta}\vecnorm{\hat{a}_{x,i}}{2}^2 \vecnorm{\hat{a}_{x,j}}{2}^2
  \parenth{\hat{a}_{x,i}^\top \hat{a}_{x,j}}}_{=:N_1}\notag \\
& \quad + 6 \underbrace{\sum_{i,j=1}^\obs \parenth{\levvaidya_{x,
      i}+\vaidyabeta}\parenth{\levvaidya_{x,
      j}+\vaidyabeta}\parenth{\hat{a}_{x,i}^\top \hat{a}_{x,j}}^3}_{=:N_2}.
	\label{eq:decomp_third_moment}
\end{align}
We claim that the two terms in this sum are bounded as $N_1 \leq
\sqrt{\obs\dims}$ and $N_2 \leq \sqrt{\obs\dims}$.  Assuming the
claims as given, we now complete the proof.  Plugging in the bounds
for $N_1$ and $N_2$ in equation~\eqref{eq:decomp_third_moment} we find
that $\Exs\big(\sum_{i=1}^\obs \parenth{\levvaidya_{x, i}+\vaidyabeta}
\big(\hat{a}_{x,i}^\top \rvg \big)^3\big)^2 \leq 15\sqrt{\obs\dims}$.
Applying the bound~\eqref{vaidya_EqnJansonBound} with $\degree = 3, t =
\parenth{\frac{2e}{3}\log (4/\epsilon)}^{3/2}$ yields the claim. We also verify that for $\epsilon \in (0, \vaidyaepsilonconst]$, $t \geq \parenth{2e}^{3/2}$.  We now turn to proving the bounds on $N_1$ and
$N_2$.


\paragraph{Bounding $N_1$:}
\label{par:bounding_n1}
Let $B$ be an $\obs \times \dims $ matrix with its $i$-th row given by
$\sqrt{\parenth{\levvaidya_{x, i}+\vaidyabeta}} \hat{a}_{x,i}^\top$.
Observe that
\begin{align}
	\sum_{i=1}^\obs \parenth{\levvaidya_{x, i}+\vaidyabeta}\hat
        a_i \hat{a}_{x,i}\tp = \vaidya_x^{-1/2} \parenth{\sum_{i=1}^\obs
          \parenth{\levvaidya_{x, i}+\vaidyabeta} \frac{a_i
            a_i\tp}{\slack_{x, i}^2}}\vaidya_x^{-1/2} =
        \vaidya_x^{-1/2} \vaidya_x \vaidya_x^{-1/2} = \Ind_\dims.
	\label{eq:hat_a_hat_a_Ind}
\end{align}
Thus we have $B^\top B = \Ind_\dims $, which implies that $BB^\top$ is
an orthogonal projection matrix.  Letting $v \in \real^\obs $ be a
vector such that $v_i = \sqrt{\parenth{\levvaidya_{x,
      i}+\vaidyabeta}}\vecnorm{\hat{a}_{x,i}}{2}^2$, we then have
\begin{align*}
  \sum_{i,j=1}^\obs \!\parenth{\levvaidya_{x,
      i}\!+\!\vaidyabeta}\!{\vecnorm{\hat{a}_{x,i}}{2}^2 \hat{a}_{x,i}^\top
    \parenth{\levvaidya_{x, j}\!+\!\vaidyabeta}\!
    \vecnorm{\hat{a}_{x,j}}{2}^2 \hat{a}_{x,j}} &= \!\vecnorm{\sum_{i=1}^\obs
    \! \parenth{\levvaidya_{x,
        i}\!+\!\vaidyabeta}\!\vecnorm{\hat{a}_{x,i}}{2}^2 \hat{a}_{x,i} }{2}^2
  = \vecnorm{B^\top v}{2}^2 \stackrel{(i)}{\leq} \vecnorm{v}{2}^2,
\end{align*}
where inequality $(i)$ follows from the fact that $v\tp P v \leq
\vecnorm{v}{2}^2$ for any orthogonal projection matrix $P$.
Equation~\eqref{eq:hat_a_i_norm} implies that $v_i^2 =
\parenth{\levvaidya_{x, i} + \vaidyabeta}\localslackvaidya_{x, i}^2$.
Using
Lemma~\ref{ppt:all_properties}\ref{item:theta_square_sigma_sum_bound},
we find that
\begin{align*}
	\vecnorm{v}{2}^2 = \sum_{i=1}^\obs \parenth{\levvaidya_{x, i}
          + \vaidyabeta} \localslackvaidya_{x, i}^2 \leq \sqrt{\obs
          \dims}.
\end{align*}


\paragraph{Bounding $N_2$:}
\label{par:bounding_n2}
We see that
\begin{align*}
  \sum_{i,j=1}^\obs \parenth{\levvaidya_{x,
      i}+\vaidyabeta}\parenth{\levvaidya_{x, j}+\vaidyabeta}
  \parenth{\hat{a}_{x,i}^\top \hat{a}_{x,j}}^3
  &\stackrel{\mathmakebox[\widthof{=======}]{ (\mathrm{C-S}) }}{\leq}
  \sum_{i,j=1}^\obs \parenth{\levvaidya_{x,
      i}+\vaidyabeta}\parenth{\levvaidya_{x, j}+\vaidyabeta}
  \parenth{\hat{a}_{x,i}^\top \hat{a}_{x,j}}^2
  \vecnorm{\hat{a}_{x,i}}{2}\vecnorm{\hat{a}_{x,j}}{2} \\
& \stackrel{\mathmakebox[\widthof{=======}]{
      (\mathrm{eqns.}\eqref{eq:hat_a_i_norm},\eqref{eq:hat_a_i_j})
  }}{\leq} \sum_{i,j=1}^\obs \parenth{\levvaidya_{x,
      i}+\vaidyabeta}\parenth{\levvaidya_{x, j}+\vaidyabeta}
  \localslackvaidya_{x, i, j}^2 \sqrt{\localslackvaidya_{x,
      i}\localslackvaidya_{x, j}} \\
& \stackrel{\mathmakebox[\widthof{=======}]{
      (\mathrm{Lem.}~\ref{lemma:first_bounds}\ref{item:theta_bound})
  }}{\leq} \sqrt{\frac{\obs }{\dims}} \sum_{i,j=1}^\obs
  \parenth{\levvaidya_{x, i}+\vaidyabeta}\parenth{\levvaidya_{x,
      j}+\vaidyabeta}\localslackvaidya_{x, i, j}^2.
\end{align*}
We now apply Lemma~\ref{ppt:all_properties}\ref{item:theta_properties}
followed by Lemma~\ref{ppt:all_properties}\ref{item:theta_sum} to
obtain the claimed bound on $N_2$.


\subsubsection{Proof of bound~\eqref{eq:cube2}}
\label{ssub:proof_of_part_item:cube2}

Let $c_{i, j} = \displaystyle\frac{\parenth{\hat{a}_{x,i}+\hat{a}_{x,j}}}{2}$
for $i, j \in [\obs]$. Using Isserlis' theorem for Gaussian moments,
we obtain
\begin{align*}
  \Exs\parenth{\sum_{i,j=1}^\obs \levvaidya_{x, i, j}^2
    \parenth{c_{i,j}^\top \rvg }^3}^2 &= \sum_{i,j,k,l=1}^\obs
  \levvaidya_{x, i, j}^2\levvaidya_{x, k, l}^2
  \Exs\parenth{c_{i,j}^\top \rvg }^3 \parenth{c_{k,l}^\top \rvg }^3 \\
& = 9 \underbrace{\sum_{i,j,k,l=1}^\obs \levvaidya_{x, i,
      j}^2\levvaidya_{x, k, l}^2 {\vecnorm{c_{i,j}}{2}^2
      \vecnorm{c_{k,l}}{2}^2 \parenth{c_{i,j}^\top c_{k,l}}}}_{= : \;
    C_1} + 6 \underbrace{\sum_{i,j,k,l=1}^\obs \levvaidya_{x, i,
      j}^2\levvaidya_{x, k, l}^2 {\parenth{c_{i,j}^\top
        c_{k,l}}^3}}_{= : \; C_2}
\end{align*}
We claim that $C_1 \leq \sqrt{\obs\dims}$ and $C_2 \leq
\sqrt{\obs\dims}$.  Assuming the claims as given, the result follows
using similar arguments as in the previous part.  We now bound $C_i,
i=1, 2$, using arguments similar to the ones used in
Section~\ref{ssub:proof_of_part_item:cube1} to bound $N_i, i=1, 2$,
respectively.  The following bounds on $\vecnorm{c_{i,j}}{2}^2$ are
used in the arguments that follow:
\begin{subequations}
	\begin{align}
		\vecnorm{c_{i,j}}{2}^2
                \stackrel{\mathmakebox[\widthof{====}]{ \mathrm{SSI}
                }}{\leq} \frac{1}{2}\parenth{\vecnorm{\hat
                    a_{i}}{2}^2+\vecnorm{\hat a_{j}}{2}^2}
                &\stackrel{\mathmakebox[\widthof{====}]{}}{=}
                \frac{1}{2}\parenth{\localslackvaidya_{x,
                    i}+\localslackvaidya_{x, j} }
			\label{eq:norm_c_theta}\\
		&\stackrel{\mathmakebox[\widthof{====}]{
                            \mathrm{Lem.}~\ref{lemma:first_bounds}\ref{item:theta_bound}
                        }}{\leq} \sqrt{\frac{\obs}{\dims}}.
			\label{eq:norm_c}
	\end{align}
\end{subequations}


\paragraph{Bounding $C_1$:}
\label{par:bounding_c1}

Let $B$ be the same $\obs \times \dims $ matrix as in the proof of
previous part with its $i$-th row given by
$\sqrt{\parenth{\levvaidya_{x, i} + \vaidyabeta}} \hat{a}_{x,i}^\top$.
Define the vector $u \in \realdim$ with entries given by \mbox{$u_i =
  {\sum_{j=1}^\obs \levvaidya_{x, i, j}^2
    \vecnorm{c_{i,j}}{2}^2}/{{\parenth{\levvaidya_{x, i} +
        \vaidyabeta}^{1/2}}}$.} We have
\begin{align*}
  \sum_{i,j,k,l=1}^\obs \levvaidya_{x, i, j}^2\levvaidya_{x, k, l}^2
  \vecnorm{c_{i,j}}{2}^2 \vecnorm{c_{k,l}}{2}^2 \parenth{c_{i,j}^\top
    c_{k,l}} &\stackrel{\mathmakebox[\widthof{==}]{}} {\leq}
  \vecnorm{\sum_{i,j=1}^\obs \levvaidya_{x, i, j}^2
    \vecnorm{c_{i,j}}{2}^2 c_{i,j} }{2}^2 \\
  & \stackrel{\mathmakebox[\widthof{==}]{ (\mathrm{SSI}) }}{\leq}
  \frac{1}{2} \parenth{\vecnorm{\sum_{i,j=1}^\obs \levvaidya_{x, i,
        j}^2 \vecnorm{c_{i,j}}{2}^2 \hat{a}_{x,i} }{2}^2 +
    \vecnorm{\sum_{i,j=1}^\obs \levvaidya_{x, i, j}^2
      \vecnorm{c_{i,j}}{2}^2 \hat{a}_{x,j} }{2}^2 } \\
& \stackrel{\mathmakebox[\widthof{==}]{}} {=} \vecnorm{B\tp u}{2}^2 \\
& \stackrel{\mathmakebox[\widthof{==}]{(i)}} {\leq} \vecnorm{u}{2}^2,
\end{align*}
where inequality $(i)$ follows from the fact that $v\tp P v \leq
\vecnorm{v}{2}^2$ for any orthogonal projection matrix $P$.  It is
left to bound the term $u_i^2$.  We see that
\begin{align*}
u_i^2 = \frac{1}{\levvaidya_{x, i} + \vaidyabeta}\sum_{j,k=1}^\obs
\levvaidya_{x, i, j}^2\levvaidya_{x, i, k}^2 \vecnorm{c_{i,j}}{2}^2
\vecnorm{c_{i,k}}{2}^2\,\, &\stackrel{\mathmakebox[\widthof{======}]{
    (\mathrm{bnd.}~\eqref{eq:norm_c}) }}
        {\leq}\sqrt{\frac{\obs}{\dims}} \frac{1}{\levvaidya_{x, i} +
          \vaidyabeta}\sum_{j,k=1}^\obs \levvaidya_{x, i, j}^2
        \levvaidya_{x, i, k}^2 \vecnorm{c_{i,j}}{2}^2 \\
& \stackrel{\mathmakebox[\widthof{======}]{
            (\mathrm{Lem.}~\ref{ppt:all_properties}\ref{item:sigma_properties})
        }} {\leq}\sqrt{\frac{\obs}{\dims}} \frac{\levvaidya_{x,
            i}}{\levvaidya_{x, i} + \vaidyabeta}\sum_{j=1}^\obs
        \levvaidya_{x, i, j}^2 \vecnorm{c_{i,j}}{2}^2 \\
& \stackrel{\mathmakebox[\widthof{======}]{
            (\mathrm{bnd.}~\eqref{eq:norm_c_theta}) }}
                  {\leq}\sqrt{\frac{\obs}{\dims}} \sum_{j=1}^\obs
                  \levvaidya_{x, i, j}^2 \frac{\localslackvaidya_{x,i}
                    + \localslackvaidya_{x, j}}{2}.
\end{align*}
Now, summing over $i$ and using symmetry of indices $i, j$, we find
that
\begin{align*}
\vecnorm{u}{2}^2 \leq \sqrt{\frac{\obs }{\dims}} \sum_{i=1}^\obs
\sum_{j=1}^\obs \levvaidya_{x, i, j}^2 \localslackvaidya_{x, i}
\stackrel{\mathmakebox[\widthof{====}]{
    (\mathrm{Lem.}~\ref{ppt:all_properties}\ref{item:sigma_properties})
}} {=} \sqrt{\frac{\obs}{\dims}} \sum_{i=1}^\obs \levvaidya_{x,
  i}\localslackvaidya_{x, i} \stackrel{\mathmakebox[\widthof{====}]{
    (\mathrm{Lem.}~\ref{ppt:all_properties}\ref{item:theta_sum}) }}
         {\leq} \sqrt{\obs \dims},
\end{align*}
thereby implying that $C_1 \leq \sqrt{\obs\dims}$.


\paragraph{Bounding $C_2$:}
\label{par:bounding_c2}
Using the Cauchy-Schwarz inequality and the bound~\eqref{eq:norm_c},
we find that
\begin{align*}
\sum_{i,j,k,l=1}^\obs \levvaidya_{x, i, j}^2\levvaidya_{x, k, l}^2
\parenth{c_{i,j}^\top c_{k,l}}^3 & \leq \sum_{i,j,k,l=1}^\obs
\levvaidya_{x, i, j}^2\levvaidya_{x, k, l}^2 \parenth{c_{i,j}^\top
  c_{k,l}}^2 \vecnorm{c_{i,j}}{2}\vecnorm{c_{k,l}}{2} \\
& \leq \sqrt{\frac{\obs }{\dims}} \sum_{i,j,k,l=1}^\obs \levvaidya_{x,
  i, j}^2\levvaidya_{x, k, l}^2 \parenth{c_{i,j}^\top c_{k,l}}^2.
\end{align*}
Using SSI and the symmetry of pairs of indices $(i,j)$ and $(k, l)$,
we obtain
\begin{align*}
  \sum_{i,j,k,l=1}^\obs \levvaidya_{x, i, j}^2\levvaidya_{x, k, l}^2
  \parenth{c_{i,j}^\top c_{k,l}}^2 \leq \sum_{i,j,k,l=1}^\obs
  \levvaidya_{x, i, j}^2\levvaidya_{x, k, l}^2 \parenth{\hat{a}_{x,i}^\top
    \hat{a}_k}^2 = \sum_{i,k=1}^\obs \levvaidya_{x, i}\levvaidya_{x,
    k} \parenth{\hat{a}_{x,i}^\top \hat{a}_k}^2.
\end{align*}
The resulting expression can be bounded as follows:
\begin{align*}
\sum_{i,k=1}^\obs \levvaidya_{x, i}\levvaidya_{x, k}
\parenth{\hat{a}_{x,i}^\top \hat{a}_k}^2\,\,
\stackrel{\mathmakebox[\widthof{====}]{(\mathrm{eqn.}\eqref{eq:hat_a_i_j}
    ) }} {=} \, \sum_{i,k=1}^\obs \levvaidya_{x, i}\levvaidya_{x, k}
\localslackvaidya_{x, i, k}^2
\stackrel{\mathmakebox[\widthof{====}]{(\mathrm{Lem.}~\ref{ppt:all_properties}\ref{item:theta_properties}
    ) }} {\leq}\,\, \sum_{i=1}^\obs \levvaidya_{x, i}
\localslackvaidya_{x, i}
\stackrel{\mathmakebox[\widthof{====}]{(\mathrm{Lem.}~\ref{ppt:all_properties}\ref{item:theta_sum}
    )}} {\leq} n.
\end{align*}
Putting the pieces together yields the claimed bound on $C_2$.


\subsubsection{Proof of bound~\eqref{eq:fourth}}
\label{ssub:proof_of_part_item:fourth}

Observe that $\hat{a}_{x,i}^\top \rvg\sim \NORMAL\parenth{0,
  \localslackvaidya_{x, i}}$ and hence $\Exs\parenth{\hat{a}_{x,i}^\top
  \rvg}^8 = 105\, \localslackvaidya_{x, i}^4$.  Thus we have
\begin{align*}
\Exs \parenth{\sum_{i=1}^\obs \levvaidya_{x, i}
  \parenth{\hat{a}_{x,i}^\top \rvg}^4}^2 &
\stackrel{\mathmakebox[\widthof{====}]{\mathrm{C-S} }} {\leq}
\sum_{i,j=1}^\obs \levvaidya_{x, i} \levvaidya_{x, j}
\parenth{\Exs\parenth{\hat{a}_{x,i}^\top \rvg}^8}^{\frac{1}{2}}
\parenth{\Exs\parenth{\hat{a}_{x,j}^\top \rvg}^8}^{\frac{1}{2}} \\
& \stackrel{\mathmakebox[\widthof{====}]{}} {=} 105 \sum_{i,j=1}^\obs
\levvaidya_{x, i} \levvaidya_{x, j} \localslackvaidya_{x, i}^2
\localslackvaidya_{x, j}^2 \\ &
\stackrel{\mathmakebox[\widthof{====}]{}} {=} 105
\parenth{\sum_{i=1}^\obs \levvaidya_{x, i} \localslackvaidya_{x,
    i}^2}^2 \\
&
\stackrel{\mathmakebox[\widthof{====}]{(\mathrm{Lem.}~\ref{ppt:all_properties}\ref{item:theta_square_sigma_sum_bound}
    )}} {\leq} \, 105 \obs\dims.
\end{align*}
Applying the bound~\eqref{vaidya_EqnJansonBound} with $\degree = 4, t =
\parenth{\frac{e}{2} \log(4/\epsilon)}^2$ yields the result. We also verify that for $\epsilon \in (0, \vaidyaepsilonconst]$, we have $t \geq \parenth{2e}^2$


\subsection{Proof of Lemma~\ref{lemma:gradient_and_hessian_and_bounds}}
\label{sub:proofs_of_derivatives}
We now derive the different expressions for derivatives and prove the
bounds for Hessians of $x\mapsto \vaidyaphi_{x, i}$, $i \in [\obs]$ and $x
\mapsto \logdetvaidya_x$.  In this section we use the simpler notation
$\hessbarr_x \defn \hesslogbarr_x$.

\subsubsection{Gradient of $\levvaidya$}
\label{ssub:gradient_of_lev_}

Using $ \slack_{x+h, i} = (b_i- a_i^\top (x+h )) = \slack_{x, i} -
a_i^\top h $, we define the Hessian difference matrix
\begin{align}
\label{eq:hessian_difference}
\Delta^\hessbarr_{x, h} \defn \hessbarr_{x+h}-\hessbarr_x =
\sum_{i=1}^\obs a_i a_i^\top \parenth{ \frac{1}{(\slack_{x,
      i}-a_i^\top h)^2} - \frac{1}{\slack_{x, i}^2}}.
\end{align}
Up to second order terms, we have
\begin{subequations}
\begin{align}
\frac{1}{\slack_{x+ h, i}^2} &= \frac{1}{\slack_{x,
    i}^2}\brackets{1+\frac{2a_i^\top h}{\slack_{x, i} } +
  \frac{3(a_i^\top h)^2}{\slack_{x, i}^2} } +
\order{\vecnorm{h}{2}^3},
\label{eq:vaidya_expansion_of_s2}\\
\Delta^\hessbarr_{x, h} &= \sum_{i=1}^\obs
\frac{a_ia_i^\top}{\slack_{x, i}^2}\brackets{\frac{2a_i^\top
    h}{\slack_{x, i} } + \frac{3(a_i^\top h)^2}{\slack_{x, i}^2} } +
\order{\vecnorm{h}{2}^3},
\label{eq:expansion_of_H} \\
a_i^T \hessbarr_{x+ h}^{-1} a_i &= a_i^\top \hessbarr_x^{-1}a_i -
a_i^\top \hessbarr_x^{-1} \Delta^\hessbarr_{x, h} \hessbarr_x^{-1} a_i
+ a_i^\top \hessbarr_x^{-1} \Delta^\hessbarr_{x, h} \hessbarr_x^{-1}
\Delta^\hessbarr_{x, h} \hessbarr_x^{-1} a_i +
\order{\vecnorm{h}{2}^3}.
\label{eq:expansion_of_aH}
\end{align}
\end{subequations}
Collecting different first order terms in $\levvaidya_{x+ h, i} -
\levvaidya_{x, i}$, we obtain
\begin{align*}
\levvaidya_{x+ h, i} - \levvaidya_{x, i} & = 2\, \frac{a_i^\top
  \hessbarr_x^{-1}a_i}{\slack_{x, i}^2} \frac{a_i^\top h}{\slack_{x,
    i}} -2\, \frac{a_i^\top \hessbarr_x^{-1} \parenth{\sum_{j=1}^\obs
    \frac{a_ja_j^\top}{\slack_{x, j}^2} \frac{a_j^\top h}{\slack_{x,
        j}}} \hessbarr_x^{-1} a_i}{\slack_{x, i}^2} +
\order{\vecnorm{h}{2}^2} \\
& = 2\, \brackets{\levvaidya_{x, i} \frac{a_i^\top h}{\slack_{x, i}} -
  \sum_{j=1}^\obs \levvaidya_{x, i, j}^2 \, \frac{a_j^\top
    h}{\slack_{x, j}} } + \order{\vecnorm{h}{2}^2}\\
& = 2 \, [(\vaidyalevmatrix_x -
  \vaidyaprojmatrix_x^{(2)})\slackmatrix_x^{-1}A]_i\, h +
\order{\vecnorm{h}{2}^2}.
\end{align*}
Dividing both sides by $h$ and letting $ h \rightarrow 0$ yields the
claim.


\subsubsection{Gradient of $\vaidyaphi$}
\label{ssub:gradient_of_phiw_}

Using the chain rule and the fact that $\nabla \slack_{x, i} = -a_i$,
we find that
\begin{align*}
\nabla\vaidyaphi_{x, i} &= \frac{\nabla \levvaidya_{x, i}}{\slack_{x, i}^2}
- 2\, (\levvaidya_{x, i}+\vaidyabeta) \frac{\nabla \slack_{x,
    i}}{\slack_{x, i}^3} \\
& = \frac{2}{\slack_{x, i}^2} A^\top \slackmatrix_x^{-1}
\brackets{2\vaidyalevmatrix_x+\vaidyabeta\, \Ind-\vaidyaprojmatrix_x^{(2)}}
e_i,
\end{align*}
as claimed.


\subsubsection{Gradient of $\logdetvaidya$}
\label{ssub:gradient_of_logdetw_x_}

For convenience, let us restate equations~\eqref{eq:hat_a_i}
and~\eqref{eq:hat_a_hat_a_Ind}:
\begin{align*}
  \hat{a}_{x,i} = \frac{1}{\slack_{x, i}} \vaidya_x^{-{1}/{2}} a_i, \quad
  \text{and} \quad \sum_{i=1}^\obs \parenth{\levvaidya_{x, i} +
    \vaidyabeta} \hat{a}_{x,i} \hat{a}_{x,i}^\top = \Ind_\dims.
\end{align*}
For a unit vector $h$, we have
\begin{align}
h^\top \nabla \log \det \vaidya_x & = \lim_{\delta \rightarrow 0}
\frac{1}{\delta} \brackets{\trace \log \parenth{\sum_{i=1}^\obs
    \frac{\parenth{\levvaidya_{x+\delta h,
          i}+\vaidyabeta}}{\parenth{1-\delta a_i^\top h /\slack_{x,
          i}}^2} {\hat{a}_{x,i} \hat{a}_{x,i}^\top}} - \trace \log
  \parenth{\sum_{i=1}^\obs \parenth{\levvaidya_{x,
        i}+\vaidyabeta}\hat{a}_{x,i} \hat{a}_{x,i}^\top}}.
	\label{eq:grad_expression_W}
\end{align}
Let $\log L$ denote the logarithm of the matrix $L$.  Keeping track of
the first order terms on RHS of equation~\eqref{eq:grad_expression_W},
we find that
\begin{align*}
	& \trace \brackets{\log \parenth{\sum_{i=1}^\obs
      \parenth{\levvaidya_{x+\delta h, i}+\vaidyabeta} \frac{\hat{a}_{x,i}
        \hat{a}_{x,i}^\top}{\parenth{1-\delta a_i^\top h /\slack_{x,
            i}}^2}}} - \trace \brackets{\log \parenth{\sum_{i=1}^\obs
      \parenth{\levvaidya_{x, i}+\vaidyabeta}\hat{a}_{x,i}
      \hat{a}_{x,i}^\top}} \\
& = \trace \brackets{\log \parenth{\sum_{i=1}^\obs
      \parenth{\levvaidya_{x+\delta h, i}+\vaidyabeta + \delta h^\top
        \nabla \levvaidya_{x, i} } \parenth{1 + 2\delta \frac{a_i^\top
          h}{\slack_{x, i}^2}}}} - \trace \brackets{\log
    \parenth{\sum_{i=1}^\obs \parenth{\levvaidya_{x,
          i}+\vaidyabeta}\hat{a}_{x,i} \hat{a}_{x,i}^\top}} + \order{\delta^2}
  \\
& = \trace \brackets{\sum_{i=1}^\obs \delta \parenth{2
      \parenth{\levvaidya_{x, i}+\vaidyabeta}\frac{a_i^\top
        h}{\slack_{x, i}^2}+ h^\top \nabla \levvaidya_{x, i}}
    \hat{a}_{x,i} \hat{a}_{x,i}^\top} + \order{\delta^2} \\
& = \delta \parenth{\sum_{i=1}^\obs \parenth{2 \parenth{\levvaidya_{x,
          i}+\vaidyabeta}\frac{a_i^\top h}{\slack_{x, i}^2}+ h^\top
      \nabla \levvaidya_{x, i}} \localslackvaidya_i} +
  \order{\delta^2},
\end{align*}
where we have used the fact $\trace (\log \Ind) = 0$.  Letting $\delta
\rightarrow 0$ and substituting expression of $h^\top \nabla
\levvaidya_{x}$ from part~\ref{item:gradient_sigma}, we obtain
\begin{align*}
h^\top \nabla \log \det \vaidya_x = A_x\tp \parenth{4
  \vaidyalevmatrix_x + 2\vaidyabeta \Ind - 2 \vaidyaprojmatrix_x^{(2)}}
\vaidyalocalslackmatrix_x h.
\end{align*}


\subsubsection{Bound on Hessian $\nabla^2\vaidyaphi$}
\label{ssub:bound_on_nabla_2phiw}

In terms of the shorthand $E_{ii} = e_ie_i\tp$, we claim that for any
$h \in \realdim$,
\begin{align}
h^\top \nabla^2 \vaidyaphi_{x, i}h = \displaystyle\frac{2}{\slack_{x, i}^2}
h^\top A_x\tp \bigg[&
  E_{ii}\parenth{3\parenth{\vaidyalevmatrix_x+\vaidyabeta\Ind} +
    7\vaidyalevmatrix_x -8\diag(\vaidyaprojmatrix_x^{(2)}e_i)} E_{ii}\notag
  \\
  & + \diag(\vaidyaprojmatrix_xe_i) (4\vaidyaprojmatrix_x-3\Ind)
  \diag(\vaidyaprojmatrix_xe_i)\bigg] A_xh.
	\label{eq:hessian_beta}
\end{align}
Note that
\begin{align}
\vaidyaphi_{x+ h, i} - \vaidyaphi_{x, i} = \underbrace{\parenth{\frac{a_i^\top
      \hessbarr_{x+ h, i}^{-1}a_i}{\slack_{x+ h, i}^4} -
    \frac{a_i^\top \hessbarr_{x, i}^{-1} a_i}{\slack_{x, i}^4}
}}_{=:A_1} + \underbrace{\vaidyabeta \parenth{\frac{1}{\slack_{x+ h,
        i}^2} - \frac{1}{\slack_{x, i}^2} }}_{=:A_2}.
	\label{eq:delta_beta}
\end{align}
The second order Taylor expansion of $1/\slack^4_{x, i}$ is given by
\begin{align*}
\frac{1}{\slack_{x+ h, i}^4} &= \frac{1}{\slack_{x, i}^4}\brackets{1 +
  \frac{4 a_i^\top h}{\slack_{x, i} } + \frac{10 (a_i^\top
    h)^2}{\slack_{x, i}^2}} + \order{\vecnorm{h}{2}^3}.
\end{align*}
Let $B_1$ and $B_2$ denote the second order terms, i.e., the terms
that are of order $\order{\vecnorm{ h}{2}^2}$, in Taylor expansion of
$A_1$ and $A_2$ around $x$, respectively.
Borrowing terms from
equations~\eqref{eq:vaidya_expansion_of_s2}-\eqref{eq:expansion_of_aH} and
simplifying we obtain
\begin{align*}
B_1 &= 10 {\levvaidya_{x, i}} \frac{(a_i^\top h)^2}{\slack_{x, i}^2}
\!-\! 8 \frac{a_i^\top h}{\slack_{x, i}}\sum_{j=1}^\obs
\frac{\levvaidya_{x, i, j}^2}{\slack_{x, i}^2} \frac{a_j^\top
  h}{\slack_{x, j}} \!-\! 3\sum_{j=1}^\obs \frac{\levvaidya_{x, i,
    j}^2}{\slack_{x, i}^2} \frac{(a_j^\top h)^2}{\slack_{x, j}^2}
\!+\!  4 \sum_{j=1}^\obs \sum_{l=1}^\obs \frac{\levvaidya_{x, i,
    j}}{\slack_{x, i}} \levvaidya_{x, j, l} \frac{\levvaidya_{x, l,
    i}}{\slack_{x, i}} \frac{a_j^\top h}{\slack_{x, j}} \frac{a_l^\top
  h}{\slack_{x, l}},\\ \text{and } B_2 &= 3 \vaidyabeta
\frac{(a_i^\top h)^2}{\slack_{x, i}^2}.
\end{align*}
Observing that the second order term in the Taylor expansion of
$\vaidyaphi_{x+h, i}$ around $x$, is exactly $\frac{1}{2} h^\top \nabla^2
\vaidyaphi_{x, i}h$ yields the claim~\eqref{eq:hessian_beta}.  We now turn
to prove the bound on the directional Hessian.  Recall $\projD_{x, i}
= a_i^\top h/\slack_{x, i}$.  We have
\begin{align*}
&\slack_{y,i}^2\abss{\frac{1}{2} h^\top \nabla^2 \vaidyaphi_{x, i} h} \\
&\quad = \abss{3 \parenth{\levvaidya_{x,i} \!+\!
      \vaidyabeta}\projD_{x, i}^2 \!+\! 7 \levvaidya_{x, i} \projD_{x,
      i}^2 \!-\! 8 \sum_{j=1}^\obs \levvaidya_{x, i, j}^2 \projD_{x,
      j} \projD_{x, i} \!-\! 3 \sum_{j=1}^\obs \levvaidya_{x, i, j}^2
    \projD_{x, j}^2 \!+\! 4 \sum_{j,k=1}^\obs \levvaidya_{x,
      i,j}\levvaidya_{x, j, k}\levvaidya_{x, k, i} \projD_{x, j}
    \projD_{x, k}} \\ &\quad \stackrel{(i)}{\leq} 10
  \parenth{\levvaidya_{x,i} + \vaidyabeta}\projD_{x, i}^2 + 8
  \sum_{j=1}^\obs \levvaidya_{x, i, j}^2 \abss{\projD_{x, i}
    \projD_{x, j}} + 7 \sum_{j=1}^\obs \levvaidya_{x, i, j}^2
  \projD_{x, j}^2 \\
  & \quad \stackrel{(ii)}{\leq} 10 \parenth{\levvaidya_{x,i} +
    \vaidyabeta}\projD_{x, i}^2 + 4 \sum_{j=1}^\obs \levvaidya_{x, i,
    j}^2 \parenth{\projD_{x, i}^2 + \projD_{x, j}^2} + 7
  \sum_{j=1}^\obs \levvaidya_{x, i, j}^2 \projD_{x, j}^2 \\
  & \quad \stackrel{(iii)}{\leq} 10 \parenth{\levvaidya_{x,i} +
    \vaidyabeta}\projD_{x, i}^2 + 4 \sum_{j=1}^\obs \levvaidya_{x,
    i}\projD_{x, i}^2 +4 \sum_{j=1}^\obs \levvaidya_{x, i, j}^2
  \projD_{x, j}^2 + 7 \sum_{j=1}^\obs \levvaidya_{x, i, j}^2
  \projD_{x, j}^2,\\
  & \quad \stackrel{(iv)}{\leq} 14 \parenth{\levvaidya_{x,i} +
    \vaidyabeta}\projD_{x, i}^2 + 11 \sum_{j=1}^\obs \levvaidya_{x, i,
    j}^2 \projD_{x, j}^2,
\end{align*}
 where step (i) follows from the fact that $\diag(\vaidyaprojmatrix_y e_i)
 \vaidyaprojmatrix_y \diag(\vaidyaprojmatrix_y e_i) \preceq \diag(\vaidyaprojmatrix_y
 e_i)\diag(\vaidyaprojmatrix_y e_i)$ since $\vaidyaprojmatrix_y$ is an orthogonal
 projection matrix; step $(ii)$ follows from AM-GM inequality; step
 $(iii)$ follows from the symmetry of indices $i$ and $j$ and
 Lemma~\ref{ppt:all_properties}\ref{item:sigma_properties}, and step
 $(iv)$ from the fact that $\levvaidya_{x, i} \leq \levvaidya_{x, i} +
 \vaidyabeta$.

\subsubsection{Bound on Hessian $\hesslogdetV$}
\label{ssub:bound_on_hessdet_}

We have
\begin{align}
\frac{1}{2} h^\top \parenth{\nabla^2 \log \det \vaidya_x} h = & \frac{1}{2}\lim_{\delta
  \rightarrow 0} \frac{1}{\delta^2} \Bigg[\trace \log
  \parenth{\sum_{i=1}^\obs \frac{\parenth{\levvaidya_{x+\delta h,
          i}+\displaystyle\vaidyabeta}}{\parenth{1-\delta a_i^\top h
        /\slack_{x, i}}^2} {\hat{a}_{x,i} \hat{a}_{x,i}^\top}} \notag\\ & +\trace \log
  \parenth{\sum_{i=1}^\obs \frac{\parenth{\levvaidya_{x-\delta h,
          i}+\displaystyle\vaidyabeta}}{\parenth{1+\delta a_i^\top h
        /\slack_{x, i}}^2} {\hat{a}_{x,i} \hat{a}_{x,i}^\top}} \notag\\ &-
  2\trace \log \parenth{\sum_{i=1}^\obs
    \parenth{\levvaidya_{x}+\vaidyabeta} \hat{a}_{x,i} \hat{a}_{x,i}^\top}
  \Bigg].
\label{eq:hessian_L_exp}
\end{align}
Up to second order terms, we have
\begin{align*}
& \trace \brackets{\log \parenth{\sum_{i=1}^\obs
      \parenth{\levvaidya_{x+\delta h, i}+\vaidyabeta} \frac{\hat{a}_{x,i}
        \hat{a}_{x,i}^\top}{\parenth{1-\delta a_i^\top h /\slack_{x, i}
        }^2}}} \\
& = \trace \brackets{\log \parenth{\sum_{i=1}^\obs
      \parenth{\levvaidya_{x, i}+\vaidyabeta + \delta h^\top \nabla
        \levvaidya_{x, i} + \frac{1}{2} \delta^2 h^\top \nabla^2
        \levvaidya_{x, i} h} \parenth{1+ 2 \delta \frac{a_i^\top
          h}{\slack_{x, i}}+3 \delta^2 \parenth{\frac{a_i^\top
            h}{\slack_{x, i}}}^2} \hat{a}_{x,i} \hat{a}_{x,i}^\top} } \\ & =
  \trace \brackets{\sum_{i=1}^\obs \parenth{\levvaidya_{x,
        i}+\vaidyabeta + \delta h^\top \nabla \levvaidya_{x, i} +
      \frac{1}{2} \delta^2 h^\top \nabla^2 \levvaidya_{x, i} h}
    \parenth{1+ 2 \delta \frac{a_i^\top h}{\slack_{x, i}}+3 \delta^2
      \parenth{\frac{a_i^\top h}{\slack_{x, i}}}^2} \hat{a}_{x,i}
    \hat{a}_{x,i}^\top} \\ & \quad- \trace \brackets{\frac{1}{2}
    \parenth{\sum_{i=1}^\obs \parenth{\levvaidya_{x, i}+\vaidyabeta +
        \delta h^\top \nabla \levvaidya_{x, i} + \frac{1}{2} \delta^2
        h^\top \nabla^2 \levvaidya_{x, i} h} \parenth{1+ 2 \delta
        \frac{a_i^\top h}{\slack_{x, i}}+3 \delta^2
        \parenth{\frac{a_i^\top h}{\slack_{x, i}}}^2} \hat{a}_{x,i}
      \hat{a}_{x,i}^\top}^2 }.
\end{align*}
We can similarly obtain the second order expansion of the term $\trace
\log \parenth{\sum_{i=1}^\obs \frac{\parenth{\levvaidya_{x-\delta h,
        i} +\vaidyabeta}}{\parenth{1+\delta a_i^\top h /\slack_{x,
        i}}^2} {\hat{a}_{x,i} \hat{a}_{x,i}^\top}}$.  Recall $\projD_{x, i} =
\frac{a_i^\top h}{\slack_{x, i}}$.  Using
part~\ref{item:gradient_sigma} to substitute $h^\top \nabla
\levvaidya_{x, i}$, we obtain
\begin{align}
 \frac{1}{2} h^\top \parenth{\nabla^2 \log \det \vaidya_x} h & = \sum_{i=1}^\obs
 \parenth{3 \parenth{\levvaidya_{x,i} +\vaidyabeta} \projD_{x, i}^2 +
   4 \parenth{\levvaidya_{x, i} \projD_{x, i}^2 - \sum_{j=1}^\obs
     \levvaidya_{x, i,j}^2 \projD_{x, i} \projD_{x, j}} + \frac{1}{2}
   h^\top \nabla^2 \levvaidya_{x,i} h } \localslackvaidya_i \notag \\
 & - 2 \bigg[ \sum_{i, j=1}^\obs \parenth{2\levvaidya_{x,
       i}+\vaidyabeta} \parenth{2\levvaidya_{x, j}+\vaidyabeta}
   \projD_{x, i} \projD_{x, j} \localslackvaidya_{x, i, j}^2 -
   2\sum_{i,j,k = 1}^\obs \parenth{2\levvaidya_{x, i}+\vaidyabeta}
   \levvaidya_{x, j, k}^2 \localslackvaidya_{x, i, k}^2 \projD_{x, i}
   \projD_{x, j} \notag \\ &+ \sum_{i, j, k, l = 1}^\obs
   \levvaidya_{x, i,l}^2 \levvaidya_{x, j, k}^2 \localslackvaidya_{x,
     k, l}^2 \projD_{x, i} \projD_{x, j}\bigg]. \label{eq:hessian_L}
\end{align}
We claim that the directional Hessian $h^\top \nabla^2 \levvaidya_{x,
  i} h$ is given by
\begin{align}
h^\top \nabla^2 \levvaidya_{x, i} h = 2\, h^\top A_x\tp
\brackets{E_{ii}(3\vaidyalevmatrix_x-4\diag(\vaidyaprojmatrix_x^{(2)}e_i))
  E_{ii}+ \diag(\vaidyaprojmatrix_xe_i) (4\vaidyaprojmatrix_x-3\Ind)
  \diag(\vaidyaprojmatrix_xe_i)} A_xh.
	\label{eq:hessian_sigma}
\end{align}
Assuming the claim at the moment we now bound $ \abss{h^\top \nabla^2
  \logdetvaidya_x h} $.  To shorten the notation, we drop the
$x$-dependence of the terms $\levvaidya_{x, i}, \levvaidya_{x, i, j},
\localslackvaidya_{x, i}$ and $\projD_{x, i}$.  Since $\vaidyaprojmatrix_x$
is an orthogonal projection matrix, we have
\begin{align*}
  \diag(\vaidyaprojmatrix_x e_i) \vaidyaprojmatrix_x \diag(\vaidyaprojmatrix_x e_i)
  \preceq \diag(\vaidyaprojmatrix_x e_i)\diag(\vaidyaprojmatrix_x e_i).
\end{align*}
Using this fact and substituting the expression for $h^\top \nabla^2
\levvaidya_{x, i} h$ from equation~\eqref{eq:hessian_sigma} in
equation~\eqref{eq:hessian_L}, we obtain
\begin{align*}
  &\abss{h^\top \hesslogdetV_x h}\\ &\begin{aligned} \mathllap{}
    &\stackrel{}{\leq} \sum_{i=1}^\obs \bigg[3
      \bigg(\levvaidya_i+\vaidyabeta\bigg) \projD_{i}^2 + 4
      \big(\levvaidya_i \projD_{i}^2 + \sum_{j=1}^\obs
      \levvaidya_{i,j}^2 \projD_{i} \projD_{j}\big) + 3 \levvaidya_i
      \projD_{i}^2 + 4 \sum_{j=1}^\obs \levvaidya_{i, j}^2 \projD_{i}
      \projD_{j} + 7\sum_{j= 1}^\obs \levvaidya_{i,j}^2 \projD_{j}^2
      \bigg] \localslackvaidya_i\\ &\qquad + \bigg[ 8 \sum_{i,
        j=1}^\obs \parenth{\levvaidya_i+\vaidyabeta}
      \parenth{\levvaidya_j+\vaidyabeta} \projD_{i} \projD_{j}
      \localslackvaidya_{i, j}^2 + 8 \sum_{i,j,k = 1}^\obs
      \parenth{\levvaidya_i+\vaidyabeta} \levvaidya_{j, k}^2
      \localslackvaidya_{i, k}^2 \projD_{i} \projD_{j} + 2 \sum_{i, j,
        k, l = 1}^\obs \levvaidya_{i,l}^2 \levvaidya_{j, k}^2
      \localslackvaidya_{k, l}^2 \projD_{i} \projD_{j}\bigg].
  \end{aligned}
\end{align*}
Rearranging terms, we find that
\begin{align*}
	&\abss{h^\top \hesslogdetV_x h}\\ &\begin{aligned} \mathllap{}
    &\leq \sum_{i=1}^\obs \Bigg[10 \parenth{\levvaidya_i+\vaidyabeta}
      \projD_{i}^2 + 8 \sum_{j=1}^\obs \levvaidya_{i, j}^2 \projD_{i}
      \projD_{j} + 7 \sum_{j= 1}^\obs \levvaidya_{i,j}^2 \projD_{j}^2
      \Bigg] \localslackvaidya_i\\ &\qquad + \Bigg[ 8 \sum_{i,
        j=1}^\obs \parenth{\levvaidya_i+\vaidyabeta}
      \parenth{\levvaidya_j+\vaidyabeta} \projD_{i} \projD_{j}
      \localslackvaidya_{i, j}^2 + 8 \sum_{i,j,k = 1}^\obs
      \parenth{\levvaidya_i+\vaidyabeta} \levvaidya_{j, k}^2
      \localslackvaidya_{i, k}^2 \projD_{i} \projD_{j} + 2 \sum_{i, j,
        k, l = 1}^\obs \levvaidya_{i,l}^2 \levvaidya_{j, k}^2
      \localslackvaidya_{k, l}^2 \projD_{i} \projD_{j} \Bigg]
  \end{aligned}
  \\ &\begin{aligned} \mathllap{} &\stackrel{(i)}{\leq}
    \sum_{i=1}^\obs \Bigg[10 \parenth{\levvaidya_i+\vaidyabeta}
      \projD_{i}^2 + 4 \sum_{j=1}^\obs \levvaidya_{i, j}^2
      \parenth{\projD_{i}^2+\projD_{j}^2} + 7\sum_{j= 1}^\obs
      \levvaidya_{i,j}^2 \projD_{j}^2 \Bigg]
    \localslackvaidya_i\\ &\qquad + \Bigg[ 4\! \sum_{i, j=1}^\obs
      \!\!\big(\levvaidya_i+\vaidyabeta\big)
      \big(\levvaidya_j+\vaidyabeta\big) \localslackvaidya_{i, j}^2
      (\projD_{i}^2+\projD_{j}^2) + 4 \!\!\!\sum_{i,j,k = 1}^\obs\!
      \big(\levvaidya_i+\vaidyabeta\big) \levvaidya_{j, k}^2
      \localslackvaidya_{i, k}^2 (\projD_{i}^2+\projD_{j}^2) +\!\!\!\!
      \sum_{i, j, k, l = 1}^\obs \!\!\! \levvaidya_{i,l}^2
      \levvaidya_{j, k}^2 \localslackvaidya_{k, l}^2
      (\projD_{i}^2+\projD_{j}^2)\Bigg]
  \end{aligned}
 \end{align*}
where in step (i) we have used the AM-GM inequality.  Simplifying
further, we obtain
\begin{align*}
\abss{h^\top \hesslogdetV_y h} & \begin{aligned} \mathllap{} &\leq
  \sum_{i=1}^\obs \Bigg[14 \parenth{\levvaidya_i+\vaidyabeta}
    \projD_{i}^2 + 11 \sum_{j=1}^\obs \levvaidya_{i, j}^2 \projD_{j}^2
    \Bigg] \localslackvaidya_i + \Bigg[ \sum_{i=1}^\obs 12
    \parenth{\levvaidya_i+\vaidyabeta} \localslackvaidya_i
    \projD_{i}^2 + \sum_{i, j=1}^\obs 6 \levvaidya_{i, j}^2
    \localslackvaidya_i \projD_{j}^2 \Bigg]
  		\end{aligned}\\
  	&\begin{aligned} \mathllap{} &= 26 \sum_{i=1}^\obs
           \parenth{\levvaidya_i+\vaidyabeta} \localslackvaidya_i
           \projD_{i}^2 + 17 \sum_{i,j=1}^\obs \levvaidya_{i, j}^2
           \localslackvaidya_i \projD_{j}^2.
  	\end{aligned}
\end{align*}
Dividing both sides by two completes the proof.


\paragraph{Proof of claim~\eqref{eq:hessian_sigma}:}
\label{par:proof_of_claim_eq:hessian_sigma}

In order to compute the directional Hessian of $x\mapsto
\levvaidya_{x, i}$, we need to track the second order terms in
equations~\eqref{eq:vaidya_expansion_of_s2}-\eqref{eq:expansion_of_aH}.
Collecting the second order terms (denoted by $\levvaidya^{(2)}_h$) in
the expansion of $\levvaidya_{x+ h, i} - \levvaidya_{x, i}$, we obtain
\begin{align*}
 \levvaidya^{(2)}_h = 3\, \frac{a_i^\top
   \hessbarr_x^{-1}a_i}{\slack_{x, i}^2} \frac{(a_i^\top
   h)^2}{\slack_{x, i}^2} &- 4\, \frac{a_i^\top \hessbarr_x^{-1}
   \parenth{\sum_{j=1}^\obs \frac{a_ja_j^\top}{\slack_{x, j}^2}
     \frac{a_j^\top h}{\slack_{x, j}}} \hessbarr_x^{-1}
   a_i}{\slack_{x, i}^2} \frac{a_i^\top h}{\slack_{x, i}} \\ & - 3\,
 \frac{a_i^\top \hessbarr_x^{-1} \parenth{\sum_{j=1}^\obs
     \frac{a_ja_j^\top}{\slack_{x, j}^2} \frac{(a_j^\top
       h)^2}{\slack_{x, j}^2}} \hessbarr_x^{-1} a_i}{\slack_{x, i}^2}
 \\
 & + 4\, \frac{a_i^\top \hessbarr_x^{-1} \parenth{\sum_{j=1}^\obs
     \frac{a_j a_j^\top}{\slack_{x, j}^2} \frac{a_j^\top h}{\slack_{x,
         j}}} \hessbarr_x^{-1} \parenth{\sum_{l=1}^\obs \frac{a_l
       a_l^\top}{\slack_{x, l}^2} \frac{a_l^\top h}{\slack_{x, l}}}
   a_i}{\slack_{x, i}^2}.
\end{align*}

We simply each term on the RHS one by one.  Simplifying the first
term, we obtain
\begin{align*}
3\, \frac{a_i^\top \hessbarr_x^{-1}a_i}{\slack_{x, i}^2}
\frac{(a_i^\top h)^2}{\slack_{x, i}^2} = 3\, \levvaidya_{x, i}
\projD_{x, i}^2 = h^\top 3\, A_x\tp E_{ii} \vaidyalevmatrix_x E_{ii}
A_x \, h.
\end{align*}
For the second term, we have
\begin{align*}
4 \, \frac{a_i^\top \hessbarr_x^{-1} \parenth{\sum_{j=1}^\obs
    \frac{a_ja_j^\top}{\slack_{x, j}^2} \frac{a_j^\top h}{\slack_{x,
        j}}} \hessbarr_x^{-1} a_i}{\slack_{x, i}^2} \frac{a_i^\top
  h}{\slack_{x, i}} &= 4 \, \projD_{x, i} \sum_{j=1}^\obs
\levvaidya_{x, i, j}^2 \, \projD_{x, j}\\
&= 4\, h^\top A_x\tp E_{ii} \diag\parenth{\vaidyaprojmatrix_x^{(2)}e_i}
E_{ii} A_x h.
\end{align*}
The third term can be simplified as follows:
\begin{align*}
	3\, \frac{a_i^\top \hessbarr_x^{-1} \parenth{\sum_{j=1}^\obs
            \frac{a_ja_j^\top}{\slack_{x, j}^2} \frac{(a_j^\top
              h)^2}{\slack_{x, j}^2}} \hessbarr_x^{-1} a_i}{\slack_{x,
            i}^2} &= 3 \, \sum_{j=1}^\obs \levvaidya_{x, i, j}^2
        \projD_{x, j}^2 \\
        & = 3\, h^\top A_x\tp \diag\parenth{\vaidyaprojmatrix_x e_i}
        \diag\parenth{\vaidyaprojmatrix_x e_i} A_x h
\end{align*}
For the last term, we find that
\begin{align*}
	4\, \frac{a_i^\top \hessbarr_x^{-1} \parenth{\sum_{j=1}^\obs
            \frac{a_ja_j^\top}{\slack_{x, j}^2} \frac{a_j^\top
              h}{\slack_{x, j}}} \hessbarr_x^{-1}
          \parenth{\sum_{l=1}^\obs \frac{a_la_l^\top}{\slack_{x, l}^2}
            \frac{a_l^\top h}{\slack_{x, l}}} a_i}{\slack_{x, i}^2} &=
        4\, \sum_{j,l=1}^\obs \levvaidya_{x, i, j}\, \levvaidya_{x, j,
          l}\, \levvaidya_{x, l, i}\, \projD_{x, j}\, \projD_{x, l} \\
        & = 4\, h^\top A_x\tp \diag\parenth{\vaidyaprojmatrix_x
          e_i}\vaidyaprojmatrix_x \diag\parenth{\vaidyaprojmatrix_x e_i} A_x h.
\end{align*}
Putting together the pieces yields the
expression~\eqref{eq:hessian_sigma}.

\section{Analysis of the John walk}
\label{sec:proof_john_walk}

We recap the key ideas of the John walk for convenience.
We have designed a new proposal distribution by making use of an \emph{optimal set of weights}
to define the new covariance structure for the Gaussian proposals,
where optimality is defined with respect to the convex program defined
below~
\eqref{eq:john_convex_john_weights}.
The optimality condition is closely related to the problem of finding the largest ellipsoid at
any interior point of the polytope, such that the ellipsoid is contained within the polytope.
This problem of finding the largest ellipsoid was first studied by~\cite{joh48} who showed that each convex body in $\realdim$ contains a unique ellipsoid of maximal volume.
More recently,~\cite{lee2014path} make use of approximate John Ellipsoids to improve
the convergence rate of interior point methods for linear programming.
We refer the readers to their paper for more discussion about the use of John Ellipsoids
for optimization problems.
In this work, we make use of these ellipsoids for designing sampling algorithms with better theoretical
bounds on the mixing times.

The vector $\johnweights_{x} = \parenth{\johnweights_{x, 1}, \ldots, \johnweights_{x, \obs}}\tp$ defined in the John walk's inverse covariance matrix~\eqref{eq:john_covariance} is
computed by solving the following optimization problem:
\begin{align}
  \johnweights_x = \arg\min_{\weights \in \real^\obs} c_x\parenth{\weights} := \sum_{i=1}^\obs \weights_i - \frac{1}{\johnalpha} \log \det \parenth{A\tp \slackmatrix_x^{-1} \weightmatrix^{\johnalpha} \slackmatrix_x^{-1} A} - \johnbeta \sum_{i=1}^\obs \log \weights_i,
  \label{eq:john_convex_john_weights}
\end{align}
where the parameters $\johnalpha, \johnbeta$ are given by
\begin{align*}
  \johnalpha = 1 - \frac{1}{\log_2 \parenth{2\obs/\dims}}
   \quad\text{ and } \quad
   \johnbeta = \frac{\dims}{2\obs},
\end{align*}
and $\weightmatrix$ denotes an $\obs \times \obs$ diagonal matrix with $\weightmatrix_{ii} = \weights_i$ for each $i \in [\obs]$.
In particular, for our proposal the inverse covariance matrix is proportional to $\john_x$, where
\begin{align}
  \john_x =\sum_{i=1}^\obs \johnweights_{x, i}\frac{a_i a_i\tp} {(b_i -a_i\tp x)^2}.
  \label{eq:john_define_Jx}
\end{align}
where $\johnkappa \defn \johnkappa_{\obs, \dims} = \log_2(2\obs/\dims) = (1-\johnalpha)^{-1}$.

Recall that for John walk with parameter $\frac{r}{\dims^{3/4} \johnkappa^2}$, the proposals at state $x$ are
drawn from the multivariate Gaussian distribution given by $\NORMAL\parenth{x, \frac{r^2}{\dims^{3/2} \johnkappa^4} \john_x^{-1}}$,
which we denote by $\proposal_x^\tagjohn$.
In particular, the proposal density at point $x \in \intP$ is given by
\begin{align}
  \label{eq:john_proposal_density}
  \density_x(z)
  := \density(x, z)
  = \sqrt{\det{\john_x}} \parenth{\frac{\johnkappa^4 \dims^{3/2}}{2\pi r^2}}^{\dims/2} \exp\parenth{-\frac{\johnkappa^4 \dims^{3/2}}{2r^2} \ (z-x)\tp \john_x (z-x) }.
\end{align}
Here we restate our result for the mixing time of the John walk.
\setcounter{theorem}{1}
\begin{theorem}
\label{thm:john_walk_1}
 Let $\initial$ be any distribution that is $M$-warm with respect to
$\stationary$ and let \mbox{$\obs < \exp(\sqrt{\dims})$}.  Then for any $\delta \in (0, 1]$, the John walk with parameter
  $\rparam_{\fulltagjohn} = \johnradiusconst$ satisfies
\begin{align*}
    \tvnorm{\plaintransition_{\fulltagjohn(\rparam)}^k(\initial) -\stationary} \leq
    \delta \qquad \text{ for all } k \geq C \; \dims^{2.5}\,
    \log_2^4\parenth{\frac{2\obs}{\dims}} \, \log\parenth{\frac{\sqrt{M}}{\delta}}.
  \end{align*}
\end{theorem}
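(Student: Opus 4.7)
The plan is to mirror the structure of the proof of Theorem~\ref{thm:mixing_time_bound} for the Vaidya walk, applying Lov\'asz's Lemma (Lemma~\ref{lemma:Lovasz_theorem}) to the lazy John walk with a suitable pair $(\lovone, \lovtwo)$. Concretely, the argument decomposes into two steps exactly analogous to Steps~(A) and~(B) of Section~\ref{sub:proof_of_theorem_thm:mixing_time_bound}: first, relate the cross-ratio $d_\Pspace(x,y)$ to the John local norm $\vecnorm{x-y}{\john_x}$; second, establish an analog of Lemma~\ref{lemma:vaidya_walk_close_kernel} showing that nearby-in-local-norm points have nearby one-step transition distributions under $\nolazytrans_{\fulltagjohn(\rparam)}$.

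For Step~(A), I would repeat the chain of inequalities from equation~\eqref{eq:relate_cross_ratio}, deducing $d_\Pspace(x,y) \geq \max_i |a_i^\top(x-y)|/\slack_{x,i}$, then bound the maximum by the weighted mean with weights $\johnweights_{x,i}$. By parts~\ref{item:john_sigma_bound_1} and~\ref{item:john_sigma_sum_1} of Lemma~\ref{lemma:first_bounds}, the John weights are nonnegative with total mass $3\dims/2$, yielding
\begin{align*}
d_\Pspace(x,y) \;\geq\; \sqrt{\frac{1}{3\dims/2}\sum_{i=1}^\obs \johnweights_{x,i}\frac{(a_i^\top(x-y))^2}{\slack_{x,i}^2}} \;=\; \frac{\vecnorm{x-y}{\john_x}}{\sqrt{3\dims/2}}.
\end{align*}
This replaces the $\sqrt{2\dims}$ denominator of the Vaidya analysis by $\sqrt{3\dims/2}$, matching the parameter scaling of $\proposal_x^\tagjohn$.

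Step~(B) is the main technical effort and will absorb essentially all of the appendices on the John walk. Mimicking Section~\ref{ssub:proof_of_lemma_lemma:vaidya_walk_close_kernel}, I would prove two sub-claims: (i) for some monotone function $\johnradiusbound$ with $\johnradiusbound(\johnepsilonconst) \geq \johnradiusconst$, if $\vecnorm{x-y}{\john_x} \leq \epsilon r/(2\dims^{3/4}\log^2_2(2\obs/\dims))$ then $\tvnorm{\proposal_x^\tagjohn - \proposal_y^\tagjohn}\leq \epsilon$; and (ii) $\tvnorm{\nolazytrans_{\fulltagjohn(\rparam)}(\diracdelta_x) - \proposal_x^\tagjohn} \leq 5\epsilon$ for all $x \in \intP$. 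Sub-claim~(i) follows from Pinsker plus the closed-form Gaussian KL expression~\eqref{eq:final_KL_expression}, provided we can sandwich $\john_x^{-1/2}\john_y \john_x^{-1/2}$ between $(1\pm O(\epsilon r/\sqrt{\dims}))\Ind_\dims$ — the John analog of Lemma~\ref{lemma:close_hessian_eigenvalues}. Sub-claim~(ii) requires, in parallel to Lemma~\ref{lemma:change_in_log_det_and_local_norm}, high-probability bounds on $\tfrac12(\log\det\john_z-\log\det\john_x)$ and on $\vecnorm{z-x}{z}^2-\vecnorm{z-x}{x}^2$. The crucial leverage is that the John slack sensitivity $\localslackvaidya_{\john_x, i}$ is bounded by a constant (Lemma~\ref{lemma:first_bounds}\ref{item:john_theta_bound_1}), compared with $\sqrt{\obs/\dims}$ in the Vaidya case — this is precisely what replaces the $\sqrt{\obs/\dims}$ factor with a $\polylog(\obs/\dims)$ factor in the final mixing time.

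The hard part will be controlling the gradient and Hessian of the John weights $x \mapsto \johnweights_x$, since $\johnweights_x$ is defined implicitly by the strongly convex program~\eqref{eq:john_weights} rather than in closed form. For the Vaidya walk these were computed directly in Lemma~\ref{lemma:gradient_and_hessian_and_bounds}, but here the implicit function theorem applied to the first-order optimality conditions of~\eqref{eq:john_weights} will be needed, leveraging the smoothness estimates developed by~\cite{lee2014path}. This produces the bounds on $\nabla\johnweights_x$, on $\nabla^2 \log\det \john_x$, and on the slackness-weighted directional derivatives that feed into the Taylor expansions analogous to equations~\eqref{eq:gradient_L_tail}--\eqref{eq:hessian_L_tail} and~\eqref{eq:vaidya_z_x_two_terms}. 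The poly-logarithmic factors in $\log_2(2\obs/\dims)$ enter at this point through the constants $\johnalpha, \johnbeta$ and the associated Lipschitz estimates on $\johnweights_x$; these dictate the choice of step size $\rparam^2/(\dims^{3/2}\log_2^4(2\obs/\dims))$ in the proposal covariance. Once these derivative bounds and the corresponding Gaussian-polynomial tail inequalities (analogous to Lemma~\ref{lemma:gaussian_moment_bounds}) are in place, assembling Steps~(A) and~(B) into the values $\lovtwo = \Theta(\epsilon r/(\dims^{5/4}\log_2^2(2\obs/\dims)))$ and $\lovone = 1 - O(\epsilon)$, then setting $\epsilon = \johnepsilonconst$ and $r=\johnradiusconst$ in Lov\'asz's Lemma yields the claimed $\order{\dims^{5/2}\log_2^4(\obs/\dims)\,\log(\sqrt{M}/\delta)}$ bound.
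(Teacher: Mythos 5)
Your roadmap follows the paper's own proof (Appendix~\ref{sec:proof_john_walk}) essentially verbatim: Lov\'asz's Lemma, Step~(A) relating the cross-ratio to the John local norm via $d_\Pspace(x,y)\geq\vecnorm{x-y}{\john_x}/\sqrt{3\dims/2}$, and Step~(B) establishing the John analog of Lemma~\ref{lemma:vaidya_walk_close_kernel} via a Pinsker$\,+\,$KL computation, an eigenvalue sandwich for $\john_x^{-1/2}\john_y\john_x^{-1/2}$, and high-probability Taylor bounds for $\log\det\john$ and the local-norm discrepancy. You also correctly identify the two points that distinguish the John analysis from the Vaidya one: the $O(1)$ bound on the John slack sensitivity (Lemma~\ref{lemma:first_bounds}\ref{item:john_theta_bound_1}) replacing the $\sqrt{\obs/\dims}$ bound, and the implicit-function control of $\nabla\johnweights_x$ and $\nabla^2\johnweights_x$ through the fixed-point equation of~\eqref{eq:john_weights}, which is where the $\johnkappa=\log_2(2\obs/\dims)$ factors enter.

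Two small inaccuracies worth fixing. First, the eigenvalue sandwich in the John case (Lemma~\ref{lemma:john_close_hessian_eigenvalues}) gives $\john_x^{-1/2}\john_y\john_x^{-1/2}$ within $(1\pm O(t))\Ind_\dims$ for $\vecnorm{x-y}{x}\leq t/\johnkappa^2$ — there is no extra $1/\sqrt{\dims}$ gain as in Lemma~\ref{lemma:close_hessian_eigenvalues}. When applied with $t=\epsilon r/(2\dims^{3/4})$, the eigenvalues deviate from $1$ by $O(\epsilon r/\dims^{3/4})$, not $O(\epsilon r/\sqrt{\dims})$ as you wrote; the KL trace term then contributes $O(\epsilon^2 r^2/\sqrt{\dims})$, which is still fine but for a different reason than the Vaidya pattern suggests. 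Second, you never say where the hypothesis $\obs<\exp(\sqrt{\dims})$ enters. It is used at the very end of the bound on $\tfrac12 h^\top\nabla^2\johnphi_{y,i}h$: the fourth-degree Hessian term produces a factor $\max_i(\hat a_{x,i}^\top\rvg)^2$ whose Gaussian-maximum tail scales like $\log\obs$, and the resulting $\log\obs/\sqrt{\dims}$ term is absorbed only under this hypothesis. A complete proof should make that explicit.
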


\subsection{Auxiliary results}
We begin by proving basic properties of the weights~$\johnweights_x$ which are used throughout the paper. For $x \in \intP, w \in \real^\obs_{++}$, define the projection matrix $\johnprojmatrix_{x, w}$ as follows
\begin{align}
  \johnprojmatrix_{x, w} = \weightmatrix^{\alpha/2} A_x (A_x^\top \weightmatrix^\alpha A_x)^{-1} A_x^\top \weightmatrix^{\alpha/2},
  \label{eq:john_projection_matrix}
\end{align}
where $A_x = \slackmatrix_x^{-1} A$ and $\weightmatrix$ is the $\obs \times \obs$ diagonal matrix with $i$-th diagonal entry given by $\weights_i$.
Also, let
\begin{align}
\levjohn_{x, i} \defn \parenth{\johnprojmatrix_{x, \johnweights_x}}_{ii} \quad \text{for } x \in \intP \text{ and } i \in [\obs].
\label{eq:john_first_Leverage_definition}
\end{align}
Define the \emph{John slack sensitivity} $\johnlocalslack_x^\tagjohn$ as
  \begin{align}
    \johnlocalslack_{x} \defn \johnlocalslack_x^\tagjohn \defn \parenth{\frac{a_1\tp \john_x^{-1} a_1}{\slack_{x, 1}^2}, \ldots, \frac{a_\obs\tp \john_x^{-1} a_\obs}{\slack_{x, \obs}^2}}\tp \quad \text{for all } x \in \intP.
    \label{eq:john_john_theta_defn}
  \end{align}
Further, for any $x \in \intP$,  define the \emph{John local norm at $x$} as
\begin{align}
  \label{eq:john_def_john_local_norm}
  \vecnorm{\cdot}{\john_x}: v \mapsto \vecnorm{\john_x^{1/2}v}{2} =
  \sqrt{\sum_{i=1}^\obs \johnweights_{x, i} \frac{(a_i\tp
      v)^2}{\slack_{x, i}^2}}.
\end{align}
We now collect some basic properties of the weights~$\johnweights_x$ and the local sensitivity~$\johnlocalslack_x$ and restate parts of Lemma~\ref{lemma:first_bounds} for clarity here.
\begin{lemma}
  \label{lemma:john_first_bounds}
  For any $x \in \intP$, the following properties are true:
  \begin{enumerate}[label=(\alph*)]
  \item (Implicit weight formula) \label{item:john_weight_equality} $\johnweights_{x, i} = \levjohn_{x, i} + \johnbeta$ for all $i \in [\obs]$,
  \item (Uniformity)\label{item:john_sigma_bound}
  $\johnweights_{x, i} \in [\johnbeta, 1+\johnbeta]$  for
      all $i \in [\obs]$,
  \item (Total size)\label{item:john_weight_sum}
  $\sum_{i=1}^\obs \johnweights_{x, i} = 3\dims/2$, and
  \item (Slack sensitivity)\label{item:john_theta_bound} $\johnlocalslack_{x, i} \in \brackets{0, 4}$ for
  all $i \in [\obs]$.
  \end{enumerate}
\end{lemma}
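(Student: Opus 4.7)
The plan is to derive all four parts from the first-order (KKT) optimality conditions of the strictly convex program~\eqref{eq:john_convex_john_weights}, together with the observation that the matrix $\johnprojmatrix_{x,\johnweights_x}$ in~\eqref{eq:john_projection_matrix} is an orthogonal projection of rank $\dims$. For the implicit weight formula~(a), I would differentiate $c_x(w)$ with respect to $w_i$: the linear term $\sum_j w_j$ contributes $1$; the barrier $-\johnbeta\sum_j\log w_j$ contributes $-\johnbeta/w_i$; and applying Jacobi's formula together with $\partial_{w_i}(A_x^\top W^{\johnalpha}A_x)=\johnalpha w_i^{\johnalpha-1}a_ia_i^\top/s_{x,i}^2$ yields a contribution of $-w_i^{\johnalpha-1}\cdot a_i^\top(A_x^\top W^{\johnalpha}A_x)^{-1}a_i/s_{x,i}^2$ from the log-determinant term. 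The logarithmic barrier forces $\johnweights_{x,i}>0$, so setting the gradient to zero and multiplying by $w_i$ at the optimum gives $\johnweights_{x,i}=\johnbeta+\johnweights_{x,i}^{\johnalpha}\cdot a_i^\top(A_x^\top W^{\johnalpha}A_x)^{-1}a_i/s_{x,i}^2=\johnbeta+(\johnprojmatrix_{x,\johnweights_x})_{ii}=\johnbeta+\levjohn_{x,i}$, which is~(a).

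Parts~(b) and~(c) then follow by elementary linear algebra. A direct computation shows $\johnprojmatrix_{x,w}^2=\johnprojmatrix_{x,w}$, so it is an orthogonal projection whose diagonal entries lie in $[0,1]$ and whose trace equals its rank; since $W^{\johnalpha/2}A_x$ has rank $\dims$, this trace equals $\dims$. Combined with~(a), this gives $\johnweights_{x,i}\in[\johnbeta,1+\johnbeta]$ and $\sum_i\johnweights_{x,i}=\obs\johnbeta+\dims=3\dims/2$, using $\johnbeta=\dims/(2\obs)$.

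The main obstacle is~(d), which is where the delicate calibration of $\johnalpha$ enters. The naive bound from $\johnweights_{x,i}\,a_ia_i^\top/s_{x,i}^2\preceq\john_x$ only gives $\johnlocalslack_{x,i}\leq 1/\johnweights_{x,i}\leq 1/\johnbeta=2\obs/\dims$, which is useless for a constant bound. The trick is to compare $\john_x=A_x^\top W A_x$ with the weighted matrix $M_x:=A_x^\top W^{\johnalpha}A_x$ appearing in~(a). The calibrated choice $\johnalpha=1-1/\log_2(1/\johnbeta)$ makes $\johnbeta^{\johnalpha-1}=2$ (since $\johnbeta^{-1/\log_2(1/\johnbeta)}=2$), so each $\johnweights_{x,i}\geq\johnbeta$ yields $\johnweights_{x,i}^{\johnalpha-1}\leq 2$, hence $W^{\johnalpha}\preceq 2W$, $M_x\preceq 2\john_x$, and $\john_x^{-1}\preceq 2M_x^{-1}$. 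Combining with~(a),
\begin{align*}
\johnlocalslack_{x,i} \;\leq\; 2\cdot\frac{a_i^\top M_x^{-1}a_i}{s_{x,i}^2} \;=\; 2\johnweights_{x,i}^{-\johnalpha}(\johnweights_{x,i}-\johnbeta) \;\leq\; 2\johnweights_{x,i}^{1-\johnalpha} \;\leq\; 2(1+\johnbeta)^{1-\johnalpha} \;\leq\; 4,
\end{align*}
which is~(d). The identity $\johnbeta^{\johnalpha-1}=2$ is precisely where the specific form of $\johnalpha$ is indispensable; loosening it would blow up the constant in~(d) and, downstream, degrade the mixing-time bound of Theorem~\ref{thm:john_walk}.
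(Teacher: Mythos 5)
Your proof is correct, and it supplies precisely the argument that the paper itself omits — the paper simply cites Lemmas~14 and~15 of Lee and Sidford (2014) for this lemma. Your derivation is in the same spirit as theirs: part~(a) comes from the first-order optimality condition of the strictly convex program~\eqref{eq:john_convex_john_weights} (the log-barrier guaranteeing interiority so the stationarity condition is an equality), parts~(b) and~(c) are immediate from the fact that $\johnprojmatrix_{x,\johnweights_x}$ is an orthogonal projection of rank~$\dims$, and part~(d) exploits the calibrated identity $\johnbeta^{\johnalpha-1}=2$ to compare $\john_x = A_x^\top W A_x$ against $A_x^\top W^{\johnalpha} A_x$ via $W^{\johnalpha}\preceq 2W$.

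All steps check out. In particular, the key identity $\johnbeta^{\johnalpha-1}=2$ follows from $\johnbeta=2^{-\log_2(1/\johnbeta)}$ and $\johnalpha-1=-1/\log_2(1/\johnbeta)$; the PSD-order reversal $M_x\preceq 2\john_x \Rightarrow \john_x^{-1}\preceq 2M_x^{-1}$ is standard; and your final bound $\johnlocalslack_{x,i}\leq 2(1+\johnbeta)^{1-\johnalpha}\leq 3$ is even a bit tighter than the stated constant~$4$ (using $1-\johnalpha\leq 1$ when $\obs\geq\dims$). Your proof is a correct, self-contained re-derivation of the result the paper references.
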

\noindent Lemma~\ref{lemma:john_first_bounds} follows from Lemmas~14 and 15 by~\cite{lee2014path} and thereby we omit its proof.

Next, we state a key lemma that is crucial for proving the convergence rate of John walk.
In this lemma, we provide bounds on difference in total variation norm between the proposal distributions of two nearby points.
\begin{lemma}
  \label{lemma:john_walk_close_kernel}
  \begin{subequations}
    There exists a continuous non-decreasing function $\johnradiusbound: [0, 1/30] \rightarrow \real_+$ with $\johnradiusbound(\johnepsilonconst) \geq \johnradiusconst$, such that for any
    $\epsilon \in (0, \johnepsilonconst]$, the John walk with $r \in [0, \johnradiusbound(\epsilon)]$ satisfies
    \begin{align}
      \tvnorm{\proposal^\tagjohn_x-\proposal^\tagjohn_y} &\leq
      \epsilon, \quad \mbox{for all $x, y \in \intP$ such that
        $\vecnorm{x-y}{\john_x} \leq \displaystyle\frac{\epsilon
          r}{2\johnkappa^2\dims^{3/4}}$ } ,\quad \text{and}
      \label{eq:john_john_delta_p}\\
      \tvnorm{\plaintransition_{\fulltagjohn(\rparam)}(\diracdelta_x)-\proposal^\tagjohn_x} &\leq
      5\epsilon, \, \,\, \mbox{for all $x \in \intP$.}
      \label{eq:john_john_delta_q_p}
    \end{align}
  \end{subequations}
\end{lemma}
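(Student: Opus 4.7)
The plan is to mirror the two-step Vaidya analysis in Lemma~\ref{lemma:vaidya_walk_close_kernel}, adapted to the John walk's geometry. For the first claim~\eqref{eq:john_john_delta_p}, I would apply Pinsker's inequality $\tvnorm{\proposal_x^\tagjohn - \proposal_y^\tagjohn}^2 \leq 2\kldiv{\proposal_y^\tagjohn}{\proposal_x^\tagjohn}$ and then the closed-form Gaussian KL formula, which reduces the problem to (i) a spectral sandwiching of $\john_x^{-1/2}\john_y\john_x^{-1/2}$ and (ii) the term $(\johnkappa^4\dims^{3/2}/r^2)\vecnorm{x-y}{\john_x}^2$. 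A John analog of Lemma~\ref{lemma:close_hessian_eigenvalues} asserting that $\bigl(1-\tfrac{c\,t}{\sqrt{\dims}}\bigr)\Ind_\dims \preceq \john_x^{-1/2}\john_y \john_x^{-1/2} \preceq \bigl(1+\tfrac{c\,t}{\sqrt{\dims}}\bigr)\Ind_\dims$ whenever $\vecnorm{x-y}{\john_x} \leq t/(\johnkappa^2\dims^{3/4})$ then yields the bound after using $\log\lambda \leq \lambda-1$ exactly as in equation~\eqref{eq:final_KL_expression} of the Vaidya proof.

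The main obstacle, and the reason the required step size shrinks by an extra factor of $\johnkappa^2$ relative to the Vaidya walk, is that the weights $\johnweights_x$ are only implicitly defined by the convex program~\eqref{eq:john_convex_john_weights}. I would derive sensitivity bounds on $\abss{\johnweights_{y,i} - \johnweights_{x,i}}$ via the fixed-point identity $\johnweights_{x,i} = \levjohn_{x,i} + \johnbeta$ from Lemma~\ref{lemma:john_first_bounds}\ref{item:john_weight_equality}, combined with the strong convexity of $c_x(\cdot)$ in the weighted norm used by \cite{lee2014path}: this sets up a contractive fixed-point argument whose contraction rate involves $1-1/\johnkappa$, hence the $\johnkappa$ factor after inversion. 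Combined with the slack-ratio sandwich (the John analog of Lemma~\ref{lemma:closeness_of_slackness} using the dimension-free bound $\johnlocalslack_{x,i}\leq 4$ from Lemma~\ref{lemma:john_first_bounds}\ref{item:john_theta_bound}), this gives the required sandwiching of $\john_x$ and $\john_y$.

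For claim~\eqref{eq:john_john_delta_q_p}, I would decompose the TV distance as in equation~\eqref{eq:bound_on_jumping_out} into \mbox{$\MyTerm_1 = \proposal_x^\tagjohn(\Pspace^c)$} and $\MyTerm_2 = 1 - \Exs_{z\sim\proposal_x^\tagjohn}[\min\{1, \density_z(x)/\density_x(z)\}]$. For $\MyTerm_1$, writing the proposal as $z = x + (r/(\johnkappa^2\dims^{3/4}))\john_x^{-1/2}\rvg$ with $\rvg \sim \NORMAL(0,\Ind_\dims)$, Cauchy-Schwarz together with $\johnlocalslack_{x,i} \leq 4$ yields $(a_i^\top(z-x))^2/\slack_{x,i}^2 \leq 4r^2\vecnorm{\rvg}{2}^2/(\johnkappa^4\dims^{3/2})$, and a standard chi-squared tail bound gives $\MyTerm_1 \leq \epsilon$. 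For $\MyTerm_2$, I would invoke Markov's inequality as in~\eqref{eq:markov_inequality} and establish a John analog of Lemma~\ref{lemma:change_in_log_det_and_local_norm} via second-order Taylor expansion of $\log\det\john_{(\cdot)}$ and the local norm $\vecnorm{z-x}{(\cdot)}^2$, combined with tail bounds on the resulting Gaussian polynomials analogous to Lemma~\ref{lemma:gaussian_moment_bounds}, where the dimension-free $\johnlocalslack_{x,i}\leq 4$ bound replaces the weaker $\sqrt{\obs/\dims}$ bound and removes the $\obs^{1/2}$ scaling.

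Finally, I would define $\johnradiusbound(\epsilon)$ as the minimum over the explicit thresholds arising from each of the bounds above --- analogous to the formula for $\vaidyaradiusbound(\epsilon)$ in equation~\eqref{eq:r_2_epsilon} --- and verify numerically that $\johnradiusbound(\johnepsilonconst) \geq \johnradiusconst$. The hardest step by far is controlling the derivatives of $\johnweights_x$, because the optimality conditions of~\eqref{eq:john_convex_john_weights} give a coupled linear system whose sensitivities must be tracked carefully; essentially all of the extra poly-logarithmic factors (both the $\johnkappa^2$ in the step size and the $\johnkappa^4$ appearing in the proposal covariance) enter through this bookkeeping.
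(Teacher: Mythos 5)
Your outline matches the paper's proof at a structural level: Pinsker's inequality plus the Gaussian KL formula and a spectral sandwich lemma for claim~\eqref{eq:john_john_delta_p}, and a decomposition into a proposal-escapes-the-polytope term and an accept-probability term, handled by chi-squared and Gaussian-polynomial tail bounds, for claim~\eqref{eq:john_john_delta_q_p}. Two points diverge from what is actually carried out. First, your candidate sandwich $\bigl(1 - ct/\sqrt{\dims}\bigr)\Ind_\dims \preceq \john_x^{-1/2}\john_y\john_x^{-1/2} \preceq \bigl(1 + ct/\sqrt{\dims}\bigr)\Ind_\dims$ under $\vecnorm{x-y}{\john_x} \leq t/(\johnkappa^2\dims^{3/4})$ carries over the Vaidya-specific $1/\sqrt{\dims}$ shrinkage, but in the John case that factor does not appear: since the John slack sensitivity is bounded by the dimension-free constant $4$ (Lemma~\ref{lemma:john_first_bounds}\ref{item:john_theta_bound}), the slack-ratio bound is $\abss{1 - s_{y,i}/s_{x,i}} \leq 2\vecnorm{x-y}{\john_x}$ with no dimensional improvement, and Lemma~\ref{lemma:john_close_hessian_eigenvalues} states the sandwich as $\parenth{1 - 48t + 4t^2}\Ind_\dims \preceq \john_x^{-1/2}\john_y\john_x^{-1/2} \preceq \parenth{1+48t+4t^2}\Ind_\dims$ for $\vecnorm{x-y}{\john_x} \leq t/\johnkappa^2$. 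Your weaker statement happens to suffice for the final KL bound (because $t/\dims^{3/4} \leq t/\sqrt{\dims}$), but it mislabels what controls the eigenvalue deviation. Second, and this is the real gap: you propose handling the weight sensitivity via a contractive fixed-point argument built on strong convexity of $c_x(\cdot)$, but you never reduce that to a concrete estimate, and the paper uses a different mechanism. It differentiates the fixed-point identity $\johnweights_{x,i} = \levjohn_{x,i} + \johnbeta$ to obtain the explicit gradient $\nabla\johnweights_y$ in terms of the resolvent $(\johnweightmatrix_y - \alpha\johnlaplacian_y)^{-1}\johnlaplacian_y$ (Lemma~\ref{lemma:john_gradient_and_hessian_and_bounds}\ref{item:gradient_g}), bounds that resolvent and its row sums (Lemmas~\ref{lemma:f_to_d_l2}, \ref{lemma:f_to_d_matrix}, Corollary~\ref{corollary:f_to_d_l1}, which is where the $\johnkappa$ factors actually enter), and then runs a continuity argument to conclude $\vecnorm{\log\johnweights_y - \log\johnweights_x}{\infty} \leq 16t$ along $\overline{xy}$. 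It is not evident that a strong-convexity/contraction argument, which controls the minimizer against perturbations of the objective, converts into the $\ell_\infty$ and $\ell_1$ control on $\log\johnweights$ and on $\projzsub_y$ that the subsequent Taylor expansions (e.g.\ the bound in Lemma~\ref{lemma:john_gradient_and_hessian_and_bounds}\ref{item:hessian_g} and the high-probability estimate feeding claim~\eqref{eq:john_whp_local_norm}) actually require. You correctly diagnose this bookkeeping as the hardest part, but your proposal leaves it as a sketch where the paper supplies the machinery.
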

\noindent See Section~\ref{ssub:proof_of_lemma_lemma:john_walk_close_kernel} for its proof.

With these lemmas in hand, we are now ready to prove Theorem~\ref{thm:john_walk_1}.

\subsection{Proof of Theorem~\ref{thm:john_walk_1}}
\label{sub:proof_of_theorem_thm:john_walk}
The proof is similar to the proof of Theorem~1, and relies on the Lov{\'a}sz's Lemma.
Here onwards, we use the following simplified notation
\begin{align*}
  \transition_x = \plaintransition_{\fulltagjohn(\rparam)}(\diracdelta_x), \proposal_x = \proposal_x^\tagjohn \text{ and }\vecnorm{\cdot}{x} = \vecnorm{\cdot}{\john_x}.
\end{align*}
In order to invoke Lov{\'a}sz's Lemma, we need to show that for any two points $x, y \in \intP$ with small cross-ratio $d_\Pspace(x, y)$, the TV-distance $\tvnorm{\transition_x-\transition_y}$ is also small.

We proceed with the proof in two steps: (A) first, we relate the cross-ratio $d_\Pspace(x, y)$ to the John local norm of $x-y$ at $x$, and (B) we then use Lemma~\ref{lemma:john_walk_close_kernel} to show that if $x, y \in \intP$ are close in the John local-norm, then the transition kernels $\transition_x$ and $\transition_y$ are close in TV-distance.

\paragraph{Step~(A):}
\label{par:step_a_john}
We claim that for all $x, y \in \intP$, the cross-ratio can be lower
bounded as
\begin{align}
  d_\Pspace(x, y) \geq \frac{1}{\sqrt{3\dims/2}} \vecnorm{x-y}{x}.
  \label{eq:john_hilbert_to_local_john}
\end{align}
From the arguments in the proof of Theorem~1 (proof for the Vaidya Walk), we have
\begin{align}
d_\Pspace(x, y) \geq \max_{i \in [\obs]} \left\vert\frac{a_i\tp (x - y)}{\slack_{x, i}}
  \right\vert.
  \label{eq:john_relate_cross_ratio_john}
\end{align}
Using the fact that maximum of a set of non-negative numbers is greater than the
weighted mean of the numbers and Lemma~\ref{lemma:john_first_bounds}, we find that
\begin{align*}
  d_\Pspace(x, y) \geq \sqrt{ \frac{1}{\sum_{i=1}^\obs
    \johnweights_{x, i}} \sum_{i=1}^\obs
  \johnweights_{x, i} \frac{(a_i \tp (x-y))^2}
  {\slack_{x, i}^2}} = \frac{\vecnorm{x-y}{x}}{\sqrt{3\dims/2}},
\end{align*}
thereby proving the claim~\eqref{eq:john_hilbert_to_local_john}.


\paragraph{Step~(B):}
\label{par:step_b_john}
By the triangle inequality, we have
\begin{align*}
  \tvnorm{\transition_x - \transition_y} \leq
  \tvnorm{\transition_x-\proposal_x} + \tvnorm{\proposal_x-\proposal_y}
  + \tvnorm{\proposal_y - \transition_y}.
\end{align*}
Using Lemma~\ref{lemma:john_walk_close_kernel}, we obtain that
\begin{align*}
\tvnorm{\transition_x - \transition_y} & \leq 11 \epsilon, \quad
\mbox{$\forall x, y \in \intP$ such that $ \vecnorm{x-y}{x} \leq
  \displaystyle \frac{\epsilon
          r}{2\johnkappa^2\dims^{3/4}} $}.
\end{align*}
Consequently, the John walk satisfies the assumptions of
Lov{\'a}sz's Lemma with
\begin{align*}
  \Delta \defn \frac{1}{\sqrt{3\dims/2}} \cdot \frac{\epsilon
          r}{2\johnkappa^2\dims^{3/4}} \quad \mbox{and} \quad \rho \defn 1 - 11\epsilon.
\end{align*}
Plugging in $\epsilon = 1/30$, $r = 10^{-5}$, we obtain the claimed upper bound of $\order{\johnkappa^4 \dims^{5/2}}$
on the mixing time of the random walk.

\subsection{Proof of Lemma~\ref{lemma:john_walk_close_kernel}}
\label{ssub:proof_of_lemma_lemma:john_walk_close_kernel}
We prove the lemma for the following function,
\begin{align*}
  \johnradiusbound(\epsilon) &= \min \braces{\frac{1}{25\sqrt{1+\sqrt{2}\log(4/\epsilon)}}, \frac{\epsilon}{\parenth{2\sqrt{32}\tailconstjohn_{1, \epsilon}}}, \sqrt{\frac{\epsilon}{386\sqrt{24}\tailconstjohn_{2, \epsilon}}}, \frac{\epsilon}{5\sqrt{60} \tailconstjohn_{3, \epsilon}}, \right. \\
  &\left. \sqrt{\frac{\epsilon}{8\sqrt{1680} \tailconstjohn_{4, \epsilon}}}, \sqrt{\frac{\epsilon}{40\parenth{\tailconstjohn_{2, \epsilon}\tailconstjohn_{6, \epsilon} \sqrt{24}\sqrt{15120}}^{1/2}}}, \sqrt{\frac{\epsilon}{204800\tailconstjohn_{2, \epsilon}\sqrt{24}\log(32/\epsilon)}}}.
\end{align*}
where $\tailconstjohn_{1, \epsilon} = \log(2/\epsilon)$and $\tailconstjohn_{\degree, \epsilon} = \parenth{2e/\degree \cdot \log\parenth{16/\epsilon}}^{\degree/2}$ for \mbox{$\degree = 2,
     3, 4$ and $6$}. A numerical calculation shows that $\johnradiusbound(\johnepsilonconst) \geq \johnradiusconst$.


We now prove the two parts~\eqref{eq:john_john_delta_p}~\eqref{eq:john_john_delta_q_p} of the Lemma separately.

\subsubsection{Proof of claim~(\ref{eq:john_john_delta_p})}
\label{sssub:proof_of_claim_eq:john_john_delta_p}

Applying Pinsker's inequality, and plugging in the closed formed expression for the KL divergence between two Gaussian distributions we find that
\begin{align}
  \label{eq:john_final_KL_expression_john}
  \tvnorm{\proposal_x-\proposal_y}^2 \leq 2 \kldiv{\proposal_y}{\proposal_x}
  & = \trace(\john_x^{-1/2} \john_y \john_x^{-1/2})
    \!-\!\dims\!-\!\log\det(\john_x^{-1/2}
    \john_y \john_x^{-1/2}) + \frac{\johnkappa^4\dims^{3/2}}{r^2} \vecnorm{x\!-\!y}{x}^2 \notag \\
  & = \sum_{i=1}^\dims \parenth{\lambda_i - 1 + \log\frac{1}{\lambda_i}}
    + \frac{\johnkappa^4\dims^{3/2}}{r^2} \vecnorm{x-y}{x}^2,
\end{align}
where $\lambda_1, \ldots, \lambda_\dims >0$ denote the eigenvalues of the matrix $\john_x^{-1/2} \john_y \john_x^{-1/2}$.
To bound the expression~\eqref{eq:john_final_KL_expression_john}, we make use of the following lemma:

\begin{lemma}
  \label{lemma:john_close_hessian_eigenvalues}
  For any scalar $t \in [0, 1/64]$ and  pair of points $x, y \in
  \intP$ such that \mbox{$\vecnorm{x-y}{x} \leq t / \johnkappa^2$,}
  we have
  \begin{align*}
    \parenth{1-48t+4t^2} \Ind_\dims
    \preceq \john_x^{-1/2} \john_y \john_x^{-1/2} \preceq \parenth{1+48t+4t^2},
  \end{align*}
  where $\preceq$ denotes ordering in the PSD cone and $\Ind_\dims$ denotes the $\dims$-dimensional identity matrix.
\end{lemma}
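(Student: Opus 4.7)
The plan is to mirror the structure of the proof of Lemma~\ref{lemma:close_hessian_eigenvalues} for the Vaidya walk, adapted to the John walk and handling the implicit definition of the John weights $\johnweights_x$ from the convex program~\eqref{eq:john_convex_john_weights}. First, I would use Lemma~\ref{lemma:john_first_bounds}\ref{item:john_theta_bound} together with Cauchy-Schwarz in the $\john_x$-norm to control the slackness. Specifically, since $\johnlocalslack_{x,i} = a_i^\top \john_x^{-1} a_i / \slack_{x,i}^2 \leq 4$, we obtain, for any $x,y \in \intP$,
\begin{align*}
    \frac{(a_i^\top (x-y))^2}{\slack_{x,i}^2} \;=\; \bigl((\john_x^{-1/2} a_i/\slack_{x,i})^\top \john_x^{1/2}(x-y)\bigr)^2 \;\leq\; 4\,\vecnorm{x-y}{\john_x}^2,
\end{align*}
so that under the hypothesis $\vecnorm{x-y}{\john_x} \leq t/\johnkappa^2$ we get $|1-\slack_{y,i}/\slack_{x,i}| \leq 2t/\johnkappa^2$ uniformly in $i$. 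This immediately gives a multiplicative sandwich $\bigl(1-2t/\johnkappa^2\bigr)^2 \hesslogbarr_x \preceq \hesslogbarr_y \preceq \bigl(1+2t/\johnkappa^2\bigr)^2 \hesslogbarr_x$, and analogously for each row of $A_x$ versus $A_y$.

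Second, I would show that the John weights themselves are stable: $|\johnweights_{y,i}/\johnweights_{x,i} - 1| \leq C\,\johnkappa \cdot \max_j |1-\slack_{y,j}/\slack_{x,j}|$ for an absolute constant $C$. This is the heart of the proof and is where the factor $\johnkappa^{-2}$ in the hypothesis really gets used. The weights satisfy the implicit fixed-point relation $\johnweights_{x,i} = \levjohn_{x,i} + \johnbeta$, where $\levjohn_{x,i} = (\johnprojmatrix_{x,\johnweights_x})_{ii}$ is an $\alpha$-reweighted leverage score of $A_x$, with $\alpha = 1-1/\johnkappa$. The stability of these generalized leverage scores under perturbations of the underlying matrix scales like $1/(1-\alpha) = \johnkappa$; this follows from the strong convexity of the objective in~\eqref{eq:john_convex_john_weights} in the appropriate norm, as established by~\cite{lee2014path}. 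Combined with Lemma~\ref{lemma:john_first_bounds}\ref{item:john_sigma_bound} (which confines $\johnweights_{x,i}$ to the interval $[\johnbeta, 1+\johnbeta]$), this propagates the multiplicative perturbation of $\slack$ into a multiplicative perturbation of $\johnweights$, with only one extra factor of $\johnkappa$.

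Third, writing $\john_x = A_x^\top \johnweightmatrix_x A_x$ and similarly for $y$, I would combine the two perturbation bounds above: the slackness perturbation affects $A_y$ relative to $A_x$ by a factor of $1 \pm 2t/\johnkappa^2$ (per coordinate), while the weight perturbation affects $\johnweightmatrix_y$ relative to $\johnweightmatrix_x$ by a factor of $1 \pm C't/\johnkappa$. Multiplying these out and using $t \leq 1/64$ to keep the quadratic remainder small yields a PSD sandwich of the form $(1-48t+4t^2)\Ind_\dims \preceq \john_x^{-1/2}\john_y\john_x^{-1/2} \preceq (1+48t+4t^2)\Ind_\dims$ after choosing the constants appropriately.

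The main obstacle is step two, the weight stability estimate. Unlike the Vaidya walk, where the analogous quantity $\levvaidya_x$ has a closed-form expression as leverage scores of $A_x$ and hence its perturbation is a direct consequence of the perturbation of $A_x$ (see equation~\eqref{eq:sigma_closeness}), the John weights $\johnweights_x$ are defined only as the optimum of~\eqref{eq:john_convex_john_weights}. Handling this requires invoking the self-concordance/strong convexity of the John objective and carefully tracking how the $\johnkappa$ factor enters through $\johnalpha = 1-1/\johnkappa$; this is precisely why the hypothesis has $\vecnorm{x-y}{\john_x}$ divided by $\johnkappa^2$ rather than by a pure constant.
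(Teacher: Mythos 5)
Your high-level blueprint (bound the slack perturbation, bound the weight perturbation, combine multiplicatively) matches the paper, and your first step is exactly Lemma~\ref{lemma:john_closeness_of_slackness}. The gap is in step two, the weight-stability estimate, and it is quantitative: you claim
\[
\bigl|\johnweights_{y,i}/\johnweights_{x,i} - 1\bigr| \;\leq\; C\,\johnkappa\cdot\max_j\bigl|1-\slack_{y,j}/\slack_{x,j}\bigr|,
\]
i.e.\ a single factor of $\johnkappa$, but the correct cost is $\johnkappa^2$. What the paper actually establishes (equation~\eqref{eq:john_log_weight_final_bound}) is $\vecnorm{\log\johnweights_y-\log\johnweights_x}{\infty}\leq 16t$ under the hypothesis $\vecnorm{x-y}{\john_x}\leq t/\johnkappa^2$; unwinding, that is a weight perturbation of order $\johnkappa^2\vecnorm{x-y}{\john_x}$, not $\johnkappa\vecnorm{x-y}{\john_x}$. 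Two separate factors of $\johnkappa$ enter in the crucial $\ell_\infty$ estimate~\eqref{eq:john_step_consistency_infty}: one because the diagonal $(1-\johnalpha)\levjohn_{v,i}+\johnbeta$ is only $\gtrsim\levjohn_{v,i}/\johnkappa$, and one because the off-diagonal coupling forces you through an $\ell_2$ bound on $\parenth{\johnweightmatrix_v-\johnalpha\johnlaplacian_v}^{-1}\johnlaplacian_v$, which costs another $\johnkappa$ (Lemma~\ref{lemma:f_to_d_l2}). Your own arithmetic exposes the inconsistency: if the slack perturbation were $O(t/\johnkappa^2)$ and the weight perturbation only $O(t/\johnkappa)$, multiplying out would give a $\john_x^{-1/2}\john_y\john_x^{-1/2}$ sandwich with leading error $O(t/\johnkappa)$, far tighter than the $1\pm48t$ the lemma actually asserts — that discrepancy should have flagged the missing $\johnkappa$.

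A second, structural gap: you propose to drive the weight perturbation purely from the $\ell_\infty$ slack ratio $\max_j|1-\slack_{y,j}/\slack_{x,j}|$, but this is not sufficient. The dominant contribution in the paper's argument is the $\ell_2$-type quantity $\vecnorm{\johnweightmatrix_v^{1/2}A_v(y-x)}{2}$, which is essentially $\vecnorm{x-y}{\john_x}$ (the local norm), not the $\ell_\infty$ slack perturbation. Since the $\ell_\infty$ quantity is controlled by the $\ell_2$ quantity (via Cauchy–Schwarz and $\johnlocalslack_{x,i}\leq 4$) but not conversely, a proof that uses only the $\ell_\infty$ slack bound cannot recover the right estimate. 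Finally, your appeal to ``strong convexity of the John objective'' is too coarse to deliver the estimate: because $\johnweights_y$ is defined implicitly as the optimizer of a convex program that also depends on $y$ through the slacks, the paper does not invoke strong convexity directly. It instead runs a bootstrap/continuity argument: define $u_\lambda = \lambda y + (1-\lambda)x$ and $\lambda^{\max}$ as the supremum of $\lambda$ for which $\vecnorm{\log\johnweights_{u_\lambda}-\log\johnweights_x}{\infty}\leq 16t$, apply the mean value theorem to $\lambda\mapsto\log\johnweights_{u_\lambda,i}$ on $[0,\lambda^{\max}]$ to get a bound strictly below $16t$, and conclude $\lambda^{\max}=1$. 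You should name that mechanism explicitly; the implicit function theorem or abstract strong-convexity arguments do not immediately give the pointwise $\ell_\infty$ logarithmic bound that drives the PSD sandwich.
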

\noindent See Section~\ref{sec:proof_of_lemma_lemma:john_close_hessian_eigenvalues}
 for the proof of this lemma.

For $\epsilon \in (0, \johnepsilonconst]$ and $r = \johnradiusconst$, we have $t = \epsilon r /(2\dims^{3/4}) \leq 1/64$, whence the eigenvalues $\{\lambda_i, i \in [\dims]\}$ can be sandwiched as
\begin{align}
  \label{eq:john_lambda_bounds_john}
  1 - \frac{24 \epsilon r}{\dims^{3/4}} + \frac{\epsilon^2r^2}{\dims^{3/2}}
  \leq \lambda_i
  \leq  1 + \frac{24 \epsilon r}{\dims^{3/4}} + \frac{\epsilon^2r^2}{\dims^{3/2}}
  \quad \mbox{for all $i \in \dims$}.
\end{align}
We are now ready to bound the TV distance between $\proposal_x$ and $\proposal_y$.
Using the bound~\eqref{eq:john_final_KL_expression_john} and the inequality $\log \omega \leq \omega - 1$, valid for $ \omega > 0$, we obtain
\begin{align*}
\tvnorm{\proposal_x-\proposal_y}^2 \leq \sum_{i=1}^\dims
\parenth{\lambda_i - 2 + \frac{1}{\lambda_i}} + \frac{\johnkappa^4\dims^{3/2}}{r^2} \vecnorm{x-y}{x}^2.
\end{align*}
Using the assumption that $\vecnorm{x-y}{x}\leq {\epsilon r}/\parenth{2\johnkappa^2\dims^{3/4}}$, and plugging in the bounds~\eqref{eq:john_lambda_bounds_john} for the eigenvalues $\{\lambda_i, i \in [\dims]\}$, we find that
\begin{align*}
  \sum_{i=1}^\dims \parenth{\lambda_i - 2 + \frac{1}{\lambda_i}}
  + \frac{\johnkappa^4\dims^{3/2}}{r^2} \vecnorm{x-y}{x}^2 \leq \frac{2000
    \epsilon^2 r^2}{\sqrt{\dims}} + \frac{\epsilon^2}{4}.
\end{align*}
In asserting this inequality, we have used the facts that
\begin{align*}
  \frac{1}{1 - 24\omega + \omega^2} \leq 1 + 24 \omega + 1000 \omega^2,
  \quad \text{and}\quad \frac{1}{1 + 24\omega + \omega^2} \leq 1 -
  24\omega + 1000 \omega^2 \quad \mbox{for all $\omega \in \brackets{0,
      \frac{1}{100}}$}.
\end{align*}
Note that for any $r \in [0, 1/100]$, we have that $2000 r^2/\sqrt{\dims} \leq 1/2$.
Putting the pieces together yields $\tvnorm{\proposal_x-\proposal_y} \leq \epsilon$, as claimed.

\subsubsection{Proof of claim~(\ref{eq:john_john_delta_q_p})}
\label{sssub:proof_of_claim_eq:john_john_delta_q_p}
We have
\begin{align}
  \label{eq:john_bound_on_jumping_out}
  \tvnorm{\proposal_x - \transition_x}
  &\leq \underbrace{\frac{3}{2} \proposal_x(\Pspace^c)}_{= : \; S_1} +
  \underbrace{1-\Exs_{z \sim \proposal_x} \brackets{\min\braces{1,
       \frac{\density_z(x)}{\density_x(z)}}}}_{= : \; S_2},
\end{align}
where $\Pspace^c$ denotes the complement of $\Pspace$.
We now show that $S_1 \leq \epsilon$ and $S_2 \leq 4\epsilon$, from which the claim follows.  \\

\noindent \textbf{Bounding the term $S_1$:}
Note that for $z \sim \NORMAL(x, \frac{r^2}{\johnkappa^2\dims^{3/2}} \john_x^{-1})$, we can write
\begin{align}
  \label{eq:john_z_x_relation}
  z \stackrel{d}{=} x + \frac{r}{\johnkappa\dims^{3/4}}
  \john_x^{-{1}/{2}} \rvg,
\end{align}
where $\rvg \sim \NORMAL\parenth{0, \Ind_\dims}$ and $\stackrel{d}{=}$
denotes equality in distribution.
Using equation~\eqref{eq:john_z_x_relation} and definition~\eqref{eq:john_john_theta_defn} of $\localslack_{x, i}$, we obtain the bound
\begin{align}
  \label{eq:john_bound_on_deviation}
  \frac{\parenth{a_i \tp \parenth{z-x}}^2 }{\slack_{x, i}^2} & =
  \frac{r^2}{\johnkappa^2\dims^{3/2}}
  \brackets{\frac{a_i \tp \john_x^{-{1}/{2}} \rvg }{\slack_{x,
       i}}}^2 \stackrel{(i)}{\leq} \frac{r^2}{\johnkappa^2\dims^{3/2}} \localslack_{x, i}
  \vecnorm{\rvg}{2}^2 \stackrel{(ii)}{\leq} \frac{4r^2}{\dims}
  \vecnorm{\rvg}{2}^2,
\end{align}
where step $(i)$ follows from Cauchy-Schwarz inequality, and step $(ii)$ from part~\ref{item:john_theta_bound} of Lemma~\ref{lemma:john_first_bounds}.
Define the events
\begin{align*}
  \mathcal{E} \defn \braces{\frac{r^2}{\dims} \vecnorm{\rvg}{2}^2 < \frac{1}{4}}
  \quad \text{and} \quad
  \mathcal{E}'\defn \braces{z \in \intP}.
\end{align*}
Inequality~\eqref{eq:john_bound_on_deviation} implies that $\mathcal{E}
\subseteq \mathcal{E}'$ and hence $\Prob\brackets{\mathcal{E}'} \geq
\Prob\brackets{\mathcal{E}}$.  Using a standard Gaussian tail bound
and noting that $r \leq \frac{1/2}{1 + \sqrt{2/\dims \log
    (2/\epsilon)}}$, we obtain $ \Prob\brackets{\mathcal{E}} \geq
1-\epsilon/2$ and whence $ \Prob\brackets{\mathcal{E}'} \geq
1-\epsilon/2$.  Thus, we have shown that $\Prob\brackets{z
  \notin \Pspace} \leq \epsilon/2$ which implies that $S_1 \leq
\epsilon$.  \\

\noindent \textbf{Bounding the term $S_2$:}
By Markov's inequality, we have
\begin{align}
  \label{eq:john_markov_inequality}
  \Exs_{z \sim \proposal_x} \brackets{\min\braces{1,
            \frac{\density_z(x)}{\density_x(z)}}} \geq \alpha
        \Prob\brackets{\density_z(x) \geq \alpha \density_x(z)} \quad
        \mbox{for all $\alpha \in (0,
          1]$}.
\end{align}
By definition~\eqref{eq:john_proposal_density} of $\density_x$, we obtain
\begin{align*}
\frac{\density_z(x)}{\density_x(z)} = \exp
\parenth{-\frac{\dims^{3/2}\johnkappa^4}{2r^2} \parenth{\vecnorm{z-x}{z}^2-
    \vecnorm{z-x}{x}^2 }+ \frac{1}{2} \parenth{\log \det \john_z -
    \log \det \john_x}}.
\end{align*}
The following lemma provides us with useful bounds on the two terms in
this expression, valid for any $x \in \intP$.
\begin{lemma}
 \label{lemma:john_change_in_log_det_and_local_norm}
  For any $\epsilon \in (0, \frac{1}{4}]$ and $r \in (0, \johnradiusbound(\epsilon)]$,
     we have
  \begin{subequations}
    \begin{align}
      \label{eq:john_whp_log_det_filter}
      \Prob_{z\sim \proposal_x}\brackets{\frac{1}{2}\log\det \john_z
       - \frac{1}{2}\log\det \john_x \geq -\epsilon } &\geq 1 -
      \epsilon,
      \quad  \text{and}
      \\
      \label{eq:john_whp_local_norm}
      \Prob_{z\sim \proposal_x}\brackets{\vecnorm{z - x}{z}^2 - \vecnorm{z -
      x}{x}^2 \leq 2 \epsilon \frac{r^2}{\johnkappa^4 \dims^{3/2}}} &\geq
      1 - \epsilon.
    \end{align}
   \end{subequations}
\end{lemma}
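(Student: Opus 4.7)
The plan is to mirror the two-part proof strategy used for the Vaidya analog (Lemma~\ref{lemma:change_in_log_det_and_local_norm}), but to adapt each step to handle the fact that the John weights $\johnweights_x$ are defined \emph{implicitly} via the convex program~\eqref{eq:john_convex_john_weights} rather than by an explicit leverage-score formula. For claim~\eqref{eq:john_whp_log_det_filter}, I would Taylor-expand the scalar function $x \mapsto \frac{1}{2}\log\det \john_x$ to second order along the segment $\overline{xz}$ and write
\begin{align*}
\tfrac{1}{2}\log\det \john_z - \tfrac{1}{2}\log\det \john_x
= (z-x)^\top \nabla \tfrac{1}{2}\log\det \john_x
+ \tfrac{1}{2}(z-x)^\top \nabla^2 \tfrac{1}{2}\log\det \john_y\,(z-x)
\end{align*}
for some $y \in \overline{xz}$. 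The linear term is then a centered Gaussian with variance controlled by a quantity analogous to~\eqref{eq:variance_bound}, and a standard Gaussian tail bound handles it once we choose $r$ small enough. For the quadratic term, I would derive a pointwise bound on the directional Hessian in terms of the quantities $\projD_{x,i} = a_i^\top(z-x)/\slack_{x,i}$ and the John leverage scores $\levjohn_{x,i}$, then transfer the Hessian-at-$y$ bound to a Hessian-at-$x$ bound using Lemma~\ref{lemma:whp_slackness} adapted to John walk (the event that slacks are stable along $\overline{xz}$), and finally invoke Gaussian polynomial tail inequalities (of the Janson type~\eqref{vaidya_EqnJansonBound}) to show the quadratic form is $O(\epsilon)$ with probability at least $1-\epsilon$.

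For claim~\eqref{eq:john_whp_local_norm}, I would write
\[
\vecnorm{z-x}{z}^2 - \vecnorm{z-x}{x}^2 = \sum_{i=1}^\obs (a_i^\top(z-x))^2 (\johnphi_{z,i}-\johnphi_{x,i}),
\]
with $\johnphi_{x,i} = \johnweights_{x,i}/\slack_{x,i}^2$, and Taylor-expand $\johnphi_{\cdot,i}$ to second order along $\overline{xz}$. The first-order term gives a cubic polynomial in $\rvg$ (after substituting $z = x + \frac{r}{\johnkappa \dims^{3/4}}\john_x^{-1/2}\rvg$), and the second-order term gives a quartic; each is controlled by an appropriate Gaussian-polynomial tail bound analogous to Lemma~\ref{lemma:gaussian_moment_bounds}, combined with the high-probability slack-stability event from the John-walk analog of Lemma~\ref{lemma:whp_slackness}.

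The key new ingredient, and the main technical obstacle, is obtaining usable closed-form expressions (or at least clean bounds) for $\nabla \johnweights_x$ and $\nabla^2 \johnweights_x$. Because the weights solve~\eqref{eq:john_convex_john_weights}, I would apply implicit differentiation to the KKT condition $\johnweights_{x,i} = \levjohn_{x,i}(\johnweights_x) + \johnbeta$ (Lemma~\ref{lemma:john_first_bounds}\ref{item:john_weight_equality}), yielding
\[
\nabla_x \johnweights_{x} = (\Ind - \johnalpha\,\Lambda_x)^{-1} \nabla_x \levjohn_{x},
\]
where $\Lambda_x$ is built from the projection $\johnprojmatrix_{x,\johnweights_x}$ and its Hadamard square. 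The factor $(\Ind - \johnalpha \Lambda_x)^{-1}$ is bounded in operator norm by a quantity of order $\johnkappa = \log_2(2\obs/\dims)$ because $\johnalpha = 1 - 1/\johnkappa$, which is precisely the source of the $\johnkappa^4$ scaling in the covariance of $\proposal_x^\tagjohn$. Once these derivative bounds are in hand, the Hessian of $\logdetjohn$ and of $\johnphi_{x,i}$ can be bounded analogously to Lemma~\ref{lemma:gradient_and_hessian_and_bounds}, but with extra factors of $\johnkappa$ in the coefficients; the chosen scaling $r^2/(\johnkappa^4 \dims^{3/2})$ of the proposal covariance exactly compensates for these factors.

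The hard part will be carefully tracking the $\johnkappa$-dependent constants through the implicit-differentiation formulas and the ensuing Gaussian-polynomial moment computations, particularly for the two sixth-order Gaussian polynomials that will appear when expanding the quadratic Hessian-remainder terms (this is why $\tailconstjohn_{6,\epsilon}$ appears in the definition of $\johnradiusbound$). A secondary obstacle is that the slack sensitivity bound $\localslack_{\john_x,i} \leq 4$ from Lemma~\ref{lemma:john_first_bounds}\ref{item:john_theta_bound} replaces the $\sqrt{\obs/\dims}$ bound used for Vaidya; this is what allows us to eventually get a bound independent of $\obs$ (up to the $\johnkappa$ factors), but it requires re-deriving each step of Section~\ref{sub:proof_of_claim_eq:vaidya_whp_local_norm} with these improved constants rather than just invoking the Vaidya result as a black box.
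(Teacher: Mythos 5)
Your plan follows the paper's proof in its overall architecture: a second\mbox{-}order Taylor expansion of $\logdetJ$ and of each $\johnphi_{\cdot,i}$ along $\overline{xz}$, a Gaussian tail bound for the linear term (after a deterministic variance bound of order $\johnkappa^2\dims$), a transfer of the Hessian\mbox{-}at\mbox{-}$y$ bound back to $x$ via the John version of the slack\mbox{-}stability lemma, Janson\mbox{-}type Gaussian\mbox{-}polynomial tails to finish, and implicit differentiation of the fixed\mbox{-}point equation $\johnweights_{x, i} = \levjohn_{x, i}(\johnweights_x) + \johnbeta$ to obtain derivative formulas involving a $\johnkappa$-controlled inverse factor $(\johnweightmatrix_x - \alpha\johnlaplacian_x)^{-1}$. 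This is all correct and is exactly what the paper does (Lemma~\ref{lemma:john_gradient_and_hessian_and_bounds} for the calculus, Lemma~\ref{lemma:f_to_d_l2} for the spectral control of the inverse factor, Lemma~\ref{lemma:john_whp_slackness} for slack stability, Lemma~\ref{lemma:john_gaussian_moment_bounds} for the tails).

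What your plan materially underestimates is the structure of the Hessian remainders, which you describe as bounding ``in terms of $\projD_{x,i}$ and the John leverage scores'' plus constant\mbox{-}tracking. The directional Hessians of $\logdetJ$ and of $\johnphi_{y,i}$ also contain terms in $\projf_{y,i}$ (normalized first weight derivative) and, more problematically, $\projz_{y,i}$ (normalized second weight derivative). The $\projf$ contributions are benign because $\sum_i\johnweights_{y,i}\projf_{y,i}^2 \leq 4\johnkappa^2\sum_i\johnweights_{y,i}\projd_{y,i}^2$ is a direct quadratic\mbox{-}form consequence of Lemma~\ref{lemma:f_to_d_l2}, but the $\projz$ contributions resist a naive Cauchy--Schwarz because $\projz_y$ carries the inverse $(\johnweightmatrix_y - \alpha\johnlaplacian_y)^{-1}$ directly; what is actually deterministically controllable is the preconditioned vector $\projzsub_y = (\johnweightmatrix_y - \alpha\johnlaplacian_y)\projz_y$, for which the paper proves an $\ell_1$ bound (Lemma~\ref{lemma:john_gradient_and_hessian_and_bounds}\ref{item:hessian_g}). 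The paper then pairs this with an $\ell_\infty$ bound on the dual vector (built from $(\johnweightmatrix_y-\alpha\johnlaplacian_y)^{-1}\johnweightmatrix_y$ and either $\johnlocalslack_y$ or a $\projdmatrix_y^2$\mbox{-}weighted version) via H\"older, and the $\ell_\infty$ control requires the diagonal/off\mbox{-}diagonal decomposition of Lemma~\ref{lemma:f_to_d_matrix} and Corollary~\ref{corollary:f_to_d_l1}. This H\"older step is a genuine structural move: it is the source of the $\sqrt{\dims}$ factor in the $\logdetJ$\mbox{-}Hessian bound (and hence of the $\dims^{3/4}$ John proposal scale, not $\dims^{1/2}$ as in Vaidya), and in the local\mbox{-}norm bound it also introduces a max\mbox{-}of\mbox{-}Gaussians tail on $\max_i(\hat{a}_{x,i}^\top\rvg)^2$, which is why the theorem carries the assumption $\obs \leq \exp(\sqrt{\dims})$ --- an assumption your plan never explains the need for. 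You would not arrive at these pieces by pushing the Vaidya computation through with $\johnkappa$\mbox{-}decorated constants.
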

\noindent We provide the  of this lemma in Section~\ref{sec:proof_of_lemma_lemma:john_change_in_log_det_and_local_norm}.

Using Lemma~\ref{lemma:john_change_in_log_det_and_local_norm}, we now
 complete the proof of the Theorem~\ref{thm:john_walk_1}.  For $r \leq \johnradiusbound(\epsilon)$, we obtain
\begin{align*}
  \frac{\density_z(x)}{\density_x(z)} & \geq \exp\parenth{-2\epsilon}
  \geq 1- 2 \epsilon
\end{align*}
 with probability at least $1-2\epsilon$.
 Substituting $\alpha = 1- 2 \epsilon$ in inequality~\eqref{eq:john_markov_inequality} yields that $S_2 \leq 4 \epsilon$, as claimed.

\section{Technical Lemmas for the John walk} 
\label{sec:technical_lemmas}

We begin by summarizing a few key properties of various terms involved
in our analysis.

\begin{subequations}
Let $\johnlevmatrix_{x, w}$ be an $\obs \times \obs$ diagonal matrix defined as
\begin{align}
    \johnlevmatrix_{x, w} &= \diag\parenth{\levjohn_{x, w, i}, \ldots, \levjohn_{x, w, \obs}} \text{ where } \levjohn_{x, \johnweights_x, w, i} = (\johnprojmatrix_{x, w})_{ii}, i\in[\obs].
    \label{eq:john_john_Leverage_definition}
\end{align}Let $\johnprojmatrix_{x, w}^{(2)}$ denote the hadamard product of $\johnprojmatrix_{x, w}$ with itself.
Further define
\begin{align}
    \johnlaplacian_{x, w} := \johnlevmatrix_{x, w} - \johnprojmatrix_{x, w}^{(2)}.
    \label{eq:john_lambda}
\end{align}
\end{subequations}
\begin{subequations}
\cite{lee2014path} proved that the weight vector $\johnweights_x$ is the unique solution of the following fixed point equation:
\begin{align}
    \weights_i = \levjohn_{x, \weights, i} + \johnbeta, i \in [\obs].
    \label{eq:john_fixed_point_equation_john_weights}
\end{align}
To simplify notation, we use the following shorthands:
\begin{align}
    \levjohn_x = \levjohn_{x, \johnweights_x}, \quad \johnprojmatrix_x = \johnprojmatrix_{x, \johnweights_x}, \quad \johnprojmatrix_x^{(2)} = \johnprojmatrix_{x, \johnweights_x}^{(2)}, \quad \johnlevmatrix_x = \johnlevmatrix_{x, \johnweights_x}, \quad \johnlaplacian_x = \johnlaplacian_{x, \johnweights_x}.
\end{align}
Thus, we have the following relation:
\begin{align}
    \johnweights_x = \levjohn_{x, \johnweights_x} + \johnbeta \mathbf{1} = \levjohn_x + \johnbeta \mathbf{1}.
    \label{eq:john_normal_equation}
\end{align}
\end{subequations}

\subsection{Deterministic expressions and bounds} 
\label{sub:deterministic_bounds}

We now collect some properties of various terms defined above.
\begin{lemma}
  \label{ppt:john_all_properties}
  For any $x \in \intP$, the following properties hold:
  \begin{enumerate}[label=(\alph*)]
    \item\label{item:john_sigma_properties} $\levjohn_{x, i} =
    \sum_{j=1}^\obs \levjohn^2_{x, i, j} = \sum_{j, k=1}^\obs
    \levjohn_{x, i, j} \levjohn_{x, j, k} \levjohn_{x, k,
      i}$ for each $i \in [\obs]$,
    \item\label{item:john_Sigma_dominates_P} $\johnlevmatrix_x
    \succeq \johnprojmatrix_x^{(2)}$,
    \item\label{item:john_theta_sum} $\sum_{i=1}^\obs \johnweights_{x, i} \johnlocalslack_{x, i}
    = \dims$,
    \item\label{item:john_theta_properties} $\johnlocalslack_{x, i} = \sum_{j=1}^\obs \johnweights_{x, i} \johnlocalslack_{x, i, j}^2$, for each $i \in
               [\obs]$,
    \item\label{item:john_theta_square_sigma_sum_bound}
    $\johnlocalslack_x \tp \johnlevmatrix_x \johnlocalslack_x = \sum_{i=1}^\obs
    \johnlocalslack_{x, i}^2 \johnweights_{x, i}
    \leq 4 \dims$, and
    \item \label{item:john_W_sandwich} $\displaystyle\johnbeta\,
    \hesslogbarr_x \preceq \john_x \preceq \parenth{1 + \johnbeta}
    \hesslogbarr_x$.
  \end{enumerate}
\end{lemma}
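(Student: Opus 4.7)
The plan is to mirror the proof of Lemma~\ref{ppt:all_properties} for the Vaidya walk, exploiting the fact that $\johnprojmatrix_x$ is an orthogonal projection matrix (onto the column space of $\johnweightmatrix_x^{\alphaprod/2} A_x$) together with the implicit-weight identity $\johnweights_{x,i} = \levjohn_{x,i} + \johnbeta$ from Lemma~\ref{lemma:john_first_bounds}\ref{item:john_weight_equality}. Each of the six claims is either a trace identity or a straightforward consequence of $\johnprojmatrix_x$ being a projection.

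For part~\ref{item:john_sigma_properties}, I would use $\johnprojmatrix_x^2 = \johnprojmatrix_x$ and $\johnprojmatrix_x^3 = \johnprojmatrix_x$, then read off the diagonal entries, giving $\levjohn_{x,i} = (\johnprojmatrix_x)_{ii} = (\johnprojmatrix_x^2)_{ii} = \sum_j \levjohn_{x,i,j}^2$ and similarly for the three-fold sum. For part~\ref{item:john_Sigma_dominates_P}, I would verify that $\johnlevmatrix_x - \johnprojmatrix_x^{(2)}$ is symmetric with non-negative diagonal entries $\levjohn_{x,i} - \levjohn_{x,i,i}^2 = \sum_{j \neq i} \levjohn_{x,i,j}^2$ equal to the absolute row-sum of its off-diagonal entries, so Gershgorin's theorem immediately yields positive semidefiniteness — this is identical to the Vaidya argument.

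Parts~\ref{item:john_theta_sum} and~\ref{item:john_theta_properties} are pure trace manipulations. For~\ref{item:john_theta_sum}, I would compute
\begin{align*}
    \sum_{i=1}^\obs \johnweights_{x,i} \johnlocalslack_{x,i} = \trace\!\left(\john_x^{-1} \sum_{i=1}^\obs \johnweights_{x,i} \frac{a_i a_i\tp}{\slack_{x,i}^2}\right) = \trace(\john_x^{-1} \john_x) = \dims,
\end{align*}
and for~\ref{item:john_theta_properties} I would expand $\johnlocalslack_{x,i} = \slack_{x,i}^{-2} a_i\tp \john_x^{-1} \john_x \john_x^{-1} a_i$ to obtain $\sum_{j} \johnweights_{x,j} \johnlocalslack_{x,i,j}^2$ (the statement as printed in the lemma contains the index $\johnweights_{x,i}$, which should read $\johnweights_{x,j}$ — this is a typo consistent with the Vaidya analogue in Lemma~\ref{ppt:all_properties}\ref{item:theta_properties}).

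Part~\ref{item:john_theta_square_sigma_sum_bound} is where the \emph{new} ingredient enters, but it is still immediate: by part~\ref{item:john_theta_sum} combined with the constant slack-sensitivity bound $\johnlocalslack_{x,i} \le 4$ from Lemma~\ref{lemma:john_first_bounds}\ref{item:john_theta_bound}, we get $\sum_i \johnweights_{x,i} \johnlocalslack_{x,i}^2 \le 4 \sum_i \johnweights_{x,i} \johnlocalslack_{x,i} = 4\dims$. Note that this is the critical qualitative improvement over the Vaidya analogue: the bound $\sqrt{\obs\dims}$ there is replaced by $4\dims$, reflecting why the John walk saves a factor of $\sqrt{\obs/\dims}$ in mixing time. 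Finally, part~\ref{item:john_W_sandwich} follows by sandwiching $\johnweights_{x,i} \in [\johnbeta, 1+\johnbeta]$ (Lemma~\ref{lemma:john_first_bounds}\ref{item:john_sigma_bound}) termwise in the PSD sum defining $\john_x$, and using $\hesslogbarr_x = \sum_i a_i a_i\tp / \slack_{x,i}^2$. No step presents a real obstacle — the only non-routine ingredient was Lemma~\ref{lemma:john_first_bounds} itself (proved by \cite{lee2014path}), which has already been imported.
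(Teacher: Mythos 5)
Your proof is correct and essentially identical to what the paper intends — the paper explicitly omits the proof, saying only that it is ``based on the ideas similar to Lemma~\ref{ppt:all_properties} in the proof of the Vaidya walk,'' which is exactly what you carry out (projection-matrix idempotence and Gershgorin for parts~\ref{item:john_sigma_properties}--\ref{item:john_Sigma_dominates_P}, trace identities for parts~\ref{item:john_theta_sum}--\ref{item:john_theta_properties}, and the constant slack-sensitivity bound $\johnlocalslack_{x,i}\le 4$ for part~\ref{item:john_theta_square_sigma_sum_bound}, which is precisely where the John walk improves over the Vaidya $\sqrt{\obs/\dims}$ bound). You also correctly flag the index typo in part~\ref{item:john_theta_properties} (the weight should be $\johnweights_{x,j}$, not $\johnweights_{x,i}$); one small slip of your own is that the projection is onto the column space of $\johnweightmatrix_x^{\johnalpha/2}A_x$, not $\johnweightmatrix_x^{\alphaprod/2}A_x$ as written, but this does not affect the argument.
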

The proof is based on the ideas similar to Lemma 5 in the proof of the Vaidya walk and is thereby omitted.

The next lemma relates the change in \emph{slackness} $\slack_{x, i} = b_i-a_i\tp x$ to the John-local norm at $x$.
\begin{lemma}
  \label{lemma:john_closeness_of_slackness}
  For all $x, y \in \intP$, we have
  \begin{align*}
    \max_{i\in[\obs]}\abss{1 - \frac{\slack_{y,i}}{\slack_{x,i}}} \leq 2 \vecnorm{x - y}{x}.
  \end{align*}
\end{lemma}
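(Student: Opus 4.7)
The plan is to mirror the strategy used for the Vaidya walk in Lemma~\ref{lemma:closeness_of_slackness}, replacing the Vaidya slack sensitivity bound $\localslackvaidya_{\vaidya_x, i} \leq \sqrt{\obs/\dims}$ with the (stronger, constant) John slack sensitivity bound $\johnlocalslack_{x,i} \leq 4$ from Lemma~\ref{lemma:john_first_bounds}\ref{item:john_theta_bound}. This is precisely why one expects a cleaner, dimension-free bound here.

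The key identity to start from is $a_i\tp(x-y) = \slack_{y,i} - \slack_{x,i}$, so our target inequality is equivalent to
\begin{align*}
\max_{i\in[\obs]} \frac{\abss{a_i\tp(x-y)}}{\slack_{x,i}} \leq 2\, \vecnorm{x-y}{\john_x}.
\end{align*}
Next, for each $i$, I would write $a_i\tp(x-y) = \bigl(\john_x^{-1/2} a_i\bigr)\tp \bigl(\john_x^{1/2}(x-y)\bigr)$ and apply Cauchy--Schwarz to obtain
\begin{align*}
\bigl(a_i\tp(x-y)\bigr)^2 \leq \bigl(a_i\tp \john_x^{-1} a_i\bigr)\, \vecnorm{x-y}{\john_x}^2 = \johnlocalslack_{x,i}\, \slack_{x,i}^2\, \vecnorm{x-y}{\john_x}^2,
\end{align*}
using the definition~\eqref{eq:john_john_theta_defn} of the John slack sensitivity.

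Finally, invoking the bound $\johnlocalslack_{x,i} \leq 4$ from Lemma~\ref{lemma:john_first_bounds}\ref{item:john_theta_bound} and taking square roots gives
\begin{align*}
\abss{1 - \frac{\slack_{y,i}}{\slack_{x,i}}} = \frac{\abss{a_i\tp(x-y)}}{\slack_{x,i}} \leq 2\, \vecnorm{x-y}{\john_x},
\end{align*}
which, upon taking the maximum over $i \in [\obs]$, is exactly the claim. There is no real obstacle here: the proof is entirely an instantiation of the Vaidya-walk argument, and all the work is hidden in Lemma~\ref{lemma:john_first_bounds}\ref{item:john_theta_bound}, whose uniform constant bound on $\johnlocalslack_{x,i}$ is the structural advantage of the John weights over the Vaidya weights.
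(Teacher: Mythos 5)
Your proof is correct and matches the paper's argument essentially verbatim: both write $a_i\tp(x-y)$ as $(\john_x^{-1/2}a_i)\tp(\john_x^{1/2}(x-y))$, apply Cauchy--Schwarz, identify $a_i\tp\john_x^{-1}a_i / \slack_{x,i}^2$ as the John slack sensitivity $\johnlocalslack_{x,i}$, and invoke the uniform bound $\johnlocalslack_{x,i}\le 4$ from Lemma~\ref{lemma:john_first_bounds}. Your remark that the constant bound on $\johnlocalslack_{x,i}$ is exactly what replaces the $\sqrt{\obs/\dims}$ factor in the Vaidya analogue correctly identifies the one structural difference from Lemma~\ref{lemma:closeness_of_slackness}.
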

\begin{proof}
  For any pair $x, y \in \intP$ and index $i \in [\obs]$, we have
  \begin{align*}
   \parenth{a_i \tp \parenth{x-y}}^2
   \stackrel{(i)}{\leq} \| \john_x^{-\frac{1}{2}} a_i\|^2_2 \; \|
   \john_x^{\frac{1}{2}} (x-y) \|^2_2
   = \johnlocalslack_{x, i} \slack_{x, i}^2 \; \vecnorm{x -
     y}{x}^2
   \stackrel{(ii)}{\leq} 4 \slack_{x, i}^2
   \; \vecnorm{x - y}{x}^2,
  \end{align*}
  where step (i) follows from the Cauchy-Schwarz inequality, and step
  (ii) uses the bound $\johnlocalslack_{x, i}$ from
  Lemma~\ref{lemma:john_first_bounds}\ref{item:john_theta_bound}.  Noting the
  fact that $a_i\tp(x-y) = \slack_{y, i} - \slack_{x, i} $, the claim
  follows after simple algebra.
\end{proof}

We now state various expressions and bounds for the first and second order derivatives of the different terms.
To lighten notation, we introduce some shorthand notation.
For any $y \in \intP$ and $h \in \realdim$, define the following terms:
\begin{subequations}
  \begin{align}
    \projd_{y, i} &= \frac{a_i\tp h}{\slack_{y, i}}, \ i \in [\obs]
    &\projdmatrix_y &= \diag(\projd_{y, 1}, \ldots, \projd_{y, \obs}),
    \label{eq:john_def_D}\\
    \projf_{y, i} &= \frac{\nabla \johnweights_{y, i}\tp h}{\johnweights_{y, i}}, \ i \in [\obs]
    &\projfmatrix_y &= \diag(\projf_{y, 1}, \ldots, \projf_{y, \obs}),
    \label{eq:john_def_F}\\
    \projz_{y, i}& = \frac{1}{2} h\tp \nabla^2 \johnweights_{y, i} h / \johnweights_{y, i}, \ i \in [\obs]
    &\projzmatrix_y &= \diag(\projz_{y, 1}, \ldots, \projz_{y, \obs}),
    \label{eq:john_def_Z}\\
    \projzsub_y &\defn \parenth{\johnweightmatrix_y - \alpha \johnlaplacian_y}\begin{bmatrix} \projz_{y, 1} \\ \vdots \\ \projz_{y, \obs} \end{bmatrix}, \label{eq:john_def_rho}
  \end{align}
\end{subequations}
where for brevity in our notation we have omitted the dependence on $h$.
The choice of $h$ is specified as per the context.
Further, we define for each $x \in \intP$ and $i \in [\obs]$
\begin{align}
  \johnphi_{x,i} &\defn \displaystyle\frac{\johnweights_{x, i}}{\slack_{x, i}^2}, \quad
  &\text{and}& \quad &\logdetJ_x &\defn\frac{1}{2}\log \det
  \john_x,\\
  \hat{a}_{x, i} &\defn \frac{\john_x^{-1/2} a_{x, i}}{s_{x, i}^2}, \quad&\text{and}&\quad &\hat{b}_{x, i} &\defn\john_x^{-1/2}A_x\Lambda_x \parenth{G_x - \alpha \Lambda_x}^{-1} e_i.
  \label{eq:john_hat_a_hat_b}
\end{align}
Next, we state expressions for gradients of $\johnweights, \johnphi$ and
$\logdetJ$ and bounds for directional Hessian of $\levjohn$, $\johnphi$ and
$\logdetJ$ which are used in various Taylor series expansions and bounds in our proof.

\begin{lemma}[Calculus]
  \label{lemma:john_gradient_and_hessian_and_bounds}
  For any $y \in \intP$ and $h \in \real^\obs$, the following relations hold;
  \begin{enumerate}[label=(\alph*)]
    \item\label{item:gradient_g} Gradient of $\johnweights$:
    $(\projf_{y, 1}, \ldots, \projf_{y, \obs})\tp = 2 \parenth{\johnweightmatrix_y - \alpha \johnlaplacian_y}^{-1} \johnlaplacian_y A_y h$;
    \item\label{item:hessian_g} Hessian of $\johnweights$:
    \begin{align}
    \label{eq:john_l1_norm_on_weights_hessian}
      \vecnorm{\projzsub_y}{1}\leq 56 \johnkappa^2 \sum_{i=1}^\obs \johnweights_{y, i} \projd_{y, i}^2.
    \end{align}
    \item \label{item:gradient_log_det} Gradient of $\logdetJ$:
    $\nabla \logdetJ \tp h = \johnlocalslack_y\tp \johnweightmatrix_y \parenth{\Ind_\obs + \parenth{\johnweightmatrix_y - \alpha \johnlaplacian_y}^{-1}\johnlaplacian_y} A_y h$.

%
%
    \item\label{item:gradient_phi} Gradient of $\johnphi$: $\nabla \johnphi_{y, i}\tp h = \johnphi_{y, i} \parenth{2 \projd_{y, i} + \projf_{y, i}}$.
%
%
    \item\label{item:john_hessian_log_det_bound} Bound on $\hesslogdetJ$:
    $\frac{1}{2} \abss{h\tp (\nabla^2 \logdetJ) h}
    \leq \frac{1}{2}\brackets{\sum_{i=1}^\obs \johnweights_{y, i}\, \johnlocalslack_{y, i} \brackets{ 9\, \projd_{y, i} ^2 + 4 \projf_{y, i}^2}   +  \abss{ \sum_{i=1}^\obs \johnweights_{y, i}\, \johnlocalslack_{y, i} \projz_{y, i}}}$
    \item \label{item:hessian_beta_bound} Bound on $\nabla^2\johnphi$:
    \begin{align*}
      \abss{\sum_{i=1}^\obs \projd_{y, i}^2 \slack_{y, i}^2 \frac{1}{2} h\tp \nabla^2 \johnphi_{y, i} h} \leq 3 \sum_{i=1}^\obs \johnweights_{y, i} \projd_{y, i}^4 + 2\abss{\sum_{i=1}^\obs \johnweights_{y,i} \projd_{y, i}^3 \projf_{y, i}} + \abss{\sum_{i=1}^\obs \johnweights_{y, i} \projd_{y, i}^2 \projz_{y, i}}.
    \end{align*}
  \end{enumerate}
\end{lemma}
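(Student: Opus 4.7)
The plan is to derive parts (a), (c), (d) by direct implicit differentiation, use those results to set up parts (e), (f), and then attack (b) which will be the real work. Throughout, I will write $w = \johnweights_x$, $W = \johnweightmatrix_x$, $\Lambda = \johnlaplacian_x$, etc., and exploit the fixed-point identity $w_i = \levjohn_{x,w,i} + \johnbeta$ established in equation~\eqref{eq:john_fixed_point_equation_john_weights}.

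For part (a), I would differentiate the fixed-point equation in the direction $h$. Treating $w = w(x)$, one obtains a linear equation of the form $\dot w = (\text{partial in $x$ of }\levjohn) + \alpha \, \partial_w \levjohn \cdot \dot w$ (the $\alpha$ enters because the projection $\johnprojmatrix_{x,w}$ has $W^{\alpha/2}$ on both sides). Standard calculus for diagonals of orthogonal projections gives $\partial_x \levjohn \cdot h = 2 \Lambda A_y h$ (in relative terms) and $\partial_w \levjohn = \Lambda$ in the appropriate scaling. Rearranging gives the claim $(\projf_{y,1},\dots,\projf_{y,\obs})\tp = 2(W-\alpha \Lambda)^{-1}\Lambda A_y h$. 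Part (d) is then an immediate application of the chain rule to $\johnphi_{y,i} = \johnweights_{y,i}/\slack_{y,i}^2$, using $\nabla \log \slack_{y,i} \cdot h = -\projd_{y,i}$ and the definition of $\projf_{y,i}$.

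For part (c), I would use Jacobi's formula $\nabla \log \det J_x \cdot h = \trace(J_x^{-1} \dot J_x)$. Since $J_x = \sum_i w_{x,i}\, a_i a_i\tp / \slack_{x,i}^2$, directional differentiation produces two contributions: one from the explicit dependence on $x$ (giving a term $2\sum_i w_i \projd_{y,i}\, \johnlocalslack_{y,i}$) and one from the weight dependence (giving $\sum_i w_i \projf_{y,i}\, \johnlocalslack_{y,i}$). Substituting the expression for $\projf$ from part (a) and bundling the $(W - \alpha \Lambda)^{-1}\Lambda$ factor reproduces the stated formula. Parts (e) and (f) then follow by taking one more derivative: for (f), expand $\johnphi_{y+\tau h,i}$ to second order and regroup the three contributions (pure $\projd^4$, mixed $\projd^3 \projf$, and $\projd^2 \projz$); for (e), iterate Jacobi's formula, using the identity $\nabla^2 \log \det J_x = -\trace(J_x^{-1} \dot J_x J_x^{-1} \dot J_x) + \trace(J_x^{-1} \ddot J_x)$, and carefully gather terms. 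The bounds invoked here are essentially Cauchy--Schwarz on $\levjohn$ (part \ref{item:john_sigma_properties} of Lemma~\ref{ppt:john_all_properties}) together with $\Lambda \preceq \johnlevmatrix \preceq W$.

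The main obstacle is part (b). Here I must control the Hessian of the weights, i.e., bound $\vecnorm{\projzsub_y}{1}$. The plan is to differentiate the linear system $(W-\alpha\Lambda)\projf = 2\Lambda A_y h$ once more, which produces a second linear system of the form $(W-\alpha\Lambda)\projzsub = R(h)$ where $R(h)$ collects all terms quadratic in $h$ arising from (i) second-order expansion of the projection diagonal and (ii) first-order variation of $\Lambda$ and $W$ against $\projf$. The task is to bound $\vecnorm{R(h)}{1}$ in terms of $\sum_i w_i \projd_{y,i}^2$. The key technical point, and the source of the $\johnkappa^2$ factor, is that while $(W-\alpha\Lambda)^{-1}$ could in principle blow up (since $\Lambda \preceq W$ only weakly), the parameter $\alpha = 1 - 1/\johnkappa$ ensures that $W - \alpha \Lambda \succeq (1-\alpha) W = W/\johnkappa$, giving one factor of $\johnkappa$; a second factor arises when iterating through the bound on $\projf$ itself. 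The careful accounting follows Lemma~47 of \cite{lee2014path}, and I would essentially mirror their argument adapted to our notation, then collect constants to obtain the stated prefactor $56 \johnkappa^2$.
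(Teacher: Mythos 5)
Your plan for parts (a), (c), (d), (e), and (f) follows the paper's actual proof quite closely: part (a) is indeed taken as known from Lee--Sidford (Lemma~14 there), parts (c) and (e) use the trace-log/Jacobi identity iterated to second order, part (d) is a chain-rule computation, and part (f) expands $\johnphi_{y+\tau h,i}$ to second order and regroups into the three types of monomials. No issues there.

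Part (b) is where your sketch diverges from, and falls short of, what the lemma requires. You propose to ``mirror Lemma~47 of \cite{lee2014path} adapted to our notation,'' but the paper's proof is self-contained: it Taylor-expands $\levjohn_{y+h,i} = \johnweights_{y+h,i}^{\alpha} a_i\tp \johnmod_{y+h}^{-1} a_i/\slack_{y+h,i}^{2}$ to second order, uses the fixed-point identity $\johnweights_{y,i} = \levjohn_{y,i}+\johnbeta$ to identify $\tfrac{1}{2}h\tp\nabla^2\johnweights_{y,i}h$ with the second-order term, and solves explicitly for $\projzsub_y$, arriving at a concrete matrix formula (the three-term expression~\eqref{eq:john_hessians_of_g}) whose $\ell_1$ norm is then bounded termwise using $\johnprojmatrix_y$ being an orthogonal projection, AM--GM, and Lemma~\ref{ppt:john_all_properties}\ref{item:john_sigma_properties}. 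Simply citing Lee--Sidford does not close this gap, because the object being bounded here, $\vecnorm{\projzsub_y}{1}$, is tailored to the present analysis and the constant $56\johnkappa^2$ is needed downstream.

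There is also a structural misstep in your sketch: you write ``$(W-\alpha\Lambda)\projzsub = R(h)$,'' but $\projzsub_y$ is \emph{defined} as $(\johnweightmatrix_y-\alpha\johnlaplacian_y)\projz_y$; it is the output of applying $(W-\alpha\Lambda)$, not an unknown you solve for by inverting that matrix. This matters because your attribution of the $\johnkappa^2$ factor to two separate sources --- one $\johnkappa$ from $(W-\alpha\Lambda)^{-1}\succeq W^{-1}/\johnkappa^{-1}$ and one from the bound on $\projf$ --- does not reflect the actual bookkeeping. In the paper's proof the lower bound $W-\alpha\Lambda \succeq (1-\alpha)W$ never appears as a separate step, because the inverse is never taken; after termwise estimation one lands on $\sum_i (20\johnweights_{y,i}\projd_{y,i}^2 + 9\johnweights_{y,i}\projf_{y,i}^2)$, and the \emph{entire} $\johnkappa^2$ enters in a single application of Lemma~\ref{lemma:f_to_d_l2} (specifically the bound $\sum_i \johnweights_{y,i}\projf_{y,i}^2 \leq 4\johnkappa^2\sum_i\johnweights_{y,i}\projd_{y,i}^2$, which is equation~\eqref{eq:john_exact_f_to_d_bound}), giving $20 + 9\cdot 4 = 56$. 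Your version of the accounting would likely produce an extra, spurious power of $\johnkappa$ if carried out literally, so you need to redo part (b) with the correct decomposition.
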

The proof is provided in Section~\ref{sub:proof_of_lemma_lemma:john_gradient_and_hessian_and_bounds}.

Next, we state some results that would be useful to provide explicit bounds for various terms like $\projf_y, \projz_y$ and $\projzsub_y$ that appear in the statements of the previous lemma.
Note that the following results do not have a corresponding analog in our analysis of the Vaidya walk.
\begin{lemma}
  \label{lemma:f_to_d_l2}
  For any $c_1, c_2 \geq 0$, $y\in \intP$, we have
  \begin{align*}
    \parenth{c_1 \Ind_\obs + c_2 \johnlaplacian_y \parenth{\johnweightmatrix_y-\alpha\johnlaplacian_y}^{-1}} \johnweightmatrix_y \parenth{c_1 \Ind_\obs + c_2 \parenth{\johnweightmatrix_y-\alpha \johnlaplacian_y}^{-1} \johnlaplacian_y} \preceq \parenth{c_1 + c_2}^2 \johnkappa^2 \johnweightmatrix_y,
  \end{align*}
  where $\preceq$ denotes the ordering in the PSD cone.
\end{lemma}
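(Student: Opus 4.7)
The plan is to diagonalize the expression by a symmetric conjugation that exposes a single commuting matrix, and then reduce the inequality to a scalar bound via the spectral mapping theorem. Introduce the shorthand $G = \johnweightmatrix_y$, $\Lambda = \johnlaplacian_y$, $M = G - \alpha \Lambda$, and define
\begin{align*}
Q \defn G^{-1/2}\,\Lambda\,G^{-1/2}.
\end{align*}
Since $\Lambda = \johnlevmatrix_y - \johnprojmatrix_y^{(2)}$, Lemma~\ref{ppt:john_all_properties}\ref{item:john_Sigma_dominates_P} gives $\Lambda \succeq 0$, while $\Lambda \preceq \johnlevmatrix_y \preceq \johnweightmatrix_y = G$ by the fixed-point relation $\johnweights_y = \levjohn_y + \johnbeta\mathbf{1}$. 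Hence $Q$ is symmetric with spectrum contained in $[0,1]$.

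Next, write $M = G^{1/2}(\Ind - \alpha Q)\,G^{1/2}$, so that
\begin{align*}
\Lambda M^{-1} = G^{1/2}\,Q(\Ind-\alpha Q)^{-1}\,G^{-1/2},
\qquad
M^{-1}\Lambda = G^{-1/2}(\Ind-\alpha Q)^{-1}Q\,G^{1/2}.
\end{align*}
Since $Q$ and $(\Ind-\alpha Q)^{-1}$ commute, setting $T(Q) \defn c_1 \Ind + c_2\,Q(\Ind-\alpha Q)^{-1}$, the left-hand side of the claimed inequality factors as
\begin{align*}
\bigl(c_1\Ind + c_2 \Lambda M^{-1}\bigr)G\bigl(c_1\Ind + c_2 M^{-1}\Lambda\bigr)
= G^{1/2}\,T(Q)\,G^{-1/2}\cdot G\cdot G^{-1/2}\,T(Q)\,G^{1/2}
= G^{1/2}\,T(Q)^2\,G^{1/2}.
\end{align*}
Thus it suffices to establish the operator-norm bound $T(Q)^2 \preceq (c_1+c_2)^2\johnkappa^2\Ind$.

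Finally, by the spectral mapping theorem, $T(Q)$ has eigenvalues $t(\lambda) = c_1 + c_2\lambda/(1-\alpha\lambda)$ as $\lambda$ ranges over the spectrum of $Q \subseteq [0,1]$. The scalar function $t$ is non-negative and non-decreasing on $[0,1]$, so it attains its maximum at $\lambda=1$, giving
\begin{align*}
0 \;\le\; t(\lambda) \;\le\; c_1 + \frac{c_2}{1-\johnalpha} \;=\; c_1 + c_2\johnkappa \;\le\; (c_1+c_2)\johnkappa,
\end{align*}
where we used $\johnkappa = (1-\johnalpha)^{-1} \geq 1$ (which holds since $\obs \geq \dims$). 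Hence $T(Q)^2 \preceq (c_1+c_2)^2\johnkappa^2\Ind$, and conjugating by $G^{1/2}$ yields the claim. The only real subtlety is the algebraic factorization showing that everything commutes after conjugation by $G^{\pm 1/2}$; the rest follows from the elementary scalar inequality.
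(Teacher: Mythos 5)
Your proof is correct and follows essentially the same route as the paper: conjugate by $\johnweightmatrix_y^{\pm 1/2}$ to expose the matrix $Q = \johnweightmatrix_y^{-1/2}\johnlaplacian_y\johnweightmatrix_y^{-1/2}$, observe $0 \preceq Q \preceq \Ind_\obs$, and use the commutativity of $Q$ with $(\Ind-\johnalpha Q)^{-1}$ to reduce to the scalar bound $\lambda/(1-\johnalpha\lambda) \leq (1-\johnalpha)^{-1} = \johnkappa$ on $[0,1]$. Your invocation of the spectral mapping theorem is a cleaner packaging of the paper's step-by-step PSD manipulations, but the mathematical content is identical.
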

\begin{lemma}
  \label{lemma:f_to_d_matrix}
    Let $\johnftdinvmatrix_{y}$ denote the $\obs \times \obs$ matrix $\parenth{\johnweightmatrix_y-\alpha\johnlaplacian_y}^{-1} \johnweightmatrix_y$, and let $\johnftdinvmatrix_{y, i, j}$ denote its $ij$-th entry.
    Then for each $i \in [\obs]$ and $y \in \intP$, we have
    \begin{subequations}
    \begin{align}
      \johnftdinvmatrix_{y, i, i} &\in [0, \johnkappa], \quad \text{ and, } \label{eq:john_mu_first_coordinate_bound}\\
      \sum_{j \neq i, j \in [\obs]} \frac{\johnftdinvmatrix_{y, i, j}^2}{\johnweights_{y, j}} &\leq \johnkappa^3.\label{eq:john_mu_sum_of_terms_bound}
    \end{align}
    \end{subequations}
\end{lemma}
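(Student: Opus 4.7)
The plan is to reduce both statements to eigenvalue bounds on a suitable symmetrization of $\johnftdinvmatrix_y = (G_y - \alpha\johnlaplacian_y)^{-1} G_y$. I would introduce
\begin{align*}
  \bar{\mu}_y \;\defn\; G_y^{1/2}(G_y - \alpha\johnlaplacian_y)^{-1} G_y^{1/2} \;=\; (I - \alpha Q_y)^{-1}, \qquad Q_y \defn G_y^{-1/2}\johnlaplacian_y G_y^{-1/2}.
\end{align*}
Two facts control the spectrum of $Q_y$: Lemma~\ref{ppt:john_all_properties}\ref{item:john_Sigma_dominates_P} gives $\johnlaplacian_y = \johnlevmatrix_y - \johnprojmatrix_y^{(2)} \succeq 0$, and the implicit formula $\johnweights_y = \levjohn_y + \johnbeta \mathbf{1}$ from Lemma~\ref{lemma:john_first_bounds}\ref{item:john_weight_equality} yields $G_y = \johnlevmatrix_y + \johnbeta I \succeq \johnlaplacian_y$. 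Hence $0 \preceq Q_y \preceq I$, so $\bar{\mu}_y$ is symmetric positive definite with spectrum in $[1,(1-\alpha)^{-1}] = [1,\johnkappa]$. The similarity $\johnftdinvmatrix_y = G_y^{-1/2}\bar{\mu}_y G_y^{1/2}$ then gives the pointwise identities $\johnftdinvmatrix_{y,i,i} = (\bar{\mu}_y)_{ii}$ and $\johnftdinvmatrix_{y,i,j}^{\,2}/\johnweights_{y,j} = \johnweights_{y,i}^{-1}(\bar{\mu}_y)_{ij}^{\,2}$.

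Part~(a) is then immediate: any symmetric matrix $A$ satisfies $\lambda_{\min}(A) \leq A_{ii} \leq \lambda_{\max}(A)$, so $\johnftdinvmatrix_{y,i,i} = (\bar{\mu}_y)_{ii} \in [1,\johnkappa] \subseteq [0,\johnkappa]$.

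For part~(b), summing the second identity over $j \neq i$ and using the symmetry of $\bar{\mu}_y$ together with $\sum_{j}(\bar{\mu}_y)_{ij}^{\,2} = (\bar{\mu}_y^{\,2})_{ii} \leq \lambda_{\max}(\bar{\mu}_y^{\,2}) = \johnkappa^2$ gives the intermediate bound
\begin{align*}
  \sum_{j \neq i} \frac{\johnftdinvmatrix_{y,i,j}^{\,2}}{\johnweights_{y,j}}
  \;=\; \frac{(\bar{\mu}_y^{\,2})_{ii} - (\bar{\mu}_y)_{ii}^{\,2}}{\johnweights_{y,i}}
  \;\leq\; \frac{\johnkappa^2}{\johnweights_{y,i}}.
\end{align*}
Converting this to the stated $\johnkappa^3$ bound is the main obstacle, because $\johnweights_{y,i}$ can be as small as $\johnbeta = \dims/(2\obs)$, in which case the naive spectral bound loses a factor of $2^{\johnkappa}$ whereas only an extra factor of $\johnkappa$ is allowed. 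To close this gap, I would exploit the graph-Laplacian structure of $\johnlaplacian_y$ implied by $\johnprojmatrix_y$ being an orthogonal projection: the identity $(\johnprojmatrix_y^{2})_{ii} = (\johnprojmatrix_y)_{ii}$ gives $\johnlaplacian_y \mathbf{1} = 0$, the diagonal satisfies $(\johnlaplacian_y)_{ii} = \sigma_{y,i}(1 - \sigma_{y,i})$, and the off-diagonal satisfies $(\johnlaplacian_y)_{ij} = -(\johnprojmatrix_y)_{ij}^{\,2}$. Plugging these into the Neumann expansion $\bar{\mu}_y - I = \alpha Q_y\bar{\mu}_y$ and bounding $\sum_{j\neq i}(\bar{\mu}_y)_{ij}^{\,2} = \sum_{j\neq i}\alpha^{\,2}(Q_y\bar{\mu}_y)_{ij}^{\,2}$ row-by-row should allow the vanishing of $(\johnlaplacian_y)_{ii}$ (when $\sigma_{y,i}$ is small, hence $\johnweights_{y,i}$ is small) to absorb the $1/\johnweights_{y,i}$ factor up to one extra power of $\johnkappa$.
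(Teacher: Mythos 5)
Your proof of part~\ref{eq:john_mu_first_coordinate_bound} is correct, and it is cleaner than the paper's. Writing $\bar{\johnftdinvmatrix}_y = (\Ind_\obs - \johnalpha Q_y)^{-1}$ with $Q_y = \johnweightmatrix_y^{-1/2}\johnlaplacian_y\johnweightmatrix_y^{-1/2}$, the inclusion $0\preceq \johnlaplacian_y\preceq \johnweightmatrix_y$ gives $0\preceq Q_y\preceq \Ind_\obs$ and hence spectrum of $\bar{\johnftdinvmatrix}_y$ in $[1,\johnkappa]$, so the diagonal entries are trapped there. The paper instead multiplies the fixed-point identity $e_1 = (\johnweightmatrix_y-\johnalpha\johnlaplacian_y)\johnweightmatrix_y^{-1}\johnftdinvvector$ on the left by $\johnftdinvvector\tp\johnweightmatrix_y^{-1}$ and drops the nonnegative $\johnprojmatrix_y^{(2)}$ term; your route reaches the same conclusion more structurally.

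For part~\ref{eq:john_mu_sum_of_terms_bound} there is a genuine gap. You correctly observe that the pure spectral bound yields $\sum_{j\neq i}\johnftdinvmatrix_{y,i,j}^2/\johnweights_{y,j} = \johnweights_{y,i}^{-1}\big((\bar{\johnftdinvmatrix}_y^2)_{ii}-(\bar{\johnftdinvmatrix}_y)_{ii}^2\big)\leq \johnkappa^2/\johnweights_{y,i}$, which is off by a factor of $1/(\johnkappa\johnweights_{y,i})$, i.e.\ up to $2^{\johnkappa}/\johnkappa$. The proposed rescue via the Laplacian structure of $\johnlaplacian_y$ and the Neumann expansion does not close this gap. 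If one carries the sketch through, one ends up needing to control $\johnweights_{y,i}^{-1}(Q_y^2)_{ii} = \johnweights_{y,i}^{-2}\sum_{j}(\johnlaplacian_y)_{ij}^2/\johnweights_{y,j}$, and the diagonal term $(\johnlaplacian_y)_{ii}^2/\johnweights_{y,i} = \levjohn_{y,i}^2(1-\levjohn_{y,i})^2/\johnweights_{y,i}$ is of order $\johnweights_{y,i}$ (not smaller) in the worst case $\levjohn_{y,i}\asymp\johnbeta$: then $(\johnlaplacian_y)_{ii}\asymp\johnweights_{y,i}$, so the factor $1/\johnweights_{y,i}$ reappears and is not absorbed. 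The ``vanishing'' of $(\johnlaplacian_y)_{ii}$ when $\levjohn_{y,i}$ is small is only linear in $\levjohn_{y,i}$, exactly matching $\johnweights_{y,i}$; one extra power is missing.

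What the paper does, and what the symmetrization throws away, is a sign argument on the \emph{non-diagonal} contribution. Writing out the fixed-point identity coordinatewise, separating the $i=1$ row from the rest, and multiplying the ``rest'' block by $(\johnftdinvvector_2,\dots,\johnftdinvvector_\obs)\johnweightmatrix_y'^{-1}$ shows that the $\johnprojmatrix_y^{(2)}$-dependent cross-term $\johnalpha(\johnftdinvvector_1/\johnweights_{y,1})\sum_{j\geq 2}\levjohn_{y,1,j}^2\johnftdinvvector_j/\johnweights_{y,j}$ is nonpositive. Feeding this back into the $i=1$ row yields the \emph{refined} lower bound $\johnftdinvvector_1 \geq \johnweights_{y,1}/(\johnweights_{y,1}-\johnalpha\levjohn_{y,1}+\johnalpha\levjohn_{y,1}^2)$, so that $1-(1-\johnalpha\levjohn_{y,1}/\johnweights_{y,1})\johnftdinvvector_1 \leq \johnalpha\levjohn_{y,1}^2/(\johnweights_{y,1}-\johnalpha\levjohn_{y,1}+\johnalpha\levjohn_{y,1}^2)$. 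The crucial $\levjohn_{y,1}^2$ numerator here precisely cancels the $(\johnweights_{y,1}-\johnalpha\levjohn_{y,1})^2\gtrsim(\levjohn_{y,1}/\johnkappa)^2$ in the denominator, which is how the $1/\johnweights_{y,1}$ is traded for a mere $\johnkappa$. Your symmetrized, purely spectral formulation discards the signs of $\johnftdinvvector_j$ and of the entries of $\johnprojmatrix_y^{(2)}$, and this information is what makes the $\johnkappa^3$ bound tight; I do not see how to recover it in your framework.
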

\begin{corollary}
  \label{corollary:f_to_d_l1}
    Let $e_i \in \real^\obs$ denote the unit vector along $i$-th axis. Then for any $y\in \intP$, we have
    \begin{align}
    \label{eq:john_mu_1_bound}
      \vecnorm{\johnweightmatrix_y \parenth{\johnweightmatrix_y-\alpha\johnlaplacian_y}^{-1} e_i}{1} \leq 3 \sqrt{\dims} \johnkappa^{3/2}, \quad \text{ for all } i \in [\obs].
    \end{align}
    Consequently, we also have $\matsnorm{\parenth{\johnweightmatrix_y-\alpha\johnlaplacian_y}^{-1} \johnweightmatrix_y}{\infty} \leq 3 \sqrt{\dims} \johnkappa^{3/2}.$
\end{corollary}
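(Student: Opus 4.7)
The plan is to derive both bounds of Corollary~\ref{corollary:f_to_d_l1} directly from Lemma~\ref{lemma:f_to_d_matrix}, with symmetry of $(\johnweightmatrix_y - \alpha\johnlaplacian_y)^{-1}$ as the bridge. The first observation is that, because $\johnweightmatrix_y$ is diagonal and $(\johnweightmatrix_y - \alpha\johnlaplacian_y)^{-1}$ is symmetric, the $j$-th entry of the vector $\johnweightmatrix_y (\johnweightmatrix_y - \alpha\johnlaplacian_y)^{-1} e_i$ equals
\begin{align*}
\johnweights_{y,j}\cdot \bigl[(\johnweightmatrix_y - \alpha\johnlaplacian_y)^{-1}\bigr]_{j i} \;=\; \johnweights_{y,j}\cdot \bigl[(\johnweightmatrix_y - \alpha\johnlaplacian_y)^{-1}\bigr]_{i j} \;=\; \johnftdinvmatrix_{y,i,j},
\end{align*}
so that $\vecnorm{\johnweightmatrix_y(\johnweightmatrix_y-\alpha\johnlaplacian_y)^{-1} e_i}{1} = \sum_{j=1}^\obs |\johnftdinvmatrix_{y,i,j}|$. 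This identification lets us apply the entry-wise bounds of Lemma~\ref{lemma:f_to_d_matrix} directly.

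Next, I split the sum into the diagonal and off-diagonal parts. The diagonal entry is controlled by \eqref{eq:john_mu_first_coordinate_bound} as $|\johnftdinvmatrix_{y,i,i}| \leq \johnkappa$. For the remaining terms I apply Cauchy--Schwarz with weights $\johnweights_{y,j}$:
\begin{align*}
\sum_{j\neq i} |\johnftdinvmatrix_{y,i,j}| \;=\; \sum_{j\neq i} \frac{|\johnftdinvmatrix_{y,i,j}|}{\sqrt{\johnweights_{y,j}}}\cdot \sqrt{\johnweights_{y,j}}
\;\leq\; \sqrt{\sum_{j\neq i}\frac{\johnftdinvmatrix_{y,i,j}^2}{\johnweights_{y,j}}}\cdot \sqrt{\sum_{j\neq i}\johnweights_{y,j}}.
\end{align*}
The first factor is at most $\johnkappa^{3/2}$ by \eqref{eq:john_mu_sum_of_terms_bound}, while the second is at most $\sqrt{3\dims/2}$ by Lemma~\ref{lemma:john_first_bounds}\ref{item:john_weight_sum}. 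Combining, $\vecnorm{\johnweightmatrix_y(\johnweightmatrix_y-\alpha\johnlaplacian_y)^{-1} e_i}{1} \leq \johnkappa + \johnkappa^{3/2}\sqrt{3\dims/2}$, and a simple numerical bound using $\johnkappa\geq 1$ and $\dims\geq 1$ gives the target $3\sqrt{\dims}\,\johnkappa^{3/2}$.

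For the second assertion, the matrix $\johnftdinvmatrix_y = (\johnweightmatrix_y-\alpha\johnlaplacian_y)^{-1}\johnweightmatrix_y$ has entries $\johnftdinvmatrix_{y,i,j}$, so its operator $\infty$-norm equals $\max_i \sum_j |\johnftdinvmatrix_{y,i,j}|$, which is exactly the quantity just bounded above. Hence $\matsnorm{\johnftdinvmatrix_y}{\infty} \leq 3\sqrt{\dims}\,\johnkappa^{3/2}$. There is no substantive obstacle; the only subtlety is the symmetry-based identification of the column of $\johnweightmatrix_y(\johnweightmatrix_y-\alpha\johnlaplacian_y)^{-1}$ with the row of $\johnftdinvmatrix_y$, which is what makes the bounds from Lemma~\ref{lemma:f_to_d_matrix} applicable in both directions.
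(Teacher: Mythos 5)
Your proof is correct and follows essentially the same route as the paper: identify $\johnweightmatrix_y(\johnweightmatrix_y-\alpha\johnlaplacian_y)^{-1}e_i$ with the $i$-th row of $\johnftdinvmatrix_y$, split off the diagonal entry and bound it by $\johnkappa$ via Lemma~\ref{lemma:f_to_d_matrix}, apply Cauchy--Schwarz with weights $\johnweights_{y,j}$ to the off-diagonal sum using the $\johnkappa^3$ bound and $\sum_j \johnweights_{y,j} = 3\dims/2$, then absorb constants. The only cosmetic difference is that you spell out the symmetry step connecting the column of $\johnweightmatrix_y(\johnweightmatrix_y-\alpha\johnlaplacian_y)^{-1}$ to the row of $\johnftdinvmatrix_y$, which the paper leaves implicit in its definition of $\johnftdinvvector$.
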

See Section~\ref{sub:proof_of_lemma_lemma:f_to_d_l2}, \ref{sub:proof_of_lemma_lemma:f_to_d_matrix} and \ref{sub:proof_of_corollary_corollary:f_to_d_l1} for the proofs of Lemma~\ref{lemma:f_to_d_l2}, Lemma~\ref{lemma:f_to_d_matrix} and Corollary~\ref{corollary:f_to_d_l1} respectively.


\subsection{Tail Bounds} 
\label{sub:tail_bounds}
We now collect lemmas that provide us with useful tail bounds.

We start with a result that shows that for a random variable $z \sim
\proposal_x$, the slackness $\slack_{z, i}$ is close to $\slack_{x,
  i}$ with high probability and consequently the weights $\johnweights_{z, i}$ are also close to $\johnweights_{x, i}$.
This result comes in handy for transferring the remainder terms in Taylor expansions to the reference point (around which the series is being expanded).

\begin{lemma}
  \label{lemma:john_whp_slackness}
  For any point $x \in \intP$ and $r \leq  \frac{1}{25 \cdot \sqrt{1+\sqrt2 \log(4/\epsilon)}}$, we have
  \begin{subequations}
    \begin{align}
      \label{eq:john_slack_whp}
      \Prob_{z \sim \proposal_x} \brackets{ \forall i \in [\obs],
        \forall v \in \overline{xz},\ {\frac{\slack_{x,
              i}}{\slack_{v, i}}} \in \brackets{0.99, 1.01}\text{ and }{\frac{\johnweights_{x, i}}{\johnweights_{v, i}}} \in \brackets{0.96, 1.04}}
              &\geq 1 - \epsilon/4
      \end{align}
  \end{subequations}
\end{lemma}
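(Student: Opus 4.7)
The plan is to derive both ratio bounds from a single high-probability Gaussian tail event, leveraging Lemma~\ref{lemma:john_closeness_of_slackness} for the slack ratio and Lemma~\ref{lemma:john_gradient_and_hessian_and_bounds}(a) together with Corollary~\ref{corollary:f_to_d_l1} for the weight ratio. Writing the proposal as $z = x + \tfrac{r}{\johnkappa^2 \dims^{3/4}}\john_x^{-1/2}\rvg$ with $\rvg \sim \NORMAL(0, \Ind_\dims)$, every $v \in \overline{xz}$ satisfies $\vecnorm{v-x}{\john_x} \leq \vecnorm{z-x}{\john_x} = \tfrac{r\vecnorm{\rvg}{2}}{\johnkappa^2 \dims^{3/4}}$. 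A standard $\chi^2$-tail bound then gives $\vecnorm{\rvg}{2} \leq \sqrt{\dims}\bigl(1+\sqrt{2\log(4/\epsilon)/\dims}\bigr)$ with probability at least $1-\epsilon/4$; call this event $\mathcal{E}$ and condition on it for the rest of the argument.

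For the slack ratio, on $\mathcal{E}$ Lemma~\ref{lemma:john_closeness_of_slackness} directly gives $\max_i|1-s_{v,i}/s_{x,i}| \leq 2\vecnorm{v-x}{\john_x}$ uniformly over $v \in \overline{xz}$. Combining with the Gaussian bound, using $\johnkappa,\dims \geq 1$ and the elementary comparison $1+\sqrt{2a} \leq 2\sqrt{1+\sqrt{2}a}$ for $a \geq 0$ (verified by squaring), and plugging in the stated constraint on $r$, this quantity is at most $0.01$, which yields $s_{x,i}/s_{v,i} \in [0.99, 1.01]$.

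For the weight ratio, apply the mean value theorem to $t \mapsto \log \johnweights_{x+t(v-x), i}$ on $[0,1]$ to obtain some $y^* \in \overline{xv}$ with $\log(\johnweights_{v,i}/\johnweights_{x,i}) = \nabla\johnweights_{y^*,i}\tp(v-x)/\johnweights_{y^*,i}$, which—with $h := v-x$—is precisely $\projf_{y^*, i}$ in the notation of Lemma~\ref{lemma:john_gradient_and_hessian_and_bounds}. Part (a) of that lemma expresses the vector $(\projf_{y^*,1}, \ldots, \projf_{y^*,\obs})\tp$ as $2(\johnweightmatrix_{y^*} - \alpha\johnlaplacian_{y^*})^{-1}\johnlaplacian_{y^*}A_{y^*}(v-x)$. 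Writing the $i$-th coordinate as $e_i\tp(\johnweightmatrix_{y^*}-\alpha\johnlaplacian_{y^*})^{-1}\johnweightmatrix_{y^*}\cdot\johnweightmatrix_{y^*}^{-1}\johnlaplacian_{y^*}A_{y^*}(v-x)$ and applying H\"older's inequality with the row-sum bound $\matsnorm{(\johnweightmatrix_{y^*}-\alpha\johnlaplacian_{y^*})^{-1}\johnweightmatrix_{y^*}}{\infty} \leq 3\sqrt{\dims}\johnkappa^{3/2}$ from Corollary~\ref{corollary:f_to_d_l1} yields a coordinate-wise bound on $\projf_{y^*, i}$; combined with the slack bound (which keeps the John local norms at $x$ and $y^*$ within a constant factor) and the proposal scale, the chosen $r$ gives $|\projf_{y^*, i}| \leq \log(1.04)$, hence $\johnweights_{v,i}/\johnweights_{x,i} \in [0.96, 1.04]$.

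The main obstacle lies in this third step: the naive weighted $\ell_2$ bound $\sum_i \johnweights_{y,i}\projf_{y,i}^2 \leq 4\johnkappa^2\vecnorm{v-x}{\john_y}^2$ (obtained from Lemma~\ref{lemma:f_to_d_l2} with $c_1=0$, $c_2=1$) combined with $\johnweights_{y,i} \geq \johnbeta = \dims/(2\obs)$ yields only $|\projf_{y,i}| \lesssim \johnkappa\sqrt{\obs/\dims}\,\vecnorm{v-x}{\john_y}$, which cannot be absorbed by the proposal scale $r/(\johnkappa^2\dims^{1/4})$ once $\obs \gg \dims^{3/2}$. The $\ell_\infty$-row-sum bound of Corollary~\ref{corollary:f_to_d_l1} is essential to eliminate this $\sqrt{\obs}$ factor, at the price of a $\sqrt{\dims}\johnkappa^{3/2}$ factor that is then neutralised by the proposal scale; tracking the numerical constants so as to meet the exact thresholds $0.01$ and $0.04$ is then a matter of choosing the constant in front of $r$, as encoded in the hypothesis.
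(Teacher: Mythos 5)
The slack-ratio half of your argument is essentially the paper's: the $\chi^2$ tail bound on $\vecnorm{\rvg}{2}$ controls $\vecnorm{v-x}{\john_x}$ uniformly over the segment, and Lemma~\ref{lemma:john_closeness_of_slackness} converts this to the $[0.99,1.01]$ bound. That part is fine.

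The weight-ratio half has two genuine gaps. First, the numerics do not close. You pair the row-sum bound $\matsnorm{(\johnweightmatrix_{y^*}-\alpha\johnlaplacian_{y^*})^{-1}\johnweightmatrix_{y^*}}{\infty} \leq 3\sqrt{\dims}\johnkappa^{3/2}$ from Corollary~\ref{corollary:f_to_d_l1} with the $\ell_\infty$-norm of $\johnweightmatrix_{y^*}^{-1}\johnlaplacian_{y^*}A_{y^*}(v-x)$. But that $\ell_\infty$-norm is only of order $\vecnorm{A_{y^*}(v-x)}{\infty}$, and the best available bound on $|a_i\tp(v-x)/\slack_{x,i}|$ via the slack sensitivity $\johnlocalslack_{x,i}\le 4$ is $2\vecnorm{v-x}{\john_x} \lesssim r/(\johnkappa^2\dims^{1/4})$ on the high-probability event. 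The product is therefore of order $r\,\dims^{1/4}/\johnkappa^{1/2}$, which grows with $\dims$ and is \emph{not} neutralised by the proposal scale: you cannot conclude $|\projf_{y^*,i}|\le\log(1.04)$ uniformly in $\dims$. The $\sqrt{\dims}$ in Corollary~\ref{corollary:f_to_d_l1} comes precisely from applying Cauchy--Schwarz with $\sum_j g_{y,j}=3\dims/2$, and that $\sqrt{\dims}$ does not cancel against the $\dims^{-1/4}$ you have available.

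Second, even with the correct mixed $\ell_\infty/\ell_2$ bound (claim~\eqref{eq:john_step_consistency_infty} inside the paper's proof of Lemma~\ref{lemma:john_close_hessian_eigenvalues}), the $\ell_2$ term is $\vecnorm{\johnweightmatrix_{y^*}^{1/2}A_{y^*}(v-x)}{2} = \vecnorm{v-x}{\john_{y^*}}$, and the John local norm at $y^*$ depends on $\johnweights_{y^*}$ --- the very quantity whose proximity to $\johnweights_x$ you are trying to establish. Your aside that ``the slack bound keeps the John local norms at $x$ and $y^*$ within a constant factor'' is false: $\john_y = \sum_i \johnweights_{y,i}\,a_ia_i\tp/\slack_{y,i}^2$ depends on the weights, not just the slacks, so controlling $\slack_{x,i}/\slack_{y^*,i}$ alone does not control $\john_x$ against $\john_{y^*}$. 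The paper breaks this circularity with a continuity/bootstrap argument (the $\lambda^{\max}$ construction) inside the proof of Lemma~\ref{lemma:john_close_hessian_eigenvalues}, yielding $\vecnorm{\log\johnweights_y-\log\johnweights_x}{\infty}\le 16t$ whenever $\vecnorm{x-y}{\john_x}\le t/\johnkappa^2$ (equation~\eqref{eq:john_log_weight_final_bound}, restated as~\eqref{eq:john_bound_on_ratio_of_johnweights}); the proof of Lemma~\ref{lemma:john_whp_slackness} then simply \emph{invokes} that result rather than re-deriving a weight-ratio bound from scratch. Without either the bootstrap or a citation of~\eqref{eq:john_bound_on_ratio_of_johnweights}, your argument does not go through.
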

See Section~\ref{sub:proof_of_lemma_whp:slackness} for the proof of this lemma.

Next, we state high probability results for some Gaussian polynomials.
These results are useful to bound various polynomials of the form $\sum_{i=1}^\obs \johnweights_{x, i} \projd_{x, i}^k$, where $\projd_{x, i} = a_i\tp(z-x)/\slack_{x, i}$ and $z$ is drawn from the transition distribution for the John walk at point $x$.

\begin{lemma}[Gaussian moment bounds]
  \label{lemma:john_gaussian_moment_bounds}
  To simplify notations, all subscripts on $x$ are omitted in the following statements.
  For any $\epsilon \in (0, 1/30]$, define $\tailconstjohn_k \defn \tailconstjohn_{k, \epsilon} = \parenth{2e/k \cdot \log \parenth{16/\epsilon}}^{k/2}$, for $k=2,3,4\text{ and }6$, then we have
  \begin{subequations}
  \begin{align}
    \Prob\brackets{\sum_{i=1}^\obs \johnweights_i \parenth{\hat{a}_i\tp \rvg}^2 \leq \tailconstjohn_2 \sqrt{24}\dims} &\geq 1-\frac{\epsilon}{16},
    \label{eq:john_john_quadratic} \\
    \Prob\brackets{\sum_{i=1}^\obs \johnweights_i \parenth{\hat{a}_i\tp \rvg}^3 \leq \tailconstjohn_3 \sqrt{60}\dims^{1/2}} &\geq 1-\frac{\epsilon}{16},
    \label{eq:john_john_cubic} \\
    \Prob\brackets{\sum_{i=1}^\obs \johnweights_i \parenth{\hat{a}_i\tp \rvg}^2 \parenth{\hat{b}_i\tp \rvg}\leq \tailconstjohn_3 \sqrt{240}\johnkappa\dims^{1/2}} &\geq 1-\frac{\epsilon}{16},
    \label{eq:john_john_cubic_f} \\
    \Prob\brackets{\sum_{i=1}^\obs \johnweights_i \parenth{\hat{a}_i\tp \rvg}^4\leq \tailconstjohn_4 \sqrt{1680}\dims} &\geq 1-\frac{\epsilon}{16},
    \label{eq:john_john_fourth} \\
    \Prob\brackets{\sum_{i=1}^\obs \johnweights_i \parenth{\hat{a}_i\tp \rvg}^6 \leq \tailconstjohn_6 \sqrt{15120}\dims} &\geq 1-\frac{\epsilon}{16}.
    \label{eq:john_john_sixth}
  \end{align}
  \end{subequations}
\end{lemma}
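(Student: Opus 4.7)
The plan is to reduce each of the five bounds to an application of Janson's Gaussian hypercontractivity inequality~\eqref{vaidya_EqnJansonBound} exactly as in the Vaidya proof, so that it suffices to produce sharp upper bounds on the second moments $\Exs f(\rvg)^2$ of the relevant degree-$k$ polynomials in $\rvg\sim\NORMAL(0,\Ind_\dims)$. Once the second moment is bounded by some constant $C_k$ times the appropriate power of $\dims$, the choice $t = \tailconstjohn_k$ yields the desired failure probability $\epsilon/16$, and the square root of $C_k$ will match $\sqrt{24}, \sqrt{60}, \sqrt{240}\johnkappa, \sqrt{1680}, \sqrt{15120}$ respectively.

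Each second moment will be computed by Isserlis' formula (sum over pairings), using the covariances $\Exs[(\hat{a}_i^\top \rvg)(\hat{a}_j^\top \rvg)] = \hat{a}_i^\top\hat{a}_j$ and $\Exs[(\hat{a}_i^\top \rvg)(\hat{b}_j^\top \rvg)] = \hat{a}_i^\top\hat{b}_j$, together with $\|\hat{a}_i\|^2 = \johnlocalslack_{x,i}$ and $(\hat{a}_i^\top \hat{a}_j)^2 = \johnlocalslack_{x,i,j}^2$. The central algebraic identity I would exploit throughout is
\begin{align*}
\sum_{i=1}^\obs \johnweights_{x,i}\, \hat{a}_i \hat{a}_i^\top
= \john_x^{-1/2}\Big(\sum_i \johnweights_{x,i}\frac{a_i a_i^\top}{\slack_{x,i}^2}\Big)\john_x^{-1/2}
= \Ind_\dims,
\end{align*}
so that the matrix $B\in\real^{\obs\times\dims}$ with rows $\sqrt{\johnweights_{x,i}}\hat{a}_i^\top$ satisfies $B^\top B = \Ind_\dims$ and $BB^\top$ is an orthogonal projection. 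Together with the identities in Lemma~\ref{ppt:john_all_properties}(c,d,e) and the uniform bound $\johnlocalslack_{x,i}\leq 4$ from Lemma~\ref{lemma:john_first_bounds}(d), this handles all the purely-$\hat{a}$ bounds: for \eqref{eq:john_john_quadratic}, Isserlis gives $\Exs f^2 = (\sum_i \johnweights_i \johnlocalslack_i)^2 + 2\sum_{i,j}\johnweights_i\johnweights_j \johnlocalslack_{i,j}^2$, both pieces being $\mathcal{O}(\dims^2)$; for \eqref{eq:john_john_cubic}, I would expand $\Exs[(\hat{a}_i^\top\rvg)^3(\hat{a}_j^\top\rvg)^3] = 9\|\hat{a}_i\|^2\|\hat{a}_j\|^2(\hat{a}_i^\top\hat{a}_j) + 6(\hat{a}_i^\top\hat{a}_j)^3$ and bound the two resulting double sums by $\mathcal{O}(\dims)$ using precisely the $N_1,N_2$ argument of Section~\ref{ssub:proof_of_part_item:cube1}, but with the John bound $\johnlocalslack_i\leq 4$ replacing the Vaidya bound $\sqrt{\obs/\dims}$. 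For the fourth- and sixth-moment bounds \eqref{eq:john_john_fourth} and \eqref{eq:john_john_sixth}, I would apply Cauchy-Schwarz $\Exs f^2 \leq \sum_{i,j}\johnweights_i\johnweights_j\sqrt{\Exs(\hat{a}_i^\top\rvg)^{2k}}\sqrt{\Exs(\hat{a}_j^\top\rvg)^{2k}}$ (with $k=4,6$), use the Gaussian moment formula $\Exs X^{2k} = (2k-1)!!\,\sigma^{2k}$ with $\sigma^2 = \johnlocalslack_i$, and collapse the resulting factor $(\sum_i \johnweights_i\johnlocalslack_i^k)^2$ using $\johnlocalslack_i\leq 4$ together with Lemma~\ref{ppt:john_all_properties}(c).

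The main obstacle is bound \eqref{eq:john_john_cubic_f}, which involves $\hat{b}_i = \john_x^{-1/2} A_x \johnlaplacian_x(\johnweightmatrix_x-\alpha\johnlaplacian_x)^{-1}e_i$: the operator $(\johnweightmatrix_x-\alpha\johnlaplacian_x)^{-1}$ can have norm as large as $\johnkappa$, and the single factor of $\johnkappa$ in the claimed constant must be produced but not exceeded. My plan for this step is to expand $\Exs\bigl[(\hat{a}_i^\top\rvg)^2(\hat{b}_i^\top\rvg)(\hat{a}_j^\top\rvg)^2(\hat{b}_j^\top\rvg)\bigr]$ via Isserlis, regroup the resulting terms as bilinear forms in the matrix $\johnftdinvmatrix_y = (\johnweightmatrix_y-\alpha\johnlaplacian_y)^{-1}\johnweightmatrix_y$, and then invoke Lemma~\ref{lemma:f_to_d_matrix} to control both the diagonal entries ($\johnftdinvmatrix_{y,i,i}\leq\johnkappa$) and the weighted off-diagonal $\ell^2$ mass ($\sum_{j\neq i}\johnftdinvmatrix_{y,i,j}^2/\johnweights_{y,j}\leq\johnkappa^3$); the linear factor $\johnkappa$ in the final bound should arise from exactly one application of these estimates, while the projection identity $B^\top B = \Ind_\dims$ keeps the $\hat{a}$ factors under control and ensures the remaining dimensional factor stays at $\sqrt{\dims}$.
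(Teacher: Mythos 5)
Your plan for the four pure-$\hat{a}$ bounds \eqref{eq:john_john_quadratic}, \eqref{eq:john_john_cubic}, \eqref{eq:john_john_fourth}, \eqref{eq:john_john_sixth} matches the paper's proof exactly: Janson's tail bound plus Isserlis, the projection matrix $B$ with $B^\top B = \Ind_\dims$, the bound $\|v\|_2^2 \leq 4\dims$ via Lemma~\ref{ppt:john_all_properties}\ref{item:john_theta_square_sigma_sum_bound}, Cauchy--Schwarz for the fourth and sixth moments, and the key observation that $\johnlocalslack_{x,i} \leq 4$ plays the role that $\sqrt{\obs/\dims}$ did for the Vaidya walk. (Your exponent $(\sum_i \johnweights_i \johnlocalslack_i^k)^2$ should read $(\sum_i \johnweights_i \johnlocalslack_i^{k/2})^2$ in the fourth/sixth moment step, but that is a notational slip, not a conceptual one.)

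The mixed bound \eqref{eq:john_john_cubic_f} is where your plan goes off track. You propose to regroup the Isserlis terms as bilinear forms in $\johnftdinvmatrix_y = (\johnweightmatrix_y-\alpha\johnlaplacian_y)^{-1}\johnweightmatrix_y$ and then invoke Lemma~\ref{lemma:f_to_d_matrix}, which gives the \emph{entrywise} bounds $\johnftdinvmatrix_{y,i,i} \leq \johnkappa$ and $\sum_{j\ne i}\johnftdinvmatrix_{y,i,j}^2/\johnweights_{y,j}\leq\johnkappa^3$. Two problems. First, $\hat{b}_i = \john_x^{-1/2} A_x^\top \johnlaplacian_x(\johnweightmatrix_x-\alpha\johnlaplacian_x)^{-1}e_i$ involves the combination $\johnlaplacian_x(\johnweightmatrix_x-\alpha\johnlaplacian_x)^{-1}$, not $\johnftdinvmatrix_x = (\johnweightmatrix_x-\alpha\johnlaplacian_x)^{-1}\johnweightmatrix_x$, so the Isserlis pairings $\hat b_i^\top\hat b_j$ do not reduce cleanly to entries of $\johnftdinvmatrix$. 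Second, and more importantly, the entrywise estimates in Lemma~\ref{lemma:f_to_d_matrix} carry $\johnkappa^3$ in the off-diagonal mass, so any argument passing through them will overshoot the target: the claimed tail constant $\sqrt{240}\johnkappa\dims^{1/2}$ needs the \emph{second moment} to be $O(\johnkappa^2\dims)$, and the paper achieves this via Lemma~\ref{lemma:f_to_d_l2}, which gives the global PSD inequality
\begin{align*}
\sum_{i=1}^\obs \johnweights_{x,i}\,\hat b_i \hat b_i^\top
= \john_x^{-1/2}A_x^\top\johnlaplacian_x(\johnweightmatrix_x-\alpha\johnlaplacian_x)^{-1}\johnweightmatrix_x(\johnweightmatrix_x-\alpha\johnlaplacian_x)^{-1}\johnlaplacian_x A_x\john_x^{-1/2}\preceq 4\johnkappa^2\,\Ind_\dims,
\end{align*}
hence $B_b B_b^\top\preceq 4\johnkappa^2\Ind_\obs$ for the matrix $B_b$ with rows $\sqrt{\johnweights_{x,i}}\hat b_i^\top$, and $\|v^{ab}\|_2^2\leq 16\johnkappa^2\dims$. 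Each of the five Isserlis terms is then a quadratic form in $B$ or $B_b$ applied to $v$ or $v^{ab}$, bounded by $16\johnkappa^2\dims$, giving $(1+4+4+2+4)\cdot 16\johnkappa^2\dims = 240\johnkappa^2\dims$. In short: swap Lemma~\ref{lemma:f_to_d_matrix} for Lemma~\ref{lemma:f_to_d_l2}, and work with the global operator bound on $\sum_i\johnweights_i\hat b_i\hat b_i^\top$ rather than with rows of $\johnftdinvmatrix$.
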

See Section~\ref{sub:proof_of_lemma_lemma:john_gaussian_moment_bounds} for the proof.


\section{Proof of Lemma~\ref{lemma:john_close_hessian_eigenvalues}}
\label{sec:proof_of_lemma_lemma:john_close_hessian_eigenvalues}

  As a direct consequence of Lemma~\ref{lemma:john_closeness_of_slackness}, for any $x, y \in \intP$ such that \mbox{$\vecnorm{x-y}{x} \leq t/\johnkappa^2$}, we have
  \begin{align}
    \label{eq:john_slackness_tbound}
    \max_{i\in[\obs]}\abss{1 - \frac{\slack_{y, i}}{\slack_{x, i}}}
    \leq \frac{2t}{\johnkappa^2}.
  \end{align}
  Bounding the terms in $\hesslogbarr_x$ one by one, we obtain
  \begin{align*}
    \parenth{1 - \frac{2t}{\johnkappa^2}}^2 \hesslogbarr_y
    \preceq \hesslogbarr_x
    \preceq \parenth{1 + \frac{2t}{\johnkappa^2}}^2 \hesslogbarr_y.
  \end{align*}
  We claim that
  \begin{align}
    \label{eq:john_log_weight_final_bound}
    \vecnorm{\log \johnweights_{y} - \log \johnweights_{x}}{\infty} \leq 16 t.
  \end{align}
  Assuming the claim as given at the moment, we now complete the proof.
  Putting the result~\eqref{eq:john_log_weight_final_bound} in matrix form, we obtain that $\exp \parenth{-16t} \Ind_\obs
    \preceq \johnweightmatrix_x^{-1} \johnweightmatrix_y \preceq
    \exp \parenth{ 16t} \Ind_\obs$,
  and hence
  \begin{align}
    \label{eq:john_bound_on_ratio_of_johnweights}
    \exp \parenth{-16t} \johnweights_{x, i}
    \leq \johnweights_{y, i}
    \leq \exp \parenth{16t} \johnweights_{x, i}.
  \end{align}
  Consequently, using the definition of $\john_x$ we have,
  \begin{align*}
    \underbrace{ \parenth{1 - \frac{2t}{\johnkappa^2}}^2 \exp\parenth{-16t}}_{\omega_\ell} \john_x
    \leq \john_y
    \leq \underbrace{ \parenth{1 + \frac{2t}{\johnkappa^2}}^2 \exp \parenth{16t} }_{\omega_u} \john_y.
  \end{align*}
  Letting $\omega = 2t$, we obtain
  \begin{align*}
    \omega_\ell
     \geq (1-\omega)^2 \cdot \exp\parenth{-8\omega}
    \stackrel{(i)}{\geq} 1-24\omega + \omega^2, \quad \text{ and } \quad
    \omega_u
    \leq (1+\omega)^2 \cdot \exp\parenth{8\omega}
    \stackrel{(ii)}{\leq} 1+24\omega + \omega^2,
  \end{align*}
  where inequalities $(i)$ and $(ii)$ hold since $\omega \leq 1/24$.
  Putting the pieces together, we find that
  \begin{align*}
    \parenth{1-48t+4t^2} \john_x
    \preceq \john_y
    \preceq \parenth{1-48t+4t^2} \john_x
  \end{align*}
  for $t \in [0, 1/48]$.

Now, we return to the proof of our earlier claim~\eqref{eq:john_log_weight_final_bound}.
We use an argument based on the continuity of the function $x \mapsto \log \johnweights_x$.
(Such an argument appeared in a similar scenario in~\cite{lee2014path}.)
For $\lambda \in [0, 1]$, define \mbox{$u_\lambda = \lambda y + \parenth{1- \lambda}x$}.
Let
  \begin{align}
    \label{eq:john_log_weight_initial_bound}
    \lambda^{\mathrm {max}} \defn \sup\braces{\lambda \in [0, 1] \bigg\vert \vecnorm{\log \johnweights_{u_\lambda} - \log \johnweights_x}{\infty} \leq 16 t}.
  \end{align}
  It suffices to establish that $\lambda^{\mathrm {max}} = 1$.
  Note that $\lambda = 0$ is feasible on the RHS of equation~\eqref{eq:john_log_weight_initial_bound} and hence $\lambda^{\mathrm {max}}$ exists.
  Now for any $\lambda \in [0, \lambda^{\mathrm {max}}]$ and $i \in \{1, \ldots, \obs\}$, there exists $v$ on the segment $\overline{u_\lambda x}$ such that
  \begin{align*}
    \abss{\log \johnweights_{u_\lambda, i} - \log \johnweights_{x, i}} & = \abss{\parenth{ \frac{\nabla \johnweights_{v, i}}{\johnweights_{v, i}}} \tp \parenth{u_\lambda - x}}
    \stackrel{(i)}{\leq} \vecnorm{ \johnweightmatrix_v^{-1} \johnweightmatrix'_v \parenth{y - x} }{\infty}
    = 2 \vecnorm{\parenth{\johnweightmatrix_v - \alpha\johnlaplacian_v}^{-1} \johnlaplacian_v A_v \parenth{y - x} }{\infty}.
  \end{align*}
  where in step $(i)$ we have used the fact that $u_\lambda - x = \lambda (y-x)$ and $\lambda \in [0, 1]$.
  We claim that
  \begin{align}
    \label{eq:john_step_consistency_infty}
    \vecnorm{\parenth{\johnweightmatrix_v - \alpha\johnlaplacian_v}^{-1} \johnlaplacian_v \myvec_1}{\infty} \leq \johnkappa  \vecnorm{\myvec_1}{\infty} +  2 \johnkappa^2 \vecnorm{\johnweightmatrix_v^{1/2}\myvec_1}{2}
    \quad \mbox{for any $\myvec_1 \in \real^\obs$}.
  \end{align}
  We prove the claim at the end of this section.
  We now derive bounds for the two terms on the RHS of the equation~\eqref{eq:john_step_consistency_infty} for $\myvec_1 = A_v (y-x)$.
  Note that
  \begin{align*}
    \vecnorm{A_v \parenth{y - x}}{\infty} = \max_i  \abss{\frac{\slack_{y, i} - \slack_{x, i}}{\slack_{v, i}}} = \max_i  \abss{\frac{\slack_{y, i} - \slack_{x, i}}{\slack_{x, i}}} \abss{\frac{\slack_{x, i}}{\slack_{v, i}}} \stackrel{(i)}{\leq} \frac{2t}{\johnkappa^2 \parenth{1 - 2t/\johnkappa^2}} \stackrel{(ii)}{\leq} \frac{3t}{\johnkappa^2}.
  \end{align*}
  Inequality $(i)$ uses bound~\eqref{eq:john_slackness_tbound} and inequality $(ii)$ follows by plugging in $t\leq 1/64$.
  Next, we have
  \begin{align*}
    \vecnorm{\johnweightmatrix_v^{1/2}A_v \parenth{y - x}}{2}^2
    = \sum_{i=1}^\obs \johnweights_{x,i} \frac{\parenth{a_i \tp \parenth{y - x}}^2}{\slack_{x,i}^2} \frac{\johnweights_{v, i}}{\johnweights_{x, i}} \frac{\slack_{v, i}^2}{\slack_{x, i}^2}
    & \stackrel{(i)}{\leq} \vecnorm{x - y}{x}^2 \ \max_{i \in [\obs]} \frac{\johnweights_{v, i}}{\johnweights_{x, i}} \frac{\slack_{v, i}^2}{\slack_{x, i}^2} \\
    & \stackrel{(ii)}{\leq} \frac{t^2}{\johnkappa^4} \parenth{1 + (16t) +(16t)^2} \parenth{1 + \frac{2t}{\johnkappa^2}}^2 \\
    & \stackrel{(iii)}{\leq} \frac{1.5 t}{\johnkappa^4},
  \end{align*}
  where step $(i)$ follows from the definition of the local norm; step $(ii)$ follows from bounds~\eqref{eq:john_slackness_tbound} and \eqref{eq:john_log_weight_initial_bound} and the fact that $e^x \leq 1+ x + x^2 $ for all $x \in [0, 1/4]$; and inequality $(iii)$ follows by plugging in $t\leq 1/64$.
  Putting the pieces together, we obtain
  \begin{align*}
    \vecnorm{\log \johnweights_{u_\lambda} - \log \johnweights_{x}}{\infty} \leq 2(\johnkappa\cdot 3t/\johnkappa^2 + 2 \johnkappa^2 \cdot 1.5t/\johnkappa^4 ) \leq 12 t < 16 t.
  \end{align*}
  The strict inequality is valid for $\lambda = \lambda^{\mathrm {max}}$.
  Consequently, using the continuity of $x \mapsto \log \johnweights_x$, we conclude that $\lambda^{\mathrm {max}} = 1$.

 It is left to prove claim~\eqref{eq:john_step_consistency_infty}.
 Let $\myvectwo \defn \parenth{\johnweightmatrix_v - \alpha\johnlaplacian_v}^{-1} \johnlaplacian_v \myvec_1$.
  which implies $\parenth{\johnweightmatrix_v - \alpha\johnlaplacian_v} \myvectwo = \johnlaplacian_v \myvec_1$.
  Plugging the expression of $\johnweightmatrix_v$ and $\johnlaplacian_v$, we have
  \begin{align*}
    \parenth{(1-\alpha)\johnlevmatrix_v + \johnbeta\Ind_\obs + \alpha \johnprojmatrix_v^{(2)}} \myvectwo = \parenth{\johnlevmatrix_v - \johnprojmatrix_v^{(2)}} \myvec_1.
  \end{align*}
  Writing component wise, we find that for any $i \in [\obs]$, we have
  \begin{align}
    \abss{\parenth{(1-\alpha)\levjohn_{v,i} + \johnbeta} \myvectwo_i}
    &\leq \alpha \abss{e_i\tp\johnprojmatrix_v^{(2)} \myvectwo } +  \levjohn_{v,i} \abss{\myvec_{1, i}} + \abss{e_i\tp\johnprojmatrix_v^{(2)} \myvec_1} \notag\\
    & \stackrel{(i)}{\leq} \alpha \levjohn_{v, i} \vecnorm{\johnlevmatrix_v^{1/2} \myvectwo}{2} +  \levjohn_{v,i} \vecnorm{\myvec_1}{\infty} + \levjohn_{v, i} \vecnorm{\johnlevmatrix_v^{1/2} \myvec_1}{2} \notag\\
    & \stackrel{(ii)}{\leq} \alpha \levjohn_{v, i} \vecnorm{\johnweightmatrix_v^{1/2} \myvectwo}{2} +  \levjohn_{v,i} \vecnorm{\myvec_1}{\infty} + \levjohn_{v, i} \vecnorm{\johnweightmatrix_v^{1/2} \myvec_1}{2} \notag\\
    & \stackrel{(iii)}{\leq} \alpha \levjohn_{v, i} \johnkappa \vecnorm{\johnweightmatrix_v^{1/2} \myvec_1}{2} +  \levjohn_{v,i} \vecnorm{\myvec_1}{\infty} + \levjohn_{v, i} \vecnorm{\johnweightmatrix_v^{1/2} \myvec_1}{2},
    \label{eq:john_final_infty_two_bound}
  \end{align}
  where inequality $(ii)$ from the fact that $\johnlevmatrix_y \preceq \johnweightmatrix_y$ and inequality $(iii)$ from Lemma~\ref{lemma:f_to_d_l2} with $c_1=0, c_2 = 1$.
  To assert inequality $(i)$, observe the following
  \begin{align*}
    \abss{\sum_{j=1}^\obs \levjohn_{y, i, j}^2 \myvectwo_j}
    \leq \sum_{j=1}^\obs \levjohn_{y, i, j}^2 \abss{\myvectwo_j}
    \stackrel{(a)}{\leq} \levjohn_{y, i} \sum_{j=1}^\obs \levjohn_{y, j} \abss{\myvectwo_j}
    \stackrel{(b)}{\leq} \levjohn_{y, i} \sum_{j=1}^\obs \sqrt{\levjohn_{y, j}} \abss{\myvectwo_j}
    = \levjohn_{y, i} \vecnorm{\johnlevmatrix_v^{1/2}\myvectwo}{2},
  \end{align*}
  where step $(a)$ follows from the fact that $\levjohn_{y, i, j}^2 \leq \levjohn_{y, i} \levjohn_{y, j}$, and step $(b)$ from the fac that $\levjohn_{y, i} \in [0 ,1]$.
  Dividing both sides of inequality~\eqref{eq:john_final_infty_two_bound} by $\parenth{(1-\alpha)\levjohn_{v,i} + \johnbeta}$ and observing that $\levjohn_{v, i}/\parenth{(1-\alpha)\levjohn_{v,i} + \johnbeta} \leq\johnkappa$, and $\alpha \in [0, 1]$, yields the claim.

\section{Proof of Lemma~\ref{lemma:john_change_in_log_det_and_local_norm}} 
\label{sec:proof_of_lemma_lemma:john_change_in_log_det_and_local_norm}

We prove Lemma~\ref{lemma:john_change_in_log_det_and_local_norm} in two parts: claim~\eqref{eq:john_whp_log_det_filter} in Section~\ref{ssec:proof_of_claim_whp_log_det_filter} and claim~\eqref{eq:john_whp_local_norm} in Section~\ref{ssec:proof_of_claim_whp_local_norm}.
\subsection{Proof of claim~\eqref{eq:john_whp_log_det_filter}}
\label{ssec:proof_of_claim_whp_log_det_filter}
Using the second order Taylor expansion, we have
\begin{align*}
  \logdetJ_z - \logdetJ_x
  = \parenth{z - x}\tp \gradlogdetJ_x + \frac{1}{2} \parenth{z - x}\tp \hesslogdetJ_y \parenth{z - x},
  \quad \mbox{for some $y \in \overline{xz}$}.
\end{align*}
We claim that for $r\leq \johnradiusbound(\epsilon)$, we have
\begin{subequations}
  \label{eq:john_log_det_two_tails}
  \begin{align}
    \Prob \brackets{ \parenth{z-x}\tp \gradlogdetJ_x \geq -
      \epsilon/2} &\geq 1 - \epsilon/2,
    \ \text{and} \label{eq:john_gradient_log_det_tail} \\
    \quad \Prob
    \brackets{ {\frac{1}{2} \parenth{z-x} \hesslogdetJ_y
        \parenth{z-x}} \geq -\epsilon/2} &\geq 1 - \epsilon/2. \label{eq:john_hessian_log_det_tail}
  \end{align}
\end{subequations}
Note that the claim~\eqref{eq:john_whp_log_det_filter} follows from the above two claims.


\subsubsection{Proof of bound~(\ref{eq:john_gradient_log_det_tail})}
\label{sssec:proof_of_claim_eq_gradient_log_det_tail}

We observe that
\begin{align*}
  \parenth{z - x}\tp \nabla \logdetJ_x \sim \NORMAL \parenth{0, \frac{r^2}{\johnkappa^2 n} \nabla \logdetJ_x\tp \john_x^{-1} \nabla \logdetJ_x}.
\end{align*}
Let $\theinverse_x = \Ind_\obs + \parenth{\johnweightmatrix_x - \alpha\johnlaplacian_x}^{-1} \johnlaplacian_x$.
Substituting the expression of $\gradlogdetJ_x$ from Lemma~\ref{lemma:john_gradient_and_hessian_and_bounds}~\ref{item:gradient_log_det} and applying Cauchy-Schwarz inequality, we have that for any vector $\myvec \in \real^\dims$
\begin{align}
  \myvec\tp \nabla \logdetJ_x \nabla \logdetJ_x \tp \myvec
  = (\johnlocalslack_x\tp \johnweightmatrix_x \theinverse_x A_x \myvec)^2
  \leq \parenth{\myvec \tp A_x\tp \johnweightmatrix_x A_x \myvec} \cdot
  \parenth{\johnlocalslack_x \tp \johnweightmatrix_x  \theinverse_x \johnweightmatrix_x^{-1} \theinverse_x \johnweightmatrix_x \johnlocalslack_x}.  \label{eq:john_gradient_log_det_variance_bound}
\end{align}
Observe that
\begin{align*}
  \johnweightmatrix_x^{1/2}\theinverse_x\johnweightmatrix_x^{-1/2} = \Ind_\obs + (\Ind_\obs - \alpha \johnweightmatrix_x^{-1/2}\johnlaplacian_x\johnweightmatrix_x^{-1/2})^{-1}(\johnweightmatrix_x^{-1/2}\johnlaplacian_x\johnweightmatrix_x^{-1/2}).
\end{align*}
Now, using the intermediate bound~\eqref{eq:john_G_laplace_G_bound} from the proof of Lemma~\ref{lemma:f_to_d_l2}, we obtain that
\begin{align*}
  \Ind_\obs \preceq \johnweightmatrix_x^{1/2}\theinverse_x\johnweightmatrix_x^{-1/2} \preceq 2\johnkappa\Ind_\obs,
\end{align*}
and hence $\johnweightmatrix_x \preceq \johnweightmatrix_x  \theinverse_x \johnweightmatrix_x^{-1} \theinverse_x \johnweightmatrix_x \preceq 4 \johnkappa^2 \johnweightmatrix_x$.
Consequently, we have
\begin{align*}
  \johnlocalslack_x \tp \johnweightmatrix_x  \theinverse_x \johnweightmatrix_x^{-1} \theinverse_x \johnweightmatrix_x \johnlocalslack_x
  \leq 4\johnkappa^2 \johnlocalslack_x\tp \johnweightmatrix_x \johnlocalslack_x = 4\johnkappa^2 \sum_{i=1}^\obs \johnweights_{x, i} \johnlocalslack_{x, i}^2 \leq 16 \johnkappa^2 \dims,
\end{align*}
where the last step follows from Lemma~\ref{ppt:john_all_properties}.
Putting the pieces together into equation~\eqref{eq:john_gradient_log_det_variance_bound}, we obtain $\gradlogdetJ_x \gradlogdetJ_x\tp \preceq 16 \johnkappa^2 \dims \john_x$ whence
$\john_x^{-1/2} \gradlogdetJ_x \gradlogdetJ_x\tp \john_x^{-1/2} \preceq 16 \johnkappa^2 \dims \Ind_\dims$.
Noting that the matrix $\john_x^{-1/2} \gradlogdetJ_x \gradlogdetJ_x\tp \john_x^{-1/2}$ has rank one, we have
\begin{align*}
  \gradlogdetJ_x\tp \john_x^{-1} \gradlogdetJ_x = \trace \parenth{\john_x^{-1/2} \gradlogdetJ_x  \gradlogdetJ_x\tp \john_x^{-1/2}} \leq 16 \johnkappa^2 \dims.
\end{align*}
Using standard Gaussian tail bound, we have $\Prob\parenth{\parenth{z-x}\tp \gradlogdetJ_x  \geq -\sqrt{32}\tailconstjohn_1 r} \geq 1 - \exp\parenth{-\tailconstjohn_1^2}.$

Choosing $\tailconstjohn_1 = \log(2/\epsilon)$, and observing that
\begin{align}
  \label{eq:john_gradient_log_det_tail_r_condition}
  r \leq \frac{\epsilon}{\parenth{2\sqrt{32}\tailconstjohn_1}},
\end{align}
yields the claim.

\subsubsection{Proof of bound~(\ref{eq:john_hessian_log_det_tail})}
\label{sssec:proof_of_claim_eq_hessian_log_det_tail}
In the following proof, we use $h = z-x$ for definitions~\eqref{eq:john_def_D}-\eqref{eq:john_def_rho}.
According to Lemma~\ref{lemma:john_gradient_and_hessian_and_bounds}\ref{item:john_hessian_log_det_bound}, we have
\begin{align*}
  \abss{\frac{1}{2} \parenth{z - x}\tp \hesslogdetJ_y \parenth{z - x}}
  \leq  \sum_{i=1}^\obs \johnweights_{y, i}\, \johnlocalslack_{y, i} \brackets{ \frac{9}{2} \, \projd_{y, i}^2 + 2 \projf_{y, i}^2}   +  \frac{1}{2}\abss{ \sum_{i=1}^\obs \johnweights_{y, i}\, \johnlocalslack_{y, i} \projz_{y, i}}
\end{align*}
We claim that
\begin{align}
  \sum_{i=1}^\obs \johnweights_{y, i}\, \johnlocalslack_{y, i} \brackets{ \frac{9}{2} \, \projd_{y, i} ^2 + 2 \projf_{y, i}^2}   +  \frac{1}{2}\abss{ \sum_{i=1}^\obs \johnweights_{y, i}\, \johnlocalslack_{y, i} \projz_{y, i}}
  \leq 386 \sqrt{\dims} \johnkappa^4  \sum_{i=1}^\obs \johnweights_{y, i}  \projd_{y, i}^2.
  \label{eq:john_hess_logdet_in_terms_of_d}
\end{align}
Assuming the claim as given at the moment, we now complete the proof.
Note that $y$ is some particular point on $\overline{xz}$ and its dependence on $z$ is hard to characterize.
Consequently, we transfer all the terms with dependence on $y$, to terms with dependence on $x$ only.
We have
\begin{align*}
  \sum_{i=1}^\obs \johnweights_{y, i} \projd_{y, i}^2 = \sum_{i=1}^\obs \johnweights_{x, i} \projd_{x, i}^2 \underbrace{\frac{\johnweights_{y, i}}{\johnweights_{x, i}} \frac{s_{x, i}^2}{s_{y, i}^2}}_{\tau_{y, i}}.
\end{align*}
We now invoke the following high probability bounds implied by Lemma~\ref{lemma:john_whp_slackness} and Lemma~\ref{lemma:john_gaussian_moment_bounds}~\eqref{eq:john_john_quadratic} respectively
\begin{align}
  \Prob\brackets{\sup_{y \in\overline{xz}, i \in [\obs]}\tau_{y, i} \leq 1.1} \geq 1-\epsilon/4,
  \quad \text{and}, \quad
  \Prob\brackets{\sum_{i=1}^\obs \johnweights_{x, i}  \parenth{\hat{a}_{x, i}\tp \rvg}^2 \leq \tailconstjohn_2 \sqrt{24}\dims} \geq 1-\epsilon/16.
  \label{eq:john_hbp_logdetjohn}
\end{align}
Since $h = z-x$, we have that $\projd_{x, i}^2 = \frac{r^2}{\johnkappa^2 \dims^{3/2}}\parenth{\hat{a}_{x,i}\tp \rvg}^2$.
Consequently, for
\begin{align}
  \label{eq:john_hessian_log_det_tail_r_condition}
  r \leq \sqrt{\frac{\epsilon}{386\sqrt{24}\tailconstjohn_2}},
\end{align}
with probability at least $1-\epsilon/2$, we have
\begin{align*}
  \abss{\frac{1}{2} \parenth{z - x}\tp \hesslogdetJ_y \parenth{z - x}}
  \stackrel{\mathrm{eqn.}~\eqref{eq:john_hess_logdet_in_terms_of_d}}{\leq} 386 \sqrt{\dims} \johnkappa^4  \sum_{i=1}^\obs \johnweights_{y, i}  \projd_{y, i}^2
  \stackrel{\mathrm{hpb}~\eqref{eq:john_hbp_logdetjohn}}{\leq} \epsilon,
\end{align*}
which completes the proof.

We now turn to the proof of claim~\eqref{eq:john_hess_logdet_in_terms_of_d}.
First we observe the following relationship between the terms $\projd_{y, i}$ and  $\projf_{y, i}$:
\begin{align}
  \sum_{i=1}^\obs\! \johnweights_{y, i} \projf_{y, i}^2
  \!\stackrel{(i)}{=} \! 4 h\tp A_y\tp \!\Lambda_y \parenth{\johnweightmatrix_y \!-\! \alpha \johnlaplacian_y}^{-1}\! \johnweightmatrix_y \parenth{\johnweightmatrix_y\! -\! \alpha \johnlaplacian_y}^{-1} \!\johnlaplacian_y A_y h \!
  \stackrel{(ii)}{\leq}\! 4 \johnkappa^2 h\tp\! A_y\tp\!  \johnweightmatrix_y A_y h
   \!=\! 4 \johnkappa^2 \sum_{i=1}^\obs \johnweights_{y, i} \projd_{y, i}^2,
  \label{eq:john_exact_f_to_d_bound}
\end{align}
where step $(i)$ follows by plugging in the definition of $\projf_{y, i}$~\eqref{eq:john_def_F} and step~$(ii)$ by invoking Lemma~\ref{lemma:f_to_d_l2} with $c_1 =0 $ and $c_2 = 1$.
Next, we relate the term on the LHS of equation~\eqref{eq:john_hess_logdet_in_terms_of_d} involving $\projz_{y, i}$ to a polynomial in $\projd_{y, i}$.
Using Lemma~\ref{lemma:john_gradient_and_hessian_and_bounds}, we find that
\begin{align*}
\abss{ \sum_{i=1}^\obs \johnweights_{y, i}\, \johnlocalslack_{y, i} \projz_{y, i}}
&= \abss{\parenth{\parenth{\johnweightmatrix_y - \alpha \johnlaplacian_y}^{-1} \johnweightmatrix_y \theta_y}\tp \parenth{\johnweightmatrix_y - \alpha \johnlaplacian_y} \projz_y}
\leq \vecnorm{\underbrace{\parenth{\johnweightmatrix_y - \alpha \johnlaplacian_y}^{-1} \johnweightmatrix_y \theta_y}_{v_1}}{\infty}
\vecnorm{\underbrace{\parenth{\johnweightmatrix_y - \alpha \johnlaplacian_y} \projz_y}_{\projzsub_y}}{1},
\end{align*}
where the last step follows from the Holder's inequality: for any two vectors $u, v \in \realdim$, we have that $u\tp v \leq \vecnorm{u}{\infty} \vecnorm{v}{1}$.
Substituting the bound for the norm $\vecnorm{v_1}{\infty}$ from Corollary~\ref{corollary:f_to_d_l1} and the bound on $\projzsub_{y, i}$ from Lemma~\ref{lemma:john_gradient_and_hessian_and_bounds}\ref{item:hessian_g}, we obtain that
\begin{align*}
  \abss{ \sum_{i=1}^\obs \johnweights_{y, i}\, \johnlocalslack_{y, i} \projz_{y, i}}
  \!\leq 12 \sqrt{n} \johnkappa^{3/2} \sum_{i=1}^\obs \bigg[{7 \johnweights_{y, i} \projd_{y, i}^2 \!+\! 3 \johnweights_{y, i} \projf_{y, i}^2 \!+\! \sum_{j=1}^\obs \parenth{ 13\projd_{y, j}^2 \!+\! 6 \projf_{y, j}^2} \johnprojmatrix_{y, i, j}^2}\bigg]
  \!\leq\! 672\sqrt{n} \johnkappa^{4} \sum_{i=1}^\obs \johnweights_{y, i} \projd_{y, i}^2,
\end{align*}
where the last step follows from Lemma~\ref{ppt:john_all_properties}\ref{item:john_sigma_properties} and the bound~\eqref{eq:john_exact_f_to_d_bound}.
The claim now follows.

\subsection{Proof of claim~(\ref{eq:john_whp_local_norm})}
\label{ssec:proof_of_claim_whp_local_norm}

Writing $z = x + t u$, where $t$ is a scalar and $u$ is
a unit vector in \realdim, we obtain
\begin{align*}
\vecnorm{z-x}{z}^2 - \vecnorm{z-x}{x}^2 = t^2 \sum_{i=1}^\obs
\parenth{a_i\tp u}^2 \parenth{\johnphi_{z, i} - \johnphi_{x, i}}.
\end{align*}
Now, we use a Taylor series expansion for $\sum_{i=1}^\obs
\parenth{a_i\tp u}^2 \parenth{\johnphi_{z, i} - \johnphi_{x, i}}$ around
the point $x$, along the line $u$.  There exists a point $y \in
\overline{xz}$ such that
\begin{align*}
\sum_{i=1}^\obs \parenth{a_i\tp u}^2 \parenth{\johnphi_{z, i} -
  \johnphi_{x, i}} = \sum_{i=1}^\obs \parenth{a_i\tp u}^2 \parenth{
  \parenth{z - x}\tp \nabla \johnphi_{x, i}+ \frac{1}{2}
  \parenth{z-x}\tp \nabla^2 \johnphi_{y, i} \parenth{z-x}}.
\end{align*}
Note that the point $y$ in this discussion is not the same as the point $y$ used in previous proofs, in particular in Section~\ref{ssec:proof_of_claim_whp_log_det_filter}.
Multiplying both sides by $t^2$, and using the shorthand
$\projd_{x, i} = \frac{a_i\tp (z-x)}{\slack_{x,i}}$, we obtain
\begin{align}
  \label{eq:john_z_x_two_terms}
\vecnorm{z \!-\! x}{z}^2 \!-\! \vecnorm{z\!-\!x}{x}^2 &=
\sum_{i=1}^\obs \projd_{x, i}^2 \slack_{x, i}^2 \parenth{z\!-\!x}\tp
\nabla \johnphi_{x, i} + \sum_{i=1}^\obs \projd_{x, i}^2 \slack_{x, i}^2
\frac{1}{2} \parenth{z\!-\!x}\tp \nabla^2 \johnphi_{y, i}
\parenth{z\!-\!x}.
\end{align}
We claim that for $r\leq \johnradiusbound(\epsilon)$, we have
\begin{subequations}
    \label{eq:john_local_norm_two_tails}
    \begin{align}
        \Prob_{z \sim \transition^\tagjohn_x} \brackets{ \sum_{i=1}^\obs \projd_{x, i}^2 \slack_{x, i}^2 \parenth{z\!-\!x}\tp
\nabla \johnphi_{x, i} \leq
          \epsilon \frac{r^2}{\johnkappa^4 \dims^{3/2}}} &\geq 1 - \epsilon/2,
        \ \text{and} \label{eq:john_gradient_local_norm_tail} \\
        \quad \Prob_{z \sim \transition^\tagjohn_x}
        \brackets{ \sup_{y \in \overline{xz}} \parenth{\sum_{i=1}^\obs \projd_{x, i}^2 \slack_{x, i}^2
\frac{1}{2} \parenth{z\!-\!x}\tp \nabla^2 \johnphi_{y, i}
\parenth{z\!-\!x}} \leq \epsilon \frac{r^2}{\johnkappa^4 \dims^{3/2}} } &\geq 1 - \epsilon/2. \label{eq:john_hessian_local_norm_tail}
    \end{align}
\end{subequations}
We now prove each claim separately.

\subsubsection{Proof of bound~(\ref{eq:john_gradient_local_norm_tail})}
\label{sssec:proof_of_claim_eq_gradient_local_norm_tail}
Using Lemma~\ref{lemma:john_gradient_and_hessian_and_bounds}\ref{item:gradient_phi} and using $h = z-x$ where $z$ is given by the relation~\eqref{eq:john_z_x_relation}, we find that
\begin{align}
\sum_{i=1}^\obs \projd_{x, i}^2 \slack_{x, i}^2 \parenth{z\!-\!x}\tp \nabla \johnphi_{x, i}
&= \sum_{i=1}^\obs \johnweights_{x, i} \projd_{x, i}^2  \parenth{2 \projd_{x, i} + \projf_{x, i}}\notag\\
&= \frac{r^3}{\dims^{9/4} \johnkappa^6} \sum_{i=1}^\obs \johnweights_{x, i} \parenth{\hat{a}_{x,i} \tp \xi}^3
  + \frac{2r^3}{\dims^{9/4} \johnkappa^6} \sum_{i=1}^\obs \johnweights_{x, i} \parenth{\hat{a}_{x,i} \tp \xi}^2 \parenth{\hat{b}_{x,i} \tp \xi}
  \label{eq:john_two_terms_gradient_phi}
\end{align}
Using high probability bounds for the two terms in equation~\eqref{eq:john_two_terms_gradient_phi} from Lemma~\ref{lemma:john_gaussian_moment_bounds}, part~\eqref{eq:john_john_cubic} and part~\eqref{eq:john_john_cubic_f},  we obtain that
\begin{align*}
  \abss{\sum_{i=1}^\obs \projd_{x, i}^2 \slack_{x, i}^2 \parenth{z\!-\!x}\tp
  \nabla \johnphi_{x, i}}
  \leq \frac{5 \sqrt{60} \tailconstjohn_3 r^3}{\johnkappa^5\dims^{7/4}}
  \leq \epsilon \frac{r^2}{\johnkappa^4\dims^{3/2}},
\end{align*}
with probability at least $1 - \epsilon/2$.
The last inequality uses the condition that
\begin{align}
  \label{eq:john_gradient_local_norm_tail_r_condition}
  r \leq \frac{\epsilon}{5\sqrt{60} \tailconstjohn_3}.
\end{align}
The claim now follows.
\subsubsection{Proof of bound~(\ref{eq:john_hessian_local_norm_tail})}
\label{sssec:proof_of_claim_eq_hessian_local_norm_tail}
Note that $\projd_{x, i}\slack_{x, i} = a_i\tp h = \projd_{y, i} \slack_{y, i}$ for any $h$.
Using this equality for $h = z-x$, we find that
\begin{align}
    \abss{\sum_{i=1}^\obs \projd_{x, i}^2 \slack_{x, i}^2 \frac{1}{2} h\tp \nabla^2 \johnphi_{y, i} h}
    & = \abss{\sum_{i=1}^\obs \projd_{y, i}^2 \slack_{y, i}^2 \frac{1}{2} h\tp \nabla^2 \johnphi_{y, i} h} \notag\\
    & \stackrel{(i)}{\leq} 3 \underbrace{\sum_{i=1}^\obs \johnweights_{y, i} \projd_{y, i}^4}_{C_1} + 2\underbrace{\abss{\sum_{i=1}^\obs \johnweights_{y,i} \projd_{y, i}^3 \projf_{y, i}}}_{C_2} + \underbrace{\abss{\sum_{i=1}^\obs \johnweights_{y, i} \projd_{y, i}^2 \projz_{y, i}}}_{C_3}, \label{eq:john_hess_phi_decomposition}
\end{align}
where step~$(i)$ follows from Lemma~\ref{lemma:john_gradient_and_hessian_and_bounds}\ref{item:hessian_beta_bound}.
We can write $C_1$ as follows
\begin{align}
\sum_{i=1}^\obs \johnweights_{y, i} \projd_{y, i}^4
 = \sum_{i=1}^\obs \johnweights_{x, i} \projd_{x, i}^4 \frac{\johnweights_{y, i}}{\johnweights_{x, i}} \frac{\projd_{y, i}^4}{\projd_{x, i}^4}
 = \frac{r^4}{n^3 \johnkappa^8}  \sum_{i=1}^\obs \johnweights_{x, i} \parenth{\hat{a}_{x, i}\tp \rvg}^4 \frac{\johnweights_{y, i}}{\johnweights_{x, i}} \frac{\projd_{y, i}^4}{\projd_{x, i}^4}.
 \label{eq:john_hess_phi_c1}
\end{align}
Now, we claim the following:
\begin{subequations}
\begin{align}
  C_2 &\leq 2 \frac{r^4}{n^3 \johnkappa^7}  \cdot \sqrt{\brackets{ \sum_{i=1}^\obs \johnweights_{x, i} \parenth{\hat{a}_{x, i}\tp \rvg}^2 \frac{\johnweights_{y, i}}{\johnweights_{x, i}} \frac{\projd_{y, i}^2}{\projd_{x, i}^2}}
  \cdot
  \brackets{\sum_{i=1}^\obs \johnweights_{x, i} \parenth{\hat{a}_{x,i}\tp \rvg}^6 \frac{\johnweights_{y, i}}{\johnweights_{x, i}} \frac{\projd_{y, i}^6}{\projd_{x, i}^6}}}, \quad \text{and}, \label{eq:john_C2_hess_phi}\\
  C_3 &\leq 56 \frac{r^4}{n^3 \johnkappa^{4.5}} \parenth{\sum_{i=1}^\obs \johnweights_{x, i} \parenth{\hat{a}_{x, i}\tp \rvg}^2 \frac{\johnweights_{y, i}}{\johnweights_{x, i}} \frac{\projd_{y, i}^2}{\projd_{x, i}^2}} \parenth{\max_i \parenth{\hat{a}_{x, i}\tp \rvg}^2 \frac{\projd_{y, i}^2}{\projd_{x, i}^2}
  + \sqrt{ \sum_{i=1}^\obs \johnweights_{x, i} \parenth{\hat{a}_{x, i}\tp \rvg}^4 \frac{\johnweights_{y, i}}{\johnweights_{x, i}} \frac{\projd_{y, i}^4}{\projd_{x, i}^4} } } \label{eq:john_C3_hess_phi}
\end{align}
\end{subequations}
Assuming the claims as given, we now complete the proof.
Using Lemma~\ref{lemma:john_whp_slackness}, we have
\begin{align*}
\Prob\brackets{\frac{\johnweights_{y, i}}{\johnweights_{x, i}} \frac{\projd_{y, i}^6}{\projd_{x, i}^6} \leq 1.2} \geq 1-\epsilon/4,
\end{align*}
and consequently
\begin{align}
  3C_1 \!+\! 2C_2 \!+\! C_3
  &\leq \frac{r^4}{\dims^3 \johnkappa^{4.5}}
        \Bigg[
        4 \cdot \sum_{i=1}^\obs \johnweights_{x, i} (\hat{a}_{x, i}\tp \rvg)^4
        +10  \cdot \bigg(\sum_{i=1}^\obs \johnweights_{x, i} (\hat{a}_{x, i}\tp \rvg)^2
                              \cdot
                              \sum_{i=1}^\obs \johnweights_{x, i} (\hat{a}_{x,i}\tp \rvg)^6
                    \bigg)^{1/2}
            \notag        \\
  & \quad \quad \quad
        + 100
        \cdot
        {\sum_{i=1}^\obs \johnweights_{x, i} \parenth{\hat{a}_{x, i}\tp \rvg}^2}
        \cdot
        \bigg({
        \max_i ({\hat{a}_{x, i}\tp \rvg})^2
        + \big({ \sum_{i=1}^\obs \johnweights_{x, i} ({\hat{a}_{x, i}\tp \rvg})^4}\big)^{1/2}
        }\bigg)
        \Bigg],
        \label{eq:john_3c1_c2_c3}
\end{align}
with probability at least $1-\epsilon/4$.
Now, we observe that for all $i \in [\obs]$ and $x \in \intP$, we have
\begin{align*}
  \parenth{\hat{a}_{x, i}\tp \rvg} \sim \NORMAL(0, \johnlocalslack_{x, i}) \quad \text{and} \quad \johnlocalslack_{x, i} \leq 4.
\end{align*}
Invoking the standard tail bound for maximum of Gaussian random variables, we obtain
\begin{align*}
  \Prob \brackets{\max_i \abss{\parenth{\hat{a}_{x, i}\tp \rvg}} \leq 8 \cdot \parenth{\sqrt{\log \obs}+\sqrt{\log(32/\epsilon)}} } \geq 1 - \epsilon/16.
\end{align*}
Using the fact that $2c_1c_2 \geq c_1 + c_2$ for all $c_1, c_2 \geq 1$, we obtain
\begin{align*}
   \Prob \brackets{\max_i \abss{\parenth{\hat{a}_{x, i}\tp \rvg}} \leq 16 \cdot{\sqrt{\log \obs} \cdot \sqrt{\log(32/\epsilon)}} } \geq 1 - \epsilon/16.
\end{align*}
Combining this bound with the tail bounds for various Gaussian polynomials~\eqref{eq:john_john_quadratic}, \eqref{eq:john_john_fourth}, \eqref{eq:john_john_sixth} from  Lemma~\ref{lemma:john_gaussian_moment_bounds}, and substituting in inequality~\eqref{eq:john_3c1_c2_c3}, we obtain that
\begin{multline*}
  \abss{\sum_{i=1}^\obs \projd_{x, i}^2 \slack_{x, i}^2 \frac{1}{2} h\tp \nabla^2 \johnphi_{y, i} h}
  \leq \frac{r^4}{\johnkappa^{6.5} \dims^3} \Bigg[ 4\cdot  \tailconstjohn_4\sqrt{1680}\dims + 10 \parenth{\tailconstjohn_2 \sqrt{24}\dims \cdot \tailconstjohn_6 \sqrt{15120} \dims}^{1/2} \\
  + 100 \cdot \tailconstjohn_2 \sqrt{24}\dims\cdot \parenth{256 \cdot \log\obs \cdot \log(32/\epsilon)  + \parenth{\tailconstjohn_4\sqrt{1680}\dims}^{1/2}}\Bigg]
\end{multline*}
with probability at least $1-\epsilon/2$.
In the above expression, the terms $\tailconstjohn_i$ are a function of $\epsilon$ as defined in Lemma~\ref{lemma:john_gaussian_moment_bounds}.
In particular, $\tailconstjohn_{i} = \tailconstjohn_{i, \epsilon} =  (2e/i \cdot \log(16/\epsilon))^{i/2}$ for $i \in \{2, 3, 4, 6\}$.
Observing that $256 \log(32/\epsilon) \geq \parenth{\tailconstjohn_4 \sqrt{1680}}^{1/2}$, and that our choice of $r$ satisfies
\begin{align}
  \label{eq:john_hessian_local_norm_tail_r_condition}
  r^2 \leq \min\braces{{\frac{\epsilon}{8\sqrt{1680} \tailconstjohn_4}}, {\frac{\epsilon}{40\parenth{\tailconstjohn_2\tailconstjohn_6 \sqrt{24}\sqrt{15120}}^{1/2}}}, {\frac{\epsilon}{204800\tailconstjohn_2\sqrt{24}\log(32/\epsilon)}}},
\end{align}
we obtain
\begin{align*}
  \abss{\sum_{i=1}^\obs \projd_{x, i}^2 \slack_{x, i}^2 \frac{1}{2} h\tp \nabla^2 \johnphi_{y, i} h}
  \leq
  \frac{r^2}{\johnkappa^4 \dims^{3/2}}\brackets{\frac{\epsilon}{2}+ \frac{\epsilon}{4} + \frac{\epsilon}{8}\parenth{ \frac{\log \obs}{\sqrt{\dims}}+1}}.
\end{align*}
Asserting the additional condition $\sqrt{\dims} \geq \log{\obs}$, yields the claim.

It is now left to prove the bounds~\eqref{eq:john_C2_hess_phi} and \eqref{eq:john_C3_hess_phi}.
We prove these bounds separately.
\paragraph{Bounding $C_2$:} 
\label{par:bounding_c_2}

Applying Cauchy-Schwarz inequality, we have
\begin{align*}
\abss{\sum_{i=1}^\obs \johnweights_{y,i} \projd_{y, i}^3 \projf_{y, i}} \leq \parenth{ \sum_{i=1}^\obs \johnweights_{y, i} \projf_{y, i}^2 \cdot\sum_{i=1}^\obs \johnweights_{y, i} \projd_{y, i}^6 }^{1/2}
\end{align*}
Using the bound~\eqref{eq:john_exact_f_to_d_bound}, we obtain
\begin{align*}
\sum_{i=1}^\obs \johnweights_{y, i} \projf_{y, i}^2
\leq 4 \johnkappa^2 \sum_{i=1}^\obs \johnweights_{y, i} \projd_{y, i}^2
= 4 \johnkappa^2 \sum_{i=1}^\obs \johnweights_{x, i} \projd_{x, i}^2 \frac{\johnweights_{y, i}}{\johnweights_{x, i}} \frac{\projd_{y, i}^2}{\projd_{x, i}^2}.
\end{align*}
Substituting $h = z-x$ where $z$ is given by relation~\eqref{eq:john_z_x_relation}, we obtain that \mbox{$\projd_{x, i} =  \frac{r}{\dims^{3/4}\johnkappa} \hat a_{x, i}\tp\rvg$},
and thereby
\begin{align*}
  \sum_{i=1}^\obs \johnweights_{y, i} \projf_{y, i}^2 \leq 4\johnkappa^2 \frac{r^2}{\dims^{3/2}\johnkappa^4} \sum_{i=1}^\obs \johnweights_{x, i} (\hat a_{x, i}\tp\rvg)^2 \frac{\johnweights_{y, i}}{\johnweights_{x, i}} \frac{\projd_{y, i}^2}{\projd_{x, i}^2}.
\end{align*}
Doing similar algebra, we obtain $\sum_{i=1}^\obs \johnweights_{y, i} \projd_{y, i}^6
 = \frac{r^6}{\dims^{9/2}\johnkappa^{12}} \sum_{i=1}^\obs \johnweights_{x, i} \parenth{\hat{a}_{x,i}\tp \rvg}^6 \frac{\johnweights_{y, i}}{\johnweights_{x, i}} \frac{\projd_{y, i}^6}{\projd_{x, i}^6}.$
Putting the pieces together yields the claim.

\paragraph{Bounding $C_3$:} 
\label{par:bounding_c_3}

Recall that $\projzsub_y = (\johnweightmatrix_y-\alpha \johnlaplacian_y)\projz_y$ (Lemma~\ref{lemma:john_gradient_and_hessian_and_bounds}) and $\johnftdinvmatrix_y = \parenth{\johnweightmatrix_y - \alpha \johnlaplacian_y}^{-1} \johnweightmatrix_y$ (Lemma~\ref{lemma:f_to_d_matrix}).
We have
\begin{align*}
  \abss{\sum_{i=1}^\obs \johnweights_{y, i} \projd_{y, i}^2 \projz_{y, i}}
  = \mathbf{1}  \projdmatrix^2_y \johnweightmatrix_y \projz_y
  = \underbrace{\mathbf{1}  \projdmatrix^2_y \johnweightmatrix_y (\johnweightmatrix_y-\alpha \johnlaplacian_y)^{-1}}_{=:\myvec_y\tp}\
   \underbrace{(\johnweightmatrix_y-\alpha \johnlaplacian_y)\projz_y}_{\projzsub_y}.
\end{align*}
Using the definition of $\myvec_y$ and $\johnftdinvmatrix_y$, we obtain
\begin{align*}
  \myvec_{y, i} \defn e_i\tp \myvec_y = e_i\tp\parenth{\johnweightmatrix_y - \alpha \johnlaplacian_y}^{-1} \johnweightmatrix_y \projdmatrix_y^2 \mathbf{1} = e_i\tp\johnftdinvmatrix_y \projdmatrix_y^2 \mathbf{1}
  = \johnftdinvmatrix_{y, i, i} \projd_{y, i}^2 + \sum_{j\in[\obs], j\neq i} \johnftdinvmatrix_{y, i, j} \projd_{y, j}^2.
\end{align*}
Consequently, we have
\begin{align*}
  \abss{\sum_{i=1}^\obs \myvec_{y, i} \projzsub_{y, i}}
  \leq
  \overbrace{\sum_{i=1}^\obs \abss{\projzsub_{y, i}} \cdot \abss{\johnftdinvmatrix_{y, i, i}\projd_{y, i}^2}}^{=:C_4}
  + \overbrace{\sum_{i=1}^\obs \abss{\projzsub_{y, i}} \cdot \parenth{
    \sum_{j\in[\obs], j\neq i} \abss{\johnftdinvmatrix_{y, i, j} \projd_{y, j}^2} }}^{=:C_5}
\end{align*}
From Lemma~\ref{lemma:f_to_d_matrix}, we have that $\johnftdinvmatrix_{y, i, i} \in [0, \johnkappa]$.
Hence, we have $C_4 \leq \vecnorm{\projzsub_y}{1} \cdot \johnkappa \cdot \max_{i \in [\obs]} \projd_{y, i}^2$.
To bound $C_5$, we note that
\begin{align*}
  \sum_{j\in[\obs], j\neq i} \abss{\johnftdinvmatrix_{y, i, j} \projd_{y, j}^2}
  \stackrel{(i)}{\leq} \parenth{\sum_{j\in[\obs], j\neq i}\frac{\johnftdinvmatrix_{y, i, j}^2}{\johnweights_{y, j}}
  \cdot \sum_{j=1}^\obs \johnweights_{y, j} \projd_{y, j}^4}^{1/2}
  \stackrel{(ii)}{\leq} \parenth{\johnkappa^3 \cdot \sum_{j=1}^\obs \johnweights_{x, j} \projd_{x, j}^4 \frac{\johnweights_{y, j}}{\johnweights_{x, j}} \frac{\projd_{y, j}^4}{\projd_{x, j}^4}}^{1/2},
\end{align*}
where step~$(i)$ follows from Cauchy-Schwarz inequality and step~$(ii)$ from Lemma~\ref{lemma:f_to_d_matrix}.
Putting the pieces together, we obtain that
\begin{align*}
      \abss{\sum_{i=1}^\obs \johnweights_{y, i} \projd_{y, i}^2 \projz_{y, i}}
      \leq \vecnorm{\projzsub_y}{1} \cdot \brackets{\johnkappa \cdot \max_{i \in [\obs]} \projd_{y, i}^2 + \johnkappa^{3/2} \parenth{\sum_{j=1}^\obs \johnweights_{x, j} \projd_{x, j}^4 \frac{\johnweights_{y, j}}{\johnweights_{x, j}} \frac{\projd_{y, j}^4}{\projd_{x, j}^4}}^{1/2}}.
\end{align*}
Using the bound on $\vecnorm{\projzsub_y}{1}$ from Lemma~\ref{lemma:john_gradient_and_hessian_and_bounds}, we have
\begin{align*}
  \abss{\sum_{i=1}^\obs \johnweights_{y, i} \projd_{y, i}^2 \projz_{y, i}}
  \leq \parenth{56 \johnkappa^2 \sum_{i=1}^\obs \johnweights_{y, i}\projd_{y, i}^2} \cdot
  \brackets{\johnkappa \cdot \max_{i \in [\obs]} \projd_{y, i}^2 + \johnkappa^{3/2} \parenth{\sum_{j=1}^\obs \johnweights_{x, j} \projd_{x, j}^4 \frac{\johnweights_{y, j}}{\johnweights_{x, j}} \frac{\projd_{y, j}^4}{\projd_{x, j}^4}}^{1/2}}.
\end{align*}
Substituting the expression for $\projd_{x, i}= \frac{r}{\johnkappa^2 \dims^{3/4}}\parenth{\hat{a}_{x,i}\tp \rvg}$ yields the claim.



\section{Proofs of Lemmas from Section~\ref{sub:deterministic_bounds}} 
\label{sec:proofs_of_technical_lemmas_from_section_}

In this section we collect proofs of lemmas from Section~\ref{sub:deterministic_bounds}.
Each lemma is proved in a different subsection.


\subsection{Proof of Lemma~\ref{lemma:john_gradient_and_hessian_and_bounds}} 
\label{sub:proof_of_lemma_lemma:john_gradient_and_hessian_and_bounds}

\begin{subequations}
Up to second order terms, we have
\begin{align}
\frac{1}{\slack_{x+ h, i}^2} &= \frac{1}{\slack_{x,
    i}^2}\brackets{1+\frac{2a_i^\top h}{\slack_{x, i} } +
  \frac{3(a_i^\top h)^2}{\slack_{x, i}^2}  } + \order{\vecnorm{h}{2}^3},
\label{eq:john_expansion_of_s2}\\
    \johnweights_{y+h, i} &= \johnweights_{y, i} + h^\top \nabla \johnweights_{y, i} + \frac{1}{2} h^\top \nabla^2 \johnweights_{y, i} h + \order{\vecnorm{h}{2}^3}, \label{eq:john_expansion_of_g}\\
    \johnweights_{y+h, i}^\alpha &= \johnweights_{y, i}^\alpha + \alpha \johnweights_{y, i}^{\alpha-1}\parenth{h^\top \nabla \johnweights_{y, i}+\frac{1}{2} h^\top \nabla^2 \johnweights_{y, i} h} + \frac{\alpha\parenth{\alpha-1}}{2} \johnweights_{y, i}^{\alpha-2} \parenth{h^\top \nabla \johnweights_{y, i}}^2 + \order{\vecnorm{h}{2}^3}, \label{eq:john_expansion_of_g_alpha}
\end{align}
Further, let
\begin{align}
    \johnmod_y \defn A_y\tp \johnweightmatrix_y^\alpha A_y = \sum_{i=1}^\obs \johnweights_{y, i}^\alpha \frac{a_ia_i\tp}{\slack_{y, i}^2}.
    \label{eq:john_johnmod}
\end{align}
Using equations~\eqref{eq:john_expansion_of_s2} and \eqref{eq:john_expansion_of_g_alpha}, and substituting $\projd_{y, i} = a_i\tp h/\slack_{y, i}, \projf_{y, i} = h\tp \nabla \johnweights_{y, i}/\johnweights_{y, i}$ and $\projz_{y, i} = \frac{1}{2} h\tp \nabla^2 \johnweights_{y, i} h /\johnweights_{y, i}$, we find that
\begin{align*}
    \johnmod_{y+h}
    & = \sum_{i=1}^\obs \brackets{1 + \alpha \projf_{y, i} +
    \alpha \projz_{y, i} + \frac{\alpha\parenth{\alpha-1}}{2} \projf_{y, i}^2} \brackets{1+2\projd_{y, i}+3\projd_{y, i}^2 } \johnweights_{y, i}^\alpha \frac{a_ia_i^\top}{s_{y, i}^2}
    + \order{\vecnorm{h}{2}^3}.
\end{align*}
Note that $\projd_{y, i}$ and $\projf_{y, i}$ are first order terms in $\vecnorm{h}{2}$ and $\projz_{y, i}$ is a second order term in $\vecnorm{h}{2}$.

Thus we obtain
\begin{align*}
    \johnmod_{y+h} - \johnmod_y &= \underbrace{\sum_{i=1}^\obs \parenth{ 2 \projd_{y, i} + \alpha \projf_{y, i} } \johnweights_{y, i}^\alpha \frac{a_ia_i^\top}{s_{y, i}^2}}_{=:\Delta_{y, h}^{(1)}}\\
     &+  \underbrace{\sum_{i=1}^\obs \brackets{3\projd_{y, i}^2 + 2 \alpha \projd_{y, i} \projf_{y, i} + \alpha \projz_{y, i}  + \frac{\alpha(\alpha-1)}{2} \projf_{y, i}^2 }\johnweights_{y, i}^\alpha \frac{a_ia_i^\top}{s_{y, i}^2}}_{=:\Delta_{y, h}^{(2)}} + \order{\vecnorm{h}{2}^3}.
\end{align*}
Let $\Delta_{y, h}  \defn \Delta_{y, h}^{(1)} + \Delta_{y, h}^{(2)}$.
Note that $\Delta_{y, h}^{(i)}$ denotes the $i$-th order term in $\vecnorm{h}{2}$.
Finally, the following expansion also comes in handy for our derivations:
\begin{align}
a_i^T \johnmod_{y+ h}^{-1} a_i &= a_i^\top \johnmod_y^{-1}a_i -
a_i^\top \johnmod_y^{-1} \Delta_{y, h} \johnmod_y^{-1} a_i
+ a_i^\top \johnmod_y^{-1} \Delta_{y, h} \johnmod_y^{-1}
\Delta_{y, h} \johnmod_y^{-1} a_i
+ \order{\vecnorm{h}{2}^3}.
\label{eq:john_expansion_of_johnmod}
\end{align}
\end{subequations}
\subsubsection{Proof of part~\ref{item:gradient_g}: Gradient of weights} 
\label{ssub:proof_of_part_item:gradient_g}
The expression for the gradient $\nabla \johnweights_{y, i}$ is derived in Lemma~14 of the paper~\citep{lee2014path} and is thereby omitted.


\subsubsection{Proof of part~\ref{item:hessian_g}: Hessian of weights} 
\label{ssub:proof_of_part_item:hessian_g}
We claim that
\begin{align}
    \projzsub_{y} = \parenth{\Ind - \alpha \johnlaplacian_y \johnweightmatrix_y^{-1}}
    \begin{bmatrix}
        \frac{1}{2} h^\top \nabla^2 \johnweights_{y, 1} h \\ \cdots \\ \frac{1}{2} h^\top \nabla^2 \johnweights_{y, m} h
    \end{bmatrix}
    & = (2 \projdmatrix_y + \alpha \projfmatrix_y) \johnprojmatrix_y^{(2)}  (2 \projdmatrix_y + \alpha \projfmatrix_y) \mathbf{1} \nonumber \\
    & + \parenth{\johnlevmatrix_y - \johnprojmatrix_y^{(2)}} \brackets{2 \alpha \projdmatrix_y\projfmatrix_y + 3 \projdmatrix_y^2 + \alphaprod \projfmatrix_y^2 } \mathbf{1} \nonumber \\
    & + \diag \parenth{ \johnprojmatrix_y (2\projdmatrix_y + \alpha \projfmatrix_y)   \johnprojmatrix_y (2\projdmatrix_y + \alpha \projfmatrix_y) \johnprojmatrix_y },
\label{eq:john_hessians_of_g}
\end{align}
where we have used $\diag(\mymat)$ to denote the diagonal vector $(\mymat_{1, 1}, \ldots, \mymat_{\obs, \obs})$ of the matrix $\mymat$.
Deferring the proof of this expression for the moment, we now derive a bound on the $\ell_1$ norm of $\projzsub_y$.
Expanding the $i$-th term of $\projzsub_{y, i}$ from equation~\eqref{eq:john_hessians_of_g}, we obtain
 \begin{align*}
    \projzsub_{y, i} &= (2\projd_{y, i} + \alpha \projf_{y, i} ) \sum_{j=1}^\obs (2 \projd_{y, j} + \alpha \projf_{y, j}) \johnprojmatrix_{y, i, j}^2
   + \brackets{2 \alpha \projd_{y, i} \projf_{y, i} +  3\projd_{y, i}^2 +
   \alphaprod \projf_{y, i}^2} \levjohn_{y, i} \\
   &\quad - \sum_{j=1}^\obs \brackets{2 \alpha \projd_{y, j} \projf_{y, j} + 3\projd_{y, j}^2 + \alphaprod \projf_{y, j}^2 } \johnprojmatrix_{y, i, j}^2
   + \sum_{j, l=1}^\obs (2 \projd_{y, j} + \alpha \projf_{y, j}) (2 \projd_{y, l}  + \alpha \projf_{y, l} ) \johnprojmatrix_{y, i, j} \johnprojmatrix_{y, j, l} \johnprojmatrix_{y, l, i}.
  \end{align*}
  Recall that $\alpha = 1-1/\log_2(2\obs/\dims)$.
  Since $\obs \geq \dims$ for polytopes, we have $\alpha \in [0, 1]$ and consequently $\vert \alphaprod \vert = \vert \alpha (\alpha-1)/2\vert \in [0, 1]$.
  Further note that $\johnprojmatrix_x$ is an orthogonal projection matrix, and hence we have
\begin{align*}
  \diag(\johnprojmatrix_x e_i) \johnprojmatrix_x \diag(\johnprojmatrix_x e_i)
  \preceq \diag(\johnprojmatrix_x e_i)\diag(\johnprojmatrix_x e_i).
\end{align*}
  Combining these observations with the AM-GM inequality, we have
  \begin{align*}
  \abss{\projzsub_{y, i}}
  &\leq 7 \levjohn_{y, i} \projd_{y, i}^2 + 3 \levjohn_{y, i} \projf_{y, i}^2 + \sum_{j=1}^\obs \parenth{ 13\projd_{y, j}^2 + 6 \projf_{y, j}^2} \johnprojmatrix_{y, i, j}^2.
\end{align*}
Summing both sides over the index $i$, we find that
\begin{align*}
\sum_{i=1}^\obs  \abss{\projzsub_{y, i}}
\stackrel{(i)}{\leq} \sum_{i=1}^\obs 20 \levjohn_{y, i} \projd_{y, i}^2 + 9 \levjohn_{y, i} \projf_{y, i}^2
\stackrel{(ii)}{\leq} \sum_{i=1}^\obs 20 \johnweights_{y, i} \projd_{y, i}^2 + 9 \johnweights_{y, i} \projf_{y, i}^2
\stackrel{(iii)}{\leq} 56\johnkappa^2 \sum_{i=1}^\obs \johnweights_{y, i} \projd_{y, i}^2,
\end{align*}
where step $(i)$ follows from Lemma~\ref{ppt:john_all_properties}~\ref{item:john_sigma_properties}, step $(ii)$ from Lemma~\ref{lemma:john_first_bounds}~\ref{item:john_weight_equality} and step $(iii)$ from the bound~\eqref{eq:john_exact_f_to_d_bound}.

We now return to the proof of expression~\eqref{eq:john_hessians_of_g}.
Using equation~\eqref{eq:john_normal_equation}, we find that
\begin{align}
    \frac{1}{2} h\tp \nabla^2 \johnweights_{y, i} h = \frac{1}{2} h \tp \nabla^2 \levjohn_{y, i} h \quad \text{ for all } i \in [\obs].
    \label{eq:john_equality_of_hessians}
\end{align}
Next, we derive the Taylor series expansion of $\levjohn_{y, i}$.
Using the definition of $\johnmod_x$~\eqref{eq:john_johnmod} in equation~\eqref{eq:john_projection_matrix}, we find that $\levjohn_{y, i} = \johnweights_{y, i}^\alpha\frac{a_i\tp \johnmod_{y}^{-1}a_i}{\slack_{y, i}^2}$.
To compute the difference $\levjohn_{y+h, i} - \levjohn_{y, i}$, we use the expansions~\eqref{eq:john_expansion_of_s2}, \eqref{eq:john_expansion_of_g_alpha} and \eqref{eq:john_expansion_of_johnmod}.
Letting $\alphaprod = \alpha(\alpha-1)/2$, we have
\begin{align*}
    \levjohn_{y+h, i}
    & = \johnweights_{y+h, i}^\alpha \frac{a_i^\top \johnmod_{y+h}^{-1} a_i}{\slack_{y+h,i}^2}\\
    & = \johnweights_{y, i}^\alpha \frac{a_i^\top \johnmod_{y+h}^{-1} a_i}{\slack_{y,i}^2} \brackets{1 + \alpha \projf_{y, i}
        +\alpha \projz_{y, i}
        +\alphaprod \projf_{y, i}^2} \brackets{1+2\projd_{y, i} + 3\projd_{y, i}^2} + \order{\vecnorm{h}{2}^3}\\
    & = \levjohn_{y, i} + (2\projd_{y, i} + \alpha \projf_{y, i}) \levjohn_{y, i} - \sum_{j=1}^\obs (2 \projd_{y, j} + \alpha \projf_{y, j}) \johnprojmatrix_{y, i, j}^2 + (2\projd_{y, i} + \alpha \projf_{y, i} ) \sum_{j=1}^\obs (2 \projd_{y, j} + \alpha \projf_{y, j}) \johnprojmatrix_{y, i, j}^2 \\
    & \quad + 2 \alpha \projd_{y, i} \projf_{y, i} \levjohn_{y, i} + \brackets{\alpha \projz_{y, i}+ \alphaprod \projf_{y, i}^2 + 3\projd_{y, i}^2} \levjohn_{y, i}
    - \sum_{j=1}^\obs \brackets{3\projd_{y, j}^2 + 2 \alpha \projd_{y, j} \projf_{y, j} + \alpha \projz_{y, j} + \alphaprod \projf_{y, j}^2 } \johnprojmatrix_{y, i, j}^2 \\
    & \quad + \sum_{j, l=1}^\obs (2 \projd_{y, j} + \alpha \projf_{y, j}) (2 \projd_{y, l}  + \alpha \projf_{y, l} ) \johnprojmatrix_{y, i, j} \johnprojmatrix_{y, j, l} \johnprojmatrix_{y, l, i} + \order{\vecnorm{h}{2}^3}.
\end{align*}
We identify the second order (in \order{\vecnorm{h}{2}^2}) terms in the previous expression.
Using the equation~\eqref{eq:john_equality_of_hessians}, these are indeed the terms that correspond to the terms $\frac{1}{2} h^\top \nabla^2 \johnweights_{y, i} h$, $i \in [\obs]$.
Substituting $\projz_{y, i} = \frac{1}{2}h\tp \nabla^2 \johnweights_{y, i} h / \johnweights_{y, i}$, we have
\begin{align*}
    &\frac{1}{2} h^\top \nabla^2 \johnweights_{y, i} h \\
    &= (2\projd_{y, i} + \alpha \projf_{y, i} ) \sum_{j=1}^\obs (2 \projd_{y, j} + \alpha \projf_{y, j}) \johnprojmatrix_{y, i, j}^2 + 2 \alpha \projd_{y, i} \projf_{y, i} \levjohn_{y, i}
    + \brackets{\frac{\alpha}{2} \frac{h^\top \nabla^2 \johnweights_{y, i} h}{\johnweights_{y,i}} + \alphaprod \projf_{y, i}^2 + 3\projd_{y, i}^2} \levjohn_{y, i} \\
    & - \sum_{j=1}^\obs \brackets{3\projd_{y, j}^2 + 2 \alpha \projd_{y, j} \projf_{y, j} + \frac{\alpha}{2} \frac{h^\top \nabla^2 \johnweights_{y, j} h}{\johnweights_{y,j}} + \alphaprod \projf_{y, j}^2 } \johnprojmatrix_{y, i, j}^2
    + \sum_{j, l=1}^\obs (2 \projd_{y, j} + \alpha \projf_{y, j}) (2 \projd_{y, l}  + \alpha \projf_{y, l} ) \johnprojmatrix_{y, i, j} \johnprojmatrix_{y, j, l} \johnprojmatrix_{y, l, i}.
\end{align*}

Collecting the different terms and doing some algebra yields the result~\eqref{eq:john_hessians_of_g}.

\subsubsection{Proof of part~\ref{item:gradient_log_det}: Gradient of logdet} 
\label{ssub:proof_of_part_item:gradient_log_det}
For a unit vector $h \in \realdim$, we have
\begin{align*}
    h\tp \log\det J_y &= \lim_{\delta \rightarrow 0}\frac{1}{\delta}(\log\det J_{y+ \delta h} - \log\det J_y)
    = \lim_{\delta \rightarrow 0}\frac{1}{\delta}(\log\det J_y^{-1/2}J_{y+ \delta h}J_y^{-1/2} - \log\det \Ind_\dims)
\end{align*}
Let $\hat a_{y, i} \defn \john_{y, i}^{-1/2}a_i/\slack_{y, i}$ for each $i \in [\obs]$.
Using the property $\log \det B = \trace\log B$, where $\log B$ denotes the logarithm of the matrix and that $\log \det \Ind_\dims = 0$,
we obtain
\begin{align*}
    h\tp \log\det J_y = \lim_{\delta \rightarrow 0} \frac{1}{\delta} \brackets{ \trace\log \parenth{\sum_{i=1}^\obs \frac{\johnweights_{y+\delta h}}{(1-\delta a_i\tp h/\slack_{y, i})} \hat a_{y, i} \hat a_{y, i}\tp} },
\end{align*}
where we have substituted $\slack_{y+\delta h, i} = \slack_{y, i} - \delta a_i\tp h$.
Keeping track of first order terms in $\delta$, and noting that $\sum_{i=1}^\obs \johnweights_{y, i} \hat a_{y, i} \hat a_{y, i}\tp = \Ind_\dims$, we find that
\begin{align*}
    \trace\log \parenth{\sum_{i=1}^\obs \frac{\johnweights_{y+\delta h, i}}{(1-\delta a_i\tp h/\slack_{y, i})} \hat a_{y, i} \hat a_{y, i}\tp}
    &= \trace\log \brackets{\sum_{i=1}^\obs \parenth{\johnweights_{y, i} + \delta h\tp \nabla \johnweights_{y, i} } \parenth{1+\frac{2\delta a_i\tp h}{\slack_{y, i}}} \hat a_{y, i} \hat a_{y, i}\tp} + \order{\delta^2}\\
    & = \trace \brackets{\sum_{i=1}^\obs \delta \parenth{\frac{2a_i\tp h}{\slack_{y, i}} + h\tp \nabla \johnweights_{y, i}} \hat a_{y, i} \hat a_{y, i}\tp} + \order{\delta^2}\\
    & = \sum_{i=1}^\obs \delta \parenth{\frac{2a_i\tp h}{\slack_{y, i}} + h\tp \nabla \johnweights_{y, i}} \johnlocalslack_{y, i} + \order{\delta^2}
\end{align*}
where in the last step we have used the fact that $\trace(\hat a_{y, i} \hat a_{y, i}\tp) = \hat a_{y, i} \tp \hat a_{y, i} = \johnlocalslack_{y, i}$ for each $i \in [\obs]$.
Substituting the expression for $\nabla \johnweights_{y}$ from part~\ref{item:gradient_g}, and rearranging the terms yields the claimed expression in the limit $\delta \rightarrow 0$.

\subsubsection{Proof of part~\ref{item:gradient_phi}: Gradient of $\johnphi$} 
\label{ssub:proof_of_part_item:gradient_phi}
Using the chain rule and the fact that $\nabla \slack_{y, i} = -a_i$, yields the result.
\subsubsection{Proof of part~\ref{item:john_hessian_log_det_bound}} 

\label{ssub:proof_of_part_ref}
We claim that
\begin{align*}
      \frac{1}{2} h \tp \hesslogdetJ_y h
      = \frac{1}{2}\brackets{\sum_{i=1}^\obs \johnweights_{y, i} \johnlocalslack_{y, i} (3\projd_{y, i}^2 + 2 \projd_{y, i}\projf_{y, i} + \projz_{y, i}) - \frac{1}{2} \sum_{i, j=1}^\obs \johnweights_{y, i} \johnweights_{y, j} \johnlocalslack_{y, i, j}^2 \parenth{2\projd_{y, i} + \projf_{y, i}} \parenth{2\projd_{y, j} + \projf_{y, j}}}.
    \end{align*}
The desired bound on $\abss{h \tp \hesslogdetJ_y h}/2$ now follows from an application of AM-GM inequality with Lemma~\ref{ppt:john_all_properties}\ref{item:john_theta_properties}.

We now derive the claimed expression for the directional Hessian of the function $\logdetJ$.
We have
\begin{align*}
    &\frac{1}{2} h\tp \parenth{\nabla^2 \log\det J_y} h
    = \lim_{\delta \rightarrow 0}\frac{1}{2\delta^2}(\log\det J_y^{-1/2}J_{y+ \delta h}J_y^{-1/2} + \log\det J_y^{-1/2}J_{y- \delta h}J_y^{-1/2} - 2\log\det \Ind_\dims)  \\
    &= \frac{1}{2} \lim_{\delta \rightarrow 0} \frac{1}{\delta^2} \brackets{ \trace\log \parenth{\sum_{i=1}^\obs \frac{\johnweights_{y+\delta h}}{(1-\delta a_i\tp h/\slack_{y, i})} \hat a_{y, i} \hat a_{y, i}\tp} +
    \trace\log \parenth{\sum_{i=1}^\obs \frac{\johnweights_{y-\delta h}}{(1+\delta a_i\tp h/\slack_{y, i})} \hat a_{y, i} \hat a_{y, i}\tp} }.
\end{align*}
\noindent Expanding the first term in the above expression, we find that
\begin{align*}
    &\trace\log \parenth{\sum_{i=1}^\obs \frac{\johnweights_{y+\delta h, i}}{(1-\delta a_i\tp h/\slack_{y, i})} \hat a_{y, i} \hat a_{y, i}\tp}\\
    &= \trace\log \underbrace{\brackets{
    \sum_{i=1}^\obs
    \parenth{\johnweights_{y, i} + \delta h\tp \nabla \johnweights_{y, i} + \frac{\delta^2}{2} h\tp \nabla^2 \johnweights_{y, i} h }
    \parenth{1+2\delta\frac{a_i\tp h}{\slack_{y, i}} + 3\delta^2\frac{(a_i\tp h)^2}{\slack_{y, i}^2}}
    \hat a_{y, i} \hat a_{y, i}\tp
     }}_{=:\Ind_\dims + \mymat}
     + \order{\delta^3}.
\end{align*}
Substituting the shorthand notation from equations~\eqref{eq:john_def_D}, \eqref{eq:john_def_F} and \eqref{eq:john_def_Z}, we have
\begin{align*}
    \mymat
    &=\sum_{i=1}^\obs \johnweights_{y, i} \brackets{\delta (2\projd_{y, i} + \projf_{y, i})
    + \delta^2 (3\projd_{y, i}^2 + 2 \projd_{y, i}\projf_{y, i} + \projz_{y, i})} \hat a_{y, i} \hat a_{y, i}\tp + \order{\delta^3}.
\end{align*}
Now we make use of the following facts (1) $\trace\log (\Ind_\dims + \mymat) = \trace\brackets{\mymat - \frac{\mymat^2}{2} + \order{\vecnorm{\mymat}{}^3}}$, (2) for each $i, j \in [\obs]$, we have $\trace(\hat a_{y, i} \hat a_j\tp) = \hat a_{y, i} \tp \hat a_j = \johnlocalslack_{y, i, j}$, and (3) for each $i \in [\obs]$, we have $\johnlocalslack_{y, i, i} = \johnlocalslack_{y, i}$.
Thus we obtain
\begin{align*}
    \trace\log \parenth{\sum_{i=1}^\obs \frac{\johnweights_{y+\delta h, i}}{(1-\delta a_i\tp h/\slack_{y, i})} \hat a_{y, i} \hat a_{y, i}\tp}
    &= \sum_{i=1}^\obs \johnweights_{y, i} \johnlocalslack_{y, i} \brackets{\delta (2\projd_{y, i} + \projf_{y, i})
    + \delta^2 (3\projd_{y, i}^2 + 2 \projd_{y, i}\projf_{y, i} + \projz_{y, i})} \\
    &\quad - \frac{1}{2} \sum_{i, j=1}^\obs \johnweights_{y, i} \johnweights_{y, j} \johnlocalslack_{y, i, j}^2 \delta^2 (2\projd_{y, i} + \projf_{y, i})   (2\projd_{y, j} + \projf_{y, j})
    + \order{\delta^3}.
\end{align*}
Similarly, we can obtain an expression for $\trace\log \parenth{\sum_{i=1}^\obs \frac{\johnweights_{y-\delta h}}{(1+\delta a_i\tp h/\slack_{y, i})} \hat a_{y, i} \hat a_{y, i}\tp}$.
Putting the pieces together, we obtain
\begin{align}
    \frac{1}{2} h\tp \parenth{\nabla^2 \log\det J_y} h = \sum_{i=1}^\obs \johnweights_{y, i} \johnlocalslack_{y, i} (3\projd_{y, i}^2 + 2 \projd_{y, i}\projf_{y, i} + \projz_{y, i})
    - \frac{1}{2} \sum_{i, j=1}^\obs \johnweights_{y, i} \johnweights_{y, j} \johnlocalslack_{y, i, j}^2 (2\projd_{y, i} + \projf_{y, i})   (2\projd_{y, j} + \projf_{y, j}).
    \label{eq:john_final_hess_detj_expression}
\end{align}


\subsubsection{Proof of part~\ref{item:hessian_beta_bound}} 
\label{ssub:proof_of_part_item:hessian_beta_bound}
We claim that
\begin{align}
\frac{1}{2} h\tp \nabla^2 \johnphi_{y, i} h = \johnphi_{y, i} \parenth{2 \projd_{y, i} \projf_{y, i} + 3 \projd_{y, i}^2 + \projz_{y, i}}.
\label{eq:john_hessian_phi}
\end{align}
The claim follows from a straightforward application of chain rule and substitution of the expressions for $\nabla \johnweights_{y, i}$ and $\nabla^2 \johnweights_{y, i}$ in terms of the shorthand notation $\projd_{y, i}, \projf_{y, i}$ and $\projz_{y, i}$.
Multiplying both sides of equation~\eqref{eq:john_hessian_phi} with $\projd_{y, i}^2\slack_{y, i}^2$ and summing over index $i$, we find that
\begin{align*}
    \sum_{i=1}^\obs \projd_{y, i}^2 \slack_{y, i}^2 \frac{1}{2} h\tp \nabla \johnphi_{y, i}^2 h
    = \sum_{i=1}^\obs \projd_{y, i}^2 \slack_{y, i}^2 \johnphi_{y, i}\brackets{\projz_{y, i} + 2\projd_{y, i} \projf_{y, i} + 3 \projd_{y, i}^2}
    &= \sum_{i=1}^\obs \projd_{y, i}^2 \johnweights_{y, i}\brackets{\projz_{y, i} + 2\projd_{y, i} \projf_{y, i} + 3 \projd_{y, i}^2}\\
    &\leq\sum_{i=1}^\obs \projd_{y, i}^2 \johnweights_{y, i}\brackets{\projz_{y, i} + \projf_{y, i}^2 + 4 \projd_{y, i}^2},
\end{align*}
where in the last step we have used the AM-GM inequality.
The claim follows.


\subsection{Proof of Lemma~\ref{lemma:f_to_d_l2}} 
\label{sub:proof_of_lemma_lemma:f_to_d_l2}

  We claim that
  \begin{align}
    \label{eq:john_f_to_d_intermediate}
    0 \preceq \johnweightmatrix_y^{-1/2} \parenth{c_1 \Ind_\obs + c_2 \johnlaplacian_y \parenth{\johnweightmatrix_y-\alpha\johnlaplacian_y}^{-1}} \johnweightmatrix_y^{1/2} \preceq \parenth{c_1 + c_2}\johnkappa \Ind_\obs.
  \end{align}
  The proof of the lemma is immediate from this claim, as for any PSD matrix $H \leq c\Ind_\obs$, we have $H^2 \leq c^2 \Ind_\obs$.

  We now prove claim~\eqref{eq:john_f_to_d_intermediate}.
  Note that
  \begin{align}
     \johnweightmatrix_y^{-1/2}\johnlaplacian_y \parenth{\johnweightmatrix_y-\alpha\johnlaplacian_y}^{-1} \johnweightmatrix_y^{1/2}
     = \underbrace{\johnweightmatrix_y^{-1/2}\johnlaplacian_y \johnweightmatrix_y^{-1/2}}_{:=\mymat_y}
     (\Ind_\obs -\johnalpha \johnweightmatrix_y^{-1/2}\johnlaplacian_y \johnweightmatrix_y^{-1/2})^{-1}.
     \label{eq:john_equality_glg}
  \end{align}
  Note that the RHS is equal to the matrix $\mymat_y (\Ind_\obs - \alpha_\tagjohn \mymat_y)^{-1}$ which is symmetric.
  Observe the following ordering of the matrices in the PSD cone
  \begin{align*}
    \johnlevmatrix_y + \johnbeta \Ind_\obs = \johnweightmatrix_y
    \succeq \johnlevmatrix_y
    \succeq \johnlaplacian_y = \johnlevmatrix_y - \johnprojmatrix_y^{(2)}
    \succeq 0.
  \end{align*}
  For the last step we have used the fact that $ \johnlevmatrix_y - \johnprojmatrix_y^{(2)}$ is a diagonally dominant matrix with non negative entries on the diagonal to conclude that it is a PSD matrix.
  Consequently, we have
  \begin{align}
  \label{eq:john_glg}
    \mymat_y = \johnweightmatrix_y^{-1/2}\johnlaplacian_y \johnweightmatrix_y^{-1/2} \preceq \Ind_\obs.
  \end{align}
  Further, recall that $\johnalpha = (1-1/\johnkappa) \Leftrightarrow \johnkappa = (1-\johnalpha)^{-1}$.
  As s result, we obtain
  \begin{align*}
    0 \preceq (\Ind_\obs-\johnalpha\johnweightmatrix_y^{-1/2}\johnlaplacian_y \johnweightmatrix_y^{-1/2} )^{-1} \preceq \johnkappa \Ind_\obs.
  \end{align*}
  Multiplying both sides by $\mymat_y^{1/2}$ and using the relation~\eqref{eq:john_glg}, we obtain
  \begin{align}
    0 \preceq \mymat_y^{1/2}  (\Ind_\obs-\johnalpha\johnweightmatrix_y^{-1/2}\johnlaplacian_y \johnweightmatrix_y^{-1/2} )^{-1} \mymat_y^{1/2} \preceq \johnkappa \Ind_\obs.
    \label{eq:john_G_laplace_G_bound}
  \end{align}
  Using the fact that $\mymat_y$ commutes with $(\Ind_\obs - \mymat_y)^{-1}$, we obtain $\mymat_y (\Ind_\obs-\johnalpha\mymat_y)^{-1} \preceq \johnkappa \Ind_\obs$.
  Using observation~\eqref{eq:john_equality_glg} now completes the proof.


\subsection{Proof of Lemma~\ref{lemma:f_to_d_matrix}} 
\label{sub:proof_of_lemma_lemma:f_to_d_matrix}


Without loss of generality, we can first prove the result for $i=1$.
Let $\johnftdinvvector \defn \johnftdinvmatrix_y\tp e_1$ denote the first row of the matrix $\johnftdinvmatrix_y$.
Observe that
\begin{align}
    \label{eq:john_F_to_D_infinity_bound}
    e_1
    = \parenth{\johnweightmatrix_y - \alpha \johnlaplacian_y} \johnweightmatrix_y^{-1} \johnftdinvvector
    = \johnftdinvvector - \alpha \johnlevmatrix_y \johnweightmatrix_y^{-1} \johnftdinvvector + \alpha \johnprojmatrix_y^{(2)} \johnweightmatrix_y^{-1} \johnftdinvvector
\end{align}
We now prove bounds~\eqref{eq:john_mu_first_coordinate_bound} and \eqref{eq:john_mu_sum_of_terms_bound} separately.


\paragraph{Proof of bound~\eqref{eq:john_mu_first_coordinate_bound}:} 
\label{par:proof_of_bound_eq:john_mu_first_coordinate_bound}

Multiplying the equation~\eqref{eq:john_F_to_D_infinity_bound} on the left by $\johnftdinvvector \tp \johnweightmatrix_y^{-1}$, we obtain
\begin{align}
  g_1^{-1} \johnftdinvvector_1
  &= \johnftdinvvector \tp \johnweightmatrix_y^{-1} \johnftdinvvector - \alpha \johnftdinvvector \tp \johnweightmatrix_y^{-1} \johnlevmatrix_y \johnweightmatrix_y^{-1} \johnftdinvvector + \alpha \johnftdinvvector \tp \johnweightmatrix_y^{-1} \johnprojmatrix_y^{(2)} \johnweightmatrix_y^{-1} \johnftdinvvector \notag \\
  & \geq \johnftdinvvector \tp \johnweightmatrix_y^{-1} \johnftdinvvector - \alpha \johnftdinvvector \tp \johnweightmatrix_y^{-1} \johnlevmatrix_y \johnweightmatrix_y^{-1} \johnftdinvvector
  \label{eq:john_F_to_D_infinity_bound_upper}
  \\
  & \geq \parenth{g_1^{-1} - \alpha \levjohn_{y, 1}/g_1^2} \johnftdinvvector_1^2. \notag
\end{align}
Rearranging terms, we obtain
\begin{align}
0 \leq \johnftdinvvector_1 \leq \frac{\johnweights_{y, 1}}{\johnweights_{y, 1} -\alpha \levjohn_{y, 1}} \stackrel{(i)}{\leq} \johnkappa,
\label{eq:john_upper_bound_mu1}
\end{align}
where  inequality (i) follows from the facts that $\johnweights_{y, j} \geq \levjohn_{y, j}$ and  $(1-\alpha) = \johnkappa$.


\paragraph{Proof of bound~\eqref{eq:john_mu_sum_of_terms_bound}:} 
\label{par:proof_of_bound_eq:john_mu_sum_of_terms_bound}
In our proof, we  use the following improved lower bound for the term $\johnftdinvmatrix_{y, 1, 1} = \johnftdinvvector_1$.
\begin{align}
\johnftdinvvector_1 \geq \frac{\johnweights_{y, 1}}{\johnweights_{y, 1} - \alpha \levjohn_{y, 1} + \alpha \levjohn_{y, 1}^2},
\label{eq:john_lower_bound_mu1}
\end{align}
Deferring the proof of this claim at the moment, we now complete the proof.

We begin by deriving a weighted $\ell_2$-norm bound for the vector $\tilde\johnftdinvvector = (\johnftdinvvector_2, \ldots, \johnftdinvvector_\obs)\tp$.
Equation~\eqref{eq:john_F_to_D_infinity_bound_upper} implies
\begin{align*}
\johnweights_{y, 1}^{-1} \johnftdinvvector_1 \parenth{1- \johnftdinvvector_1 + \alpha \frac{\levjohn_{y, 1}}{\johnweights_{y, 1}} \johnftdinvvector_1 } \geq \sum_{j=2}^\obs \johnftdinvvector_j^2 \parenth{\johnweights_{y, j}^{-1} - \alpha \johnweights_{y, j}^{-2} \levjohn_{y, j}}
\stackrel{(i)}{\geq} (1-\alpha)
\sum_{j=2}^\obs \frac{\johnftdinvvector_j^2}{\johnweights_{y, j}},
\end{align*}
where step $(i)$ follows from the fact that $\johnweights_{y, i} \geq \levjohn_{y, i}$.
Now, we upper bound the expression on the left hand side of the above inequality using the upper~\eqref{eq:john_upper_bound_mu1} and lower~\eqref{eq:john_lower_bound_mu1} bounds on $\johnftdinvvector_1$:
\begin{align*}
\johnweights_{y, 1}^{-1} \johnftdinvvector_1 \parenth{1- \johnftdinvvector_1 + \alpha \frac{\levjohn_{y, 1}}{\johnweights_{y, 1}} \johnftdinvvector_1 } &\leq \johnweights_{y, 1}^{-1} \frac{\johnweights_{y, 1}}{\johnweights_{y, 1} - \alpha \levjohn_{y, 1}} \parenth{1 - \parenth{1 - \alpha\frac{\levjohn_{y, 1}}{\johnweights_{y, 1}}} \frac{\johnweights_{y, 1}}{\johnweights_{y, 1} - \alpha \levjohn_{y, 1} + \alpha \levjohn_{y, 1}^2 }} \\
& = \frac{\alpha \levjohn_{y, 1}^2}{\parenth{\johnweights_{y, 1} - \alpha \levjohn_{y, 1}}\parenth{\johnweights_{y, 1} - \alpha \levjohn_{y, 1} + \alpha \levjohn_{y, 1}^2}} \\
& \leq \johnkappa^2,
\end{align*}
where in the last step we have used the facts that $\johnweights_{y, 1} \geq \levjohn_{y, 1}$ and $(1-\alpha)^{-1} = \johnkappa$.
Putting the pieces together, we obtain $\sum_{j=2}^\obs \johnftdinvvector_j^2 \johnweights_{y, j}^{-1} \leq \johnkappa^3,$
which is equivalent to our claim~\eqref{eq:john_mu_sum_of_terms_bound} for $i=1$.

It remains to prove our earlier claim~\eqref{eq:john_lower_bound_mu1}.
Writing equation~\eqref{eq:john_F_to_D_infinity_bound} separately for the first coordinate and for the rest of the coordinates, we obtain
\begin{subequations}
\begin{align}
  1 &= \parenth{1 - \alpha \levjohn_{y, 1} \johnweights_{y, 1}^{-1} + \alpha \levjohn_{y, 1, 1}^2 \johnweights_{y, j}^{-1}} \johnftdinvvector_1 + \alpha \sum_{j = 2}^\obs \levjohn_{y, 1, j}^2 \johnweights_{y, j}^{-1} \johnftdinvvector_j \label{eq:john_F_to_D_infinity_bound_dev1}, \quad \text{and} \\
  0 &= \parenth{\Ind_{\obs-1} - \alpha \johnlevmatrix_y' \johnweightmatrix_y'^{-1}} \begin{pmatrix}\johnftdinvvector_2 \\ \vdots \\ \johnftdinvvector_\obs \end{pmatrix} + \alpha \johnprojmatrix_y'^{(2)} \johnweightmatrix_y'^{-1}\begin{pmatrix}\johnftdinvvector_2 \\ \vdots \\ \johnftdinvvector_\obs \end{pmatrix} + \alpha \johnweights_{y, 1}^{-1} \johnftdinvvector_1 \begin{pmatrix} \levjohn_{y, 1, 2}^2 \\ \vdots \\ \levjohn_{y, 1, \obs}^2
\end{pmatrix}, \label{eq:john_F_to_D_infinity_bound_dev2}
\end{align}
\end{subequations}
where $\johnweightmatrix_y'$ (respectively $\johnlevmatrix_y', \johnprojmatrix_y'^{(2)}$) denotes the principal minor of $\johnweightmatrix_y$ (respectively $\johnlevmatrix_y, \johnprojmatrix_y^{(2)}$) obtained by excluding the first column and the first row.
Multiplying both sides of the equation~\eqref{eq:john_F_to_D_infinity_bound_dev2} from the left by $\begin{pmatrix}\johnftdinvvector_2, \cdots, \johnftdinvvector_\obs\end{pmatrix} \johnweightmatrix_y'^{-1}$, we obtain
\begin{align}
  0 = \sum_{j=2}^\obs \underbrace{\frac{1}{\johnweights_{y, j}}\parenth{1- \frac{\alpha\levjohn_{y, j}}{\johnweights_{y, j}}} \johnftdinvvector_{j}^2}_{c_{y, j}}
  + \alpha \underbrace{\begin{pmatrix}\johnftdinvvector_2, \cdots, \johnftdinvvector_\obs\end{pmatrix} \johnweightmatrix_y'^{-1} \johnprojmatrix_y'^{(2)} \johnweightmatrix_y'^{-1}\begin{pmatrix}\johnftdinvvector_2 \\ \vdots \\ \johnftdinvvector_\obs \end{pmatrix}}_{C_{y. 2}}
  + \alpha \frac{\johnftdinvvector_1}{\johnweights_{y, 1}} \sum_{j=2}^\obs \frac{\levjohn_{y, j}^2}{\johnweights_{y, j}} \johnftdinvvector_j.
  \label{eq:john_second_sum_mu}
\end{align}
Observing that $\alpha \in [0, 1]$ and $\johnweights_{y, j} \geq \levjohn_{y, j}$ for all $y \in \intP$ and $j \in [\obs]$, we obtain $c_{y, j} \geq 0$.
Further, note that $\johnweightmatrix_y'^{-1} \johnprojmatrix_y'^{(2)} \johnweightmatrix_y'^{-1}$ is a PSD matrix and hence we have that $C_{y, 2} \geq 0$.
Putting the pieces together, we have
\begin{align*}
  \alpha \frac{\johnftdinvvector_1}{\johnweights_{y, 1}} \sum_{j=2}^\obs \frac{\levjohn_{y, j}^2}{\johnweights_{y, j}} \johnftdinvvector_j \leq 0.
\end{align*}
Combining this inequality with equation~\eqref{eq:john_F_to_D_infinity_bound_dev1} yields the claim.


\subsection{Proof of Corollary~\ref{corollary:f_to_d_l1}} 
\label{sub:proof_of_corollary_corollary:f_to_d_l1}
Without loss of generality, we can prove the result for $i=1$.
Applying Cauchy-Schwarz inequality, we have
\begin{align*}
\vecnorm{\johnftdinvvector}{1} &= \johnftdinvvector_1 + \sum_{j=2}^\obs \abss{\johnftdinvvector_j}
\leq \johnftdinvvector_1 + \sqrt{\sum_{j=2}^\obs \frac{\johnftdinvvector_j^2}{\johnweights_{y, j}} \cdot \sum_{j=2}^\obs \johnweights_{y, j}}
\leq \johnkappa +  \johnkappa^{3/2} \cdot \sqrt{1.5\ \dims}
\leq 3\sqrt{\dims} \johnkappa^{3/2},
\end{align*}
where to assert the last inequality we have used Lemma~\ref{lemma:f_to_d_matrix} and Lemma~\ref{lemma:john_first_bounds}\ref{item:john_weight_sum}.
The claim~\eqref{eq:john_mu_1_bound} follows.
Further, noting that the infinity norm of a matrix is the $\ell_1$-norm of its transpose, we obtain $\matsnorm{\parenth{\johnweightmatrix_y-\alpha\johnlaplacian_y}^{-1} \johnweightmatrix_y}{\infty} \leq 3 \sqrt{\dims} \johnkappa^{3/2}$ as claimed.



\section{Proof of Lemmas from Section~\ref{sub:tail_bounds}} 
\label{sec:proof_of_lemmas_from_section_sub:tail_bounds}

In this section, we collect proofs of auxiliary lemmas from Section~\ref{sub:tail_bounds}.

\subsection{Proof of Lemma~\ref{lemma:john_whp_slackness}} 
\label{sub:proof_of_lemma_whp:slackness}

Using Lemma~\ref{lemma:john_closeness_of_slackness}, and the relation~\eqref{eq:john_z_x_relation} we have
\begin{align}
  \parenth{1 - \frac{\slack_{z, i}}{\slack_{x, i}}}^2  \leq 4\frac{r^2}{\johnkappa^4\dims^{3/2}} \rvg\tp\rvg,
  \label{eq:john_gaussian_bound_on_slackness}
\end{align}
where $\rvg\sim\NORMAL(0, \Ind_\dims)$.
Define
\begin{align}
  \Delta_\slack \defn \max_{i \in [\obs],\  v \in \overline{xz}}\left\vert 1 - \frac{\slack_{v, i}}{\slack_{x, i}}\right\vert.
  \label{eq:john_delta_s}
\end{align}
Using the standard Gaussian tail bound, we observe that $\Prob_{\rvg \sim \NORMAL(0, \Ind_n)} \brackets{\rvg \tp \rvg \geq \dims (1+\delta)} \leq 1 - \epsilon/4$ for $\delta  = \sqrt{\frac{2}{\dims}}$.
Plugging this bound in the inequality~\eqref{eq:john_gaussian_bound_on_slackness} and noting that for all $v \in \overline{xz}$ we have $\vecnorm{v-x}{\john_x} \leq \vecnorm{z-x}{\john_x} $, we obtain that
\begin{align*}
  \Prob_{z\sim \proposal_x}\brackets{\Delta_\slack  \leq \frac{2r^2 (1+\sqrt{2/\dims}\log(4/\epsilon)}{\johnkappa^4\sqrt{\dims}}} \geq 1 - \epsilon/4.
\end{align*}
Setting
\begin{align}
  \label{eq:john_whp_slackness_r_condition}
  r \leq 1/(25\sqrt{1+\sqrt{2}\log(4/\epsilon)}),
\end{align}
and noting that $\johnkappa^4\sqrt{\dims} \geq 1$ implies the claim~\eqref{eq:john_slack_whp}.
Hence, we obtain that $\Delta_\slack < .005/\johnkappa^2$  and consequently $\max_{i \in [\obs], v\in \overline{xz}} \slack_{x, i}/\slack_{v, i} \in (0.99, 1.01) $ with probability at least $1-\epsilon/4$.

We now claim that
\begin{align*}
  \max_{i \in [\obs], v \in \overline{xz}} \frac{\johnweights_{x, i}}{\johnweights_{v, i}} \in \brackets{1-24\johnkappa^2\Delta_\slack, 1+24\johnkappa^2\Delta_\slack}, \quad \text{ if } \Delta_\slack \leq \frac{1}{32\johnkappa^2}.
\end{align*}
The result follows immediately from this claim.
To prove the claim, note that equation~\eqref{eq:john_bound_on_ratio_of_johnweights} implies that if $\Delta_\slack  \leq \frac{1}{32\johnkappa^2}$, then
\begin{align*}
  \frac{\johnweights_{v, i}}{\johnweights_{x, i}} \in (e^{-8\johnkappa^2\Delta_\slack}, e^{8\johnkappa^2\Delta_\slack}) \quad \text{for all } i \in [\obs] \text{ and } v \in \overline{xz},
\end{align*}
which implies that
\begin{align*}
  \max_{i \in [\obs], v \in \overline{xz}} \frac{\johnweights_{x, i}}{\johnweights_{v, i}} \in (e^{-8\johnkappa^2\Delta_\slack}, e^{8\johnkappa^2\Delta_\slack}).
\end{align*}
Asserting the facts that $e^x \leq 1+3x $ and $e^{-x} \geq 1-3x$, for all $x \in [0, 1]$ yields the claim.


\subsection{Proof of Lemma~\ref{lemma:john_gaussian_moment_bounds}} 
\label{sub:proof_of_lemma_lemma:john_gaussian_moment_bounds}

The proof once again makes use of the classical tail bounds for
polynomials in Gaussian random variables.
We restate the classical result stated in equation~\eqref{EqnJansonBound} for convenience.
For any $\dims \geq 1$,
any polynomial $P:\realdim \rightarrow \real$ of degree $\degree$, and
any $t \geq (2e)^{{\degree}/{2}}$, we have
\begin{align}
  \label{EqnJansonBound}
    \Prob\brackets{\abss{P(\rvg)} \geq t
      \parenth{\Exs{P(\rvg)}^2}^{\frac{1}{2}}} \leq \exp\parenth{-
      \frac{\degree}{2e} t^{{2}/{\degree}}},
\end{align}
where $\rvg \sim \NORMAL(0, \Ind_n)$ denotes a standard Gaussian vector in $n$ dimensions.

Recall the notation from equation~\eqref{eq:john_hat_a_hat_b} and observe that
\begin{align}
  \vecnorm{\hat a_{x, i}}{2}^2 = \johnlocalslack_{x, i},
  \quad \text{and} \quad
  \hat a_{x, i}\tp \hat a_{x, j} = \johnlocalslack_{x, i, j}.
  \label{eq:john_hat_a_norm}
\end{align}
We also have
\begin{align}
  \sum_{i=1}^\obs \johnweights_{x, i} \hat a_{x, i} \hat a_{x, i}\tp =
  \john_x^{-1/2} \sum_{i=1}^\obs \johnweights_{x, i} \frac{ a_ia_i\tp}{s_{x, i}^2} \john_x^{-1/2}
  = \Ind_\dims.
  \label{eq:john_hat_a_a_Identity}
\end{align}
Further, using Lemma~\ref{lemma:f_to_d_l2} we obtain
\begin{align}
\sum_{i=1}^\obs \johnweights_{x, i} \hat{b}_{x, i}\hat{b}_{x, i} \tp
= \john_x^{-1/2}A_x\johnlaplacian_x \parenth{\johnweightmatrix_x - \alpha \johnlaplacian_x}^{-1} \johnweightmatrix_x \parenth{\johnweightmatrix_x - \alpha \johnlaplacian_x}^{-1} \johnlaplacian_x A_x\tp  \john_x^{-1/2}
= 4 \johnkappa^2 \Ind_\dims.
\label{eq:john_john_hat_b_hat_b_Ind}
\end{align}
Throughout this section, we consider a fixed point $x \in \intP$.
For brevity in our notation, we drop the dependence on $x$ for terms like $\johnweights_{x, i}, \johnlocalslack_{x, i}, \hat a_{x, i}$ (etc.) and  denote them simply by $\johnweights_i, \johnlocalslack_i, \hat a_i$ respectively.

We introduce some matrices and vectors that would come in handy for our proofs.
\begin{align}
  B = \begin{bmatrix}
    \sqrt{\johnweights_1}\hat a_1\tp\\
    \vdots\\
    \sqrt{\johnweights_\obs}\hat a_\obs\tp
  \end{bmatrix},
  \quad
  B_b = \begin{bmatrix}
    \sqrt{\johnweights_1}\hat b_1\tp\\
    \vdots\\
    \sqrt{\johnweights_\obs}\hat b_\obs\tp
  \end{bmatrix},
  \quad
  v = \begin{bmatrix}
    \sqrt{\johnweights_1} \vecnorm{\hat a_1}{2}^2\\
    \vdots\\
    \sqrt{\johnweights_\obs} \vecnorm{\hat a_\obs}{2}^2
  \end{bmatrix},
  \quad\text{and}\quad
  v^{ab} = \begin{bmatrix}
    \sqrt{\johnweights_1} \hat a_1\tp\hat b_1\\
    \vdots\\
    \sqrt{\johnweights_\obs} \hat a_\obs\tp\hat b_\obs
  \end{bmatrix}.
\end{align}
We claim that
\begin{subequations}
  \begin{align}
    BB\tp \preceq \Ind_\obs, \quad\text{and}\quad
    B_bB_b\tp \preceq 4\johnkappa^2\Ind_\obs \label{eq:john_BB_and_BbBb}.
  \end{align}
To see these claims, note that equation~\eqref{eq:john_hat_a_a_Identity}
implies that $B\tp B = \Ind_\dims$ and consequently, $BB\tp$ is an orthogonal
projection matrix and $BB\tp \preceq \Ind_\obs$.
Next, note that from equation~\eqref{eq:john_john_hat_b_hat_b_Ind}
we have that $B_b\tp B_b \preceq \johnkappa^2\Ind_\dims$, which implies that
$B_b B_b\tp \preceq \johnkappa^2\Ind_\obs$.
In asserting both these arguments, we have used the fact that for any matrix $B$,
the matrices $BB\tp$ and $B\tp B$ are PSD and have same set of eigenvalues.

Next, we bound the $\ell_2$ norm of the vectors $v$ and $v^{ab}$:
\begin{align}
  \vecnorm{v}{2}^2 &= \sum_{i=1}^\obs \johnweights_i \johnlocalslack_i^2 \ \stackrel{\mathrm{Lem.}~\ref{ppt:john_all_properties}~\ref{item:john_theta_square_sigma_sum_bound}}{\leq} \ 4\dims, \quad \text{and} \label{eq:john_v2}\\
    \vecnorm{v^{ab}}{2}^2 &= \sum_{i=1}^\obs \johnweights_i \parenth{\hat{a}_i\tp\hat{b}_i}^2 \leq \sum_{i=1}^\obs \johnweights_i \vecnorm{\hat{a}_i}{2}^2 \vecnorm{\hat{b}_i}{2}^2 \leq 4 \sum_{i=1}^\obs \johnweights_i \vecnorm{\hat{b}_i}{2}^2
    = 4\trace(B_b\tp B_b)
    \stackrel{\mathrm{eqn.}~\eqref{eq:john_BB_and_BbBb}}{\leq} 16 \johnkappa^2 \dims.\label{eq:john_vab}
\end{align}
\end{subequations}
We now prove the five claims of the lemma separately.

\subsubsection{Proof of bound~(\ref{eq:john_john_quadratic})}
\label{ssub:proof_of_part_item:john_quadratic}
Using Isserlis’ theorem~\citep{isserlis1918formula} for fourth order Gaussian moments, we have
\begin{align*}
  \Exs \parenth{\sum_{i=1}^\obs \johnweights_i \parenth{\hat{a}_i\tp \rvg}^2}^2
%
 = \sum_{i,j=1}^\obs \johnweights_i \johnweights_j \parenth{
    \vecnorm{\hat{a}_i}{2}^2\vecnorm{\hat{a}_j}{2}^2 + 2
    \parenth{\hat{a}_i\tp \hat{a}_j}^2 }
 = \sum_{i,j=1}^\obs \johnweights_i \johnweights_j
  \parenth{\johnlocalslack_{i}\johnlocalslack_{j} + 2
    \johnlocalslack_{i, j}^2 }
%
& \leq 24 \dims^2,
\end{align*}
where the last follows from Lemma~\ref{ppt:john_all_properties}. Applying the bound~\eqref{EqnJansonBound} with $k=2$ and $t = e \log(\frac{16}{\epsilon})$. Note that the bound is valid since $t \geq (2e)$ for all $\epsilon \in (0, \johnepsilonconst]$.

\subsubsection{Proof of bound~(\ref{eq:john_john_cubic})}
\label{ssub:proof_of_part_item:john_cubic}
Applying Isserlis’ theorem for Gaussian moments, we obtain
\begin{align*}
\Exs \parenth{\sum_{i=1}^\obs \johnweights_i \parenth{\hat{a}_i \tp \xi}^3}^2 = 9\underbrace{\sum_{i,j=1}^\obs \johnweights_i \johnweights_j \vecnorm{\hat{a}_i}{2}^2 \vecnorm{\hat{a}_j}{2}^2 \parenth{\hat{a}_i\tp \hat{a}_j}}_{=:N_1} + 6 \underbrace{\sum_{i,j=1}^\obs \johnweights_i \johnweights_j \parenth{\hat{a}_i\tp \hat{a}_j}^3}_{=:N_2}.
\end{align*}
We claim that $N_1 \leq 4\dims$ and $N_2 \leq 4\dims$.
Assuming these claims as given at the moment, we now complete the proof.
We have $\Exs \parenth{\sum_{i=1}^\obs \johnweights_i \parenth{\hat{a}_i \tp \xi}^3}^2 \leq 60 \dims$.
Applying the bound~\eqref{EqnJansonBound} with $\degree = 3$ and $t =
\parenth{\frac{2e}{3} \log \parenth{\frac{16}{\epsilon}}}^{3/2}$, and verifying that $t \geq \parenth{2e}^{3/2}$ for $\epsilon \in (0, \johnepsilonconst]$ yields the claim.

We now turn to prove the bounds on $N_1$ and $N_2$.
We have
\begin{align*}
  N_1 = \sum_{i,j=1}^\obs \johnweights_i {\vecnorm{\hat{a}_i}{2}^2 \hat{a}_i\tp
    \johnweights_j
    \vecnorm{\hat{a}_j}{2}^2 \hat{a}_j} &= \vecnorm{\sum_{i=1}^\obs
     \johnweights_i \vecnorm{\hat{a}_i}{2}^2 \hat{a}_i }{2}^2
  = \vecnorm{B\tp v}{2}^2
  \stackrel{\mathrm{eqn.}~\eqref{eq:john_BB_and_BbBb}}{\leq} \vecnorm{v}{2}^2
  \stackrel{\mathrm{eqn.}~\eqref{eq:john_v2}}{\leq} 4\dims.
\end{align*}
Next, applying Cauchy-Schwarz inequality and using equation~\eqref{eq:john_hat_a_norm}, we obtain
\begin{align*}
  N_2 = \sum_{i,j=1}^\obs \johnweights_i \johnweights_j
  \parenth{\hat{a}_i\tp \hat{a}_j}^3
%
& \leq \sum_{i,j=1}^\obs \johnweights_i \johnweights_j
  \johnlocalslack_{i, j}^2 \sqrt{\johnlocalslack_i\johnlocalslack_j}
& \stackrel{\mathmakebox[\widthof{====}]{
      (\mathrm{Lem.}~\ref{lemma:john_first_bounds}~\ref{item:john_theta_bound})
  }}{\leq} 4 \sum_{i,j=1}^\obs
  \johnweights_i \johnweights_j \johnlocalslack_{i, j}^2
& \stackrel{\mathmakebox[\widthof{=====}]{
      (\mathrm{Lem.}~\ref{ppt:john_all_properties}~\ref{item:john_theta_properties})
  }}{\leq} 4 \sum_{i=1}^\obs
  \johnweights_i \johnlocalslack_i
& = 4\dims.
\end{align*}

\subsubsection{Proof of bound~(\ref{eq:john_john_cubic_f})}
\label{ssub:proof_of_part_item:john_cubic_f}

Using Isserlis’ theorem for Gaussian moments, we have
\begin{multline*}
\Exs \parenth{\sum_{i=1}^\obs \johnweights_i \parenth{\hat{a}_i \tp \xi}^2 \parenth{\hat{b}_{x,i} \tp \xi}}^2
= \underbrace{\sum_{i, j=1}^\obs \johnweights_i \johnweights_j \vecnorm{\hat{a}_i}{2}^2 \vecnorm{\hat{a}_j}{2}^2 \parenth{\hat{b}_i\tp \hat{b}_j}}_{:=N_3}
+ 4 \underbrace{\sum_{i, j=1}^\obs \johnweights_i \johnweights_j \parenth{\hat{a}_i \tp \hat{a}_j} \parenth{\hat{a}_i \tp \hat{b}_i}  \parenth{\hat{a}_j\tp \hat{b}_j}}_{:=N_4} \\
+ 4 \underbrace{\sum_{i, j=1}^\obs \johnweights_i \johnweights_j \vecnorm{\hat{a}_i}{2}^2 \parenth{\hat{b}_i\tp \hat{a}_j} \parenth{\hat{a}_j\tp \hat{b}_j}}_{:=N_5}
+ 2 \underbrace{\sum_{i, j=1}^\obs \johnweights_i \johnweights_j \parenth{\hat{a}_i \tp \hat{a}_j}^2 \parenth{\hat{b}_i\tp \hat{b}_j}}_{:=N_6}
+ 4 \underbrace{\sum_{i, j=1}^\obs \johnweights_i \johnweights_j \parenth{\hat{a}_i \tp \hat{a}_j} \parenth{\hat{a}_i \tp \hat{b}_j} \parenth{\hat{b}_i\tp \hat{a}_j}}_{:=N_7}
\end{multline*}
We claim that all terms $N_k \leq 16\johnkappa^2 \dims, k \in \braces{3, 4, 5, 6, 7}$.
Putting the pieces together, we have
\begin{align*}
\Exs \parenth{\sum_{i=1}^\obs \johnweights_i \parenth{\hat{a}_i \tp \xi}^2 \parenth{\hat{b}_{x,i} \tp \xi}}^2 \leq 240 \johnkappa^2 \dims.
\end{align*}
Applying the bound~\eqref{EqnJansonBound} with $\degree = 3$ and $t =
\parenth{\frac{2e}{3} \log \parenth{\frac{16}{\epsilon}}}^{3/2}$ yields the claim. Note that for the given definition of $t$, we have $t \geq \parenth{2e}^{3/2}$ for $\epsilon \in (0, \johnepsilonconst]$ so that the bound~\eqref{EqnJansonBound} is valid.

It is now left to prove the bounds on $N_k$ for $k \in \braces{3, 4, 5, 6, 7}$.
We have
\begin{align*}
  N_3 &= \sum_{i,j=1}^\obs \johnweights_i {\vecnorm{\hat{a}_i}{2}^2 \hat{b}_i\tp
    \johnweights_j
    \vecnorm{\hat{a}_j}{2}^2 \hat{b}_j} = \vecnorm{\sum_{i=1}^\obs
     \johnweights_i \vecnorm{\hat{a}_i}{2}^2 \hat{b}_i }{2}^2
  = \vecnorm{B_b \tp v}{2}^2\
  \stackrel{\mathrm{eqn.}~\eqref{eq:john_BB_and_BbBb}}{\leq} 4 \johnkappa^2 \vecnorm{v}{2}^2\
 = \stackrel{\mathrm{eqn.}~\eqref{eq:john_v2}}{\leq} 16 \johnkappa^2 \dims,\\
 N_4 &= \sum_{i, j=1}^\obs \johnweights_i \johnweights_j \parenth{\hat{a}_i \tp \hat{a}_j} \parenth{\hat{a}_i \tp \hat{b}_i}  \parenth{\hat{a}_j\tp \hat{b}_j} = \vecnorm{B \tp v^{ab}}{2}^2\ \
 \stackrel{\mathrm{eqn.}~\eqref{eq:john_BB_and_BbBb}}{\leq} \vecnorm{v^{ab}}{2}^2\ \
\stackrel{\mathrm{eqn.}~\eqref{eq:john_vab}}{\leq} 16\johnkappa^2\dims, \quad \text{and}\\
N_5 &=\sum_{i, j=1}^\obs \johnweights_i \johnweights_j \vecnorm{\hat{a}_i}{2}^2 \parenth{\hat{b}_i\tp \hat{a}_j} \parenth{\hat{a}_j\tp \hat{b}_j} = \parenth{B\tp v^{ab}}\tp \parenth{B_b \tp v} \
\stackrel{\mathrm{C-S}}{\leq} \vecnorm{B \tp v^{ab}}{2} \vecnorm{B_b \tp v}{2} \ \leq 16 \johnkappa^2 \dims.
\end{align*}
For the term $N_6$, we have
\begin{align*}
N_6 = \sum_{i, j=1}^\obs \johnweights_i \johnweights_j \parenth{\hat{a}_i \tp \hat{a}_j}^2 \parenth{\hat{b}_i\tp \hat{b}_j}
& \stackrel{\mathmakebox[\widthof{=======}]{(\mathrm{C-S})}}{\leq}
\frac{1}{2}\sum_{i, j=1}^\obs \johnweights_i \johnweights_j \parenth{\hat{a}_i \tp \hat{a}_j}^2 \parenth{\vecnorm{\hat{b}_i}{2}^2 +  \vecnorm{\hat{b}_j}{2}^2}\\
& \stackrel{\mathmakebox[\widthof{=======}]{(\mathrm{symm. in}~i, j)}}{=} \sum_{i, j=1}^\obs \johnweights_i \johnweights_j \parenth{\hat{a}_i \tp \hat{a}_j}^2 \vecnorm{\hat{b}_i}{2}^2\\
& \stackrel{\mathmakebox[\widthof{=======}]{(\mathrm{eqn.}~\eqref{eq:john_hat_a_a_Identity})}}{\leq} \sum_{i=1}^\obs \johnweights_i \vecnorm{\hat{a}_i}{2}^2 \vecnorm{\hat{b}_i}{2}^2 \\
& \stackrel{\mathmakebox[\widthof{=======}]{(\mathrm{Lem.}~\ref{lemma:john_first_bounds}\ref{item:john_theta_bound})}}{\leq} 4 \sum_{i=1}^\obs \johnweights_i \vecnorm{\hat{b}_i}{2}^2\\
& \stackrel{\mathmakebox[\widthof{=======}]{(\mathrm{eqn.}~\eqref{eq:john_vab})}}{\leq} 16 \johnkappa^2 \dims.
\end{align*}
The bound on the term $N_7$ can be obtained in a similar fashion.

\subsubsection{Proof of bound~(\ref{eq:john_john_fourth})}
\label{ssub:proof_of_part_item:john_fourth}

Observe that $\hat{a}_i\tp \rvg\sim \NORMAL\parenth{0,
  \johnlocalslack_i}$ and hence $\Exs\parenth{\hat{a}_i\tp
  \rvg}^8 = 105\, \johnlocalslack_i^4$.
  Thus, we have
\begin{align*}
\Exs \parenth{\sum_{i=1}^\obs \johnweights_i
  \parenth{\hat{a}_i\tp \rvg}^4}^2 &
\stackrel{{\mathrm{C-S}}}{\leq}
\sum_{i,j=1}^\obs \johnweights_i \johnweights_j
\parenth{\Exs\parenth{\hat{a}_i\tp \rvg}^8}^{\frac{1}{2}}
\parenth{\Exs\parenth{\hat{a}_j\tp \rvg}^8}^{\frac{1}{2}}
= 105 \sum_{i,j=1}^\obs
\johnweights_i \johnweights_j \johnlocalslack_i^2
\johnlocalslack_j^2
= 105 \parenth{\sum_{i=1}^\obs \johnweights_i \johnlocalslack_i^2}^2.
\end{align*}
Now applying Lemma~\ref{ppt:john_all_properties}, we obtain that $\Exs \parenth{\sum_{i=1}^\obs \johnweights_i \parenth{\hat{a}_i\tp \rvg}^4}^2 \leq 1680 \dims^2$.
Consequently, applying the bound~\eqref{EqnJansonBound} with $\degree = 4$ and $t =
\parenth{\frac{e}{2} \log \parenth{\frac{16}{\epsilon}}}^{2}$ and noting that $t \geq \parenth{2e}^{2}$ for $\epsilon \in (0, \johnepsilonconst]$, yields the claim.

\subsubsection{Proof of bound~(\ref{eq:john_john_sixth})}
\label{ssub:proof_of_part_item:john_sixth}
Using the fact that $\Exs\parenth{\hat{a}_i\tp
  \rvg}^{12} = 945\, \johnlocalslack_i^6$ and an argument similar to the previous part yields that $\Exs \parenth{\sum_{i=1}^\obs \johnweights_i
  \parenth{\hat{a}_i\tp \rvg}^6}^2 \leq 15120 \dims^2$.

Finally, applying the bound~\eqref{EqnJansonBound} with $\degree = 6$ and $t =
\parenth{\frac{e}{3} \log \parenth{\frac{16}{\epsilon}}}^{3}$, and verifying that $t \geq \parenth{2e}^{3}$ for $\epsilon \in (0, \johnepsilonconst]$, yields the claim.



\section{Proof of Lov{\'a}sz's Lemma}
\label{sec:proof_of_lemma_lemma:lovasz_theorem}

We begin by formally defining the conductance ($\conductance$) of a
Markov chain on $(\Pspace, \Ball(\Pspace))$ with arbitrary
transition operator $\lazytrans$ and stationary distribution
$\target$. We assume that the operator $\lazytrans$ is lazy
and  thereby the stationary distribution $\target$ is unique.
Let $\transition_x = \lazytrans(\diracdelta_x)$ denote the transition
distribution at point $x$, then the conductance
$\conductance$ is defined as
\begin{align*}
  \conductance \defn \inf_{\substack{\set \in
      \Ball(\Pspace)\\ \target(\set) \in (0, 1/2)}}
  \frac{\conductance(\set)}{\target(\set)} \quad \text{where}
  \quad \conductance(\set) \defn \int_\set \transition_u(\Pspace \cap
  \set^c) d\target(u) \quad \text{ for any } \set
  \subseteq \Pspace.
\end{align*}
The conductance denotes the measure of the flow from a set to its
complement relative to its own measure, when initialized in the
stationary distribution.  If the conductance is high, the following
result shows that the Markov chain mixes fast.
\begin{lemma}
  \label{lemma:conductance_mixing}{\citep[Theorem~1.4]{lovasz1993random}}
  For any $M$-warm start $\initial$, the mixing time of the Markov
  chain with conductance~$\conductance$ is bounded as
\begin{align*}
  \vecnorm{\lazytrans^k(\initial)-\target}{\text{TV}} \leq \sqrt{M}
  \parenth{1-\frac{\conductance^2}{2}}^k \leq \sqrt{M}
  \exp\parenth{-k\frac{\conductance^2}{2} }.
\end{align*}
\end{lemma}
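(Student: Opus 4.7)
The plan is to adapt the classical Lov{\'a}sz--Simonovits concave-envelope technique. Let $\mu_t := \lazytrans^t(\initial)$ denote the distribution at step $t$, and for each $t \geq 0$ define the function $h_t: [0,1] \to [0,1]$ by
\begin{align*}
h_t(x) \defn \sup\braces{\mu_t(\set) \; : \; \set \in \Borel(\Pspace),\ \target(\set) = x}.
\end{align*}
First I would verify three structural facts about $h_t$: (i) $h_t$ is concave on $[0,1]$ with $h_t(0)=0$, $h_t(1)=1$; (ii) the total-variation distance can be read off as $\tvnorm{\mu_t - \target} = \sup_{x \in [0,1]} \parenth{h_t(x) - x}$; and (iii) the warm-start hypothesis $\initial \leq \warmparam \target$ yields the initial bound $h_0(x) \leq \min(\warmparam x,\, 1)$, which in turn is dominated by $x + \sqrt{M}\sqrt{x(1-x)}$. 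Property~(i) follows by noting that linear interpolation between sets of measures $x_1$ and $x_2$ produces a set of intermediate measure via a standard measurable-selection / Lyapunov-convexity argument, while (ii) follows because $\tvnorm{\mu_t - \target}$ is attained on the Radon--Nikodym super-level set $\{d\mu_t/d\target \geq 1\}$.

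The crux of the proof is a one-step contraction for $h_t$. I plan to show that laziness together with the conductance bound yields, for every $x \in (0,1)$,
\begin{align*}
h_{t+1}(x) \;\leq\; \tfrac{1}{2}\brackets{ h_t\parenth{x - \Phi_x} + h_t\parenth{x + \Phi_x} },
\qquad \text{where}\quad \Phi_x \defn \conductance \min(x, 1-x).
\end{align*}
To establish this, for any measurable $\set$ with $\target(\set)=x$ I would write
\begin{align*}
\mu_{t+1}(\set) \;=\; \int_\Pspace \transition_u(\set)\, d\mu_t(u) \;=\; \mu_t(\set) - \int_\set \transition_u(\set^c)\, d\mu_t(u) + \int_{\set^c} \transition_u(\set)\, d\mu_t(u),
\end{align*}
then use reversibility of $\lazytrans$ with respect to $\target$ and the laziness decomposition $\lazytrans = \tfrac{1}{2} I + \tfrac{1}{2} \tilde{\plaintransition}$ to identify ``inner'' and ``outer'' subsets $\set_{-}, \set_{+} \subset \Pspace$ carrying the ergodic flow out of and into $\set$, with $\target(\set_{\pm}) = x \mp \Phi_x$ by the conductance lower bound $\conductance(\set)/\target(\set) \geq \conductance$ (applied to whichever of $\set, \set^c$ has smaller $\target$-measure). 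Taking the supremum over $\set$ and invoking the definition of $h_t$ delivers the displayed recursion.

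Finally, I would iterate the recursion against the ansatz
\begin{align*}
h_t(x) \;\leq\; x + \sqrt{\warmparam}\, (1 - \conductance^2/2)^t \, \sqrt{x(1-x)}, \qquad x \in [0,1].
\end{align*}
The base case $t=0$ is the warm-start bound from step~(iii) above. For the inductive step, the recursion reduces the task to verifying the elementary concavity inequality
\begin{align*}
\tfrac{1}{2}\brackets{\sqrt{(x-\delta)(1-x+\delta)} + \sqrt{(x+\delta)(1-x-\delta)}} \;\leq\; \sqrt{x(1-x)}\,\parenth{1 - \tfrac{1}{2}\cdot \tfrac{\delta^2}{x(1-x)}}
\end{align*}
with $\delta = \Phi_x$, so that the factor $(1-\delta^2/(2x(1-x)))$ combined with $\delta^2 \geq \conductance^2 x(1-x)$ yields the desired contraction factor $(1-\conductance^2/2)$. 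Taking the supremum over $x \in [0,1]$ of $h_t(x) - x$ and using $\max_x \sqrt{x(1-x)} = 1/2 \leq 1$ yields the first claimed inequality; the second follows from $1-y \leq e^{-y}$.

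The main obstacle will be the rigorous derivation of the one-step recursion: the passage from the integral identity for $\mu_{t+1}(\set)$ to a convex combination of $h_t$ at the shifted arguments $x \mp \Phi_x$ requires a careful measurable-rearrangement argument that trades flow out of $\set$ for measure in a ``worst-case'' extremal set of $\target$-measure $x + \Phi_x$, and flow into $\set$ for an extremal set of measure $x - \Phi_x$. Once this step is in place, concavity of $h_t$ and the elementary inequality above make the induction essentially mechanical.
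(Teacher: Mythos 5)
Your overall route is the canonical Lov\'{a}sz--Simonovits concave-envelope argument, which is the right framework here; note the paper itself never proves this lemma but simply defers to \citet{lovasz1993random}, so there is no in-paper proof to match. Your structural steps are sound: the warm-start bound $h_0(x)\le\min(\warmparam x,1)$ does dominate $x+\sqrt{\warmparam}\sqrt{x(1-x)}$, the TV read-off $\sup_x(h_t(x)-x)$ is correct, and concavity is unproblematic for the atomless uniform target (or, more cleanly, define $h_t$ as a supremum over $[0,1]$-valued functions $g$ with $\int g\,d\target=x$, which is concave for free and is exactly the form in which the one-step bound is proved).

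The genuine gap is in the quantitative core, in two places. (a) Your recursion drops a factor of $2$: decomposing the lazy kernel as $\lazytrans_u(\set)=\tfrac12\brackets{\mathbf{1}_{\set}(u)\parenth{2\lazytrans_u(\set)-1}}+\tfrac12\brackets{\mathbf{1}_{\set}(u)+2\lazytrans_u(\set)\mathbf{1}_{\set^c}(u)}$, where laziness puts both bracketed functions in $[0,1]$, and using stationarity, $h_t$ gets evaluated at $\target$-measures $x\mp 2\conductance(\set)$ with $\conductance(\set)\ge\conductance\min(x,1-x)$ under the paper's (lazy-kernel) definition of conductance; your version with spread $\conductance\min(x,1-x)$ is implied by this (a concave $h_t$ makes smaller spreads give weaker bounds) but is lossy, and by itself can only yield a contraction $1-\conductance^2/8$. (b) The closing inequality is backwards: with $\delta=\conductance\min(x,1-x)$ one has $\delta^2=\conductance^2\min(x,1-x)^2\le\conductance^2x(1-x)$, not $\ge$. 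Your chain $\tfrac12\brackets{\sqrt{(x-\delta)(1-x+\delta)}+\sqrt{(x+\delta)(1-x-\delta)}}\le\sqrt{x(1-x)-\delta^2}\le\sqrt{x(1-x)}\parenth{1-\tfrac{\delta^2}{2x(1-x)}}$ is valid, but it produces the state-dependent factor $1-\tfrac{\conductance^2}{2}\cdot\tfrac{\min(x,1-x)}{\max(x,1-x)}$, which tends to $1$ as $x\to 0$ or $1$, so the induction does not close. The standard repair is to induct against $x+\sqrt{\warmparam}\,(1-\conductance^2/2)^t\sqrt{\min(x,1-x)}$, and for $x\le 1/2$ use the corrected spread $2\conductance x$ together with $\tfrac12\parenth{\sqrt{1-u}+\sqrt{1+u}}\le 1-u^2/8$ at $u=2\conductance$ (symmetrically for $x\ge1/2$); alternatively one can keep $\sqrt{x(1-x)}$ but must then retain the linear terms $\pm\delta(2x-1)$ rather than discarding them via Jensen, since that is exactly where your loss occurs.
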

Note that this result holds for a general distribution $\target$
although we apply for uniform $\target$.  The result can be
derived from Cheeger's inequality for continuous-space discrete-time
Markov chain and elementary results in Calculus.
See, e.g.,
Theorem~1.4 and Corollary~1.5 by~\cite{lovasz1993random} for
a proof.  For ease in notation define $\Pspace \backslash \set
\defn \Pspace \cap \set^c$.  We now state a key isoperimetric
inequality.
\begin{lemma}{\citep[Theorem~6]{lovasz1999hit}}
  \label{lemma:isoperimetry}
  For any measurable sets $\set_1, \set_2 \subseteq \Pspace$, we have
  \begin{align*}
    \vol(\Pspace \backslash \set_1 \backslash \set_2)
    \cdot \vol(\Pspace) \geq d_\Pspace(\set_1, \set_2)
    \cdot \vol(\set_1) \cdot \vol(\set_2),
  \end{align*}
  where $d_\Pspace(\set_1, \set_2) \defn \inf_{x \in \set_1, y \in
    \set_2} d_\Pspace(x, y)$.
\end{lemma}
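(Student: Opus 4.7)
My plan is to prove this isoperimetric inequality via Lov{\'a}sz's localization method, which reduces bilinear integral inequalities on a convex body to a one-parameter family of inequalities on chords (``needles'') equipped with log-concave densities. First, I would rewrite the claim as
\[
\int_\Pspace \Ind_{\set_3}(x)\,dx \cdot \int_\Pspace 1\,dx \;-\; d_0 \int_\Pspace \Ind_{\set_1}(x)\,dx \cdot \int_\Pspace \Ind_{\set_2}(x)\,dx \;\geq\; 0,
\]
where $\set_3 \defn \Pspace \backslash \set_1 \backslash \set_2$ and $d_0 \defn d_\Pspace(\set_1, \set_2)$. This is exactly the bilinear format to which the Lov{\'a}sz--Simonovits localization lemma applies.

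Next I would invoke localization to reduce the claim to the one-dimensional statement that for every segment $[u,v] \subseteq \Pspace$ and every log-concave density $\gamma$ on $[u,v]$,
\[
\int_u^v \Ind_{\set_3}(t)\,\gamma(t)\,dt \cdot \int_u^v \gamma(t)\,dt \;\geq\; d_0 \int_u^v \Ind_{\set_1}(t)\,\gamma(t)\,dt \cdot \int_u^v \Ind_{\set_2}(t)\,\gamma(t)\,dt.
\]
Because $[u,v] \subseteq \Pspace$, any chord of $\Pspace$ through two points of $[u,v]$ extends to $\boundary$ past $u$ and $v$, so the one-dimensional cross-ratio $d_{[u,v]}(x,y) \defn \frac{(v-u)\,|y-x|}{|x-u|\,|v-y|}$ is an upper bound on $d_\Pspace(x,y)$. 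Hence it suffices to establish the 1D inequality with $d_0$ replaced by $\inf_{x \in J_1,\, y \in J_2} d_{[u,v]}(x,y)$, where $J_i \defn \set_i \cap [u,v]$.

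For the 1D argument, I would parameterize $[u,v]$ as $[0,L]$ and, after possibly relabeling, assume $J_1$ lies to the left of $J_2$. A peeling/condensation argument on the bilinear RHS (fixing $\int_{J_1}\gamma$ and $\int_{J_2}\gamma$ while pushing mass toward the common interior boundary to maximize the infimum cross-ratio) reduces to the case where $J_1 = [a_1, b_1]$ and $J_2 = [a_2, b_2]$ are single intervals with $0 \le a_1 < b_1 \le a_2 < b_2 \le L$ and $J_3$ is their complement. Plugging in the worst-case cross-ratio $L(a_2 - b_1)/\bigl[b_1(L - a_2)\bigr]$, the inequality becomes an elementary estimate on four ordered points, which for $\gamma \equiv 1$ can be verified by direct algebra, and which extends to general log-concave $\gamma$ by a perspective-function change of variables standard in the localization literature.

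The main obstacle will be the sharp execution of the 1D inequality: the cross-ratio contains the singular boundary factors $1/\bigl(x(L-y)\bigr)$, which do not interact cleanly with a generic density, so the proof must exploit log-concavity through Prekopa--Leindler (or Brunn--Minkowski on the conic lift of $\gamma$) rather than treating $\gamma$ as a black-box. A useful simplification, relevant for this paper's use of the lemma, is that the uniform target distribution corresponds to the Lebesgue case $\gamma \equiv 1$, which reduces the core obstacle to verifying $(L - |J_1| - |J_2|) \cdot L \ge \frac{L(a_2 - b_1)}{b_1(L - a_2)} \cdot |J_1| \cdot |J_2|$ for four ordered reals, a purely calculus-level check.
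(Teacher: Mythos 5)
The paper does not prove this lemma at all: it is stated with the citation ``Theorem~6 of \cite{lovasz1999hit}'' and invoked as a black box, so there is no paper-internal proof to compare against. Your sketch is therefore being measured against what is known from the cited literature.

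Your general strategy --- rewrite the claim as a bilinear integral inequality, apply the Lov\'asz--Simonovits localization lemma to reduce to needles with log-concave (indeed exponential or affine-power) densities, observe that the cross-ratio on a sub-segment dominates the cross-ratio on the full chord, and then verify a one-dimensional inequality --- is in fact the approach that Lov\'asz uses, so the outline is sound. The direction of the cross-ratio comparison is also handled correctly: since $d_{[u,v]}(x,y) \geq d_\Pspace(x,y)$ and the infimum over $J_i = \set_i \cap [u,v]$ dominates the infimum over $\set_i$, proving the needle inequality with the larger constant implies the desired one.

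There is, however, a genuine misconception in the closing paragraph. You claim that ``the uniform target distribution corresponds to the Lebesgue case $\gamma \equiv 1$,'' reducing the core obstacle to a calculus check with constant density. This is not what localization gives you. The Lov\'asz--Simonovits localization lemma turns the $n$-dimensional Lebesgue inequality into needle inequalities weighted by densities of the form $\gamma(t) = \ell(t)^{n-1}$ for an affine nonnegative $\ell$ on the needle --- this factor records the ``thickness'' of the infinitesimal cone being collapsed and does not go away just because the original target is uniform. Consequently, the one-dimensional inequality you must verify genuinely involves an arbitrary log-concave (affine-to-a-power) $\gamma$, and the ``elementary algebra for $\gamma \equiv 1$'' is a strictly easier, non-representative case. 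The one-dimensional verification with a general log-concave $\gamma$ is the actual content of the 1D step and requires the monotonicity/condensation arguments you gesture at, carried out with the density present. Your plan should be revised to drop the Lebesgue shortcut and prove the needle inequality for all $\gamma(t) = \ell(t)^{n-1}$, or more simply for all log-concave $\gamma$, from the start.
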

Since $\target$ is the uniform measure on $\Pspace$, this lemma
implies that
\begin{align}
  \label{eq:isoperimetry}
  \target(\Pspace \backslash \set_1 \backslash \set_2) \geq
  d_\Pspace(\set_1, \set_2) \cdot \target(\set_1) \cdot
  \target(\set_2).
\end{align}
In fact, such an inequality holds for an arbitrary log-concave
distribution~\citep{lovasz2003hit}.  In words, the inequality says that
for a bounded convex set any two subsets which are far apart, can not
have a large volume.  Taking these lemmas as given, we now complete
the proof.

\paragraph{Proof of (Lov{\'a}sz's) Lemma~\ref{lemma:Lovasz_theorem}:} 
\label{par:proof_of_lov_'a_sz_s_lemma}
We first bound the conductance of the Markov chain using the
assumptions of the lemma.  From
Lemma~\ref{lemma:conductance_mixing}, we see that the Markov
chain mixes fast if all the sets $\set$ have a high
conductance $\conductance(\set)$.  We claim that
\begin{align}
  \label{eq:condutance_bound}
  \conductance \geq \frac{\lovone \Delta}{64},
\end{align}
from which the proof follows by applying
Lemma~\ref{lemma:conductance_mixing}.  We now prove the
claim~\eqref{eq:condutance_bound} along the lines of Theorem~11 in the
paper by~\cite{lovasz1999hit}.  In particular, we show that under the
assumptions in the lemma, the sets with bad conductance are far apart
and thereby have a small measure under $\target$, whence the ratio
$\conductance(\set)/\target(\set)$ is not arbitrarily small.
Consider a partition $\set_1, \set_2$ of the set $\Pspace$ such that
$\set_1$ and $\set_2$ are measurable.  To prove
claim~\eqref{eq:condutance_bound}, it suffices to show that
\begin{align}
  \label{eq:to_prove_conductance}
  \frac{1}{\vol(\Pspace)}\int_{\set_1} \lazytrans_u(\set_2) du \geq
  \frac{\lovone \Delta}{64} \cdot \min\braces{\target(\set_1),
    \target(\set_2)},
\end{align}

Define the sets
\begin{align}
 \label{eq:define_three_sets}
  \set_1' &\defn \braces{ u \in \set_1 \bigg\vert
    \nolazytrans_u(\set_2) < \frac{\lovone}{2} },\quad \set_2' &\defn
  \braces{ v \in \set_2 \bigg\vert \nolazytrans_v(\set_1) <
    \frac{\lovone}{2} }, \quad \text{and}\quad \set_3' &\defn \Pspace
  \backslash \set_1' \backslash \set_2'.
\end{align}


\paragraph*{Case 1:}
If we have $\vol(\set_1') \leq \vol(\set_1)/2$ and consequently
$\vol(\Pspace \backslash \set_1') \geq \vol(\set_1)/2$, then
\begin{align*}
  \int_{\set_1} \lazytrans_u(\set_2) du \stackrel{(i)}{\geq}
        \frac{1}{2} \int_{\set_1 \backslash \set_1'}
        \nolazytrans_u(\set_2) du \stackrel{(ii)}{\geq} \frac{\lovone}{4}
        \vol(\set_1) \stackrel{(iii)}{\geq} \frac{\lovone\Delta}{4}
        \cdot \min\braces{\vol(\set_1), \vol(\set_2)},
\end{align*}
which implies the inequality~\eqref{eq:to_prove_conductance} since
$\target$ is the uniform measure on $\Pspace$.  In the above
sequence of inequalities, step $(i)$ follows from the definition of
the kernel $\lazytrans$, step $(ii) $ follows from the definition of
the set $\set_1'$~\eqref{eq:define_three_sets} and step $(iii)$ from
the fact that $\Delta < 1 $.
Dividing both sides by $\vol(\Pspace)$ yields the
inequality~\eqref{eq:to_prove_conductance} and we are done.


\paragraph*{Case 2:}

It remains to establish the inequality~\eqref{eq:to_prove_conductance}
for the case when $\vol(\set_i') \geq \vol(\set_i)/2$ for each $i\in
\braces{1, 2}$.  Now for any $u \in \set_1'$ and $v \in \set_2'$ we
have
\begin{align*}
  \vecnorm{\nolazytrans_u- \nolazytrans_v}{\text{TV}} \geq
  \nolazytrans_u(\set_1) - \nolazytrans_v(\set_1) = 1 -
  \nolazytrans_u(\set_2) - \nolazytrans_v(\set_1) > 1 - \lovone,
\end{align*}
and hence by assumption we have $d_\Pspace(\set_1', \set_2') \geq
\Delta$.  Applying Lemma~\ref{lemma:isoperimetry} and the definition
of $\set_3'$~\eqref{eq:define_three_sets} we find that
\begin{align}
  \label{eq:iso_s3}
  \vol(\set_3') \cdot \vol(\Pspace) \geq \Delta \cdot \vol(\set_1')
  \cdot \vol(\set_2') \geq \frac{\Delta}{4} \cdot \vol(\set_1) \cdot
  \vol(\set_2).
\end{align}
Using this inequality and the fact that for any $x \in [0, 1]$ we have
$x(1-x) \geq \min\braces{x, (1-x)}/2$ we obtain that
\begin{align}
  \label{eq:iso_s4}
  \target(\set_3') \geq \frac{\Delta}{4} \cdot \target(\set_1)
  \cdot \target(\set_2) \geq \frac{\Delta}{8}
  \min\braces{\target(\set_1), \target(\set_2)}.
\end{align}
We claim that
\begin{align}
  \label{eq:equality_of_T1_T2}
  \int_{\set_1} \lazytrans_u(\set_2)du = \int_{\set_2}
  \lazytrans_v(\set_1)dv.
\end{align}
Assuming the claim as given, we now complete the proof.  Using the
equation~\eqref{eq:equality_of_T1_T2}, we have
\begin{align*}
  \frac{1}{\vol(\Pspace)}\int_{\set_1} \lazytrans_u(\set_2) du
        &= \frac{1}{2\vol(\Pspace)}\parenth{ \int_{\set_1}
          \lazytrans_u(\set_2) du + \int_{\set_2}
          \lazytrans_v(\set_1)dv } \\ &\stackrel{(i)}{\geq}
        \frac{1}{2\vol(\Pspace)}\parenth{ \frac{1}{2} \int_{\set_1 \backslash
            \set_1'} \nolazytrans_u(\set_2) du + \frac{1}{2}
          \int_{\set_2 \backslash \set_2'} \nolazytrans_v(\set_2) dv
        }\\ &\stackrel{(ii)}{\geq} \frac{\lovone}{8}
        \frac{\vol(\set_3')}{\vol(\Pspace)}\\ &\stackrel{(iii)}{\geq}
        \frac{\lovone\Delta}{64} \min\braces{\target(\set_1),
          \target(\set_2)},
\end{align*}
where step $(i)$ follows from the definition of the kernel
$\lazytrans$, step $(ii)$ follows from the definition of the set
$\set_3'$~\eqref{eq:define_three_sets} and step $(iii)$ follows from
the inequality~\eqref{eq:iso_s4}.  Putting together the pieces yields
the claim~\eqref{eq:condutance_bound}.

It remains to prove the claim~\eqref{eq:equality_of_T1_T2}.  We make
use of the following result
\begin{align}
  \conductance(\set) = \conductance(\Pspace\backslash \set)
        \quad \mbox{for any measurable $\set \subseteq \Pspace$}.
  \label{eq:equality_of_conductance}
\end{align}
Using equation~\eqref{eq:equality_of_conductance} and noting that
$\set_1 = \Pspace\backslash \set_2$, we have
\begin{align*}
  \frac{1}{\vol(\Pspace)} \int_{\set_1} \lazytrans_u(\set_2) du
        = \int_{\set_1} \lazytrans_u(\set_2) \targetdensity(u) du
        = \conductance(\set_1) =
        \conductance(\Pspace\backslash \set_1) =
        \frac{1}{\vol(\Pspace)} \int_{\set_2} \lazytrans_v(\set_1) dv,
\end{align*}
which yields equation~\eqref{eq:equality_of_T1_T2}.


\paragraph*{Proof of result~\eqref{eq:equality_of_conductance}:}

Note that $\int_{\Pspace} \lazytrans_u(\set) d\target(u) =
\target(\set)$.  Thus, we have
\begin{align*}
  \conductance(\Pspace\backslash \set) = \int_{\Pspace\backslash
          \set} \lazytrans_u(\set) d\target(u) = \int_{\Pspace}
        \lazytrans_u(\set) d\target(u) - \int_{\set}
        \lazytrans_u(\set) d\target(u) &= \target(\set) -
        \int_{\set} \lazytrans_u(\set) d\target(u).
\end{align*}
Using the fact that $1-\lazytrans_u(\set) =
\lazytrans_u(\Pspace\backslash\set)$, we obtain
\begin{align*}
  \target(\set) - \int_{\set} \lazytrans_u(\set)
        d\target(u) = \int_{\set} d\target(u) - \int_{\set}
        \lazytrans_u(\set) d\target(u) =
        \int_{\set}\lazytrans_u(\Pspace\backslash\set) d\target(u)
        = \conductance(\set),
\end{align*}
thereby yielding the claim~\eqref{eq:equality_of_conductance}.

\vskip 0.2in
\bibliography{vaidya_bib}

\begin{thebibliography}{48}
\providecommand{\natexlab}[1]{#1}
\providecommand{\url}[1]{\texttt{#1}}
\expandafter\ifx\csname urlstyle\endcsname\relax
  \providecommand{\doi}[1]{doi: #1}\else
  \providecommand{\doi}{doi: \begingroup \urlstyle{rm}\Url}\fi

\bibitem[Anstreicher(2000)]{anstreicher2000volumetric}
Kurt~M Anstreicher.
\newblock The volumetric barrier for semidefinite programming.
\newblock \emph{Mathematics of Operations Research}, 25\penalty0 (3):\penalty0
  365--380, 2000.

\bibitem[B{\'e}lisle et~al.(1993)B{\'e}lisle, Romeijn, and
  Smith]{belisle1993hit}
Claude J.~P. B{\'e}lisle, H.~Edwin Romeijn, and Robert~L. Smith.
\newblock Hit-and-run algorithms for generating multivariate distributions.
\newblock \emph{Mathematics of Operations Research}, 18\penalty0 (2):\penalty0
  255--266, 1993.

\bibitem[Bertsimas and Vempala(2004)]{bertsimas2004solving}
Dimitris Bertsimas and Santosh Vempala.
\newblock Solving convex programs by random walks.
\newblock \emph{Journal of the ACM (JACM)}, 51\penalty0 (4):\penalty0 540--556,
  2004.

\bibitem[Bhatia(2013)]{bhatia2013matrix}
Rajendra Bhatia.
\newblock \emph{Matrix {A}nalysis}, volume 169.
\newblock Springer Science \& Business Media, 2013.

\bibitem[Boyd and Vandenberghe(2004)]{boyd2004convex}
Stephen Boyd and Lieven Vandenberghe.
\newblock \emph{Convex {O}ptimization}.
\newblock Cambridge University Press, 2004.

\bibitem[Br\'{e}maud(1991)]{Bremaud91}
Pierre Br\'{e}maud.
\newblock \emph{Markov chains, Gibbs fields, {M}onte {C}arlo simulation, and
  queues}.
\newblock Springer, 1991.

\bibitem[Brooks et~al.(2011)Brooks, Gelman, Jones, and
  Meng]{brooks2011handbook}
Steve Brooks, Andrew Gelman, Galin~L Jones, and Xiao-Li Meng.
\newblock \emph{Handbook of {M}arkov {C}hain {M}onte {C}arlo}.
\newblock Chapman and Hall/CRC, 2011.

\bibitem[Bushell(1973)]{bushell1973hilbert}
Peter~J Bushell.
\newblock Hilbert's metric and positive contraction mappings in a {B}anach
  space.
\newblock \emph{Archive for Rational Mechanics and Analysis}, 52\penalty0
  (4):\penalty0 330--338, 1973.

\bibitem[Cousins and Vempala(2014)]{cousins2014cubic}
Ben Cousins and Santosh Vempala.
\newblock A cubic algorithm for computing {G}aussian volume.
\newblock In \emph{Proceedings of the Twenty-Fifth Annual ACM-SIAM Symposium on
  Discrete Algorithms}, pages 1215--1228. Society for Industrial and Applied
  Mathematics, 2014.

\bibitem[Dikin(1967)]{dikin1967iterative}
I.~Dikin.
\newblock Iterative solution to problems of linear and quadratic programming.
\newblock \emph{Doklady Akademii Nauk SSSR}, 174\penalty0 (4):\penalty0 747,
  1967.

\bibitem[Feldman et~al.(2005)Feldman, Wainwright, and Karger]{Feldman05}
Jon Feldman, Martin~J Wainwright, and David~R Karger.
\newblock Using linear programming to decode binary linear codes.
\newblock \emph{IEEE Transactions on Information Theory}, 51\penalty0
  (3):\penalty0 954--972, 2005.

\bibitem[Geman and Geman(1984)]{Geman84}
Stuart Geman and David Geman.
\newblock Stochastic relaxation, {Gibbs} distributions, and the {Bayesian}
  restoration of images.
\newblock \emph{{IEEE} Trans. {PAMI}}, 6:\penalty0 721--741, 1984.

\bibitem[Gustafson and Narayanan(2018)]{gustafson2018john}
Adan Gustafson and Hariharan Narayanan.
\newblock John's walk.
\newblock \emph{arXiv preprint arXiv:1803.02032}, 2018.

\bibitem[Hastings(1970)]{hastings1970monte}
W.~Keith Hastings.
\newblock Monte {C}arlo sampling methods using {M}arkov chains and their
  applications.
\newblock \emph{Biometrika}, 57\penalty0 (1):\penalty0 97--109, 1970.

\bibitem[Horn and Johnson(2012)]{horn2012matrix}
Roger~A. Horn and Charles~R. Johnson.
\newblock \emph{Matrix {A}nalysis}.
\newblock Cambridge {U}niversity {P}ress, 2012.

\bibitem[Huang and Mehrotra(2013)]{huang2013empirical}
Kuo-Ling Huang and Sanjay Mehrotra.
\newblock An empirical evaluation of walk-and-round heuristics for mixed
  integer linear programs.
\newblock \emph{Computational Optimization and Applications}, 55\penalty0
  (3):\penalty0 545--570, 2013.

\bibitem[Huang and Mehrotra(2015)]{huang2015empirical}
Kuo-Ling Huang and Sanjay Mehrotra.
\newblock An empirical evaluation of a walk-relax-round heuristic for mixed
  integer convex programs.
\newblock \emph{Computational Optimization and Applications}, 60\penalty0
  (3):\penalty0 559--585, 2015.

\bibitem[Isserlis(1918)]{isserlis1918formula}
Leon Isserlis.
\newblock On a formula for the product-moment coefficient of any order of a
  normal frequency distribution in any number of variables.
\newblock \emph{Biometrika}, 12\penalty0 (1/2):\penalty0 134--139, 1918.

\bibitem[Janson(1997)]{janson1997gaussian}
Svante Janson.
\newblock \emph{Gaussian {H}ilbert {S}paces}, volume 129.
\newblock Cambridge University Press, 1997.

\bibitem[John(1948)]{joh48}
Fritz John.
\newblock Extremum problems with inequalities as subsidiary conditions.
\newblock In O.E.~Neugebauer In~K. O.~Friedrichs and J.~J. Stoker, editors,
  \emph{Studies and Essays: Courant Anniversary Volume}, pages 187--204.
  Wiley-Interscience, New York, 1948.

\bibitem[Kannan et~al.(1997)Kannan, Lov{\'a}sz, and
  Simonovits]{kannan1997random}
Ravi Kannan, L{\'a}szl{\'o} Lov{\'a}sz, and Mikl{\'o}s Simonovits.
\newblock Random walks and an o*(n5) volume algorithm for convex bodies.
\newblock \emph{Random Structures \& Algorithms}, 11\penalty0 (1):\penalty0
  1--50, 1997.

\bibitem[Kannan et~al.(2006)Kannan, Lov{\'a}sz, and
  Montenegro]{kannan2006blocking}
Ravi Kannan, L{\'a}szl{\'o} Lov{\'a}sz, and Ravi Montenegro.
\newblock Blocking conductance and mixing in random walks.
\newblock \emph{Combinatorics, Probability and Computing}, 15\penalty0
  (4):\penalty0 541--570, 2006.

\bibitem[Kannan and Narayanan(2012)]{kannan2012random}
Ravindran Kannan and Hariharan Narayanan.
\newblock Random walks on polytopes and an affine interior point method for
  linear programming.
\newblock \emph{Mathematics of Operations Research}, 37\penalty0 (1):\penalty0
  1--20, 2012.

\bibitem[Kapfer and Krauth(2013)]{KapKra13}
Sebastian~C. Kapfer and Werner Krauth.
\newblock Sampling from a polytope and hard-disk {M}onte {C}arlo, 2013.

\bibitem[Lawrence(1991)]{lawrence1991polytope}
Jim Lawrence.
\newblock Polytope volume computation.
\newblock \emph{Mathematics of Computation}, 57\penalty0 (195):\penalty0
  259--271, 1991.

\bibitem[Lee and Sidford(2014)]{lee2014path}
Yin~Tat Lee and Aaron Sidford.
\newblock Path finding methods for linear programming: Solving linear programs
  in $\tilde{O}(\sqrt{\mathrm{rank}})$ iterations and faster algorithms for
  maximum flow.
\newblock In \emph{Foundations of Computer Science (FOCS), 2014 IEEE 55th
  Annual Symposium on}, pages 424--433. IEEE, 2014.

\bibitem[Lee and Vempala(2016)]{lee2016geodesic}
Yin~Tat Lee and Santosh~S. Vempala.
\newblock Geodesic walks in polytopes.
\newblock \emph{arXiv preprint arXiv:1606.04696}, 2016.

\bibitem[Lee and Vempala(2018{\natexlab{a}})]{lee2018convergence}
Yin~Tat Lee and Santosh~S Vempala.
\newblock Convergence rate of {R}iemannian {H}amiltonian {M}onte {C}arlo and
  faster polytope volume computation.
\newblock In \emph{Proceedings of the 50th Annual ACM SIGACT Symposium on
  Theory of Computing}, pages 1115--1121. ACM, 2018{\natexlab{a}}.

\bibitem[Lee and Vempala(2018{\natexlab{b}})]{lee2018stochastic}
Yin~Tat Lee and Santosh~S Vempala.
\newblock Stochastic localization+ {S}tieltjes barrier= tight bound for
  log-sobolev.
\newblock In \emph{Proceedings of the 50th Annual ACM SIGACT Symposium on
  Theory of Computing}, pages 1122--1129. ACM, 2018{\natexlab{b}}.

\bibitem[Lov{\'a}sz(1999)]{lovasz1999hit}
L{\'a}szl{\'o} Lov{\'a}sz.
\newblock Hit-and-run mixes fast.
\newblock \emph{Mathematical Programming}, 86\penalty0 (3):\penalty0 443--461,
  1999.

\bibitem[Lov{\'a}sz and Simonovits(1990)]{lovasz1990ballwalk}
L{\'a}szl{\'o} Lov{\'a}sz and Mikl{\'o}s Simonovits.
\newblock The mixing rate of {M}arkov chains, an isoperimetric inequality, and
  computing the volume.
\newblock In \emph{Proceedings of 31st Annual Symposium on Foundations of
  Computer Science, 1990}, pages 346--354. IEEE, 1990.

\bibitem[Lov{\'a}sz and Simonovits(1993)]{lovasz1993random}
L{\'a}szl{\'o} Lov{\'a}sz and Mikl{\'o}s Simonovits.
\newblock Random walks in a convex body and an improved volume algorithm.
\newblock \emph{Random Structures \& Algorithms}, 4\penalty0 (4):\penalty0
  359--412, 1993.

\bibitem[Lov{\'a}sz and Vempala(2003)]{lovasz2003hit}
L{\'a}szl{\'o} Lov{\'a}sz and Santosh Vempala.
\newblock Hit-and-run is fast and fun.
\newblock \emph{Tehnical Report, Microsoft Research}, 2003.

\bibitem[Lov{\'a}sz and Vempala(2006{\natexlab{a}})]{lovasz2006hit}
L{\'a}szl{\'o} Lov{\'a}sz and Santosh Vempala.
\newblock Hit-and-run from a corner.
\newblock \emph{SIAM Journal on Computing}, 35\penalty0 (4):\penalty0
  985--1005, 2006{\natexlab{a}}.

\bibitem[Lov{\'a}sz and Vempala(2006{\natexlab{b}})]{lovasz2006simulated}
L{\'a}szl{\'o} Lov{\'a}sz and Santosh Vempala.
\newblock Simulated annealing in convex bodies and an {$O^*(n^4)$} volume
  algorithm.
\newblock \emph{Journal of Computer and System Sciences}, 72\penalty0
  (2):\penalty0 392--417, 2006{\natexlab{b}}.

\bibitem[Mahoney(2011)]{mahoney2011randomized}
Michael~W. Mahoney.
\newblock Randomized algorithms for matrices and data.
\newblock \emph{Foundations and Trends in Machine Learning}, 3\penalty0
  (2):\penalty0 123--224, 2011.

\bibitem[Metropolis et~al.(1953)Metropolis, Rosenbluth, Rosenbluth, Teller, and
  Teller]{metropolis1953equation}
Nicholas Metropolis, Arianna~W. Rosenbluth, Marshall~N Rosenbluth, Augusta~H.
  Teller, and Edward Teller.
\newblock Equation of state calculations by fast computing machines.
\newblock \emph{The Journal of Chemical Physics}, 21\penalty0 (6):\penalty0
  1087--1092, 1953.

\bibitem[Narayanan(2016)]{narayanan2016randomized}
Hariharan Narayanan.
\newblock Randomized interior point methods for sampling and optimization.
\newblock \emph{The Annals of Applied Probability}, 26\penalty0 (1):\penalty0
  597--641, 2016.

\bibitem[Narayanan and Rakhlin(2013)]{narayanan2013efficient}
Hariharan Narayanan and Alexander Rakhlin.
\newblock Efficient sampling from time-varying log-concave distributions.
\newblock \emph{arXiv preprint arXiv:1309.5977}, 2013.

\bibitem[Nesterov and Nemirovskii(1994)]{nesterov1994interior}
Yurii Nesterov and Arkadii Nemirovskii.
\newblock \emph{Interior-point polynomial algorithms in convex programming}.
\newblock SIAM, 1994.

\bibitem[Ripley(2009)]{ripley2009stochastic}
Brian~D. Ripley.
\newblock \emph{Stochastic simulation}, volume 316.
\newblock John Wiley \& Sons, 2009.

\bibitem[Robert(2004)]{robert2004monte}
Christian~P. Robert.
\newblock \emph{Monte {C}arlo methods}.
\newblock Wiley Online Library, 2004.

\bibitem[Sachdeva and Vishnoi(2016)]{sachdeva2016mixing}
Sushant Sachdeva and Nisheeth~K. Vishnoi.
\newblock The mixing time of the {D}ikin walk in a polytope---a simple proof.
\newblock \emph{Operations Research Letters}, 44\penalty0 (5):\penalty0
  630--634, 2016.

\bibitem[Smith(1984)]{smith1984efficient}
Robert~L Smith.
\newblock Efficient {M}onte {C}arlo procedures for generating points uniformly
  distributed over bounded regions.
\newblock \emph{Operations Research}, 32\penalty0 (6):\penalty0 1296--1308,
  1984.

\bibitem[Vaidya(1989)]{vaidya1989new}
Pravin~M. Vaidya.
\newblock A new algorithm for minimizing convex functions over convex sets.
\newblock In \emph{30th Annual Symposium on Foundations of Computer Science,
  1989}, pages 338--343. IEEE, 1989.

\bibitem[Vaidya and Atkinson(1993)]{vaidya1993technique}
Pravin~M. Vaidya and David~S. Atkinson.
\newblock A technique for bounding the number of iterations in path following
  algorithms.
\newblock In \emph{Complexity in Numerical Optimization}, pages 462--489. World
  Scientific, 1993.

\bibitem[Vempala(2005)]{vempala2005geometric}
Santosh Vempala.
\newblock Geometric random walks: a survey.
\newblock \emph{Combinatorial and Computational Geometry}, 52\penalty0
  (573-612):\penalty0 2, 2005.

\bibitem[Yu and Mykland(1998)]{yu1998looking}
Bin Yu and Per Mykland.
\newblock Looking at {Markov} samplers through cusum path plots: a simple
  diagnostic idea.
\newblock \emph{Statistics and Computing}, 8\penalty0 (3):\penalty0 275--286,
  1998.

\end{thebibliography}
\end{document}